\documentclass[format=acmsmall, review=false]{acmart}
\usepackage{acm-ec-26}
\usepackage{booktabs} 
\usepackage[ruled]{algorithm2e}
\usepackage{amsthm}
\usepackage{aliascnt}
\usepackage{url}
\usepackage{ulem}
\usepackage{csquotes}

\usepackage{algorithm}
\usepackage{algpseudocode}
\usepackage{tcolorbox}
\tcbuselibrary{theorems}
\usepackage{nicefrac}
\usepackage{wrapfig}
\usepackage{booktabs}
\usepackage{longtable}
\usepackage{array}
\usepackage{ragged2e}
\usepackage{subcaption}

\newcolumntype{P}[1]{>{\RaggedRight\arraybackslash}p{#1}}

\newcommand{\prules}{\phi^{\text{rules}}}
\newcommand{\ppriors}{\phi^{\text{priorities}}}
\newcommand{\Msep}{\cM^{\mathrm{sep}}}
\newcommand{\Reg}{\operatorname{Regret}}
\newcommand{\Range}{\operatorname{Range}}

\makeatletter
\AtBeginDocument{
  \renewcommand\subsubsection{
    \@startsection{subsubsection}{3}{\z@}
      {-.5\baselineskip \@plus -2\p@ \@minus -.2\p@}
      {3.5\p@}
      {\sffamily\itshape}
  }
  \let\ACM@origsubsubsection\subsubsection
}
\makeatother

 \newtcbtheorem{boxexample}{Example}{
   colback=gray!5,
   colframe=black!70,
   fonttitle=\bfseries
 }{ex}

 \newtcolorbox{boxexamplecont}[1]{%
   colback=gray!5,
  colframe=black!70,
   fonttitle=\bfseries,
   title=Example~\ref{ex:#1}\ (continued)
 }

\newtheorem{example}{Example}

\SetAlFnt{\small}
\usepackage[skip=5pt]{parskip}
\SetAlCapFnt{\small}
\SetAlCapNameFnt{\small}
\SetAlCapHSkip{0pt}
\IncMargin{-\parindent}

\let\todo\relax
\usepackage{todonotes}

\setcitestyle{authoryear}

\usepackage{amsmath,amsfonts,bm,bbm}
\usepackage{amsthm}
\usepackage{aliascnt}
\usepackage{enumitem}
\usepackage{xurl}
\usepackage{mathtools}

\def\eqref#1{equation~(\ref{#1})}

\def\1{\bm{1}}

\DeclareMathAlphabet{\mathsfit}{\encodingdefault}{\sfdefault}{m}{sl}
\SetMathAlphabet{\mathsfit}{bold}{\encodingdefault}{\sfdefault}{bx}{n}

\newcommand{\E}{\mathbb{E}}

\newcommand{\R}{\mathbb{R}}

\DeclareMathOperator*{\argmax}{arg\,max}

\usepackage{tikz}

\usetikzlibrary{shapes,decorations,arrows,calc,arrows.meta,fit,positioning}
\tikzset{
    -Latex,auto,node distance =1 cm and 1 cm,semithick,
    state/.style ={ellipse, draw, minimum width = 0.7 cm},
    point/.style = {circle, draw, inner sep=0.04cm,fill,node contents={}},
    bidirected/.style={Latex-Latex,dashed},
    el/.style = {inner sep=2pt, align=left, sloped}
}

\usepackage{color}

\newtheorem{theorem}{Theorem}[section]
\newaliascnt{remark}{theorem}
\newtheorem{remark}[remark]{Remark}
\aliascntresetthe{remark}
\newtheorem{definition}{Definition}[section]
\newaliascnt{assumption}{theorem}

\aliascntresetthe{assumption}

\newaliascnt{corollary}{theorem}
\newtheorem{corollary}[corollary]{Corollary}
\aliascntresetthe{corollary}

\newaliascnt{lemma}{theorem}
\newtheorem{lemma}[lemma]{Lemma}
\aliascntresetthe{lemma}

\newaliascnt{fact}{theorem}

\aliascntresetthe{fact}

\newaliascnt{observation}{theorem}

\aliascntresetthe{observation}

\newaliascnt{proposition}{theorem}
\newtheorem{proposition}[proposition]{Proposition}
\aliascntresetthe{proposition}

\newcommand{\cW}{\mathcal{W}}
\newcommand{\cM}{\mathcal{M}}

\newcommand{\rules}{\mathcal{F}}
\newcommand{\cR}{\mathcal{R}}

\newcommand{\cY}{\mathcal{Y}}
\newcommand{\cX}{\mathcal{X}}
\newcommand{\cQ}{\mathcal{Q}}

\newcommand{\cT}{\mathcal{T}}
\newcommand{\cD}{\mathcal{D}}
\usepackage{cleveref}

\crefname{theorem}{Theorem}{Theorems}
\Crefname{theorem}{Theorem}{Theorems}

\crefname{assumption}{Assumption}{Assumptions}
\Crefname{assumption}{Assumption}{Assumptions}

\crefname{lemma}{Lemma}{Lemmas}
\Crefname{lemma}{Lemma}{Lemmas}

\crefname{definition}{Definition}{Definitions}
\Crefname{definition}{Definition}{Definitions}

\crefname{remark}{Remark}{Remarks}
\Crefname{remark}{Remark}{Remarks}

\crefname{observation}{Observation}{Observations}
\Crefname{observation}{Observation}{Observations}

\crefname{corollary}{Corollary}{Corollaries}
\Crefname{corollary}{Corollary}{Corollaries}

\crefname{fact}{Fact}{Facts}
\Crefname{fact}{Fact}{Facts}

\crefname{proposition}{Proposition}{Propositions}
\Crefname{proposition}{Proposition}{Propositions}

\title[Title]{Internal Pluralism and the Limits of Pairwise Comparisons}

\author{Bailey Flanigan}
\affiliation{%
  \institution{MIT}
  \city{Cambridge}
  \state{MA}
  \country{USA}
}
\email{baileyf@mit.edu}

\author{Michelle Si}
\affiliation{%
  \institution{Harvard}
  \city{Cambridge}
  \state{MA}
  \country{USA}
}
\email{msi@g.harvard.edu}

\begin{document}

\begin{abstract}

Local pairwise comparisons are a standard tool for learning how people want decision rules to work, e.g., in participatory design or alignment. However, their use builds in two strong assumpions: that local comparisons are sufficient evidence about how a person wants an automated decision rule to behave, and that people can always answer those comparisons decisively. We investigate how these assumptions may be compromised under \textit{internal pluralism}: the idea that an individual evaluates decision rules according to multiple authoritative priorities about how the rule should behave. We provide a formal model of such pluralistic preferences over decision rules, which then lets us identify two distinct failures of forced local pairwise comparison data. First, priorities such as proportionality, egalitarianism, and equal treatment are inherently \textit{global}: what they imply in one case can depend on what happens elsewhere, so local comparisons may fail to capture them. Second, even when priorities \textit{are} representable locally, tension between strongly-held priorities can generate internal conflict, producing potentially costly behavioral distortions when comparisons are forced. We then use our model to investigate the alternative --- allowing people to report indecision --- and our findings suggest that doing so can considerably reduce the number of queries needed to learn preferences accurately. We conclude by describing how our model points toward preference-learning methods that elicit these priorities directly, yielding more faithful and interpretable accounts of what people value.
\end{abstract}

\maketitle

\setcounter{tocdepth}{1} 
\tableofcontents

\textsc{Acknowledgements.}
We thank Serena Wang, Kate Donahue, and Chara Podimata for their detailed feedback on the paper. We thank Finale Doshi-Velez, Charis Pipis, Daniel Lee, Jakob de Raaij, Marina Mancoridis, Sarah Bentley, Colin Camerer, and Maxon Rubin-Toles, for their feedback on presentations related to the paper. We thank Bernardo Zacka, Carmel Baharav, Elias Bareinboim, and Andre Ye for helpful conversations, and we also thank Vijay Keswani and Min Kyung Lee for providing access to their data. 

\textsc{AI Disclosure.} ChatGPT and Gemini were used in the course of brainstorming and exploring related work, and Claude Code was used in implementing the algorithms. We reviewed and edited the content produced by these tools and are ultimately responsible for the publication's content.

\pagebreak

\section{Introduction} \label{sec:intro}
Across areas like participatory design and AI alignment, there is increasing interest in trying to embed human values in automated systems for making societal decisions. Existing work has considered how to design algorithms emulating human judgments in, e.g., the trolley problem, motivated by automated vehicles \citep{awad2018moral, noothigattu2018voting}; how to prioritize who should receive kidneys \citep{freedman2020adapting, keswani_pros_2024, keswani2025can, boerstler2024stability, cousins2025cognitivelyfaithful}; how to allocate food to food banks \cite{lee2019webuildai, lee2017human}; how LLMs should respond to prompts \cite{ge2024axioms, huang2024collective}; how to hypothetically allocate life jackets \cite{mohsin2022learning}; and how to allocate resources to medical patients \cite{johnston_deploying_2023, dieteren2022should,gatto2026medical}. 

We will refer to a generic such setting as a \textit{decision task}, consisting of a fixed set of decision inputs $\mathcal{X}$ and decision outputs $\mathcal{Y}$, with the valid decision outputs at $x \in \mathcal{X}$ being $\mathcal{Y}(x) \subseteq \mathcal{Y}$. A \textit{decision rule} $F : \mathcal{X} \to \mathcal{Y}$ is then any automated rule for selecting an output given any input. For example, in a simple version of the trolley problem (our running example), an input $x \in \mathcal{X}$ consists of a trolley scenario and two groups of people $N_1, N_2$; the decision rule must choose a group in $\mathcal{Y}(x) = \{N_1,N_2\}$ to be spared. Let the set of decision rules $\rules$ be the set of all mappings from $\cX \to \cY$. The goal is to learn a decision rule $F \in \rules$ that is most ``aligned'' with either an individual or a group; for the purposes of this paper, we focus on aligning a decision rule to a \textit{single individual}.

Across the applications above, a common way to make such alignment problems empirically tractable is to elicit \textit{local pairwise comparisons}, which ask: given an input $x \in \cX$, would you prefer the rule output $y$ or $y'$ (where $y,y' \in \cY(x)$)?\footnote{Pairwise comparisons are a dominant modality in preference learning for alignment of language models and simpler decision rules (e.g., \cite{jiang2024survey}). Local pairwise comparisons are also part of a broader tradition of discrete-choice elicitation methods in economics, political science, healthcare, marketing, and transportation \cite{ben1985discrete,green1978conjoint,hainmueller2014conjoint, lancsar2008conducting}.} 
These comparisons are often forced, i.e., the person must pick one option or the other. After many such comparisons are collected, the learner fits a choice model---often based on Bradley-Terry \cite{Bradley1952rank} or related random-utility models---whose induced choices reproduce the person's responses as faithfully as possible.

This approach reflects two important and potentially non-neutral assumptions. First, it treats local pairwise comparisons as sufficient evidence for learning preferences over the actual decision space, $\rules$. However, it could easily be that people's beliefs about how the rule should work sometimes operate on the \textit{entire rule}, in a way that cannot be decomposed over individual inputs: for example, someone might want the rule to avoid imposing harms disproportionately on protected groups (we call this \textit{proportionality}). According to proportionality, who should be spared at \(x\) depends crucially on who is spared at other \(x' \in \cX\setminus {x}\) --- a dependence that must be somehow eliminated when the individual responds to a local pairwise comparison. We refer to such beliefs as \textit{inseparable}, as the response they dictate at an individual $x$ cannot be separated from what is done at other inputs.

The second key assumption is that people can always give decisive answers to local pairwise comparison queries. We question this assumption on the grounds of \textit{internal pluralism}: the widely-discussed idea that an individual's judgments may derive from multiple core values or objectives, which can be brought into irreducible conflict by hard trade-offs. For instance, if asked how the decision rule should work in the trolley case, the individual may articulate several priorities: in addition to proportionality, they may want the rule to save as many people as possible (call this priority \textit{size}) and to save members of their immediate family (call this priority \textit{family}). These priorities can easily be brought into conflict: for example, in the classic dilemma where \(N_1\) contains 1000 people but \(N_2\) contains the individual's mother, the size and family priorities strongly advocate opposite responses, and the individual may not be able to produce a morally authoritative decision. In such cases, forcing a decisive response erases this latent internal conflict at best, and at worst may prompt the individual to respond arbitrarily, leading to response inconsistency and incorrect conclusions by the learner.

Using a formal model that captures such pluralistic priorities, we formalize these two concerns: we examine how inseparable beliefs can be erased by local pairwise comparisons, and how even absent this problem, internal conflict between these priorities can lead to misspecification under forced comparisons. Our aim is not to reject pairwise comparisons as an elicitation tool, but to understand when they can and cannot faithfully reflect how people think the rule should work. These three contributions --- plus how our model can be applied as a learning tool --- are described below in I-IV.

\textbf{Contribution I: A Pluralistic Model of Priorities (\Cref{sec:model}).} Our first contribution is to introduce this formal model, shown in \Cref{fig:priorities}. This model's core feature is that it represents \textit{internal pluralism:} the individual's beliefs about how the rule should work are not represented by a single preference ordering over rules, but rather by a \textit{collection} of individually authoritative preference orderings over rules. We call these orderings \textit{priorities}. Each priority represents one coherent way the individual evaluates rules, such as saving as many people as possible, protecting family members, or avoiding disproportionate harm to protected groups. 

Formally, let there be \(m \in \mathbb{N}\) priorities, each represented by a complete, transitive, and weak preference relation over \(\rules\). For a given priority \(j \in [m]\), we model this relation with a utility function \(u_j : \rules \to \mathbb{R}\). Note that this captures the single-objective case when there is one priority.
\begin{figure}[h!]
    \centering
    \includegraphics[width=0.6\linewidth]{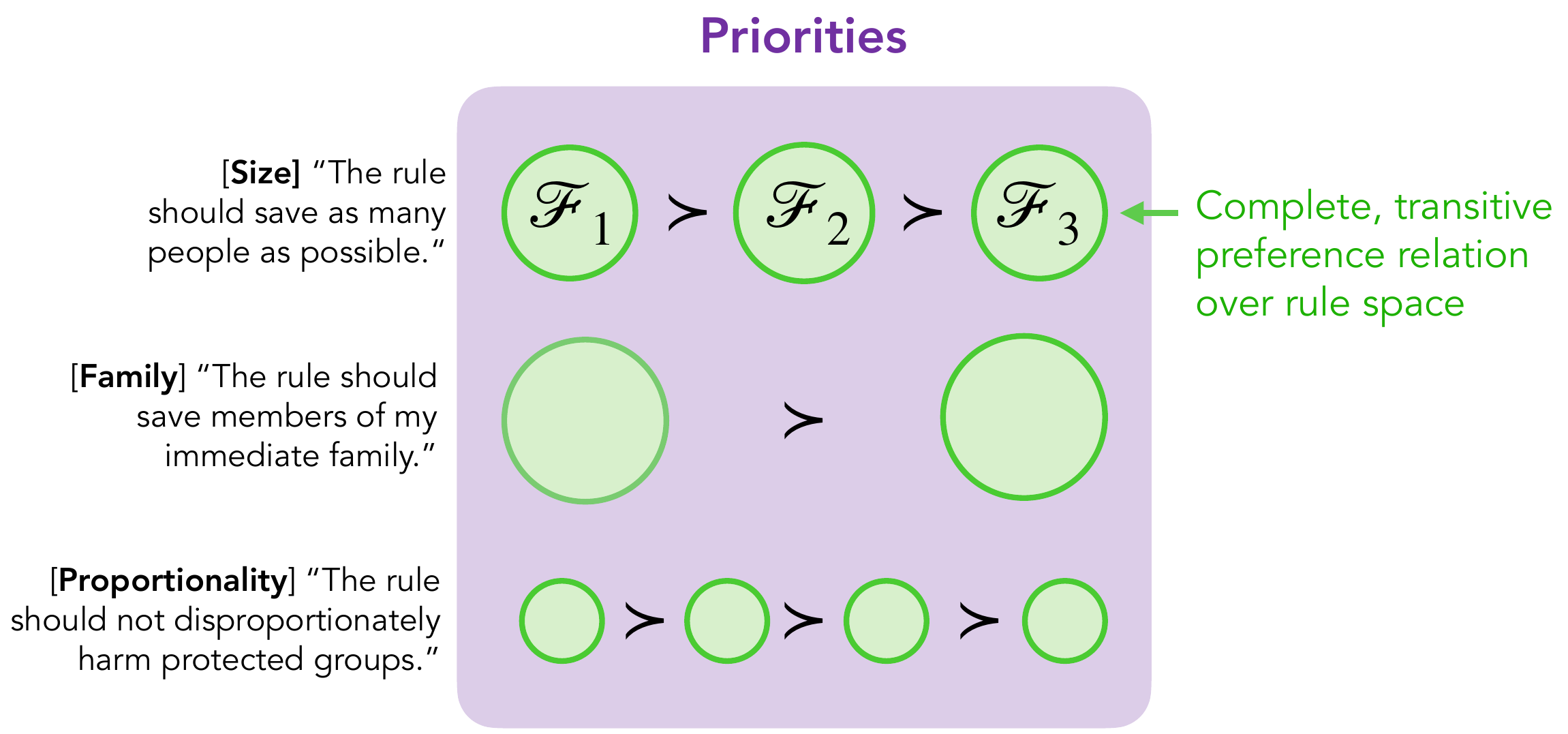}
    \caption{Depiction of priority model over decision rules. Each priority is represented by a weak, complete, transitive preference relation over the rule space. In the diagram, circles represent subsets of $\rules$ over which the individual is indifferent; because the rankings are complete, each row of circles represents a partition over $\rules$, e.g., $\rules_1 \cup \rules_2 \cup \rules_3 = \rules$. In the \textit{size} priority applied to the trolley example, $\rules_1$ could, e.g., represent the set of all rules that choose to spare the maximum number of people whenever the two sets of people are of different size.}
    \label{fig:priorities}
\end{figure}
With this pluralistic priority model in hand, we then formally model how such rule-level priorities are translated into responses to local pairwise comparison queries. This is where the two concerns above formally arise. 

First, an individual priority may be \textit{inseparable}: what it says about output \(y\) versus \(y'\) at input \(x\) may depend on what the rule does at other inputs. Then, to determine what local query response dictates, the individual must somehow collapse competing evidence depending on what the rule could do at other inputs. Second, even if each priority can individually evaluate the local query, the resulting evidence \textit{across priorities} may still fail to support a decisive answer. Some priorities may provide evidence for choosing \(y\) at \(x\), while others provide evidence for choosing \(y'\), producing \textit{conflict} (e.g., a choice between the trolley sparing your mother and 1000 people). Alternatively, no priority may provide meaningful evidence for either \(y\) or \(y'\), producing \textit{indifference} (e.g., a choice between the trolley sparing an old sandwich or a plastic bag). We refer to these states of conflict and indifference as \textit{latent states}, allowing how they manifest behaviorally to vary.

In its full generality, our pluralistic priority model thus captures both inseparable priorities and indecisive latent states --- the two components we worried may not be captured by standard pairwise learning pipelines. We use our model to formalize this intuition, too: we show that standard \textit{score-based random utility models} (S-RUMs), including Bradley-Terry, correspond exactly to the special case of our framework in which both complications disappear. In this special case, priorities are perfectly \textit{separable}, so local evidence at \(x\) is independent of the rule's behavior elsewhere; moreover, all evidence is reduced to a single score difference, and any ``indecision'' collapses to noise between decisive responses. Formally, we show that these restrictions of our model render latent indifference and conflict behaviorally irrelevant, and forced comparisons are consequently justified mechanically.

For Contributions 2 and 3, we start from this special case of our model that recovers S-RUMs, and then we study the effect of relaxing each restriction separately: first we allow priorities to be inseparable, and then we allow local queries to generate latent indifference or conflict.

\textbf{Contribution II (\Cref{sec:insep}).} We first isolate the consequences of allowing priorities to be inseparable. We show that natural classes of priorities, including proportionality, egalitarianism, and equal treatment, satisfy strong notions of inseparability. We then show that in standard pipelines, these inseparable priorities can be erased or misinterpreted, leading the learning system to infer highly suboptimal rules. More specifically, we first demonstrate that the most strongly inseparable priorities are erased without a trace when perfect separability is assumed, and moreover, their existence is not identifiable by local pairwise comparisons at all, even if the possibility of their existence is known ahead of time. For inseparable priorities that can in principle be identified, we show that inferences by learners assuming separability can be systematically distorted, due to the misinterpretation of this inseparable evidence. In this way, inseparability offers one formal explanation for apparent inconsistency in pairwise-comparison data: the inconsistency may reflect a mismatch between inseparable rule-level priorities and local elicitation.

\textbf{Contribution III (\Cref{sec:conflict}).} We then isolate the consequences of allowing local queries to generate latent indifference and conflict. We do this in a linear special case of our model, where priorities exist in a known feature space, and each priority advocates the importance of a single feature. This separable restriction of our model captures linear social choice \cite{ge2024axioms,ge_linear_2026} and related feature-based preference-learning models \cite{lee2019webuildai,freedman2020adapting,ge2024learning,noothigattu2018voting,boerstler2024stability}, but strictly generalizes them by keeping the evidence from each feature separate: some priorities may decisively support $y$ and others $y'$, and these trade-offs need not be resolvable. In simulated settings, we learn our model via Bayesian active learning under various conditions. We first analyze losses in learning accuracy when individuals resolve indecision by deviating from the learner's assumed response model, e.g., in lexicographic, biased, or random ways. We find large learning losses on certain metrics and modest losses on others. One potential way to avoid such losses would be to allow individuals to directly report their indecision. After providing evidence that such richer responses are plausible to elicit, we generalize our learner to use indecision reports and demonstrate that allowing such responses has an additional advantage: when individuals can report conflict and indifference --- or even just generic indecision conflating these two states --- such reports can substantially accelerate learning.

\textbf{Contribution IV (\Cref{sec:discussion}).} Although we apply our pluralistic priority model to formalize intuitions about the limitations of local pairwise comparisons, this model opens up a much broader possibility: rather than inferring an entire preference structure from local comparisons alone, one can learn our model to capture richer elements of preferences about how decision rules should work. In \Cref{sec:discussion} we build on our results to propose an approach to doing so: \textit{priority-aware learning}, which elicits priorities directly---through text and other structured queries---and then interprets all comparison responses through them. We discuss how such priority-based preference learning methods --- applied either within individuals or across them --- could help make preference learning more faithful, more efficient in complex decision spaces, and more interpretable, as the resulting model is composed of priorities the individual(s) themselves specified.

\subsection{Related Work}

\textbf{Model of Pluralistic Priorities (Contribution I).} At a conceptual level, our model reflects many behavioral-science accounts of value pluralism, moral conflict, and ambivalence, which we discuss in \Cref{rem:behavioral-sciences} and Appendix~\ref{app:behavioral-science}. From the elicitation literature, our model is conceptually closest to recent work on value inference \cite{SiebertEtAl2022HHAI,liscio2025value}, whose model similarly supposes that individuals make choices by aggregating over multiple values (priorities), and that the goal of learning should be to recover values' relative importance. However, their model is formally quite different: in their work, a `value'' is a textual tag (e.g., ``Cost-Effectiveness'') rather than a preference model over the decision space; these values are of ranked importance, and they are applied linearly to imply a choice over a small set of policy options. This setup eliminates the two worries our model aims to capture. First, because participants can express preferences directly over the decision space (rather than answering local questions about global decision rules, as in our case), the possibility of inseparability is removed. Second, linear aggregation over values automatically resolves trade-offs, thereby smoothing over the conflict between values we explicitly model. In the same vein, our pluralistic priority model is similar in spirit to multiattribute value and utility theory, which represents preferences as trade-offs among multiple objectives \citep{keeneyraiffa1976}. However, whereas this tradition ultimately models an overall preference or utility representation, our model keeps the underlying priorities separate and allows them to conflict.

 More broadly, our model is one of many recent attempts to learn more cognitively faithful models of moral preferences in \textit{feature-based decision settings}, where individuals must trade off morally salient features \cite{kim2018computational,mohsin2022learning,cousins2025cognitivelyfaithful}. This work is complementary to ours, presenting decision models in the restricted setting where priorities exist over predefined features (a setting our model formally captures), and exploring different methods of resolving trade-offs between features, including linear utility models, heuristic or lexicographic rules, and feature-processing rules followed by fixed comparison rules. These different ways of aggregating over features represent different instantiations of our \textit{priority aggregators}, as defined in \Cref{sec:model}.

\textbf{Separability and Inseparability (Contribution II).} 
To our knowledge, there has not been a paper formally describing the problem of inseparability in the domain of decision rule design.\footnote{Another recent alignment paper may initially appear related --- it shows that sparse pairwise comparisons cannot identify higher-order population information needed for inequality-aware objectives \cite{ge_linear_2026}. However, theirs is a distinct concern from ours: their inequality-aware objective is not inseparable in our sense, since it is evaluated on a single candidate/input.} The closest is a paper in participatory design of fair division outcomes \cite{shirali2024participatory}, which is motivated by a conceptually similar concern: that social preferences like inequality aversion cannot be ``expressed'' by standard cardinal utilities. They state this observation in passing, while our work formalizes the analogous concern as an inseparability problem. Their model is also conceptually related to ours: they propose to let people want to optimize a linear combination of multiple objectives, where these objectives correspond in spirit to our priorities. They apply their model toward a different goal; \textit{given} each agent's desired weighting over objectives, they analyze the welfare of the allocation produced by optimizing an aggregate objective.

Outside of alignment and participatory design, the importance of separability assumptions --- and their potential lack of realism, is well-recognized in the distinct setting of committee selection / multi-winner voting. In this domain, separability is a standard assumption, in that case meaning that a voter’s preference over one alternative does not depend on the rest of the set of winners \cite{lang_xia_combinatorial_2016}. This literature also points out that when preferences are inseparable, local voting becomes misspecified \cite{lang_xia_combinatorial_2016,barrot_lang_conditional_2016,ratliff_selecting_2006,ratliff_saari_diverse_2014,uckelman_alice_2010}; our model captures this same concern when the decision space consists of decision rules instead of committees.
Our work also differs in its goal: while these papers aim to document the problem with examples and/or develop richer ballots and aggregation methods, we formalize what is lost downstream when inseparability forces individuals to compress their inseparable priorities in response to local ballots.

\textbf{Conflict and Indifference (Contribution III).} The importance of indecision has been broadly highlighted by empirical work showing that pairwise comparisons can be difficult, unstable, or low-confidence, and suggesting that forced binary responses may obscure meaningful non-decisive states \citep{boerstler2024stability,keswani2025can,keswani2025moralchange}. The paper that comes the closest to our model of indecision is \citet{mcelfresh2021indecision}, which presents several formal models of indecision in discrete choices. Our indifference notion is close to their desirability-based model \textit{Min-$U$}, and our notion of conflict is a hybrid of their desirability-based and difference-based models \textit{Max-$U$} and \textit{Min-$\delta$}. The more substantial difference is that their indecision state is decided by comparing exogenous utilities over each local option, while indecision in our model is microfounded as resulting from multiple competing priorities evidencing different choices. Our work has a similar relationship to the broader set of work extending classical pairwise comparison models like Bradley-Terry to permit ties \cite{RaoKupper1967,davidson}.

These tie-permitting extensions of Bradley-Terry have been applied in LLM alignment by \citet{liu2024reward}; as we do, they find that ignoring ties comes with learning costs. The intuition for why utilizing indecision helps in our setting is closely related to active-learning results showing that information about closeness to the decision boundary can improve learning \cite{kane2017active}. In learning our model, we apply standard tools from Bayesian Active Learning by Disagreement \cite{houlsby2011bayesian}.

\textbf{Relationship to LLM alignment.} Our model can in principle represent decision rules as simple as linear models and as complicated as large language models, where $\cX$ is the space of prompts and $\cY$ is the space of responses. Our approach, examples, and intuitions are built primarily around simpler high-stakes decision tasks because these settings make the rule-level object more concrete: one can describe the inputs, outputs, and counterfactual rule behavior, abstracting away from the separate challenge of controlling a large generative model. However, our results concern the information content of local pairwise comparisons, not the tractability of optimizing over $\rules$, so they are relevant whenever pairwise comparisons are used as evidence about human values, including in LLM alignment.

\textbf{Relationship to Pluralistic Alignment.} Our work is also related to the growing literature on pluralistic alignment, which argues that aligned AI systems should account for the diversity of values, preferences, and perspectives across people and groups \cite{conitzer2024socialchoice,sorensen2024roadmap}. This literature is similar in spirit to ours in rejecting the idea that alignment can be reduced to a single objective, but our work differs in that we study the consequences of plurality \textit{within} rather than \textit{across} individuals. This distinction matters because reconciling plurality within versus across individuals pose different challenges: across individuals, the central challenge is defining an external rule for fairly making trade-offs; within an individual, the challenge is learning how \textit{they want to} reconcile trade-offs. Our results complement the pluralistic alignment literature, showing that failures due to value pluralism can arise even before the social-choice problem of aggregating across people. We discuss this further in \Cref{sec:discussion}.

\section{Model} \label{sec:model}
Let a \textit{decision task} be defined by an input space $\cX$ and output space $\cY$. We denote a single input and output by $x \in X$ and $y \in Y$, respectively. Not all outputs may be valid for a given input $x$, so we let $\cY(x)\subseteq \cY$ denote the set of permissible outputs on input $x$. While $\cX,\cY$ can be discrete or continuous, for technical convenience we take $\cX$ and $\cY$ to be discrete and finite unless otherwise stated. A \textit{decision rule} is a function $F:\cX\to\cY$ such that $F(x)\in\cY(x)$ for every $x\in\cX$. We focus on deterministic rules, but randomized rules could also be of interest. Let $\rules$ denote the set of all decision rules for a given $\cX,\cY$ (where the input/output spaces are left implicit, as they will be clear from context).

For concreteness, we will often use the running example of \textit{allocation decision tasks}. In an allocation task, each decision allocates a set of goods or bads to a set of possible recipients. Formally, there is a universe of recipients $N$ and a universe of goods or bads $K$. Each input $x\in\cX$ consists of a set of goods or bads $K(x)\subseteq K$ and a set of recipients $N(x)\subseteq N$, so $x=(K(x),N(x))$. Each output $y\in\cY(x)$ is an assignment involving the elements of $K(x)$ and $N(x)$.

\begin{example}[Trolley problem] \label{ex:trolley}
In a simple version of the trolley problem, an input $x$ consists of two groups of individuals $N_1(x),N_2(x)$ which partition $N(x)$. The decision rule must choose which group is spared. Thus $\cY(x)=\{1,2\}$, where output $\ell\in\{1,2\}$ means that group $N_\ell(x)$ is spared.
\end{example}

Finally, we define a local pairwise comparison query in the context of decision task $\cX,\cY$. A \textit{query format} is itself a decision task, with query space $\cQ$ and response space $\cW$. A query is denoted $q\in\cQ$, and a response is denoted $w\in\cW$. The set of valid responses to query $q$ is the response alphabet $\cW(q)\subseteq\cW$.

\begin{definition}[\textbf{Local Pairwise Comparison Query Format}] \label{def:pc-query}
A \textit{local pairwise comparison query} consists of an input $x\in\cX$ and two feasible outputs $y,y'\in\cY(x)$:
\[
\cQ^{\mathrm{pc}}
=
\{(y,y';x): x\in\cX,\ y,y'\in\cY(x)\}.
\]
We write a generic query as $q=(y,y';x)$. We allow there to be \textit{four} possible responses to a local pairwise comparison query:
\[
\cW^{\mathrm{pc}}(q)=\{\succ,\prec,\sim,\bowtie\} \qquad \forall q \in \cQ^{\mathrm{pc}}.
\]
As we will formalize later, responses $\succ$ and $\prec$ indicate that $y$ is preferred to $y'$ and $y'$ is preferred to $y$, respectively, $\sim$ indicates indifference between $y$ and $y'$, and $\bowtie$ indicates \textit{conflict} or incomparability, which we will describe later.
\end{definition}

\subsection{Priority Model} \label{sec:pluralistic}
The priority model is composed of $m$ priorities and their relative weights. Formally, a \textit{priority} $j\in[m]$ is a complete and transitive preference relation $\succsim_j$ over $\rules$. As usual, $F\succsim_j F'$ means that rule $F$ is weakly preferred to rule $F'$ according to priority $j$. Because each $\succsim_j$ is complete and transitive, it admits a utility representation $u_j:\rules\to\R$, so that
\[
F\succsim_j F'
\iff
u_j(F)\ge u_j(F')
\qquad \forall F,F'\in\rules.
\]
We assume utilities are normalized so that $|u_j(F)-u_j(F')|\le 1$ for all $j\in[m],\ F,F'\in\rules.$ Let $u = (u_j)_{j \in [m]}$. When we write a natural utility function whose range is not explicitly bounded by one, we implicitly take its positive affine normalization to satisfy this convention; we suppress this normalization in the notation whenever it plays no substantive role.

We additionally let each priority $j$ have a \textit{weight} $\omega_j \in [0,1]$, interpreted as its relative importance to the individual. We assume these weights are normalized to 1; that is, letting $\Delta^{m-1}$ denote the standard $m-1$ simplex, $\omega=(\omega_j)_{j\in[m]}
\in
\Delta^{m-1}.$

Then, a \textit{priority model} is a tuple $M=\big(u,\omega\big),$
consisting of the $m$ priorities' utility representations and relative weights. When $u,\omega$ are clear we will simply write $M$; when not, we will write $M = (u,\omega)$. For a fixed $\rules$, let
\(\mathcal M\) be the class of all such priority models.
Before proceeding, we give intuition via an example illustrating how natural priorities can be formalized into utility functions.

\begin{example}[Formalization of the priorities in \Cref{fig:priorities}]
\label{ex:trolley-priorities}
Here we formalize the three priorities from \Cref{fig:priorities}, as applied to the trolley problem example (\Cref{ex:trolley}), where at $x$ the decision rule must choose between sparing group $N_1(x)$ or $N_2(x)$.
Here, $\cD \in \Delta(\cX)$ is a reference distribution over inputs, and in the Proportionality priority, $G_1,\dots,G_g$ are protected groups of potential recipients and $\alpha_\ell$ is the ideal fraction of harm borne by group $\ell \in [g]$:
\begin{align*}
    u_{\mathrm{size}}(F)
&=
\Pr_{x\sim \cD}
\left[
F(x)\in \arg\max_{\ell\in\{1,2\}} |N_\ell(x)|
\right],\\
u_{\mathrm{family}}(F)
&=
\Pr_{x\sim \cD}
\left[
F(x)\in \arg\max_{\ell\in\{1,2\}}
\mathbf{1}\{N_\ell(x)\text{ contains a family member}\}
\right],\\
u_{\mathrm{prop}}(F)&=-\sum_{\ell \in [g]}\left(\E_{x \sim \mathcal{D}}[\mathbf{1}\{F(x) \text{ fails to spare
a member of group } G_\ell\}]-\alpha_\ell\right)^2.
\end{align*}
\end{example}
Of course, a textual priority can often be formalized in more than one reasonable way.\footnote{For instance, one could instead formulate these priorities as binary constraints, sorting rules by whether or not they satisfy the priority perfectly. For example, the \textit{size} priority could alternatively be formulated as $u_{\mathrm{size}}(F)
=
\mathbf{1}\left\{
F(x)\in \argmax_{\ell\in\{1,2\}} |N_\ell(x)|
\,\forall x\in\cX
\right\}$.} Which formalization is ``true'' may be ambiguous not only to the learner, but to the individual themselves. In this paper, we do not study the problem of translating textually-articulated priorities into formal priorities, but we do discuss the possibility of doing so in \Cref{sec:discussion}.

\textbf{Aggregate Utility and Regret.}
Given a priority model \(M=(u,\omega)\), we define a rule's \textit{aggregate utility} $U_M : \rules \to \mathbb{R}$ as the weighted sum of its utilities over the priorities:
\[
U_M(F)
:=
\sum_{j\in[m]}\omega_j u_j(F).
\]
We will refer to the set of rules that maximize the aggregate utility $\arg\max_{F\in\mathcal F'} U_M(F)$ as \textit{aggregate-optimal rules}.

Importantly, we do not claim that aggregate-optimal rules are most preferred by the individual --- in fact, we explicitly abstain from defining global preferences over rules (though we discuss a natural way of doing so in \Cref{rem:rule-level-indecision}). We define the aggregate utility because we find that it corresponds to notions used in the existing models we capture. We will use it as a convenient way to measure learning losses in the form of the \textit{regret:}
\begin{definition}[Regret]\label{def:regret}
   Letting $F^*$ be any aggregate-optimal rule in $\rules$, the
\textit{regret} of rule \(\widehat{F}\) is
  \[  \mathrm{Regret}_{M}(\widehat{F})
:=
U_{M}(F^*)
-
U_{M}(\widehat{F}).\]
\end{definition}

\subsection{Query-Level Preferences: the \textit{Latent State}}
\label{sec:pc-queries}
Here, we model how rule-level priorities are used to construct beliefs about local pairwise comparison queries. The key primitive in defining this translation is a projection from rule space onto the query $(y,y';x)$:
For any rule $F\in\rules$, input $x\in\cX$, and output $y\in\cY(x)$, define the \textit{local projection} of $F$ onto $x,y$, called $F_{x\to y}$, as the rule obtained by surgically changing $F$'s output at $x$ to $y$, while leaving its behavior elsewhere fixed.
\begin{equation} \label{eq:projection}
F_{x\to y}(x')
=
\begin{cases}
y & \text{if } x'=x,\\
F(x') & \text{if } x'\neq x.
\end{cases} \quad \forall x' \in \cX.
\end{equation}
When it is used in a projection, $F$ is known as a ``background'' rule, because it is defining what is being done in the ``background'' of $x$, i.e., at all other inputs. Now, for any local pairwise comparison query $q=(y,y';x)$ and priority $j$, we apply the projection to define the marginal gain of choosing $y$ over $y'$, with respect to background rule $F$:
\begin{equation} \label{eq:local-evidence}
\Delta_j^F(q)
=
u_j(F_{x\to y})-u_j(F_{x\to y'}) \qquad \forall F \in \rules.
\end{equation}
We call this the \textit{projection gap} for query $q$ with respect to $F$. If $\Delta_j^F(q)>0$, then, given the background rule $F$, priority $j$ favors output $y$ over $y'$ at $x$. If $\Delta_j^F(q)<0$, it favors $y'$ over $y$. If $\Delta_j^F(q)=0$, it is indifferent between the two projected rules. Conceptually, this is saying: if at all $x'$ outside of $x$, the rule behaves according to $F$, would priority $j$ prefer the rule output $y'$ or $y$?

Here arises the first main complication: in answering a query $q$, the individual must give a \textit{global judgment} about whether to choose $y$ or $y'$. In contrast, the above quantity depends on the background rule $F$ and the priority $j$, meaning we need to aggregate over both background rules $F \in \rules$ and priorities $j \in [m]$ to produce a global judgment. 

We will assume aggregation happens over rules first, and then over priorities. This is natural because if priority aggregation instead came first, the individual would be implicitly tracking a quantity for every background rule in $\rules$ (enormous) rather than one for every priority $[m]$ (small). The core intuition is captured fully under this assumption; we leave the extensions of our results to other aggregation orders to future work. 

Over the next subsections, we define rule aggregation (\Cref{sec:rule-agg}), priority aggregation (\Cref{sec:prior-agg}), and the individual's resulting global query preference, known as their \textit{latent state} (\Cref{sec:latent-state}). We depict these three steps from left to right in \Cref{fig:latent}.
\begin{figure}[h!]
    \centering
    \includegraphics[width=\linewidth]{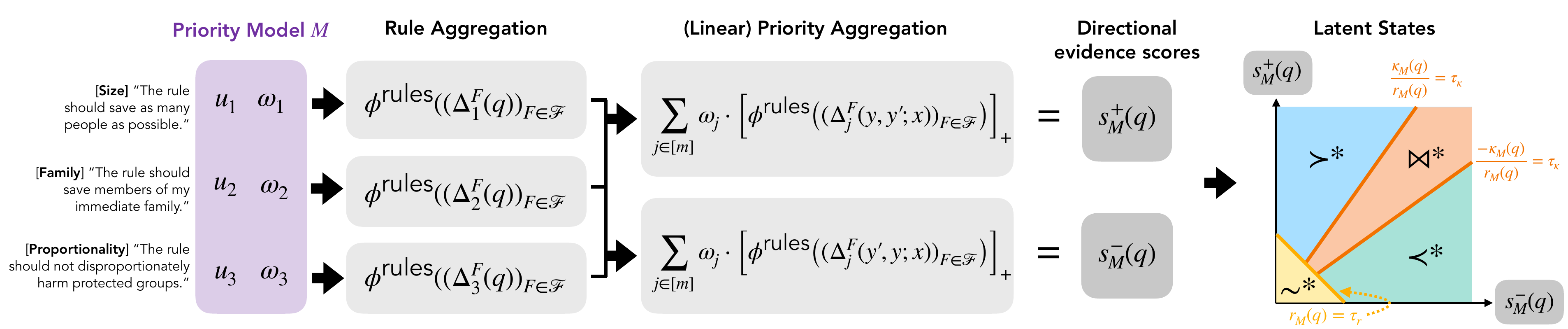}
    \caption{Example of how we go from the priority model $M$, to the directional evidence scores $s^+_M(q),s^-_M(q)$ by aggregating over rules and then priorities, to how these scores translate to latent states.}
    \label{fig:latent}
\end{figure}

\subsubsection{Rule Aggregation}
\label{sec:rule-agg}
As shown in \Cref{fig:latent}, rule aggregation happens within each priority $j$. It is done by a \textit{rule aggregator}, which aggregates the projection gaps across rules $\{\Delta_j^F(y,y';x) | F \in \rules\}$ into a single number describing how strongly priority $j$ prefers $y$ over $y'$. In the definition below, this corresponds to plugging $\Delta_j^F$ in for $z^F$.
\begin{definition}[Rule Aggregator]
    A \textit{rule aggregator} is a function $\prules:\R^{|\rules|}\to\R$.
    
    We assume two regularity conditions on rule aggregators: 
    \begin{enumerate}
        \item $\prules$ is \textit{permutation-invariant} iff the aggregator depends only on the multiset of evidence values, but not their order; that is, for every vector $z \in \mathbb{R}^{|\rules|}$ and every bijection $\sigma:\rules\to\rules$,
        \[
        \prules\big((z^F)_{F\in\rules}\big)
        =
\prules\big((z^{\sigma(F)})_{F\in\rules}\big).
        \]
        \item $\prules$ is \textit{unanimous} iff, for any constant $z\in\R$,
\[
\prules\big((z)_{F\in\rules}\big)=z.
\]
    \end{enumerate}
\end{definition}
Natural examples of rule aggregators satisfying these regularity conditions include the maximum and average:
\[
\prules_{\mathrm{avg}}\big((z^F)_{F\in\rules}\big)
=
\frac{1}{|\rules|}\sum_{F\in\rules}z_F,
\qquad
\prules_{\max}\big((z_F)_{F\in\rules}\big)
=
\max_{F\in\rules}z_F, \quad \prules_{p\text{-Pct}}\big(\{z_F\}_{F\in\rules}\big)
=
\mathrm{Pct}_p\big(\{z_F\}_{F\in\rules}\big).
\]
Except in simulations and some examples where instantiation is necessary or useful for clarity, we will reason about generic $\prules$. 

\textbf{The \textit{Separable} Special Case.} As we will see, this rule aggregation step can erase  competing directional evidence across background rules. We now define a special kind of priority --- a \textit{perfectly separable}  priority --- within which this loss does \textit{not} occur, so long as $\prules$ is unanimous.

\begin{definition}[Perfect Separability]
    A priority $j$ is \textit{perfectly separable} at query $q = (y,y';x)$ iff there exists some constant $\delta_j(q) \in \mathbb{R}$ such that 
    \[\Delta_j^F(y,y';x) = \delta_j(q) \qquad \text{for all }F \in \rules.\]
\end{definition} 
Note that when $j$ is perfectly separable, for any $q$ and any unanimous $\prules$, it holds that
\[\prules(\{\Delta_j^F(q)\}_{F \in \rules}) = \delta_j(q).\]
Conceptually, what this is saying is that according to priority $j$, what should be done $x$ is \textit{completely independent} of what decisions are made at other inputs.\footnote{For intuition, the natural weakening of this notion, which we will not formally use, is its directional (i.e., non-cardinal) analog, \textit{separability}. A priority $j$ is separable at query $q = (y,y';x)$ iff $\mathrm{Sign}(\Delta^F_j(y,y';x))$ is constant over all $F \in \rules$, i.e., the preference between $F_{x \to y}$ and $F_{x \to y'}$ dictated by $u_j$ must go in the same direction for all background rules $F$.} 
If a priority is perfectly separable at every query $q \in \cQ^{\mathrm{pc}}$, we say that priority is perfectly separable. If every priority $j \in [m]$ in a model $M$ is perfectly separable, we say that model is perfectly separable. We let $\cM_{\mathrm{sep}} \subseteq \cM$ be the class of perfectly separable pluralistic models.

\subsubsection{Priority Aggregation} \label{sec:prior-agg}
Once evidence is tallied across rules \textit{within} each priority, we must then aggregate evidence \textit{across} priorities. The key intuition we want to capture in this aggregation is that the individual may not be able to resolve evidence across priorities that conflicts, i.e., cases where certain priorities advocate for $y$ and others for $y'$. We thus define our priority aggregation process to produce two directional evidence scores $s^+_M(q)$ and $s^-_M(q)$, which respectively tally evidence in favor of $y$ over $y'$ and $y'$ over $y$ \textit{separately,}  rather than collapsing cross-priority evidence into a single score.

For underlying priority model $M$, rule aggregator $\prules$, query $q=(y,y';x)$ and notation $[a]_+=\max\{a,0\}$, let the \textit{directional evidence scores} be
\begin{equation} 
    s^+_{M}(q)
=
\sum_{j=1}^m \omega_j [\prules\big((\Delta_j^F(y,y';x))_{F\in\rules}\big)]_+,
\quad
s^-_{M}(q)
=
\sum_{j=1}^m \omega_j [\prules\big((\Delta_j^F(y',y;x))_{F\in\rules}\big)]_+.
\end{equation}
Here, we are assuming that evidence is tallied across priorities linearly in $\omega$; this will be sufficient to capture classical models in the literature, and corresponds conveniently with our notions of aggregate utility and regret. Some results will hold for more general priority aggregation methods, which we discuss where relevant.

\subsubsection{Latent State} \label{sec:latent-state}
The directional evidence scores $s^+_M(q)$ and $s^-_M(q)$ represent global evidence, i.e., evidence aggregated over all elements of the model $M$. We summarize the relative values of these scores into two key intermediates: the \textit{valence} $r_M(q)$ describes the strength of the total evidence generated across priorities, and the \textit{decisiveness} $\kappa_M(q)$ describes the extent to which there is a clear choice:
\begin{equation}\label{eq:shorthand}
    r_M(q)
:=
s_{M}^+(q)+s_{M}^-(q), \qquad \text{and} \qquad \kappa_M(q)
=
s_{M}^+(q)-s_{M}^-(q).
\end{equation}
When $M$ is clear from context, we will drop it from the notation. The latent states are derived from these quantities via thresholds $\tau_r\in[0,2]$ and $\tau_\kappa\in[0,1]$, which respectively dictate how much valence is required to avoid latent indifference, and how much decisiveness is required to avoid latent conflict:\footnote{Note that $s^+_M,s^-_M \in [0,1]$ because for all $j$, $|u_j(F) - u_j(F')|\leq 1$. Thus, these thresholds operate on the correct scale.}

\begin{definition}[\textbf{Latent State}]
\label{def:threshold-response-model}
Fix a priority model $M$, query $q \in \cQ^{\mathrm{pc}}$, valence and decisiveness $r_M(q)$ and $\kappa_M(q)$ (dropping $M$ subscripts on $r_M$ and $\kappa_M$ for readability), and thresholds $\tau_r,\tau_\kappa$. The individual's latent state $L_{M}(q)$ is
\[
 L_{M}(q)=
\begin{cases}
y \succ^* y'
& \text{if  }\ r(q) - \tau_r \geq 0  \ \ \text{ and } \ \ \ \ \  \kappa(q) - \tau_\kappa r(q) \geq 0,\\[0.4em]
y \prec^* y'
& \text{if  }\  r(q) - \tau_r \geq 0 \  \ \text{ and } \ -\kappa(q) - \tau_\kappa r(q) \geq 0,\\[0.4em]
y\bowtie^* y'
& \text{if  }\  r(q) - \tau_r \geq 0
 \ \  \text{ and } \ \ \ 
|\kappa(q)| - \tau_\kappa r(q) < 0 \\[0.4em]
y\sim^* y' & \text{if  } \ r(q) - \tau_r < 0 .
\end{cases}
\] 
Note that in the special case where $\tau_\kappa = \kappa(q) =0$, this definition is technically improper as it assigns two states: $y \succ^* y'$ and $y \prec^* y'$. In this case, arbitrarily assign the state to either $\succ^*$ or $\prec^*$.
\end{definition}
We diagram these states in \Cref{fig:latent}, which are most easily understood by considering the relative values of $s_M^+,s_M^-$ they reflect. Decisive states $\succ^*$ and $\prec^*$ occur when one score is high and the other is low, i.e., overwhelming evidence in favor of one outcome and not the other. The other two states, $\sim^*$ and $\bowtie^*$, correspond to distinct indecisive states: \textit{indifference} $\sim^*$ arises when both scores are too low, i.e., no priority offers strong evidence in favor of either response over the other. \textit{Conflict} $\bowtie^*$, in contrast, arises when both scores are high and are similar, i.e., there is strong evidence for \textit{both} outcomes. To illustrate this difference in the trolley example, indifference might occur when the individual must decide whether the trolley should hit a plastic bag or an old sandwich; conflict might occur when they must decide between their mother and 1000 strangers.

\begin{remark}[Foundations from the Behavioral Sciences]\label{rem:behavioral-sciences}
Our model is closely related to value-pluralist accounts of moral and political judgment, especially Tetlock's value pluralism model of ideological reasoning \cite{tetlock1986value}. In that literature, people are understood as reasoning from multiple core values or objectives, rather than from a single all-purpose preference ordering. Different decisions may activate different values, and morally difficult cases arise when multiple active values point in different directions. In our formalism, the priorities play the role of these distinct values, and moral difficulty is exactly captured by our conflicted latent state. Selective ``value activation'' can also easily be formalized in our model: a priority is inactive on a query $q = (y,y';x)$ iff $u_j(F_{x \to y}) - u_j(F_{x \to y'}) = 0$ for all $F \in \rules$, in which case it contributes nothing to the directional evidence scores $s_M^+$ and $s_M^-$. This definition manifests intuitively: for instance in our trolley example (\Cref{ex:trolley-priorities}), the \textit{size} priority would be inactive on a choice between two sets of 3 recipients. Additional work contributing accounts of internal pluralism include Moral Foundations Theory, which models moral judgment as drawing on multiple partially distinct concerns such as harm, fairness, loyalty, and authority \cite{graham2013moral}; and work on moral dilemmas, which shows that responses often reflect sensitivity to multiple considerations, such as consequences and moral norms \cite{gawronski2017consequences}.

The concept of conflict in the presence of difficult trade-offs is well-established (e.g., \cite{rosas2019decision}). Our distinction between indifference and conflict is closely related to the evaluative-space model of attitudes, which argues that positive and negative evaluations are not simply opposite ends of one scale, but should be decoupled \cite{CacioppoBerntson1994EvaluativeSpace}. Our model formalizes this intuition in
$s_M^+(q)$ and $s_M^-(q)$, which separately track evidence in favor of \(y\) over \(y'\) and evidence in favor of \(y'\) over \(y\); our \textit{indifference} and \textit{conflict} states correspond to their states of \textit{neutrality} and \textit{maximal conflict}. More generally, our explicit modeling of indecision as behaviorally relevant is consistent with work on choice under conflict, which shows that difficult trade-offs
can produce deferral and no-choice responses
\cite{TverskyShafir1992ChoiceUnderConflict}. We give a more thorough treatment of related work from the behavioral sciences literature in \Cref{app:behavioral-science}.
\end{remark}

\subsection{Query Response Model} \label{sec:complete-response-model}
In contrast to the latent state \(L_{M}(q)\), which is unobserved, a \textit{query response model} describes the query response that is actually observed. These two elements may come apart, e.g.,  due to noise and/or distortions due to restrictions in the permissible responses. 
Formally, a query response model is a mapping
        \[
        R : \mathcal M \to \left(\mathcal Q^{pc}\to\Delta(\mathcal W^{pc})\right).
        \]
       Then, given a query $q$ and a model $M$, the resulting object $R(q;M)$ is a distribution over $\mathcal W^{pc}$. To model randomness in responses, we consider query response models that are parameterized by a \textit{link function} $h$, which describes the functional form of the noise:
\begin{definition}[Link Function]
    A \textit{link function} $h : \mathbb R\to(0,1)$ is a continuous, strictly increasing function that satisfies
\[
\lim_{t\to-\infty}h(t)=0,
\qquad
\lim_{t\to+\infty}h(t)=1,
\qquad
h(-t)=1-h(t)
\quad \forall t\in\mathbb R.
\]
Let $h_\beta(t):=h(\beta t)$ be the link function with scalar inverse temperature parameter \(\beta > 0\), which makes the responses less noisy as $\beta$ gets larger.
\end{definition}

Now, we will define the class of local pairwise comparison query response models we will study, called \textit{baseline} query response models and denoted as $R^\circ_{h_\beta;\tau_r,\tau_\kappa}$. The $\circ$ denotes the baseline designation; the response models in this class vary over the choice of link function $h$ and $\beta$, as well as the thresholds $\tau_r,\tau_\kappa.$ This response model corresponds to the individual reporting a noisy version of their latent state.\footnote{Here, we only apply noise at the boundaries between $\succ^*,\prec^*,$ and $\bowtie^*$ because this is sufficient for our investigation, but one could noise the latent state in many ways. The only property of this noising required for our results is that when $\tau_r = \tau_\kappa = 0$, the model collapses to the zero-threshold case described in \Cref{lem:zero-thresh}.}

\begin{definition}[Baseline Query Response Model]
\label{def:baseline-response-model}
Fix thresholds \(\tau_r\in[0,2]\), \(\tau_\kappa\in[0,1]\), a link function
\(h\), and $\beta > 0$. Given a pluralistic model $M \in \cM$ and query $q=(y,y';x)$,
the baseline response model $R^\circ_{h_\beta;\tau_r,\tau_\kappa}$ gives the response distribution (dropping the $M$ subscripts for readability):
\[
R^{\circ}_{h_\beta;\tau_r,\tau_\kappa}(q;M)(y \triangleright y')
=
\begin{cases}
\mathbf{1}\{r(q) - \tau_r\geq 0 \} \cdot h_\beta(\kappa(q) - \tau_\kappa r(q))
&\triangleright=\,\succ,\\[6pt]
\mathbf{1}\{r(q) - \tau_r\geq 0 \} \cdot h_\beta(-\kappa(q) - \tau_\kappa r(q))
&\triangleright=\,\prec,\\[6pt]
\mathbf{1}\{r(q) - \tau_r\geq 0 \} \cdot \left(1-h_\beta(\kappa(q) - \tau_\kappa r(q))-h_\beta(-\kappa(q) - \tau_\kappa r(q))\right)
&\triangleright=\,\bowtie,\\[6pt]
\mathbf{1}\{r(q) - \tau_r< 0 \}
&\triangleright=\,\sim.
\end{cases}
\]
\end{definition}
Note that for any link function $h$, as $\beta \to \infty$ this response model corresponds to the individual deterministically reporting their latent state.\footnote{This is true except at points exactly on the response boundary where $\kappa(q) = \tau_\kappa r(q)$; then, by $h(t) = 1-h(-t)$, the individual must randomize 50/50 between the relevant decisive response ($\succ$ or $\prec$) and conflict ($\bowtie$).} Note also that we distinguish the latent state from the query response with an $^*$, where the $^*$ designates the latent state and its absence designates the reported relation.

\textbf{The \textit{Zero-Threshold} Special Case.} One important special case occurs when $\tau_r = \tau_\kappa = 0$. The key observation is that in this case, indecisive latent states and query responses become impossible:
\begin{lemma}[Zero-threshold case] \label{lem:zero-thresh}
   When $\tau_r = \tau_\kappa = 0$, the latent states reduce to 
   \[
     L_{M}(q)=
    \begin{cases}
    y \succ^* y'
    & \text{if  }\kappa(q) \geq 0,\\[0.4em]
    y \prec^* y'
    & \text{if  }\  -\kappa(q) \geq 0
    \end{cases}
    \] 
   and for any link function $h$ and $\beta > 0$, the baseline query model reduces to
\[R_{h_\beta;0,0}^{\circ}(q;M)(y\triangleright y')
 = \begin{cases}
    h_\beta(\kappa(q)) & \text{if }\  \triangleright = \, \succ\\
    h_\beta(-\kappa(q)) & \text{if } \ \triangleright = \,  \prec\\
    0 & \text{if } \ \triangleright \in \{\sim,\bowtie\}.\end{cases} \]
\end{lemma}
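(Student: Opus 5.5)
The plan is to substitute $\tau_r=\tau_\kappa=0$ directly into \Cref{def:threshold-response-model} and \Cref{def:baseline-response-model}. The only facts needed beyond the definitions are that $r(q)\ge 0$ always, and the symmetry $h(-t)=1-h(t)$ of the link function.

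\textbf{Latent state.} First I will observe that $s^+_M(q)$ and $s^-_M(q)$ are weighted sums, with nonnegative weights $\omega_j\in[0,1]$, of nonnegative terms $[\cdot]_+$. Hence both are nonnegative, and so $r(q)=s^+_M(q)+s^-_M(q)\ge 0$. With $\tau_r=0$, the condition $r(q)-\tau_r\ge 0$ therefore always holds, and the indifference case $r(q)-\tau_r<0$ never occurs.

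With $\tau_\kappa=0$, the conflict condition becomes $|\kappa(q)|<0$, which is impossible, so the $\bowtie^*$ case is also empty. The two remaining conditions reduce to $\kappa(q)\ge 0$ and $-\kappa(q)\ge 0$, which gives the displayed form of $L_M(q)$. When $\kappa(q)=0$, the tie convention in \Cref{def:threshold-response-model} applies and assigns the state arbitrarily to $\succ^*$ or $\prec^*$, consistent with the displayed cases.

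\textbf{Response model.} Again the indicator $\mathbf{1}\{r(q)\ge 0\}$ equals $1$, so the $\sim$ probability is $0$. The $\succ$ probability becomes $h_\beta(\kappa(q))$ and the $\prec$ probability becomes $h_\beta(-\kappa(q))$. For the conflict probability, the symmetry of $h$ gives $h(-\beta\kappa)=1-h(\beta\kappa)$, so
\[
1-h_\beta(\kappa(q))-h_\beta(-\kappa(q))=0 .
\]
This completes the reduction. The only step requiring any care is establishing $r(q)\ge0$ so that the indicator is identically one; everything else is direct substitution.
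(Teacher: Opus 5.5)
Your proposal is correct and follows the same route as the paper: substitute $\tau_r=\tau_\kappa=0$ directly into the definitions, then use the symmetry $h(-t)=1-h(t)$ to force the $\succ$ and $\prec$ probabilities to sum to one. Your explicit check that $r(q)\ge 0$, so that the indicator $\mathbf{1}\{r(q)\ge 0\}$ is identically one and $\sim$ never occurs, is a step the paper leaves implicit under ``by definition''; it fills in that detail rather than changing the argument.
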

\begin{proof}
    The latent states follow by definition. In the response model the first two probabilities are by definition; then by the fact that $h(-t) = 1-h(t)$, it follows that the probability of $\succ$ and $\prec$ responses must add to 1, and thus the remaining possible responses occur with 0 probability.
\end{proof}
\subsection{Key Preliminary: S-RUMs as the \textit{Perfectly Separable, Zero-Threshold} Special Case}
\label{sec:SRUM}

We now relate our model to \textit{score-based random utility models} (S-RUMs), a popular class of choice models that assume that every feasible local output can be assigned a single latent score, and that pairwise comparisons are generated by noisily comparing these scores:

\begin{definition}
\label{def:score-rum} A score-based random utility model (S-RUM) is
defined by a local score function
\[
    V:X\times Y\to\mathbb R_{\geq 0}.
\]
Let \(\mathcal V\) be the class of feasible local score functions. For any $\mathcal V$, a link function $h$, and $\beta > 0$, the S-RUM is a query response model   $S_{h_\beta}
:
\mathcal V
\to
\left(\mathcal Q^{pc}\to\Delta(\mathcal W^{pc})\right)$ such that, for any query $q$ and any $V\in \mathcal V$, $S_{h_\beta}(q;V)$ is the following distribution over possible responses:
\begin{align*}
    S_{h_\beta}(q;V)(y\succ y')
    &=
    h_\beta\bigl(V(x,y)-V(x,y')\bigr) \quad \text{and} \quad S_{h_\beta}(q;V)(y'\succ y)
    =
    1-
    S_{h_\beta}(q;V)(y\succ y').  
\end{align*}
\end{definition}
Note that S-RUMs capture several choice models popular in alignment, most notably Bradley-Terry with the logit link function $h_\beta^{\mathrm{logit}}(t)
=
(1+\exp(-\beta t))^{-1}$, and Thurstone-Mosteller \cite{thurstone1927law,mosteller1951remarks} with the probit link function $h_\beta^{\mathrm{probit}}(t)
=
\Phi(\beta t)$. 
We now show that S-RUMs correspond exactly to a version of our model that is heavily restricted on two key dimensions: that all priorities are \textit{perfectly separable}, and that the thresholds $\tau_r,\tau_\kappa$ are zero, i.e., indifference and conflict are impossible. In this reduction, $V$ is the analog of $M$ and the response model $S_{h_\beta}$ is the analog of $R^\circ_{h_\beta;0,0}$. We show this correspondence in both the response behavior $S_{h_\beta}$ and $R^\circ_{h_\beta;0,0}$, and the natural choice of optimal decision rule under $V$ and $M$.\footnote{Technically, the utilities defined in this construction may violate the normalization convention that \(|u_j(F)-u_j(F')|\leq 1 \ \forall F,F' \in \rules\). This is not a substantive issue here because that normalization is just to ensure the thresholds $\tau_r,\tau_\kappa$ are appropriately scaled, and here they are zero. If desired, one can instead normalize these utilities and absorb the resulting rescaling into the inverse-temperature parameter \(\beta\), yielding the scale-free version of the correspondence formalized later in \Cref{def:scale-free-indistinguishability}.}

\begin{theorem}[S-RUMs as the Separable, Zero-Threshold Case]
\label{thm:score-rum-equivalence}
Fix any link function \(h\) and \(\beta>0\). 
For every local score function \(V\in\mathcal V\), there exists a
perfectly separable priority model \(M_V\in\mathcal M_{\mathrm{sep}}\)
such that
\[
R^\circ_{h_\beta;0,0}(q;M_V)
=
S_{h_\beta}(q;V)
\qquad
\forall q\in\mathcal Q^{pc},
\]
and for every \(M\in\mathcal M_{\mathrm{sep}}\), there
exists a local score function \(V_M\in\mathcal V\) such that
\[
R^\circ_{h_\beta;0,0}(q;M)
=
S_{h_\beta}(q;V_M)
\qquad
\forall q\in\mathcal Q^{pc}.
\]

Moreover, for any corresponding pair \((V',M')\in\{(V,M_V),(V_M,M)\}\), for any subset of rules $\rules' \subseteq \rules$ the learner might consider, the aggregate-optimal rule is the same:
\[
\arg\max_{F\in\mathcal \rules'}
\sum_{x\in\mathcal X} V'(x,F(x))
=
\arg\max_{F\in\mathcal \rules'}
U_{M'}(F).
\]
\end{theorem}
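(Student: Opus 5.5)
The plan is to prove the two directions by explicit construction. In each direction I will show that $\kappa$ equals the score difference; the zero-threshold response then matches the S-RUM by \Cref{lem:zero-thresh}. The argmax claim follows from one additive decomposition of $U_M$.

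\textbf{From $V$ to $M_V$.} Take a single priority ($m=1$, $\omega_1=1$) with $u_1(F):=\sum_{x\in\cX}V(x,F(x))$, ignoring the normalization as the footnote permits. For $q=(y,y';x)$ and any background rule $F$, the projections $F_{x\to y}$ and $F_{x\to y'}$ agree off $x$. Hence $\Delta_1^F(q)=V(x,y)-V(x,y')$ for every $F$, so $M_V$ is perfectly separable with $\delta_1(q)=V(x,y)-V(x,y')$. The same holds for the reversed query, with $\delta_1(y',y;x)=-\delta_1(q)$.

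Because $\prules$ is unanimous, $s^+=[\delta_1(q)]_+$ and $s^-=[-\delta_1(q)]_+$, so $\kappa(q)=\delta_1(q)$. \Cref{lem:zero-thresh} then gives $R^\circ_{h_\beta;0,0}(q;M_V)(\succ)=h_\beta(V(x,y)-V(x,y'))$ and $R^\circ_{h_\beta;0,0}(q;M_V)(\prec)=1-h_\beta(V(x,y)-V(x,y'))$. This is exactly $S_{h_\beta}(q;V)$, since $h(-t)=1-h(t)$. Finally, $U_{M_V}(F)=\sum_x V(x,F(x))$ by construction, so the argmax claim over any $\rules'$ is immediate.

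\textbf{From $M\in\cM_{\mathrm{sep}}$ to $V_M$.} By unanimity, each priority contributes $\omega_j([\delta_j(q)]_+-[-\delta_j(q)]_+)=\omega_j\delta_j(q)$ to $\kappa$. This uses that $\delta_j(y',y;x)=-\delta_j(y,y';x)$, which follows directly from the definition of $\Delta_j^F$. So $\kappa(q)=\sum_j\omega_j\delta_j(q)$.

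Now fix a reference rule $F^0$ and define
\[
V_M(x,y):=U_M(F^0_{x\to y})+c,
\]
where $c$ is a constant large enough to make $V_M\ge 0$; this is possible since $\rules$ is finite. Then
\[
V_M(x,y)-V_M(x,y')=\sum_j\omega_j\Delta_j^{F^0}(q)=\sum_j\omega_j\delta_j(q)=\kappa(q),
\]
and the response equality follows from \Cref{lem:zero-thresh} exactly as in the first direction.

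\textbf{Argmax equivalence.} This is the step needing real care. I will show that
\[
U_M(F)=\sum_x V_M(x,F(x))+C
\]
for some constant $C$ independent of $F$; the argmax over any $\rules'$ then coincides. Enumerate $\cX=\{x_1,\dots,x_n\}$ and let $G_i$ be the rule that agrees with $F$ on $x_1,\dots,x_i$ and with $F^0$ elsewhere, so $G_0=F^0$ and $G_n=F$. Then
\[
U_M(F)-U_M(F^0)=\sum_{i=1}^n\big(U_M(G_i)-U_M(G_{i-1})\big).
\]
The $i$-th term is $\sum_j\omega_j\Delta_j^{G_i}(F(x_i),F^0(x_i);x_i)$. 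By perfect separability this does not depend on the background $G_i$, so it equals $V_M(x_i,F(x_i))-V_M(x_i,F^0(x_i))$. Summing gives the decomposition with $C=U_M(F^0)-\sum_x V_M(x,F^0(x))$.

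The main obstacle is this telescoping step: it is the only place where separability must hold against every background, not just $F^0$. I also need to handle the degenerate queries with $y=y'$, where $\delta_j=0$ trivially, and note that the normalization of $u_j$ plays no role at zero thresholds.
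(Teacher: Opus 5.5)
Your proposal is correct and follows essentially the paper's proof. It uses the same single-priority construction $u(F)=\sum_x V(x,F(x))$ in the forward direction and the same reference-rule score $V_M(x,y)=\sum_j\omega_j u_j(F^0_{x\to y})$ in the reverse direction. It closes with the same telescoping argument; you run it on $U_M$ directly rather than on each $u_j$, which makes no difference. Your additive shift $c$ ensuring $V_M\ge 0$ is a small gain in rigor, since the S-RUM definition requires scores in $\mathbb{R}_{\ge 0}$ but the paper's reverse construction does not enforce this.
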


\begin{proof}[Proof Sketch]
The proof is deferred to \Cref{app:score-rum-equivalence}. The construction in both directions is simple: Given $V$, $M_V$ is constructed with a single priority with utility function $u_V(F):= \sum_{x \in \cX} V(x,F(x))$. Given perfectly separable $M$, $V_M$ is constructed such that $V(x,y) = \sum_{j \in [m]} \omega_j V_j(x,y)$, where $V_j(x,y) = u_j(F_{x \to y})$ for an arbitrary background rule $F$. The key in both cases is showing that for all queries $q = (y,y';x)$, $V(x,y) - V(x,y') = s^+(q) - s^-(q)$, as these are the differences on which the probability link functions depend in the respective response models. The optimal rule correspondence holds by perfect separability of the priorities in $M'$, which allows the utility impact of the rule's behavior to decompose across all $x$, making the two sums equivalent for every $F$ (up to a shift, which is irrelevant to the optimization).
\end{proof}
\section{Generalization \#1:  Inseparability}  \label{sec:insep}
Now, we ask: what happens when the standard assumption of perfect separability no longer holds --- i.e., when the individual's priority model can be in $\cM \setminus \Msep$? To isolate this generalization all else held equal, we keep the standard restriction that $\tau_r,\tau_\kappa = 0$ throughout this section. 

We begin by illustrating that this generalization from $\Msep$ to $\cM$ is practically relevant --- that realistic priorities violate separability, and can do so to the maximum possible degree. We formalize such violations as \textit{inseparability} and \textit{perfect inseparability}, defined below.
\begin{definition}[(Perfect) Inseparability] \label{def:insep}

    A priority $j$ is \textit{inseparable} at query $q = (y,y';x)$ iff there exists $F,F' \in \rules$ such that
    \[u_j(F_{x\to y}) > u_j(F_{x \to y'}) \qquad \text{and}\qquad u_j(F'_{x\to y}) < u_j(F'_{x \to y'}).\]
    A priority \(j\) is \textit{perfectly inseparable} at query \(q=(y,y';x)\) iff
    there exists \(\delta>0\) and a partition \(\mathcal F_y,\mathcal F_{y'}\)
    of \(\mathcal F\) such that \(|\mathcal F_y|=|\mathcal F_{y'}|\) and
    \[
    \Delta^F_j(y,y';x)=\delta \quad \forall F\in \mathcal F_y,
    \qquad
    \Delta^F_j(y,y';x)=-\delta \quad \forall F\in \mathcal F_{y'}.
    \]
    A priority $j$ is (perfectly) inseparable when it is (perfectly) inseparable at every query $q \in \cQ^{pc}$.

\end{definition}
 \Cref{fig:separability} illustrates the four separability-related notions we define, from most to least separable. As shown, perfect inseparability represents the strongest kind of inseparability, where the priority yields \textit{exactly} equal and opposing evidence for each response $y$ and $y'$. 
\begin{figure}[h!]
    \centering
    \includegraphics[width=0.92\linewidth]{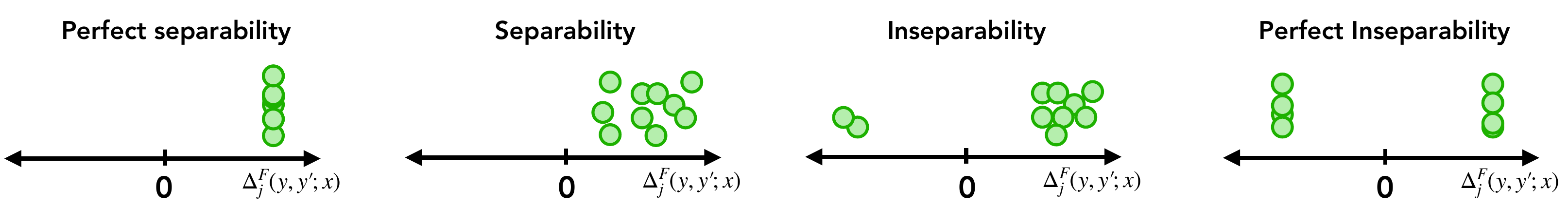}
    \caption{Illustrations of Separability/Inseparability notions. Each dot's horizontal position represents the value of $\Delta_j^F(q)$ for an $F \in \rules$. The defining characteristic of separability is that all dots appear on the same side of 0 (and for perfect separability, in exactly the same horizontal position). Oppositely, inseparability occurs when dots appear on opposite sides of 0, and perfect inseparability reflects the case where they appear in symmetric clusters equidistant from 0.}
    \label{fig:separability}.
    \vspace{-1em}
\end{figure}

We now show that in fact, many natural
priorities are inseparable, because inseparability simply requires the priority to evaluate a rule \textit{across inputs}. We point out two types of inseparable priorities recurring in the literature, though there are almost certainly others. First,
\textit{distributional priorities} evaluate the allocation of benefits or harms across
individuals or groups. This class includes
canonical priorities like \textit{egalitarianism} and \textit{proportionality}, which we formalize below. Second, \textit{axiomatic priorities} require the rule to satisfy consistency conditions across related inputs, of which
\textit{equal treatment} is our canonical example. The salience of these issues to people is supported by substantial research on distributive fairness and procedural justice (e.g., \citep{cappelen2007pluralism,fisman2007individual,colquitt2001dimensionality,leventhal1980what}).

We will use egalitarianism as the main illustrative example. To formalize ``benefit'' and ``harm'', let $v_i : \mathcal{X} \times \mathcal{Y} \to \mathbb{R}$ describe $i$'s benefit associated with the outcome, so $v_i(x,y)$ is their benefit (or if negative, harm) from the rule selecting $y$ at $x$. Because distributional priorities measure how benefits are distributed \textit{over inputs}, we let $\mathcal{D} \in \Delta(\mathcal{X})$ be a generic distribution over inputs.

\begin{definition}[\textbf{Egalitarianism}] \label{def:egal} \upshape \textit{Egalitarianism} reflects the priority that ``the rule should make sure no one receives too little benefit in the long run.'' For a generic input distribution $\mathcal{D} \in \Delta(\mathcal{X})$,\footnote{The 2-scaling is just to make the numbers more convenient. This is just one possible formulation of this priority, but conceptually, egalitarian's structural inseparability is not due to the specific formulation, but because it considers the distribution of benefits over inputs.} 
  \[
    u_{\mathrm{egal}}(F)=2 \cdot \min_{i\in N}\mathbb{E}_{x\sim\mathcal{D}}\!\left[v_i\bigl(x,F(x)\bigr)\right] \qquad \forall \, F \in \rules.
    \]
\end{definition}
To see why egalitarianism is structurally inseparable, consider an allocation
problem in which each input asks whether a good should be allocated to
recipient \(a\) or recipient \(b\). The egalitarian value of this local
choice depends on who is worst off, which depends centrally on the rule's behavior elsewhere. Concretely, for a query comparing \(a\) and \(b\), there are background rules under
which \(a\) has been maximally shortchanged at all other inputs, in which
case egalitarianism favors allocating the good to \(a\). There are also
background rules under which \(b\) has been maximally shortchanged, in
which case egalitarianism favors allocating the good to \(b\). We formalize this intuition in the proposition below; the proof is in \Cref{app:egal-insep}. 

\begin{proposition}[Egalitarianism is inseparable (in non-degenerate cases)]
\label{prop:egal-insep}
Define an allocation task with recipients $N$ and items $K$. Let $|N| \geq 3$ and fix input space $
\mathcal X=\{(a,b;k): a,b\in N,\ a\neq b,\ k\in K\}$ and output space $\mathcal Y((a,b;k))=\{a,b\}.$
Let \(\mathcal D\) have full support, i.e., \(\cD(x)>0\) for all \(x\in\cX\). Define the benefits such that they are nonnegative, and each recipient benefits only when they receive a good:
\[
v_j((a,b;k),y)>0 \iff y=j
\qquad
\text{for all } j\in N,\ a,b\in N,\ a\neq b,\ k\in K,\ y\in\{a,b\}.
\]
Then, the egalitarian priority is inseparable at every local pairwise query \(q\).
\end{proposition}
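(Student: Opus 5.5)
The plan is to prove inseparability directly from \Cref{def:insep}. Fix an arbitrary query $q=(y,y';x)$ with $x=(a,b;k)$ and $\{y,y'\}=\{a,b\}$; by symmetry, take $y=a$ and $y'=b$. I will construct two background rules. Under the first, $F^{a}$, recipient $a$ is shortchanged everywhere except possibly at $x$. Under the second, $F^{b}$, the roles of $a$ and $b$ are swapped. The goal is to show $u_{\mathrm{egal}}(F^{a}_{x\to a})>u_{\mathrm{egal}}(F^{a}_{x\to b})$ and $u_{\mathrm{egal}}(F^{b}_{x\to a})<u_{\mathrm{egal}}(F^{b}_{x\to b})$. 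Any positive affine normalization of $u_{\mathrm{egal}}$ preserves strict inequalities, so I can work with the unnormalized form of \Cref{def:egal}.

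\textbf{Construction of $F^{a}$.} At every input $x'\neq x$ that contains $a$, $F^{a}$ allocates to the other recipient. At inputs containing neither $a$ nor anyone in particular, the choice is arbitrary. I must also make sure every $c\neq a$ receives at least one good at some input other than $x$:
\begin{itemize}
\item For $c\notin\{a,b\}$, $c$ receives at the input $(a,c;k)$, which is different from $x$.
\item For $b$, use the input $(b,c;k)$ for some third recipient $c\notin\{a,b\}$, which exists because $|N|\ge 3$, and set $F^{a}((b,c;k))=b$.
\end{itemize}
The same input cannot be asked to serve two different recipients. It is exactly here that $|N|\geq 3$ is used: the pair $\{a,b\}$ alone might only offer $x$ itself (e.g.\ if $|K|=1$ and inputs are unordered).

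\textbf{Comparing the two projections.} Write $\bar v_i(G)=\mathbb E_{x'\sim\cD}[v_i(x',G(x'))]$.
\begin{itemize}
\item Benefits are nonnegative and positive only upon receipt, so $\bar v_a(F^{a}_{x\to b})=0$.
\item For every $c\neq a$, $\bar v_c(F^{a}_{x\to b})>0$, since $\cD$ has full support and $c$ receives a good at some input other than $x$. Hence $u_{\mathrm{egal}}(F^{a}_{x\to b})=0$.
\item Switching $x$ to $a$ gives $\bar v_a(F^{a}_{x\to a})\ge \cD(x)\,v_a(x,a)>0$.
\item For every $c\neq a$, $\bar v_c(F^{a}_{x\to a})$ is still positive, because the goods that $c$ receives away from $x$ are unaffected. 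Only $b$ loses anything, namely its benefit at $x$, and $b$ retains its good at $(b,c;k)$.
\end{itemize}
Thus the minimum is strictly positive, so $u_{\mathrm{egal}}(F^{a}_{x\to a})>0=u_{\mathrm{egal}}(F^{a}_{x\to b})$. The rule $F^{b}$ is built symmetrically, with $b$ receiving nothing off $x$ and every $c\neq b$ guaranteed a good off $x$ (again using a third recipient for $a$). It yields the reverse strict inequality.

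\textbf{Main obstacle.} The only delicate step is the combinatorial bookkeeping in the construction: every recipient other than the shortchanged one must receive a good at some input other than $x$, without any input being assigned two recipients. I would handle this by explicitly listing the inputs $(a,c;k)$ for $c\ne a,b$ and one input $(b,c;k)$ with a third recipient $c$. These are pairwise distinct and distinct from $x$, so the assignments are consistent, and all remaining inputs are filled arbitrarily subject to never giving to $a$.
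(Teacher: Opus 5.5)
Your proof is correct and takes essentially the same route as the paper: build a background rule that never gives to $a$ away from $x$ but gives every other recipient a good at some input other than $x$, so the egalitarian minimum is $0$ under $F^a_{x\to b}$ and strictly positive under $F^a_{x\to a}$, and then swap the roles of $a$ and $b$. Your explicit choice of the inputs $(a,c;k)$ and $(b,c;k)$ also verifies a point the paper's proof leaves implicit: $b$ still receives a good away from $x$ once $x$ is reassigned to $a$.
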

This result is proven for an allocation task over individual recipients, but similar propositions can be proven for other types of allocation tasks; in each setup, one simply needs to avoid degeneracy in $\cD$. For example, suppose we are allocating aid over \textit{sets} of recipients, and let $\cD$ be restricted such that recipient \(i\) appears rarely under \(\mathcal D\) (so \(i\) can be worst off), but whenever \(i\) does appear, they appear in both possible sets of recipients.  Then, on such queries, the egalitarian priority cannot generate evidence in favor of one side on \(i\)'s behalf, and its inseparability does not bind.

Now, we show something stronger: that in some instances, egalitarianism can be \textit{perfectly} inseparable at every query. We use an extremely simple example for illustrative purposes. We will re-use this example in later results.
    
\begin{proposition}[Egalitarianism can be perfectly inseparable] \label{prop:egal}
     For certain $\cX,\cY$, \textsc{Egalitarianism} can be perfectly 
    inseparable on all local pairwise comparison queries $q \in \cQ^{\text{pc}}$.
\end{proposition}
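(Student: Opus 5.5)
The plan is to prove the statement by an explicit minimal construction and a direct computation of all projection gaps. Take an allocation task with two recipients $N=\{a,b\}$ and two inputs $\cX=\{x_1,x_2\}$. Each input allocates a single good, so $\cY(x_1)=\cY(x_2)=\{a,b\}$. Let $\cD$ be uniform on $\cX$, and set $v_i(x,y)=\mathbf{1}\{y=i\}$. Then $\rules$ has exactly four rules, and for any $F$, writing $n_i(F)=|\{x: F(x)=i\}|$, \Cref{def:egal} gives
\[
u_{\mathrm{egal}}(F)=2\min_{i\in\{a,b\}}\tfrac{n_i(F)}{2}=\min\{n_a(F),n_b(F)\}\in\{0,1\}.
\]
Hence $|u_{\mathrm{egal}}(F)-u_{\mathrm{egal}}(F')|\le 1$, so the normalization convention holds with no rescaling.

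Next I compute $\Delta^F_{\mathrm{egal}}(q)$ for a query $q=(a,b;x_1)$. Both projections $F_{x_1\to a}$ and $F_{x_1\to b}$ overwrite $F$ at $x_1$, so the gap depends only on the background output $F(x_2)$.
\begin{itemize}
\item If $F(x_2)=a$, then $F_{x_1\to a}$ gives $n_a=2$ and utility $0$, while $F_{x_1\to b}$ gives the split $1,1$ and utility $1$. So $\Delta^F=-1$.
\item If $F(x_2)=b$, the symmetric argument gives $\Delta^F=+1$.
\end{itemize}
Set $\rules_a=\{F:F(x_2)=b\}$ and $\rules_b=\{F:F(x_2)=a\}$. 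Each contains exactly two rules, because $F(x_1)$ ranges freely over $\{a,b\}$. So these sets form an equal-size partition of $\rules$ with $\Delta^F=\delta$ on $\rules_a$ and $\Delta^F=-\delta$ on $\rules_b$, where $\delta=1>0$. This is exactly perfect inseparability in the sense of \Cref{def:insep}.

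The remaining queries follow by symmetry. The query $(b,a;x_1)$ simply negates every gap, so the same partition works with the roles of the two blocks swapped. Queries at $x_2$ follow by swapping the roles of $x_1$ and $x_2$.

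The only real obstacle is a definitional edge case. $\cQ^{\mathrm{pc}}$ technically contains degenerate queries $(y,y;x)$, and on these every gap is $0$, so no $\delta>0$ exists. I would handle this by noting that such queries are vacuous comparisons. The statement should be read over queries with $y\neq y'$, which are the only ones carrying any information, and I would state this restriction explicitly in the proof. Beyond that, the argument is a finite check that is fully determined by the single bit $F(x_2)$ (respectively $F(x_1)$).
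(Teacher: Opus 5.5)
Your proposal is correct and uses the same construction as the paper's proof: the two-recipient, two-input allocation instance with uniform $\mathcal{D}$ and $v_i(x,j)=\mathbf{1}\{j=i\}$, which yields $\delta=1$ and the same equal-size partitions (e.g., $\rules_a=\{F^{ab},F^{bb}\}$ at $(a,b;x_1)$). Your explicit treatment of the reversed queries $(b,a;x)$ and the degenerate queries $(y,y;x)$ is extra care that the paper's proof leaves implicit.
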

\begin{proof}
Define an allocation task with two recipients, $N = \{a,b\}$ and two goods $K = \{k_1,k_2\}$. Suppose the task is to allocate one good to one recipient; thus there are two possible inputs, $x_1 = (a,b;k_1)$ and $x_2 = (a,b;k_2)$ and $\cY(x) = \{a,b\}$ for $x \in \{x_1,x_2\}$. Suppose $\cD$ is uniform, so $\Pr_{x\sim \cD}[x = x_1] = \Pr_{x\sim \cD}[x = x_2] = 1/2$. Suppose both recipients have the same benefit for either good, and no benefit if they aren't given a good:
\[v_i(x,j) = \mathbf{1}\{j = i\} \qquad \text{for all } \ i,j \in \{a,b\}.\]
In this instance, there are four possible deterministic rules: $\rules = \{F^{aa}, F^{ab}, F^{ba}, F^{bb}\}$, where $F^{ij}$ is the rule that assigns $x_1 \to i$ and $x_2 \to j$ for all $i,j \in \{a,b\}$. 
As expected, the egalitarian priority prefers rules that spread out the goods over recipients:
    \[u_\text{egal}(F^{ab}) = u_\text{egal}(F^{ba}) = 1; \quad u_\text{egal}(F^{aa}) = u_\text{egal}(F^{bb}) = 0.\]
   We immediately see that this priority is perfectly inseparable at both possible queries: applying \Cref{def:insep}, at each query we have that $\delta = 1$, and the following equal-size partitions of rules:
   \begin{align*}
       q_1=(a,b;x_1):& \quad \rules_a = \{F^{bb}, F^{ab}\}, \ \  \rules_b = \{F^{aa}, F^{ba}\}, \\
       q_2=(a,b;x_2):& \quad \rules_a = \{F^{bb}, F^{ba}\}, \ \ \rules_b = \{F^{aa}, F^{ab}\}.
   \end{align*}
   In words, at each query, the set $\rules_j$ consists of the rules for which $j$ is the more advantageous projection at that query's input, always with a margin of 1.
\end{proof}
To further illustrate the space of inseparable priorities, we now give our two additional examples: \textit{proportionality}, another distributional priority, and \textit{equal treatment}, an axiomatic priority. We show in Appendix \ref{app:prop-eq} that, like Egalitarianism, both of these priorities can be perfectly inseparable.

\begin{definition}[\textbf{Proportionality}] \label{ex:prop}
\upshape Proportionality describes the intuition that ``group $G_\ell$ should receive an $\alpha_\ell$-share of the overall benefit.'' We can formalize this as follows, for a generic distribution of inputs $\mathcal{D}$ over $\mathcal{X}$. Let $G_\ell \subseteq N$ for all $\ell \in [g]$ be the set of protected groups, with ideal fraction $\alpha_\ell \in [0,1]$. Then,
\[
u_{\mathrm{prop}}(F)=-\sum_{\ell \in [g]}\left(\E_{x \sim \mathcal{D}}[\mathbf{1}\{F(x) \text{ allocates
to a member of group } G_\ell\}]-\alpha_\ell\right)^2.
\]
\end{definition}

\begin{definition}[\textbf{Equal Treatment}]
\label{def:equal-treatment-priority}
The equal
treatment priority describes the intuition that counterparts should be treated equally. Formally, let \(\mathcal C\) be a collection of ``\textit{counterpart constraints}'' of the form $(x,x',\eta)$, where each
\((x,x',\eta)\in\mathcal C\) consists of two inputs \(x,x'\in\mathcal X\)
that differ only in some protected-group identity of otherwise comparable
recipients, together with a bijection $\eta:\mathcal Y(x)\to\mathcal Y(x')$
that describes what it means for the outputs at $x$ and $x'$ to correspond appropriately. Then,
\[
u_{\mathrm{eq}}(F)
=
\frac{1}{|\mathcal C|}
\sum_{(x,x',\eta)\in\mathcal C}
\mathbf 1\{F(x')=\eta(F(x))\}.
\]
\end{definition}

\subsection{Technical Preliminaries}
In the next two subsections, we consider the consequences of assuming separability when it does not hold. In order to formalize the consequences of this misspecification, we define some key preliminaries.
First, let 
\[\cR^\circ_{h;0,0} =\{ R^\circ_{h_\beta;0,0}(\cdot;M):\beta>0\}\]
be the class of baseline response models with link function $h$, varied over all possible values of $\beta$. The core definition we will use is \textit{scale-free indistinguishability} of two underlying priority models, which means that these two models' induced baseline response models are behaviorally indistinguishable up to the inverse temperature parameter $\beta > 0$.
\begin{definition}[Scale-Free Indistinguishability]
\label{def:scale-free-indistinguishability} Two priority models \(M,M'\in\mathcal M\)
are scale-free indistinguishable with respect to (w.r.t.) link function $h$ iff
\[
\cR^\circ_{h;0,0}(q;M)
=
\cR^\circ_{h;0,0}(q;M') \quad
\forall q\in\mathcal Q^{pc}.
\]
In other words, in $h_\beta$, $\beta$ is a nuisance parameter: for every $\beta > 0$, there is a $\beta' > 0$ such that $R^\circ_{h_\beta;0,0}(q;M)=R^\circ_{h_{\beta'};0,0}(q,M')$, and likewise for all $\beta' > 0$ there exists some such $\beta$.
\end{definition}

Scale-free indistinguishability is closely related to identifiability of the priority weights.
\begin{definition}[Scale-Free Identifiability]
\label{def:scale-free-weight-id}
Fixing $M = (u,\omega)$, weight \(\omega_j\) is
scale-free identifiable from local pairwise comparison queries w.r.t.~$h$ if
there do not exist two weight vectors \(\omega,\omega'\in\Delta^{m-1}\)
such that $\omega_j\neq \omega'_j$ and the models \(M=(u,\omega)\) and \(M'=(u,\omega')\) are scale-free
indistinguishable w.r.t.~$h$.

We say that \(\omega_j\) is scale-free non-identifiable over
\(S\subseteq[0,1]\) if, for every \(\gamma \in S\), there exists a
scale-free indistinguishable model \(M'=(u,\omega')\) with
\(\omega'_j=\gamma\).
\end{definition}

To consider what might be learned under the assumption of perfect separability, we define a \textit{perfect separable rationalization} of a generic model $M \in \cM$, which is a perfectly separable model $\widetilde{M} \in \Msep$ whose resulting response behavior is indistinguishable from that produced by $M$:
\begin{definition}[Perfect Separable Rationalization]
\label{def:perfect-sep-rationalization}
\(\widetilde M\in \Msep\) is a perfect separable rationalization
of \(M\in\cM\) w.r.t.~$h$  iff \(M\) and \(\widetilde M\) are scale-free indistinguishable w.r.t.~$h$.
\end{definition}
Let the set of all perfectly separable rationalizations of $M$ w.r.t.~$h$ be defined as follows. Note that this set could be empty, i.e., an inseparable model may not have a perfectly separable rationalization.
\[
\mathrm{PSR}_{h}(M)
=
\left\{
\widetilde M\in\Msep
:
M \text{ and } \widetilde M
\text{ are scale-free indistinguishable w.r.t.~}h\right\}.
\]

Finally, we formalize a class of decoders that are standard when behavior is assumed to arise from an S-RUM. 
A \textit{transcript} is a sequence $\cT=(q_t,w_t)_{t\ge1}$ of query, response
pairs, and a \textit{decoder} $D_{h,\cM'}$ maps a transcript to a model $\widehat M\in\cM'$. Here, the subscripts respectively reflect the decoder's assumptions that the baseline response model has link function $h$ and the true model lies in $\cM' \subseteq \cM$. Then,  $D_{h,\cM'}(\cT)$ is the model  in $\cM'$ the decoder returns on $\cT$.
We study decoders in the infinite-data limit, so that identifiability rather
than sampling noise is the binding constraint: call a transcript \textit{exhaustive}
if every query $q\in\cQ^{\mathrm{pc}}$ recurs infinitely often. Then, along an
exhaustive transcript generated by true response model $R^*(\cdot,M^*)$, the empirical response
frequencies converge to $R^*(\cdot\,;M^*)$ at every query. 

\begin{definition}[Separable-consistent decoder]
\label{def:separable-consistency}
A decoder $D_{h,\Msep}$ is \textit{separable-consistent} iff for every $M^*\in\cM$ with
$\mathrm{PSR}_{h}(M^*)\neq\emptyset$ and every exhaustive transcript $\cT$ generated
under $R^\circ_{h_\beta;0,0}(\cdot;M^*)$ for any $\beta > 0$,
\[
 D_{h,\Msep}(\cT)\in\mathrm{PSR}_h(M^*).
\]
\end{definition}
Intuitively, a separable-consistent decoder ``assumes perfect separability'' in the sense that,
when the responses \textit{could have} come from a perfectly separable model, it commits
to such an explanation. This is an extremely weak requirement, and it is automatically satisfied by
standard preference-learning estimators (e.g., maximum-likelihood estimation) that assume S-RUMs: any decoder that fits an S-RUM reports a fitted score function $\widehat V$, which is equivalent to a perfectly separable model $M_{\widehat{V}}$ by Theorem~\ref{thm:score-rum-equivalence}. When $\mathrm{PSR}(M^*)\neq\emptyset$,
the fit is exact on exhaustive data, so $M_{\widehat V}\in\mathrm{PSR}(M^*)$.

In the next two subsections, we illustrate the risks of using separable-consistent decoders. In particular, we will show that both perfectly inseparable (\Cref{sec:perfinsep}) and general inseparable (\Cref{sec:gen-insep}) priority models $M^*$ can admit perfectly separable rationalizations whose aggregate-optimal rules are highly suboptimal in the true model $M^*$. Our results will also clarify when there is hope for other kinds of decoders to avoid this issue.

\subsection{Perfectly Inseparable Priorities} \label{sec:perfinsep}

Now, we first show that when perfect separability is assumed erroneously, perfectly separable priorities are erased with no trace (\Cref{cor:erase}). Then, we show that perfectly inseparable priorities cannot be identified by local pairwise comparisons at all, meaning that no learner restricted to such queries can rectify this issue (\Cref{cor:non-ident}). Both of these conclusions follow from the following key theorem below. This result and its subsequent corollaries are not actually dependent on the use of the linear priority aggregator: they hold for a more general class of priority aggregators that are \textit{scale-preserving} (\Cref{def:scale-pres}), which just requires that adding priority $j^*$ does not change how the remaining priorities trade off against each other. We give the full proof in \Cref{app:scale-free-ind}.

Here, we let $M^{-J'}$ denote the model $M \in \cM$ with priorities in $J' \subset J$ removed, and with the priority weights rescaled so that $\omega^{-J'} = \frac{(\omega_j)_{j\in[m]\setminus\{J'\}}}
{\|(\omega_j)_{j\in[m]\setminus\{J'\}}\|_1}.$

\begin{theorem}
\label{thm:scale-free-ind}
Let \(M_{\mathrm{insep}}\in\cM\) be a priority model containing a nonzero number of perfectly inseparable priorities
\(J_{\mathrm{insep}}\subset[m]\).\footnote{One can also allow $J_{\mathrm{insep}}\subseteq[m]$, and handle the case of $J_{\mathrm{insep}} = [m]$ by defining the null model consisting of one priority in which $u_j(F) = 0 \ \forall F \in \rules$.} Let $M=(M_{\mathrm{insep}})^{-J_{\mathrm{insep}}}$. Then, \(M_{\mathrm{insep}}\) and \(M\) are scale-free indistinguishable w.r.t.~any link function \(h\).
\end{theorem}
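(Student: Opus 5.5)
Under zero thresholds, \Cref{lem:zero-thresh} shows that $R^\circ_{h_\beta;0,0}(q;M)$ depends on $M$ only through the scalar $\kappa_M(q)$, via $h_\beta(\kappa_M(q))$ and $h_\beta(-\kappa_M(q))$. My plan is therefore to prove that there is a single positive constant $c$, independent of $q$, with
\[
\kappa_{M_{\mathrm{insep}}}(q) = c\,\kappa_{M}(q) \qquad \forall q\in\cQ^{\mathrm{pc}} .
\]
Given this, $h_\beta(\kappa_{M_{\mathrm{insep}}}(q)) = h(\beta c\,\kappa_M(q)) = h_{\beta c}(\kappa_M(q))$. So the map $\beta\mapsto\beta'=c\beta$ is a bijection of $(0,\infty)$ that matches the two response classes $\cR^\circ_{h;0,0}$ query by query, which is exactly \Cref{def:scale-free-indistinguishability}. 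The argument uses only that $h_\beta(t)=h(\beta t)$, so it holds for every link function $h$.

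The first key step is to show that each perfectly inseparable priority contributes nothing to $\kappa$. Fix $q=(y,y';x)$ and $j\in J_{\mathrm{insep}}$. The multiset $\{\Delta_j^F(y,y';x)\}_{F\in\rules}$ consists of $|\rules|/2$ copies of $\delta$ and $|\rules|/2$ copies of $-\delta$. Since $\Delta_j^F(y',y;x)=-\Delta_j^F(y,y';x)$, the reversed multiset $\{\Delta_j^F(y',y;x)\}_{F}$ is identical. Permutation invariance of $\prules$ then gives
\[
\prules\big((\Delta_j^F(y,y';x))_F\big)=\prules\big((\Delta_j^F(y',y;x))_F\big)=:a_j(q).
\]
It follows that $j$ contributes $\omega_j[a_j(q)]_+$ to both $s^+$ and $s^-$, and these cancel in $\kappa=s^+-s^-$. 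This is the main (though short) obstacle. Note that $j$ can still contribute to $s^+$ and $s^-$ themselves, so the argument must go through $\kappa$ alone; this is why the zero thresholds matter. With nonzero thresholds, $r$ would change and the equivalence would fail.

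The second step handles the remaining priorities. For $j\notin J_{\mathrm{insep}}$, the utilities $u_j$ are unchanged in $M$, so each per-priority aggregate $\prules(\cdot)$ is identical in the two models. The only change is in the weights: they are rescaled to $\omega_j/W$, where $W=\sum_{j\notin J_{\mathrm{insep}}}\omega_j$. Hence
\[
\kappa_{M_{\mathrm{insep}}}(q)=\sum_{j\notin J_{\mathrm{insep}}}\omega_j\big([\cdot]_+-[\cdot]_+\big)=W\,\kappa_M(q),
\]
so $c=W$. We need $W>0$, which holds whenever some non-inseparable priority has positive weight. The degenerate case $W=0$ is covered by the footnote's null model: there $\kappa\equiv0$ in both models, so the responses coincide for every $\beta$.

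For scale-preserving aggregators (\Cref{def:scale-pres}), the same two steps go through. The inseparable priority feeds equal evidence into both directions, and removing the remaining weight mass only rescales the tradeoff among the other priorities. I would note this as a remark rather than prove it separately.
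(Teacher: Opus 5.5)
Your proof is correct and follows essentially the same route as the paper's. The paper also uses permutation invariance (\Cref{lem:insep-silent}) to show that each perfectly inseparable priority feeds identical evidence into $s^+$ and $s^-$, so it cancels from $\kappa$, and absorbs the leftover positive rescaling of the remaining priorities into $\beta$; the only differences are that the paper works with general scale-preserving aggregators and removes one priority at a time, iterating, while you treat the linear aggregator directly with the explicit constant $c=W$ and handle the $W=0$ edge case and both directions of $\beta\mapsto c\beta$ more carefully than the paper does.
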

\begin{proof}[Proof sketch]
The key to the proof is that no reasonable
rule aggregator can extract directional evidence
from a perfectly inseparable priority. Fix a perfectly inseparable priority
\(j\in J_{\mathrm{insep}}\) and a query \(q=(y,y';x)\). By perfect
inseparability, there is some \(\delta>0\) and an equal-size partition
\(\mathcal F_y,\mathcal F_{y'}\) of \(\mathcal F\) such that
\begin{align*}
    \left\{\Delta^{F}_{j}(y, y';\, x)\right\}_{F \in \mathcal{F}}
    &\;=\;
    \{\underbrace{\delta, \ldots, \delta}_{|\mathcal{F}_y|},\;
      \underbrace{-\delta, \ldots, -\delta}_{|\mathcal{F}_{y'}|}\}, \qquad \left\{\Delta^{F}_{j}(y', y;\, x)\right\}_{F \in \mathcal{F}}
    \;=\;
    \{\underbrace{-\delta, \ldots, -\delta}_{|\mathcal{F}_y|},\;
      \underbrace{\delta, \ldots, \delta}_{|\mathcal{F}_{y'}|}\}.
\end{align*}
Since \(|\mathcal F_y|=|\mathcal F_{y'}|\), these two multisets are identical
up to permutation. Therefore, by the permutation invariance of $\prules$,
\[
\phi^{\mathrm{rules}}
\left(
\left\{\Delta^{F}_{j}(y,y';x)\right\}_{F\in\mathcal F}
\right)
=
\phi^{\mathrm{rules}}
\left(
\left\{\Delta^{F}_{j}(y',y;x)\right\}_{F\in\mathcal F}
\right).
\]
Thus, the output of $\prules$ affects both directional evidence scores $s^+_M$ and $s^-_M$ symmetrically, so after applying a scale-preserving priority aggregator, its contribution cancels from $\kappa(q)$, up to a constant (over queries) rescaling of the remaining priorities. It follows that removing such priorities can only rescale the total
contribution of the remaining priorities but does not affect their relative importance, and thus can be absorbed by the inverse temperature parameter $\beta$. Hence,
\(M_{\mathrm{insep}}\) and \(M\) are scale-free indistinguishable.
\end{proof}

We now consider what a separable-consistent decoder will do in the presence of perfectly inseparable priorities. We assume here what is arguably the best-case scenario: that all priorities that are not perfectly inseparable are perfectly separable. We show that such decoders erase the perfectly inseparable priorities without a trace (\Cref{cor:erase}). 
\begin{corollary}[Erasure by Separable-Consistent Decoders]
\label{cor:erase}
Fix any link function $h$. Let \(M_{\mathrm{insep}}\) be as in \Cref{thm:scale-free-ind}, and suppose
that every priority in \([m]\setminus J_{\mathrm{insep}}\) is perfectly
separable. Again, let $M=(M_{\mathrm{insep}})^{-J_{\mathrm{insep}}}$. Then $M\in \mathrm{PSR}_h(M_{\mathrm{insep}}).$ Consequently, given an exhaustive transcript generated under \(M_{\mathrm{insep}}\), any separable-consistent decoder returns a perfectly separable model that exactly rationalizes the observed response behavior.
\end{corollary}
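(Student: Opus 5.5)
The corollary has two parts. First, $M$ is perfectly separable. Second, $M$ is scale-free indistinguishable from $M_{\mathrm{insep}}$. Together these give $M\in\mathrm{PSR}_h(M_{\mathrm{insep}})$, and the decoder statement then follows from \Cref{def:separable-consistency}.

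\textbf{Step 1 (membership in $\Msep$).} By construction, $M=(M_{\mathrm{insep}})^{-J_{\mathrm{insep}}}$ keeps only the priorities in $[m]\setminus J_{\mathrm{insep}}$. Each of these is perfectly separable by hypothesis, so $M\in\Msep$. I should also check that $M$ is a valid element of $\cM$:
\begin{itemize}
\item the retained utilities are unchanged, so the normalization $|u_j(F)-u_j(F')|\le 1$ still holds;
\item the rescaled weights $\omega^{-J_{\mathrm{insep}}}$ lie in $\Delta^{m-|J_{\mathrm{insep}}|-1}$.
\end{itemize}
The rescaling is well defined because $J_{\mathrm{insep}}\subsetneq[m]$. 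Hence the remaining weights have positive total mass, unless all of them are zero. In that degenerate case I would use the null-model convention from the footnote to \Cref{thm:scale-free-ind}, which is trivially perfectly separable with $\delta_j(q)=0$.

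\textbf{Step 2 (indistinguishability).} I invoke \Cref{thm:scale-free-ind} directly. It states that $M_{\mathrm{insep}}$ and $M$ are scale-free indistinguishable w.r.t.\ any link function $h$. Combined with Step 1, this gives $M\in\mathrm{PSR}_h(M_{\mathrm{insep}})$ by \Cref{def:perfect-sep-rationalization}. In particular, $\mathrm{PSR}_h(M_{\mathrm{insep}})\neq\emptyset$.

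\textbf{Step 3 (decoder consequence).} Because $\mathrm{PSR}_h(M_{\mathrm{insep}})$ is nonempty, the hypothesis of \Cref{def:separable-consistency} is met for $M^*=M_{\mathrm{insep}}$. Therefore, on any exhaustive transcript generated under $R^\circ_{h_\beta;0,0}(\cdot;M_{\mathrm{insep}})$, a separable-consistent decoder returns some $\widehat M\in\mathrm{PSR}_h(M_{\mathrm{insep}})$. By definition, such an $\widehat M$ is perfectly separable and scale-free indistinguishable from $M_{\mathrm{insep}}$. That is, it exactly rationalizes the limiting response frequencies for an appropriate $\beta'$.

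To convey the ``without a trace'' reading, I would remark that $M$ itself is one such output, and it contains no component corresponding to $J_{\mathrm{insep}}$. The corollary does not need to claim that the decoder returns $M$ specifically.

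\textbf{Main obstacle.} The only nontrivial point is the degenerate weighting case in Step 1, where the remaining weights are all zero. Everything else is definitional bookkeeping on top of \Cref{thm:scale-free-ind}.
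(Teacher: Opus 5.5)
Your proposal is correct and follows essentially the paper's route. The paper treats the corollary as immediate from \Cref{thm:scale-free-ind}, which gives scale-free indistinguishability of $M_{\mathrm{insep}}$ and $M$, together with the observation that $M$ retains only perfectly separable priorities and so lies in $\Msep$; the decoder claim is then just \Cref{def:separable-consistency} applied with $\mathrm{PSR}_h(M_{\mathrm{insep}})\neq\emptyset$. Your handling of the degenerate case where all retained weights vanish, so that the rescaling $\omega^{-J_{\mathrm{insep}}}$ is undefined, is a small refinement the paper leaves implicit.
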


As shown by the following example, it is not hard to construct cases where the erasure of perfectly inseparable priorities leads to highly suboptimal inferred rules (formal details in \Cref{app:bad-rec}).
\begin{example}[Erasure can lead to high-regret rules] \label{ex:bad-rec}
Assume we are in the allocation task from \Cref{prop:egal}, with $N=\{a,b\}$, two inputs $x_1,x_2$ that occur with equal probability, and any rule must assign each input to either $a$ or $b$. Suppose the true priority model $M_{\mathrm{insep}}$ contains two priorities, {\upshape{Egalitarianism}} (\Cref{def:egal}) and {\upshape{Family}} (\Cref{ex:trolley-priorities}), where the individual is primarily egalitarian: for $\epsilon \in (0,1/2)$, let these priorities have respective weights $\omega_{\mathrm{egal}} = 1-\epsilon$ and $\omega_{\mathrm{family}} = \epsilon$. In this example, we know egalitarianism is perfectly inseparable; on the other hand, it is not hard to show that the family priority is perfectly separable. 

Then, by \Cref{cor:erase}, any separable-consistent decoder will effectively\footnote{Technically, this requires one extra lemma to reason about the aggregate-optimal rule when there are multiple perfect separable rationalizations. See \Cref{app:bad-rec}.} return the priority model containing only the \textit{family} priority, call it $M_{\mathrm{family}}$. The aggregate-optimal rule according to $M_{\mathrm{family}}$ is far from egalitarian, always prioritizing the individual's family members and giving zero benefit to anyone else. As $\epsilon \to 0$ and the individual truly becomes more egalitarian, the aggregate utility of the resulting rule approaches the worst possible rule according to the true model $M_{\mathrm{insep}}$: the regret of the chosen rule is $1-\nicefrac{3\epsilon}{2}$, where the regret of the worst possible rule is $1-\epsilon$.
\end{example}

One may wonder whether a more sophisticated decoder might be able to avoid the problem illustrated above. Unfortunately, the answer is \textit{no}: \Cref{thm:scale-free-ind} implies that the weights on perfectly separable priorities are not at all identifiable when the learner only has access to local pairwise comparison queries.
\begin{corollary}[Non-identification]
\label{cor:non-ident}
Let $j$ be any perfectly inseparable priority in any priority model $M$. Then,
$\omega_j$ is scale-free non-identifiable w.r.t.~any link function $h$ over
\([0,1)\) from local pairwise comparison queries. 
\end{corollary}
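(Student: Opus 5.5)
The approach is to reduce the claim to \Cref{thm:scale-free-ind}, applied twice. Fix $M=(u,\omega)$ with $j$ perfectly inseparable, and fix a target $\gamma\in[0,1)$. We must exhibit $\omega'\in\Delta^{m-1}$ with $\omega'_j=\gamma$ such that $(u,\omega)$ and $(u,\omega')$ are scale-free indistinguishable w.r.t.\ $h$.

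\textbf{Construction.} Use the priorities other than $j$ and keep their relative proportions. If $\omega_j<1$, set
\[
\omega'_k=(1-\gamma)\,\frac{\omega_k}{1-\omega_j}\quad(k\neq j),\qquad \omega'_j=\gamma .
\]
This lies in the simplex. Since $\gamma<1$, the rescaled remaining weights are not all zero. Removing $j$ from either $(u,\omega)$ or $(u,\omega')$ gives the same normalized model, namely $M^{-\{j\}}=(u_{-j},\omega^{-\{j\}})$, because both weight vectors restricted to $[m]\setminus\{j\}$ are proportional to $(\omega_k)_{k\neq j}$.

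\textbf{Chaining the equivalences.} \Cref{thm:scale-free-ind}, applied with $J_{\mathrm{insep}}=\{j\}$, gives two statements:
\begin{itemize}
\item $(u,\omega)$ is scale-free indistinguishable from $M^{-\{j\}}$;
\item $(u,\omega')$ is scale-free indistinguishable from the same $M^{-\{j\}}$.
\end{itemize}
The theorem permits removing only a subset of the perfectly inseparable priorities, so any other perfectly inseparable priorities can stay in place. If the theorem is read as removing all of them, apply it to both models and compare the resulting common reduced model; the weight vectors again agree after renormalization.

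Scale-free indistinguishability is an equivalence relation: it is defined as equality of the classes $\mathcal R^\circ_{h;0,0}(q;\cdot)$ for all $q$, so it is transitive. Hence $(u,\omega)$ and $(u,\omega')$ are indistinguishable, which is exactly non-identifiability at $\gamma$.

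\textbf{Degenerate cases and the main obstacle.} Most of the care goes into two edge cases.
\begin{itemize}
\item \emph{$\omega_j=1$.} Here the remaining weights vanish and the renormalization in $M^{-J'}$ is undefined. Use the footnote convention: the null model with $u\equiv0$ gives $\kappa\equiv0$. For the original model, $\kappa\equiv0$ as well, since the inseparable priority contributes symmetrically to $s^+$ and $s^-$, as in the proof sketch of \Cref{thm:scale-free-ind}. The constructed $\omega'$ then also yields $\kappa\equiv0$, provided every other priority is given weight proportional to anything whose own $\kappa$ vanishes. That does not hold in general. So in this case I would instead prove directly that both models give responses $h_\beta(0)=1/2$ at every query, choosing the remaining weight distribution accordingly; if needed, I would note that this case requires the other priorities to be inactive or also perfectly inseparable.
\item \emph{$\gamma$ near $1$.} I would confirm that the theorem's argument, where the symmetric contribution cancels from $\kappa$ and the rescaling is absorbed into $\beta$, needs the rescaling factor $1-\gamma>0$. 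This is exactly why $\gamma=1$ is excluded from the range $[0,1)$.
\end{itemize}
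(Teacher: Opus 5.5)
Your argument for the case $\omega_j<1$ is correct. It is also the derivation the paper has in mind: the paper gives no separate proof and presents the corollary as an immediate consequence of \Cref{thm:scale-free-ind}. Your construction cashes that out. Rescale the other weights proportionally, note that $(u,\omega)$ and $(u,\omega')$ both collapse to the same $M^{-\{j\}}$, and use transitivity. Equivalently, under the linear aggregator you get directly $\kappa_{(u,\omega')}=\frac{1-\gamma}{1-\omega_j}\,\kappa_{(u,\omega)}$, so $\beta'=\beta(1-\omega_j)/(1-\gamma)$ works.

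Where you should stop hedging is the case $\omega_j=1$. There is nothing to ``prove directly'' there, because the statement as written is false in that case.

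\emph{Counterexample.} Take the two-input instance of \Cref{prop:egal} with egalitarianism and the family priority, and set $(\omega_{\mathrm{egal}},\omega_{\mathrm{family}})=(1,0)$.
\begin{itemize}
\item By \Cref{lem:insep-silent}, $\kappa_M\equiv 0$, so for every $\beta$ each query is answered $\succ$ or $\prec$ with probability $1/2$.
\item Any $\omega'$ with $\omega'_{\mathrm{egal}}=\gamma<1$ gives $\kappa_{M'}(a,b;x_i)=(1-\gamma)/2>0$. The egalitarian terms cancel and the family gap is $1/2$.
\item Hence $R^\circ_{h_{\beta'};0,0}(q;M')(a\succ b)=h(\beta'(1-\gamma)/2)>1/2$ for every $\beta'>0$.
\end{itemize}
So no indistinguishable model with $\omega'_{\mathrm{egal}}\in[0,1)$ exists, and $\omega_{\mathrm{egal}}$ is actually identified. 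With $m=1$ the claim fails even more trivially, since $\Delta^{0}=\{1\}$ admits no $\omega'_j<1$.

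This is exactly where the paper's claim that the linear aggregator is scale-preserving breaks down. The constant there is $c(\omega,j)=1-\omega_j$, which is not positive when $\omega_j=1$, and $\omega_{-j}$ cannot be renormalized.

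\emph{What to state instead.} The corollary needs the hypothesis $\omega_j<1$. More generally, it holds iff some weighting $\nu$ of $[m]\setminus\{j\}$ satisfies $\kappa_{(u_{-j},\nu)}\equiv 0$. Your condition (``inactive or also perfectly inseparable'') is one sufficient instance of this, not the exact condition. State the restriction explicitly rather than promising a direct proof.
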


\subsection{General Inseparable Priorities} \label{sec:gen-insep}
Now we consider the more general class of inseparable priorities which, unlike perfectly inseparable priorities, can generate some directional evidence in response to queries. Unfortunately, because these priorities still fall outside of any model in $\Msep$, we now illustrate how this evidence is misinterpreted by separable-consistent decoders --- again, sometimes without a trace, and with major consequences for the resulting rule quality. We fully formalize this argument in \Cref{app:insep-misinterpret}.

\begin{example}[Misinterpretation by Separable-Consistent Decoders]
\label{ex:inseparable-local-trace}
Fix any link function $h$. We again assume we are in the allocation task from \Cref{prop:egal}, with
\(N=\{a,b\}\), two inputs \(x_1,x_2\) that occur with equal probability,
and any rule assigning each input to either \(a\) or \(b\). We impose an additional, very mild restriction on $\prules$ just for this example (see \Cref{app:insep-misinterpret}).

Let \(M\) contain a single priority, {\upshape{Proportionality}}, with target
shares \(\alpha_a=1/2+\epsilon\) and \(\alpha_b=1/2-\epsilon\), where
\(0<\epsilon<1/4\). That is, this priority wants to avoid shortchanging both \(a\) and \(b\), but with a bias toward $a$.

Now, examine the evidence produced across background rules on the query
\(q_1=(a,b;x_1)\). If the other input \(x_2\) is assigned to \(b\), then
assigning \(x_1\) to \(a\) moves the rule from \(F^{bb}\) to \(F^{ab}\),
which improves proportionality. If the other input \(x_2\) is assigned to
\(a\), then assigning \(x_1\) to \(b\) moves the rule from \(F^{aa}\) to
\(F^{ba}\), which also improves proportionality. However, because the target
share for \(a\) is slightly above one half, \(F^{bb}\) is worse than
\(F^{aa}\). 

Thus the improvement from \(F^{bb}\) to \(F^{ab}\) is larger
than the improvement from \(F^{aa}\) to \(F^{ba}\), and the benefit of choosing $a$ at $x_1$ outweighs that of choosing $b$, after aggregating over these background rules. Thus, the response model favors \(a\)
on the query \(q_1\). The same argument applies to the query \(q_2=(a,b;x_2)\). Consequently, the
response model favors assigning each input to \(a\), even though the
proportionality priority itself is maximized by the balanced rules
\(F^{ab}\) and \(F^{ba}\).

Thus \(M\) admits a perfect separable rationalization
\(\widetilde M\in\mathrm{PSR}_h(M)\) consisting of the single priority
\[
u_a(F)
=
\mathbb E_{X\sim\mathcal D}
\left[
\mathbf 1\{F(X)=a\}
\right].
\]
A separable-consistent decoder can therefore fit the exhaustive transcript
exactly with \(\widetilde M\). According to \(\widetilde M\), the resulting aggregate-optimal rule is $F^{aa}$. However, this rule is highly disproportional, and accordingly, as $\epsilon \to 0$, its aggregate utility approaches that of the worst possible rule under the true model $M$, i.e.,
\[
U_M(F^{ab})=U_M(F^{ba})\to 0,
\qquad
U_M(F^{aa})\to -\nicefrac{1}{2},
\qquad
U_M(F^{bb})\to -\nicefrac{1}{2}.
\]
\end{example}
The same phenomenon can persist when the true model contains additional priorities. An inseparable priority can mask, distort, or even reverse the local signal generated by perfectly separable priorities. In some cases, this means that no perfect separable rationalization exists at all.\footnote{For example, inseparable priorities can generate score differences whose magnitudes are inconsistent with any separable model, or can even induce cyclic pairwise comparison patterns.} Then the assumption of perfect separability leads to irreducible population misfit: even with unlimited data from every local query, the best separable model cannot perfectly explain the response distribution. This offers one possible source of observed inconsistency, i.e., the failure of any preference model within the assumed class to fit people’s responses.

Finally, we return to whether a different decoder could avoid this problem. Unlike the perfect-inseparability case, where the obstacle was non-identifiability, the obstacle here is \textit{misinterpretation}: general inseparable priorities may be identifiable from local pairwise comparisons, but a separable-consistent decoder interprets their signal through the wrong structural lens. This points to a possible alternative: \textit{priority-aware learning}, where the decoder is given the individual’s priority utility functions up front and interprets query responses while accounting for both separable and inseparable priority structures. We expand on this proposal in \Cref{sec:discussion}, but formalizing this idea constitutes rich future work.
\section{Generalization \#2: Latent Indecision} \label{sec:conflict}
In the previous section, we considered the generalization from $\Msep$ to $\cM$, holding $\tau_r,\tau_\kappa = 0$. Now, we consider the opposite generalization: restricting to $\Msep$ but allowing $\tau_r,\tau_\kappa > 0$ and thus permitting latent indecision. This generalization allows us to go beyond the mechanical justification of forced comparisons under zero-thresholds, where only decisive latent states $\succ^*$ and $\prec^*$ can arise.

The practical relevance of this generalization is demonstrated by mounting evidence that individuals struggle to answer forced comparisons, acting inconsistently, hesitating to choose, expressing difficulty or anguish, or expressing low confidence in their answer. 

For example, in interviews conducted by \citet{keswani2025can}, a participant was asked to decide whether to give a kidney to a recipient who would live 10 more years with one dependent, versus a recipient who would live 20 more years with no dependents. In their response, they said:
\begin{quote}
\textit{``...If this person has a child or something, 10 years is going to be the difference between leaving a child and leaving a young adult [\dots] I wouldn't sleep well after making this decision.''}
\end{quote}
In the language of our model, this quote seems to express that the decision is of high moral valence with no clear decision, in line with our notion of conflict. This is not an isolated incident in this dataset: we hand- and LLM-coded the 20 interview transcripts underlying the study, each of which contains three narrated pairwise comparisons. We coded for language markers of indecision, including back-and-forth reasoning, explicit statements of difficulty, hedging, self-correction, and discomfort with choosing, details are in Appendix \ref{app:criteria}. We find these indecision markers to be common: 18 of 20 participants displayed at least one such marker, and 8 of 20 displayed them on at least two of the three queries.

This interview data --- along with other studies documenting participants voluntarily reporting indecision and conflict \cite{mcelfresh2021indecision,rosas2019decision}\footnote{In pairwise kidney-allocation experiments, participants frequently used an explicit indecision option when it was available \cite{mcelfresh2021indecision}, and in sacrificial moral-dilemma experiments, many participants directly reported conflict while making each judgment \cite{rosas2019decision}.} --- supports the possibility that people (a) experience indecision when answering pairwise comparisons, and (b) can report it. Our model allows us to investigate the technical consequences of these possibilities by giving a formal notion of ``true'' indecision: the latent states $\sim^*$ and $\bowtie^*$, which arise exactly in the generalization of S-RUMs where $\tau_r,\tau_\kappa > 0$. We use this generalization to ask two questions. First, in \Cref{sec:accuracy}, we ask: To what extent can forced decisive responses compromise learning accuracy when individuals \textit{distort their behavior} on queries where they are latently indecisive? Second, in \Cref{sec:speed}, we ask: Can allowing individuals to report conflict and indifference, or even just general indecision, not only avoid this distortion but improve learning speed by providing additional information about their underlying beliefs?

Both of these questions examine how indecision can potentially compromise or aid the technical task of learning $\omega^*$. Before investigating these reasons for considering indecision, we formalize another intuition about why indecision at the query level is useful: that indecision at the level of queries can tell us something about indecision at the level of \textit{rules}.

\begin{remark}[Toward a richer rule-level preference relation]
\label{rem:rule-level-indecision} 
So far, we have explicitly avoided defining a global preference relation over rules based on $M$. However, our model of latent states at the query level can easily be defined analogously at the rule level: for two rules \(F,F'\in\rules\), define the rule-level
directional evidence scores as
\[
s^+_M(F,F')
=
\sum_{j\in[m]} \omega_j [u_j(F)-u_j(F')]_+,
\qquad
s^-_M(F,F')
=
\sum_{j\in[m]} \omega_j [u_j(F')-u_j(F)]_+.
\]
Then, one can define $r_M(F,F')$ and $\kappa_M(F,F')$ and apply the thresholds $\tau_r,\tau_\kappa$ analogously, leading to latent rule level relations of $\succ,\prec,\sim,$ and $\bowtie$ between rules. 
The key observation is that when $M$ is perfectly separable, indecision between queries is formally exchangeable with indecision between \textit{entire rules}, as defined by the relation above. Formally, for any local query $q=(y,y';x)$ and background rule $F$, the query-level latent state $L_M(q)$ corresponds exactly to the latent rule-level relation between the two projected rules $F_{x\to y}$ and $F_{x\to y'}$. We prove this equivalence in \Cref{app:rule-level-indecision}.

This relationship formalizes the importance of a natural intuition: that forced comparisons \textit{misrepresent the authority of the response}, treating a choice made under conflict or indifference as if it expressed the same endorsement as a genuinely decisive judgment. By the above definition, this misrepresentation carries up to the rule level, where judgments are consequential to the decision. On the other hand, maintaining this information is arguably normatively desirable, producing a more faithful representation that captures where the individual is indifferent or conflicted between rules. This richness can also be practically useful when trying to find consensus between people and being transparent about what trade-offs are being made: indifference identifies regions of flexibility in the decision-rule space, and conflict identifies regions of moral difficulty.
\end{remark}

Now, we turn our attention to the two questions above. In our simulations, we will specifically analyze \textit{linear} priority models, in which each priority advocates for the importance of a single feature in a latent feature space; we will show these are perfectly separable, as needed. While this is a substantial restriction of our model, it is of independent interest: it captures the popular approach of assuming that S-RUM scores have linear structure over a fixed feature space, as in the emerging paradigm of linear social choice \cite{ge2024axioms,ge_linear_2026} and many other papers in preference learning \cite{lee2019webuildai,freedman2020adapting,ge2024learning,noothigattu2018voting,boerstler2024stability}. By allowing $\tau_r,\tau_\kappa > 0$, we obtain a strict generalization of these linear settings in which the individual can experience conflict when salient features give evidence for different responses, or indifference when no salient feature is relevant to a given comparison. This setting also gives a simple first test case for learning within our model. We introduce this model, along with additional technical preliminaries, in \Cref{sec:tech}.

\subsection{Technical Setup} \label{sec:tech}

\subsubsection{Linear Priority Models}
We restrict to $M$ whose priorities are linear in a known feature space, defined by feature map
\[
\psi : \{(x,y): x\in\mathcal X,\ y\in\mathcal Y(x)\}\to [0,1]^d.
\]
For example, in the kidney allocation task, $\psi(x,y)$ could describe the features of the chosen recipient $y$, e.g., their age, gender, and whether they have dependents. It could also encompass features that span $x$ and $y$, such as a feature capturing the match quality of kidney $x$ with recipient $y$.

Accordingly, the priority model $M$ consists of $d$ priorities, where each priority $j \in [d]$ advocates for the importance of a single feature. This produces utility functions as follows, where $\psi_j(x,y)$ is the $j$-th feature in the feature vector:
\begin{definition}[Linear Priority Model]
\label{def:linear-perfectly-separable-model}
Fix a feature map \(\psi\) with dimension $d$. A linear priority model $M_\omega = (u,\omega)$ is any model with $d$ priorities and utilities
\[
u_j(F)
:=
\frac{1}{|\cX|}
\sum_{x \in \cX}\psi_j(x,F(x)) \qquad \forall j \in [d].
\]
\end{definition}
Because the utility functions are fixed by $\psi$, given a $\psi$ the only varying element of a priority model is $\omega$; accordingly, let $M_\omega$ be the priority model with vector $\omega$ under (implicit) feature map $\psi$. Let $\mathcal M_{\psi}
:=
\{M_\omega:\omega\in \Omega\}$ be the class of all linear priority models over feature map $\psi$.
Every \(M_\omega\in\mathcal M_{\psi}\) is perfectly separable: for any query \(q=(y,y';x)\) and priority \(j\), the projected rules $F_{x\to y}$ and $F_{x \to y'}$ differ only at $x$, so all other feature terms cancel and $\Delta^F_j(q)$ is independent of $F$:
\[
\Delta^F_j(q)
=
u_j(F_{x\to y})-u_j(F_{x\to y'})
=\frac{1}{|\cX|}\big(\psi_j(x,y)-\psi_j(x,y')\big) \qquad \forall F \in \rules.
\]

Now, let the \textit{linear rule} $F_\omega \in \rules$ be the rule that at any given $x$ outputs the $y$ with the highest linear-weighted score:
\[F_\omega(x) = \arg\max_{y \in \cY(x)} \langle \omega,\psi(x,y)\rangle.\]
We now show that, given any linear priority model $M_{\omega^*}$, its aggregate-optimal rule is the linear rule $F_{\omega^*}$, i.e., the linear rule defined by $\omega^*$:
\begin{proposition}
\label{prop:linear-rule-is-aggregate-optimal}
For every feature map $\psi$ and every linear priority model $M_{\omega^*} \in \cM_\psi$, 
\[F_{\omega^\ast}
\in
\arg\max_{F\in\mathcal{F}} U_{M_{\omega^\ast}}(F).\]
\end{proposition}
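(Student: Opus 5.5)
The plan is to expand the aggregate utility directly and exchange the order of summation, so that it becomes a sum of independent per-input terms. Using \Cref{def:linear-perfectly-separable-model}, for any $F\in\rules$ we write
\[
U_{M_{\omega^*}}(F)=\sum_{j\in[d]}\omega^*_j\,\frac{1}{|\cX|}\sum_{x\in\cX}\psi_j(x,F(x))
=\frac{1}{|\cX|}\sum_{x\in\cX}\big\langle \omega^*,\psi(x,F(x))\big\rangle .
\]
This is exactly the decomposition that perfect separability guarantees. It mirrors the optimal-rule correspondence in \Cref{thm:score-rum-equivalence} with $V(x,y)=\langle\omega^*,\psi(x,y)\rangle/|\cX|$, though I would argue it directly rather than invoke that theorem.

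Next, since $\rules$ is the set of \emph{all} maps with $F(x)\in\cY(x)$, the choice of $F(x)$ at each input is unconstrained by its choice at any other input. So $\rules$ is the product $\prod_{x\in\cX}\cY(x)$. For any $F\in\rules$ and each $x$, the definition of $F_{\omega^*}$ as a pointwise maximizer gives
\[
\langle\omega^*,\psi(x,F_{\omega^*}(x))\rangle\ge\langle\omega^*,\psi(x,F(x))\rangle .
\]
Because $\cX$ is finite, summing these inequalities over $x$ and dividing by $|\cX|$ yields $U_{M_{\omega^*}}(F_{\omega^*})\ge U_{M_{\omega^*}}(F)$. Hence $F_{\omega^*}$ lies in the argmax.

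The only real subtlety is well-definedness of $F_{\omega^*}$. Its definition uses $\arg\max$ as if it were single-valued, so I would interpret $F_{\omega^*}(x)$ as any element of the (nonempty, since $\cY(x)$ is finite and nonempty) argmax set. Any such tie-breaking gives a rule in $\rules$, and the inequality above holds for every selection. This matches the use of ``$\in$'' in the statement. Beyond this there is no serious obstacle; the argument is routine once the utility is rewritten as a sum over inputs.
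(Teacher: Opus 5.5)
Your proposal is correct and follows the paper's proof essentially step for step: rewrite $U_{M_{\omega^*}}(F)$ as $\frac{1}{|\cX|}\sum_{x}\langle\omega^*,\psi(x,F(x))\rangle$, use pointwise optimality of $F_{\omega^*}$ at each $x$, then sum over $x$ and divide by $|\cX|$. Your tie-breaking remark is a harmless clarification the paper leaves implicit, and the product-structure observation is not needed, since the pointwise inequality holds for every $F\in\rules$ regardless.
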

We prove this in \Cref{app:linear-rule-is-aggregate-optimal}; the intuition is that the linear rule maximizes the weighted score pointwise at each $x$, and because aggregate utility decomposes additively over inputs under perfect separability, a pointwise-optimal rule is globally optimal.

Now, we formalize the claim that in the special case where $\tau_r=\tau_\kappa=0$, linear priority models exactly capture the linear models used in the literature, as cited above. We call these models \textit{Linear S-RUMs}, because they are S-RUMs whose local score function is structured according to a linear weight vector $\theta$. We defer the proof to \Cref{app:linear-srum-relationship}.

\begin{theorem}[Capture of Linear S-RUMs]
\label{thm:linear-srum-relationship}
Fix any link function $h$. Let $\theta\in\R^d_{\ge 0}$ satisfy $\|\theta\|_1>0$, and define the linear S-RUM score function
\[
V_\theta(x,y)=\langle \theta,\psi(x,y)\rangle.
\]
Let $\omega=\theta/\|\theta\|_1$, and let $M_\omega$ be the linear priority model from \Cref{def:linear-perfectly-separable-model}. Then $V_\theta$ under the S-RUM response model is scale-free indistinguishable from $M_\omega$ under the zero threshold response model: that is, for every $\beta>0$, there exists $\beta' =\beta\, |\cX|\, \|\theta\|_1$ such that
\[
S_{h_\beta}(\cdot;V_\theta)
\equiv
R^\circ_{h_{\beta'};0,0}(\cdot;M_\omega).
\]
Moreover, they induce the same learning target: for every $\rules'\subseteq\rules$,
\[
\arg\max_{F\in\rules'} U_{M_\omega}(F)
=
\arg\max_{F\in\rules'}
\sum_{x \in \cX}
V_\theta(x,F(x)).
\]
\end{theorem}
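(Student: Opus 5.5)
The plan is to compute both response distributions explicitly at an arbitrary query $q=(y,y';x)$ and check that they agree once the inverse temperatures are matched. Then we compare the two objectives rule by rule.

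\textbf{Step 1: the decisiveness $\kappa$ of $M_\omega$.} As shown after \Cref{def:linear-perfectly-separable-model}, every linear priority model is perfectly separable, with
\[
\Delta^F_j(q)=\tfrac{1}{|\cX|}\bigl(\psi_j(x,y)-\psi_j(x,y')\bigr)
\]
for all $F$. Unanimity of $\prules$ then gives $\prules\bigl((\Delta^F_j(y,y';x))_F\bigr)=\delta_j(q)$. For the reversed query, the same argument gives $\prules\bigl((\Delta^F_j(y',y;x))_F\bigr)=-\delta_j(q)$. Hence
\[
s^+(q)=\sum_j\omega_j[\delta_j(q)]_+,\qquad s^-(q)=\sum_j\omega_j[-\delta_j(q)]_+ .
\]
Using $[a]_+-[-a]_+=a$, we get
\[
\kappa(q)=\sum_j\omega_j\delta_j(q)=\tfrac{1}{|\cX|}\bigl\langle\omega,\psi(x,y)-\psi(x,y')\bigr\rangle .
\]

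\textbf{Step 2: matching the response distributions.} By \Cref{lem:zero-thresh}, $R^\circ_{h_{\beta'};0,0}(q;M_\omega)$ puts mass $h(\beta'\kappa(q))$ on $\succ$, mass $h(-\beta'\kappa(q))$ on $\prec$, and zero mass on $\sim$ and $\bowtie$. The S-RUM instead puts mass $h(\beta(V_\theta(x,y)-V_\theta(x,y')))$ on $\succ$ and the complement on $\prec$. Substituting $\theta=\|\theta\|_1\omega$ gives
\[
V_\theta(x,y)-V_\theta(x,y')=\|\theta\|_1\,|\cX|\,\kappa(q).
\]
So choosing $\beta'=\beta|\cX|\|\theta\|_1$ makes the $\succ$-probabilities equal. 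The $\prec$-probabilities then agree by $h(-t)=1-h(t)$.

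This yields the displayed identity for every $\beta$. Scale-free indistinguishability in the sense of \Cref{def:scale-free-indistinguishability} also needs the reverse direction. For any $\beta'>0$, take $\beta=\beta'/(|\cX|\|\theta\|_1)$, which is positive because $\|\theta\|_1>0$. One small point to address: $S_{h_\beta}$ is formally a map on $\mathcal V$ rather than on $\cM$. I would read the statement as equality of the induced query-to-distribution maps, with the S-RUM placing zero mass on $\sim$ and $\bowtie$.

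\textbf{Step 3: the learning targets.} For any rule $F$,
\[
U_{M_\omega}(F)=\sum_j\omega_j\tfrac{1}{|\cX|}\sum_x\psi_j(x,F(x))=\frac{1}{|\cX|\|\theta\|_1}\sum_x V_\theta(x,F(x)).
\]
This is a positive multiple of the S-RUM objective, so the two objectives have identical argmax sets over any $\rules'\subseteq\rules$. No appeal to \Cref{thm:score-rum-equivalence} or to pointwise optimization is needed here.

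The only step requiring any care is Step 1, where unanimity has to be invoked for both the $y$-over-$y'$ and the $y'$-over-$y$ evidence before the positive-part identity collapses $\kappa$ to a single signed sum. Everything else is bookkeeping of constants.
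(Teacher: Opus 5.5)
Your proof is correct and follows the paper's argument essentially step for step. You compute $\kappa_{M_\omega}(q)=\frac{1}{|\cX|}\langle\omega,\psi(x,y)-\psi(x,y')\rangle$ via perfect separability and unanimity, match the $\succ$-probabilities with $\beta'=\beta|\cX|\|\theta\|_1$, and observe that $U_{M_\omega}$ is a positive multiple of $\sum_x V_\theta(x,F(x))$; your explicit use of the positive-part identity and your treatment of the reverse direction $\beta=\beta'/(|\cX|\|\theta\|_1)$ fill in details the paper leaves implicit.
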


\subsubsection{Simulation and Learning Setup}

\textit{Decision Task.} In our simulation experiments, each decision instance consists of five candidate recipients.
Each candidate is represented by \(d=5\) normalized features: we write an input as
\(x=(z^{(1)},\ldots,z^{(5)})\in([0,1]^d)^5\), where \(z^{(\ell)}\in[0,1]^d\)
is the feature vector of candidate \(\ell\).\footnote{Technically, inputs are being drawn from a continuous region
\([0,1]^d\), while the sums above treat the input space as discrete. Because each simulation uses finite samples, we are effectively in a finite and discrete $\cX$ regime; one could simply discretize the sampling space to an arbitrarily fine degree to make $\cX$ formally discrete, if desired.} On input \(x\), the rule must select one of
the five candidates, so the feasible output set is \(\mathcal{Y}(x)=\{1,\ldots,5\}\).
The feature map returns the features of the selected candidate, so
\(\psi(x,\ell)=z^{(\ell)}\). Thus, a local pairwise query \(q=(\ell,\ell';x)\)
asks whether, in decision instance \(x\), the rule should select candidate \(\ell\)
or candidate \(\ell'\). We assume that inputs are drawn uniformly at random; this is a natural and non-degenerate choice, giving an unbiased sample of the array of trade-offs that can occur.

\textit{True priority weights.} The underlying linear priority model must then have $m = 5$ priorities, one per feature, and the priority utilities $u_j^*$ are thus fixed. We repeat all tests for 40 ground-truth choices of $\omega^*$, where the true weights are drawn randomly as $\omega^*\sim \text{Dirichlet}(0.2)$. We report means and standard errors over random choices of $\omega^*.$ As is standard in linear S-RUMs, we assume the feature map, and hence the priority utilities, are known to the learner, so the learner's task is then to recover these priorities' true importance $\omega^*.$ 

\paragraph{Bayesian active learning.} 
For each ground-truth weight vector \(\omega^*\), we learn \(\omega^*\) by Bayesian active learning. We use active learning to be fair to all response conditions; if we in contrast fixed a query sequence across all algorithms, it could happen to ask queries that are far more useful for one response condition over another, confounding our ability to compare the actual information value of difference response conditions. With active learning, the query algorithm explicitly seeks the most useful queries given the assumed response condition. At a high level, our active learning methods work as follows.

At round \(t\), the learner maintains a posterior \(\pi_t\) over plausible weights \(\omega\), selects a query \(q_t\), observes a response drawn from the true response model \(R^*\), and updates its posterior using its assumed response model \(R\). Upon reaching a stopping condition,\footnote{In \Cref{sec:speed}, we run all algorithms for a fixed $T = 100$ rounds to compare learning speed. In \Cref{sec:accuracy}, we run to convergence: letting $\widehat{\omega}_t$ be the posterior mean at $t$, the convergence condition is that $|\widehat{\omega}_{t+1} -\widehat \omega_{t}|<0.01$ for 5 consecutive iterations.} the algorithm outputs the posterior mean \(\widehat{\omega}_T=\mathbb{E}_{\omega\sim\pi_T}[\omega]\). This general algorithmic template is formally specified in Algorithm \ref{alg:bald-template}, with its variants formally specified in the appendix (Appendix \ref{app:algo-details}). Queries are chosen using Bayesian Active Learning by Disagreement (BALD), which scores a query by how much its possible answers would reduce uncertainty about the true value of \(\omega\). 

The ideal implementation of this algorithmic method would calculate all objects exactly. However, as is standard, for tractability we use sampling to approximate the objects above. First, we would ideally compute the BALD score for every possible pairwise query in \(\mathcal Q^{pc}\) and select the query with the largest score. We approximate this by computing the BALD score for each of (C=50) randomly-sampled queries from \(\mathcal Q^{pc}\), and then asking the highest-scoring query in this sample. Each score estimate is computed using an estimate of the posterior, based on \(N_{\mathrm{BALD}}=50\) samples of \(\omega\) drawn from the learner's maintained posterior sample set of size \(N_{\mathrm{post}}=200\). \Cref{rem:consistency} gives the standard consistency statement: as the candidate-pool size and number of posterior samples grow, this approximation returns a near-optimal BALD query with high probability. 

\textit{Response models.}
Across the simulations, we distinguish between the true response model \(R^*\), which generates the individual's observed answers, and the learner's assumed response model \(R\), which is used for posterior updates. We vary these models across experiments starting from the baseline family \(R^{\circ}_{h_\beta;\tau_r,\tau_\kappa}\); we specify the details in the respective sections. Throughout we use the logistic link \(h_\beta(t)=(1+\exp(-\beta t))^{-1}\) with \(\beta=10\). Thus, when \(\tau_r=\tau_\kappa=0\), the baseline model reduces to the standard Bradley-Terry model. Unless otherwise stated, the thresholds \(\tau_r,\tau_\kappa\) are known to the learner; in one variant, we instead learn \(\tau_r,\tau_\kappa\) jointly with \(\omega^*\), using an extension of our active learning procedure described in \Cref{app:learn-tau}. Because the priority models in this section are perfectly separable, the choice
of rule aggregator \(\prules\) does not affect the directional evidence scores, provided \(\prules\) is unanimous. To avoid arbitrary scaling due to the discretization of $\cX$, in a minor departure from our model we drop the $1/|\cX|$ from the utility gap, so 
for \(q=(y,y';x)\), we let
\(\prules((\Delta^F_j(q))_{F\in\mathcal F})=\psi_j(x,y)-\psi_j(x,y')\).

\subsubsection{Evaluation Metrics}
We evaluate learning at two levels: whether the learner recovers the individual's
priority weights, and whether the learned weights induce low-regret decisions. For
weight recovery, we report the $\ell_1$ loss $ \|\widehat{\omega}-\omega^*\|_1.$
Such errors are an issue if we are interpreting $\omega^*$ as substantively meaningful importance weights on features, or if we are measuring the welfare of the resulting weights according to their cardinal distance from the true weights. This distance also has significance for the regret: by \Cref{lem:weight-recovery-rule-recovery}, if
\(\|\widehat{\omega}-\omega^*\|_1\leq \epsilon\), then the rule induced by
\(\widehat{\omega}\) has aggregate-utility regret at most \(2\epsilon\) under the true
model.

To evaluate the quality of the actual choices made by the learned rule $F_{\widehat{\omega}}$, we use the regret (\Cref{def:regret}). 
For a learned vector \(\widehat{\omega}\), define the regret at input \(x\) by 
\[
\Reg_x(\widehat{\omega};\omega^*)
:=
\langle \omega^*,\psi(x,F_{\omega^*}(x))\rangle
-
\langle \omega^*,\psi(x,F_{\widehat{\omega}}(x))\rangle ,
\]
and define the utility range at \(x\) as the utility gap between the best and worst choice at $x$:
\[
\Range_x(\omega^*)
:=
\langle \omega^*,\psi(x,F_{\omega^*}(x))\rangle
-
\min_{y\in\cY(x)}\langle \omega^*,\psi(x,y)\rangle .
\]
We then define the normalized \textit{average regret} as the expected regret normalized by the range, where this normalization accounts for the fact that certain settings will produce lower-stakes choices. The average regret is then interpretable as the fraction of the typical available utility range lost by the learned rule:\footnote{Note that the numerator of $\operatorname{Avg-Regret}$ is equivalent to the regret in \Cref{def:regret}, reduced to the linear case, where the aggregate-optimal rule under model $M_{\omega^*}$ is the linear rule $F_{\omega^*}$ (\Cref{prop:linear-rule-is-aggregate-optimal}).}
\[
\operatorname{Avg-Regret}(\widehat{\omega};\omega^*)
:=
\frac{\nicefrac{1}{|\cX|} \sum_{x \in \cX}\left[\Reg_x(\widehat{\omega};\omega^*)\right]
}{
    \nicefrac{1}{|\cX|} \sum_{x \in \cX}\left[\Range_x(\omega^*)\right]
}.
\]
We define the \textit{worst-case} regret analogously, the fraction of the maximum utility range lost on the input where the learned rule makes its lossiest mistake:
\[\operatorname{WC-Regret}(\widehat{\omega};\omega^*)
:=
\frac{
    \sup_{x\in\cX}\Reg_x(\widehat{\omega};\omega^*)
}{
    \sup_{x\in\cX}\Range_x(\omega^*)
}.\]
For any fixed $\omega^*$ and learned $\widehat{\omega}$, we estimate the average regret using a fixed set of $300$ i.i.d.\ uniformly sampled inputs, shared
across all response conditions and random choices of $\omega^*$. We compute the worst-case regret exactly via linear programming, as described in
\Cref{app:regret-computation}.

\subsection{What if the individual's response behavior changes when they are indecisive, but forced to decide?} \label{sec:accuracy}
To formally test this question, we let the learner assume the standard response model $R_{h_\beta;0,0}^{\circ}$, as this is the assumption under which forced comparisons are justified. In contrast, we allow the true response model $R^*$ to vary such that when the individual is truly conflicted or indifferent (i.e., when $L_{M_{\omega^*}(q)} \in \{\sim^*,\bowtie^*\}$), they may deviate from the assumed behavior in order to coerce their response into the permitted response alphabet $\{\succ,\prec\}$. To be maximally friendly to the learner, we assume that outside indecisive latent states (i.e., when $L_{M_{\omega^*}(q)} \in \{\succ^*,\prec^*\}$), the individual responds according to $R_{h_\beta;0,0}^{\circ}$. We describe the four behavioral conditions we test here, and formally define them in Appendix \ref{app:formal-indecision}.
\begin{enumerate}
    \item \textsc{Correct}: The individual resolves latent indecision according to the baseline model --- i.e., regardless of $L_{M_{\omega^*}}(q)$, the individual responds according to $R^\circ_{h_\beta;0,0}$. This is the idealized benchmark. 
    \item \textsc{50/50}: When $L_{M_{\omega^*}}(q) \in \{\sim^*,\bowtie^*\}$, the individual decides between $y$ and $y'$ by flipping an unbiased coin.
    \item \textsc{Lexicographic}: When $L_{M_{\omega^*}}(q) \in \{\sim^*,\bowtie^*\}$, the individual 
    decides between $y$ and $y'$ lexicographically: they choose deterministically based on the highest-weight priority (per $\omega^*$) that, in isolation, produces a decisive response between the two options.
    \item \textsc{Self-similarity}: 
    Here, we suppose the individual themselves has features that belong in $\psi$ (this would make sense, e.g., in any allocation problem where goods/bads are being allocated to people). In each run, the individual's feature vector $v$ is drawn uniformly randomly from $[0,1]^5$. When $L_{M_{\omega^*}}(q) \in \{\sim^*,\bowtie^*\}$, the individual deterministically chooses the alternative that is most similar to them in feature space $\psi$, reflecting self-similarity bias.
\end{enumerate}
In the following results, we set $\tau_r=\tau_\kappa=0.25$ so that latent indecision occurs at nontrivial
frequency but does not overwhelm directional signal. 

\begin{figure}[!]
    \centering
    \includegraphics[width=\linewidth]{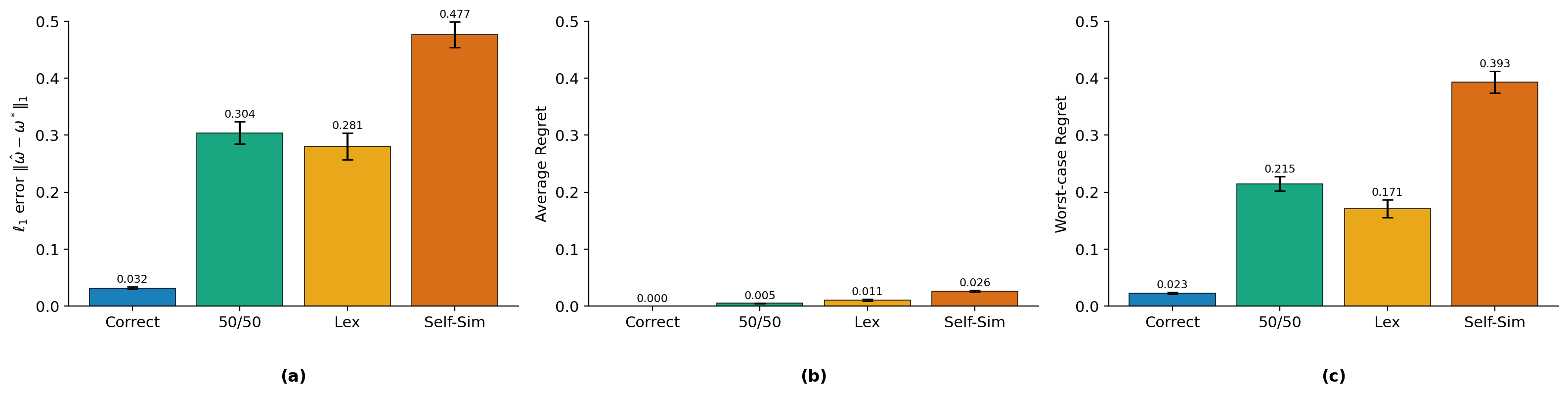}
    \caption{Learner performance under the four models of resolving indecision. Error bars are $\pm 1$ standard errors, reflecting randomness over 40 random choices of $\omega^*$. \textbf{(a)} $\ell_1$ error of $\widehat\omega$. \textbf{(b)} Average regret, estimated via 300 sampled $x\in \cX$. \textbf{(c)} Worst-case regret.}
    \label{fig:threepanel}
\end{figure}

\textbf{$\ell_1$ errors.} In \Cref{fig:threepanel}(a), we examine the $\ell_1$ error in the estimated $\widehat{\omega}$ relative to $\omega^*$. Noting that the maximum possible value of this $\ell_1$ error is 2 (by the triangle inequality), the $\ell_1$ error across the three deviating response models is nontrivial, reaching between 14\%-24\% of the worst case. As expected, the idealized baseline \textsc{Correct} has negligible error. 

Under each behavioral deviation, the driver of the $\ell_1$ error in the learned $\widehat{\omega}$ reflects the corresponding structure of the forced-response rule. We illustrate these findings in Figure \ref{fig:biasdecomp}, Appendix \ref{app:moreacc}. Under \textsc{Lexicographic} (\Cref{fig:biasdecomp}(a)), $\widehat{\omega}$ is overly concentrated on the highest-weight priority: across runs it assigns an extra $0.13\pm0.01$ mass to the true top feature relative to $\omega^*$, exceeding the true top-feature mass in 38 of 40 runs. Under \textsc{Self-Similarity} (\Cref{fig:biasdecomp}(b)), the distortion is instead directed toward the individual's own feature vector $v$: the projection $\langle \widehat{\omega}-\omega^*,v\rangle$ is positive in 35 of 40 runs, with a cross-run mean of $0.10\pm0.01$. In contrast, these sub-figures together show that \textsc{50/50} distorts $\widehat{\omega}$ in neither
direction, because its forced labels are an unbiased coin flip.

\textbf{Average regret.}
If one simply cares about making the optimal choice over \(\cY(x)\) at each \(x\),
the story is more positive, at least in the average case. \Cref{fig:threepanel}(b), displaying $\operatorname{Avg-Regret}(\widehat{\omega},\omega^*)$, shows that over random $x$, the expected choice loss from forced-response behavior is small across
deviations. Normalized average regret is highest for \textsc{Self-Similarity},
which loses \(2.6\%\) of the typical utility range available across decision
instances; \textsc{50/50} and \textsc{Lexicographic} lose \(0.5\%\) and
\(1.1\%\), respectively. Thus, on average, the choices induced by the learned weights nearly optimally recover utility, even when the weights themselves are
substantially misestimated.

These regret levels are largely explained by whether the learned vector
\(\widehat{\omega}\) points in the right direction. \textsc{Self-Similarity}
systematically biases \(\widehat{\omega}\) toward the individual's own feature
vector \(v\), which can point in a substantially different direction from
\(\omega^*\), producing the largest average regret. By contrast, under
\textsc{Lexicographic}, behavior remains anchored to the direction of
\(\omega^*\), even though the largest priorities are overweighted; under
\textsc{50/50}, deviations around \(\omega^*\) arise from noise
rather than from a systematic directional bias. We
illustrate this by showing the cosine similarity $\cos(\widehat{\omega},\omega^*)$ in Figure \ref{fig:cosine} (Appendix \ref{app:moreacc}). The fact that
cosine similarity closely tracks average regret is consistent with prior work
on linear decision rules showing that angular alignment is theoretically
related to good downstream choice behavior
\cite{feffer2023moral,baharav2026end}.

\textbf{Worst-case regret.}
The picture is less reassuring in the worst case. \Cref{fig:threepanel}(c) shows worst-case regret \(\operatorname{WC-Regret}(\widehat{\omega},\omega^*)\), i.e.,
largest choice loss the learned rule can incur on any input. On this metric,
\textsc{50/50}, \textsc{Lexicographic}, and \textsc{Self-Similarity} respectively reach
\(21\%\), \(17\%\), and \(39\%\) of the maximum utility range available over choices, on average across runs. 

The changed ordering in relative performance of \textsc{Lexicographic} and \textsc{50/50} from average to worst-case regret is not spurious --- it occurs because \textsc{Lexicographic} has less directional distortion, overweighting
priorities that are already highly weighted under \(\omega^*\). As a result,
when it makes mistakes, it often does so by leaning too heavily on features
that \(\omega^*\) also regards as important. By contrast, the noisy behavior produced under \textsc{50/50} produces a $\widehat \omega$ that is less directionally tied to $\omega^*$, and this can be exploited by a worst-case adversary.

\subsection{If the learner allows and \textit{utilizes} self-reported indecision, can they learn faster?} \label{sec:speed}
One natural solution to the problem described above is to allow the individual to report their indecision. Using a learner that can \textit{utilize} this indecision, we now investigate the extent to which this also allows faster learning. To test this question, we compare how quickly four different learning approaches converge to the true $\omega^*$. In all cases, the learner assumes the correct underlying response model; what varies is the extent to which they receive or can utilize indecision information. 

The first two response conditions below represent forced comparison benchmarks. The latter two represent cases where the learner actively solicits reports of conflict and indecision (\textsc{Utilize-4}) or generic indecision (\textsc{Utilize-3}). Here, the individual can report a richer alphabet of responses.

\begin{enumerate}
     \item \textsc{Correct}: \ The true response model is \(R^\circ_{h_\beta;0,0}\), so the individual only reports responses in $\{\succ,\prec\}$. The learner assumes the same response model. This is standard Bradley-Terry, and serves as the baseline testing how fast we can learn from forced comparisons under perfect conditions.

    \item \textsc{Ignore}: \ The true response model is \(R^\circ_{h_\beta;\tau_r,\tau_\kappa}\), the learner assumes \(R^\circ_{h_\beta;0,0}\), and any query response in $\{\sim,\bowtie\}$ is dropped from the transcript. This reflects a learner who forces decisive comparisons by dropping queries in which the individual was unable to respond due to indecision.

    \item \textsc{Utilize-3}: \ Here, the true response model is a version of 
    \(R^\circ_{h_\beta;\tau_r,\tau_\kappa}\) where indecisive responses $\{\sim,\bowtie\}$ are both reported as generic indecision \(\oslash\) (see \ref{def:generic-indecision} for the formal specification). The learner assumes the same. This case lets us examine the relative benefit
    of distinguishing \(\sim\) and \(\bowtie\) versus letting the
    individual report general indecision.
    
    \item \textsc{Utilize-4}: \ Here, the true response model is 
    \(R^\circ_{h_\beta;\tau_r,\tau_\kappa}\) and the learner assumes the same. This represents
    the case where the individual responds
    in a way that reflects (a potentially noisy version of) their latent state, and the learner can utilize all possible responses.

\end{enumerate}

We also include a version of (4) where we learn $\tau_r,\tau_\kappa$ alongside $\omega$, to demonstrate the possibility of doing this efficiently; we denote this variant with a $\diamond$ and detail the methodology in Algorithm \ref{alg:learn-tau}, Appendix \ref{app:learn-tau}. 

We now compare the convergence rates of all five learning setups (\Cref{fig:bald_6}). To give underlying intuition about what types of queries are useful, we additionally show which query types were sought by the active learning algorithm in each setting (Figure \ref{fig:respdist-main}). We discuss our key findings below.

\begin{figure}[!]
    \centering
    \includegraphics[width=\linewidth]{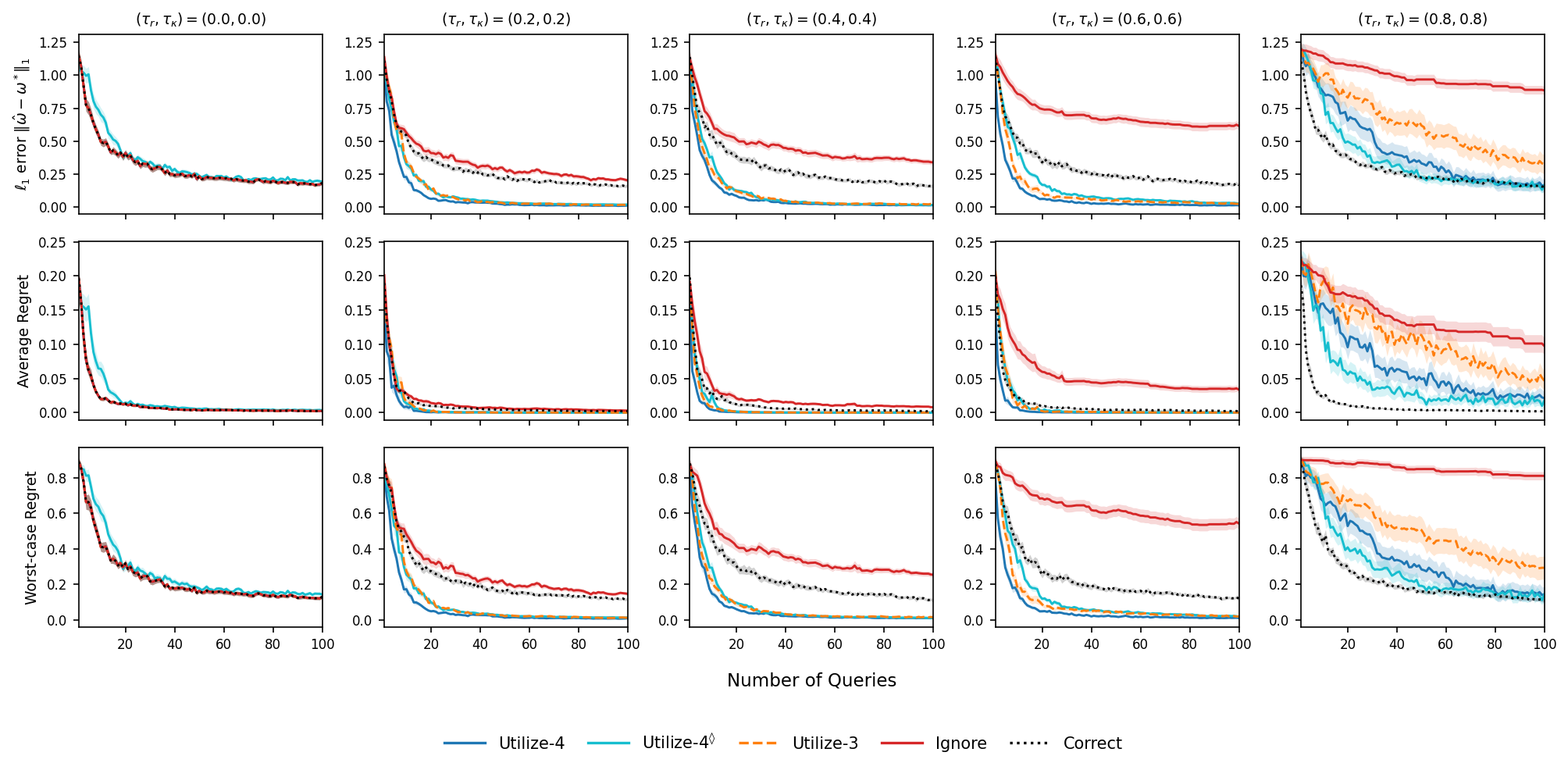}
    \caption{Performance of five methods versus query number across the diagonal threshold
    regimes.
    \textbf{Top:} $\ell_1$ error $\|\widehat\omega-\omega^\ast\|_1$. \textbf{Middle:} Average regret on the
    uniform-$[0,1]^5$ distribution, all features independent. \textbf{Bottom:} Worst-case
    single-decision regret. See performance for more $\tau$ regimes and extensions in Appendix \ref{app:big-grid}.}
    \label{fig:bald_6}
\end{figure}

\begin{figure}[!]
    \centering
    \includegraphics[width=0.9\linewidth]{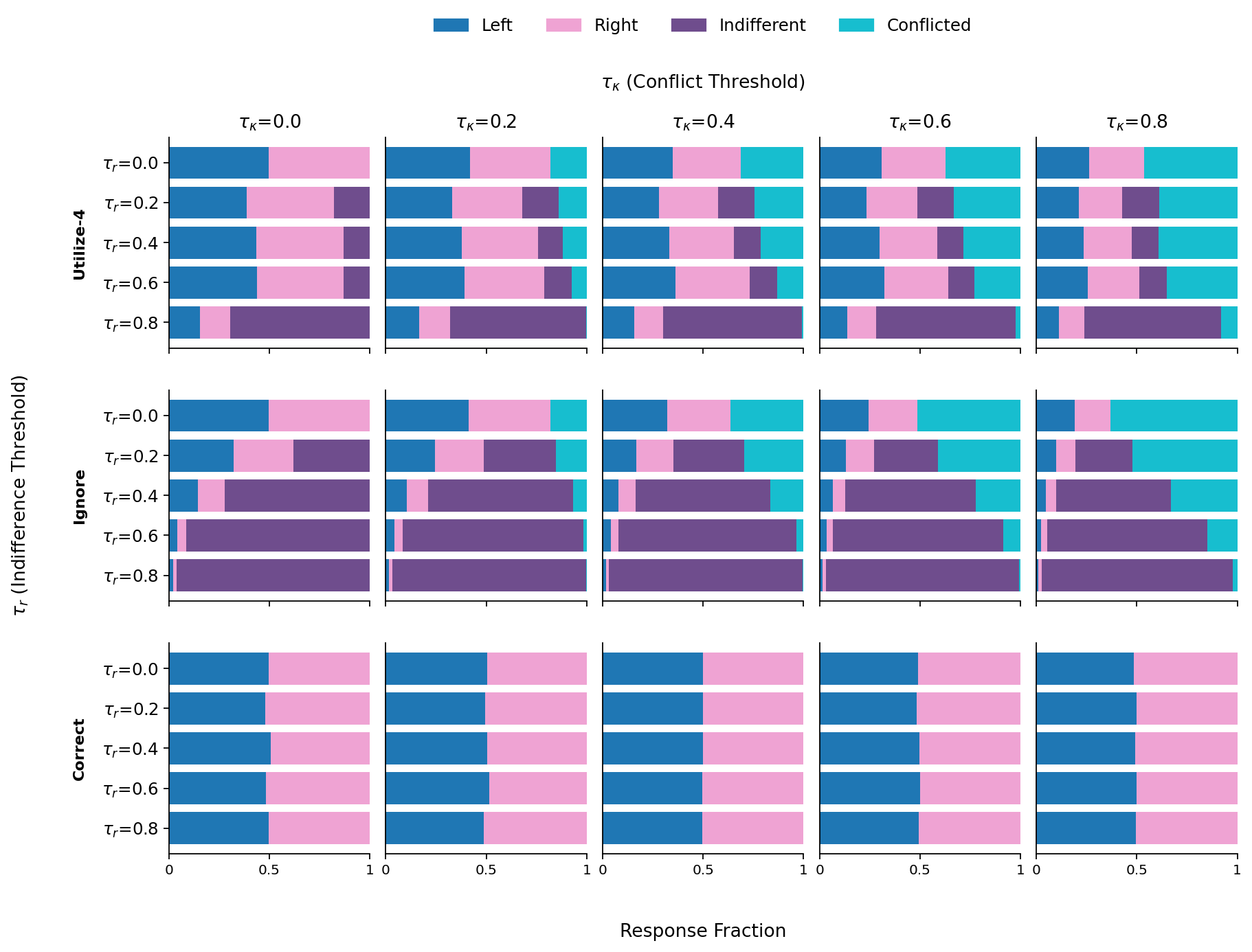}
    \caption{Fraction of each response type (Left $\succ$, Right $\prec$,
    Indifferent $\sim$, Conflict $\bowtie$) for three representative methods ---
    \textsc{Utilize-4}, \textsc{Ignore}, and \textsc{Correct} --- across the full
    $5\times5$ threshold grid. }
    \label{fig:respdist-main}
\end{figure}

\textbf{Utilizing indecision allows faster learning.} First, comparing \textsc{Correct} and \textsc{Ignore} to the \textsc{Utilize} conditions in \Cref{fig:bald_6}, we see that all three indecision-aware learners converge much faster to low \(\ell_1\) error and worst-case regret. As the indecision region grows, the gap also widens in the average regret.  Only in the most severe indecision regime does \textsc{Correct} outperform \textsc{Utilize} methods, where it is simply benefiting from its independence of the latent state in a regime where the latent states are saturated with indecision. However, this is perhaps the least behaviorally plausible regime for the forced benchmark: it assumes that at maximal true indecision, individuals can still resolve all indecision via unbiased, well-structured noise. Making finer comparisons, we see that asking people to distinguish between indifference and conflict makes almost no difference, with \textsc{Utilize-3} and \textsc{Utilize-4} performing almost identically relative to the forced-comparison gap. 

\textsc{Utilize} methods outperforming \textsc{Ignore} is partly mechanical: \textsc{Ignore} is dropping data, and it is doing so in a non-random way, meaning it may never converge to the correct weight vector.\footnote{When \(\tau_r>0\), dropping \(\sim\) responses removes low-valence queries and can eliminate identifying variation. When \(\tau_\kappa>0\), dropping \(\bowtie\) responses conditions on being outside the conflict band, so the remaining \(\succ,\prec\) responses follow a conditional response distribution rather than the original link model.}
 This comparison is practically interesting, because dropping queries when the individual cannot answer them is \textit{a priori} a natural solution to indecision. It is much more striking that the \textsc{Utilize} improves so substantially over \textsc{Correct}, which is the response condition under which decisive responses are behaviorally justified. 

This strongly supports the idea that there is distinct and useful information content in indecision. This claim is bolstered by balance of the queries actively sought by the \textsc{Utilize} methods: as shown in \Cref{fig:respdist-main} (and Figure \ref{fig:respdist-grid} in Appendix \ref{app:big-grid} for \textsc{Utilize-3, Utilize-4}$^\diamond$), \textsc{Utilize} learners do not simply avoid indecisive regions, nor do they simply inherit the latent state distribution. Compared with \textsc{Ignore}, whose queried responses become quickly dominated by \(\sim\) as \(\tau_r\) grows and $\bowtie$ as $\tau_\kappa$ grows, \textsc{Utilize} seems to ``fight against" the latent state distribution and seeks a mix of decisive and indecisive queries through most of the threshold grid. This is consistent with the mechanism suggested by the model: when $\tau_r$ and $\tau_\kappa$ are small or intermediate, $\sim$ and $\bowtie$ responses are highly informative because they impose tight constraints on $\omega$. As the thresholds get higher, these queries may become less informative but more unavoidable.

\textbf{Extensions.} An important possibility we need to rule out is that \textsc{Utilize-4}'s advantage over forced comparison settings is coming from our assumption that it knows $\tau_r,\tau_\kappa$ up front. Comparing \textsc{Utilize-4} and \textsc{Utilize-4}$^\diamond$, we see that this concern is unwarranted --- \textsc{Utilize-4}$^\diamond$ learns almost exactly as fast as \textsc{Utilize-4} despite having to learn the thresholds alongside $\widehat{\omega}$. We note that while \textsc{Utilize-4}$^\diamond$ should not generally outperform \textsc{Utilize-4} (as it is solving a harder problem), our results at $\tau_r = 0.8$ appear to contradict this at low $t$. This early edge likely comes from the fact that by virtue of not knowing the thresholds, \textsc{Utilize-4}$^\diamond$ does not update too strongly based on responses that contradict the latent state due to noise, because early on, such responses can be partly attributed to uncertainty in $\tau_r,\tau_\kappa$.

Additionally, in \Cref{fig:diag-summary} (\Cref{app:big-grid}), we investigate whether the richer response alphabet is powerful enough to retain gains in learning speed under weaker assumptions about noise model. To test this, we compare \textsc{Utilize-4}, \textsc{Ignore}, and \textsc{Correct} learners that know the logistic link function to otherwise identical learners that instead fit a flexible mixture-of-Gaussians noise model, denoted by $\dagger$. We find that generalizing the assumed noise model has essentially no effect on the learning speed under \textsc{Utilize-4}, while further compromising learning rates for \textsc{Ignore} and \textsc{Correct}. This suggests that permitting indecisive responses can reduce risk of model misspecification in an additional way, allowing weaker assumptions about the functional form of noise. 

\textbf{Magnitude of gains are practically significant.} The magnitude of the gains described above matter because the relevant query budgets are realistic: a few tens of comparisons per individual are plausible in elicitation
studies, while many tens or hundreds quickly become burdensome. In the intermediate
threshold regimes, \textsc{Utilize} moves accurate learning into this plausible
range. For example, at \((\tau_r,\tau_\kappa)=(0.4,0.4)\),
\textsc{Utilize-4} drops below \(\ell_1=0.05\) after roughly \(25\) queries and
reaches \(\ell_1=0.014\) by \(T=100\), compared with \(0.16\) for
\textsc{Correct} and \(0.34\) for \textsc{Ignore}. At
\((0.6,0.6)\), the contrast with \textsc{Ignore} is even larger:
\textsc{Utilize-4} reaches \(\ell_1=0.012\), while \textsc{Ignore} remains at
\(0.62\). As shown in \Cref{fig:bald_6}, the regret advantage of \textsc{Utilize} methods is also decisive within a realistic budget: by $T=25$ queries, \textsc{Utilize-4}'s average regret is $0.03\%$ of the per-decision utility range at thresholds $(0.4,0.4)$ and $(0.6,0.6)$, versus $1.2\%$/$1.8\%$ (\textsc{Correct}/\textsc{Ignore}) at $(0.4,0.4)$ and $0.8\%$/$5.4\%$ at $(0.6,0.6)$ — a multiplicative gap of $30$–$180$. Gaps are even more pronounced for worst-case regret. By way of explanation, we again demonstrate that these regret gains correspond to more quickly learning the correct direction of $\omega^*$, as quantified by the cosine similarity (Figure \ref{fig:grid-cos}).
\section{Discussion}
\label{sec:discussion}

Taken together, our results in \Cref{sec:conflict} --- despite their limitations as stylized simulations\footnote{First, we assume that the individual reports indecision accurately with respect to their latent states, up to some noise. This assumption could be unrealistically favorable to richer response alphabets, e.g., under the realistic possibility that people will instead overuse the indecision response to avoid exerting effort to decide. Second, the benefits of richer responses may be much greater when queries are chosen via active learning, given our results showing that under richer response alphabets, the learner seems to be seeking a certain ideal balance of query types. Under passive or randomly sampled queries, indecision reports may arise less often, or in less informative regions of the query space.} 
--- suggest that permitting individuals to report indecision in local pairwise comparisons (even without distinguishing between indifference and conflict) can sidestep issues with deviating response behavior due to forcing comparisons, and dramatically improve learning speed, with gains occurring exactly within the practical query complexity regime. These technical reasons for eliciting indecision add to the reasons articulated in \Cref{rem:rule-level-indecision}: learning indecision over queries can help us learn about the individual's indecision over \textit{rules}, and this information has both normative value for understanding the moral authority of the individual's judgments within the decision space, and instrumental value for finding consensus.

While our learning methods from \Cref{sec:conflict} illustrate how a learner can utilize indecision and learn $\tau_r,\tau_\kappa$ in \textit{linear} priority models, \Cref{sec:insep} demonstrates that this is actually a highly restricted case relative to the universe of priorities the individual might care about. A complete solution is out of scope, because \Cref{sec:perfinsep} shows that perfectly inseparable priorities are not identifiable from local pairwise comparison queries alone, so richer query design is required. However, our results do suggest a learning approach under the restriction that $M$'s weights that are identifiable. We call it \textit{priority-aware learning}: if the learner is \textit{aware of the underlying priority structure}, they can interpret responses according to this information and consequently learn the weights. We discuss this approach below.

\subsection{Toward learning general $M$: \textit{Priority-Aware Learning}} \label{sec:priority-aware}
Fix a ground-truth model $M = (u^*, \omega^*)$  in which every $\omega^*_j$ is scale-free identifiable
with respect to $h$. For now, let's assume the learner knows
$u^*$ in advance, so the goal is to learn is $\omega^*$ --- we will address how to elicit $u^*$ below. We also assume, as usual, that the learner knows the response model $R^{\circ}_{h_\beta;\tau_r,\tau_\kappa}$, for generic $\tau_r,\tau_\kappa$. For intuition, we describe the decoder in the deterministic limit
$\beta \to \infty$, where an individual acting according to
$R^{\circ}_{h_\beta;\tau_r,\tau_\kappa}$ reports their latent state exactly; the noisy
case is the Bayesian analog.

Given a query $q = (y, y'; x)$, each possible response imposes a constraint on $\omega$:
\begin{align*}
w = {\succ}   &\implies r_{M^*}(q) \ge \tau_r \ \text{ and }\ \kappa_{M^*}(q) \ge \tau_\kappa\, r_{M^*}(q),\\
w = {\prec}   &\implies r_{M^*}(q) \ge \tau_r \ \text{ and }\ -\kappa_{M^*}(q) \ge \tau_\kappa\, r_{M^*}(q),\\
w = {\bowtie} &\implies r_{M^*}(q) \ge \tau_r \ \text{ and }\ |\kappa_{M^*}(q)| < \tau_\kappa\, r_{M^*}(q),\\
w = {\sim}    &\implies r_{M^*}(q) < \tau_r,
\end{align*}
where both $r_{M^*}(q)$ and $\kappa_{M^*}(q)$ are linear in $\omega$:
\[
r_{M^*}(q) := \sum_{j \in [m]} \omega_j \,\bigl|\phi^{\mathrm{rules}}\bigl((\Delta^F_j(q))_{F \in \mathcal{F}}\bigr)\bigr|,
\qquad
\kappa_{M^*}(q) := \sum_{j \in [m]} \omega_j \,\phi^{\mathrm{rules}}\bigl((\Delta^F_j(q))_{F \in \mathcal{F}}\bigr).
\]
Given the priority utilities $u_j^*$, the learner can compute the coefficients $\prules\big((\Delta_j^F(q))_{F\in\rules}\big)$ above,\footnote{One may wonder whether this is computationally feasible when $\rules$ is large. This depends on the priority structure: if $\rules$ is parametrized and the utility gaps $F\mapsto u_j(F_{x\to y})-u_j(F_{x\to y'})$ are tractable, computing the scores becomes an optimization problem over that parameter space. If exact optimization is difficult, many rule aggregators can be approximated by sampling over background rules.} making these linear constraints on the unknown $\omega$.\footnote{If the thresholds $\tau_r, \tau_\kappa$ are also unknown, learning them can be folded into the same procedure, just as in Section~\ref{sec:speed}. Since $r_{M^*}(q)$ and $\kappa_{M^*}(q)$ are affine in $\omega$,
the threshold $\tau_r$ enters the constraints linearly and can be learned jointly with
$\omega$; the threshold $\tau_\kappa$ enters bilinearly, through the product
$\tau_\kappa\, r_{M^*}(q)$, so it can be handled by grid search.} Given that the $\omega^*$ are scale-free identifiable, the constraints on $\omega$
accumulated along an exhaustive query sequence eventually pin the feasible set down to
$\omega^*$. The query sequence can also be actively derived: due to the linearity of the constraints, active query choice reduces to the standard problem of choosing halfspace constraints that maximally shrink the remaining feasible set of \(\omega\), and by Lemma C.1, approximating $\omega^*$ ensures that we closely approximate the aggregate-optimal rule.\footnote{One may instead want to directly shrink uncertainty over the rule space, possibly aiming for a rule other than the aggregate-optimal rule; this is more complex, requiring a tractable connection between the remaining space of $\omega$ and the remaining rule space.}

\textbf{Eliciting priorities.} Above we assumed the priorities $u^*$ are known before learning their weights. How precisely to elicit these priorities requires empirical study, but one possible procedure could be interactive and text-based: First, the learner could ask the individual to describe textually how they think the rule should and should not behave, to get an initial set of priorities. Then, as the individual is asked comparison queries, the learner could ask \textit{why} they answered as they did, using these explanations to surface additional priorities. Each time, the prior query responses could be reinterpreted over the expanded priority list. While this approach may not recover all priorities, it is reasonable to assume that the most important priorities surface more readily, in which case the recovered priorities will approximately recover the total weight in $\omega^*$.

Of course, a key point of slippage in this approach is the process of translating textually-articulated priorities into utility functions. However, we argue that knowing the exact cardinal utilities attached to each priority is not the point --- and in fact, such precise values over such a large decision space probably don't even exist within the individual. Rather, we suggest that even an approximate priority basis may be more faithful than other approaches to making learning tractable---e.g., forcing preferences into a highly restricted separable score model, or reducing to highly restricted rule classes that structurally cannot serve certain priorities well. The risk of mistranslating priorities could also be managed by the learner maintaining uncertainty over these representations, and seeking further information when that uncertainty is decision-relevant. 

At a higher level, tractable learning over a massive rule space $\rules' \subseteq \rules$ generally requires some kind of simplification or down-projection of preferences into a lower-dimensional representation. The approach proposed here --- of eliciting priorities textually then using their known structure to learn the weights --- can be seen as a version of learning a lower-dimensional representation in which \textit{the dimensions are chosen by the individual themself}. This gets the learning-speed benefits of a lower dimensional representation while using text for what it is especially good at: eliciting the qualitative, non-numeric structure of a person’s judgments. As an added bonus, the resulting representation is also interpretable to the individual, since its dimensions correspond to considerations they can recognize and revise.

\subsection{Applications of the priority model}
In this paper, we used our model as a theoretical tool to formally investigate intuitions about why forced, local pairwise comparisons --- despite being behaviorally justified by standard models (S-RUMs) --- feel as if they might be missing important elements of our values. However, as alluded to by \Cref{sec:priority-aware}, the fact that our model is potentially tractable to learn means that it can also be useful as a preference-learning tool. In thinking about its applications, it is important to note that while this model is formalized for the decision space of decision rules, its fundamental elements can be applied to many decision tasks, and its key insights apply especially when the decision space is large, e.g., combinatorial (as in committee selection or fair division) or continuous (as with decision rules, prediction algorithms, or policies).

One immediate application is deliberation. In AI-mediated deliberative processes, it is tempting to let an LLM implicitly infer what participants value and use that latent representation to summarize disagreement or guide the conversation through complex policy spaces \cite{fish2026generative,tessler2024ai}. Our model suggests a different architecture: learn an explicit model $M_i$ for each participant \(i\), and use the LLM only as an interface to help elicit the priorities within the model. Instead of using the LLM as a black-box interpreter of opinion, this keeps the preference representation outside the LLM as an interpretable object, allowing the individual to inspect whether their priorities, importance weights, and regions of conflict and indifference have been learned correctly.

Such individual models could also serve as richer inputs in decentralized public input processes, e.g., in maxipublics: instead of simply submitting a local vote, people could submit their priorities --- possibly along with a few additional questions to get at relative importance --- leading to much richer information about what trade-offs they would prefer than is captured by a single vote. This suggests a different way to think about platforms such as Polis \cite{small2021polis} or Remesh \cite{konya2023democratic}, which accept short, free-form comments and then cluster them. Using similar text snippets, one could instead elicit priorities, which could be used as richer inputs to downstream deliberative processes, particularly those designed as described in the paragraph above.

Although we study plurality \textit{within} a single individual, our priority model can also be applied to capture plurality \textit{across} individuals. This could be done, e.g., by learning a single shared model across individuals in a deliberative context, or by learning a model for each individual and then aggregating them. Doing this would make clear which priorities are fundamentally at odds across people, allowing people to more directly consent to compromises. In the likely case where people's preferences contain large equivalence classes, this approach could also help identify where people are broadly indifferent, potentially leading to greater opportunities for consensus. 

\newpage
\bibliographystyle{ACM-Reference-Format}
\bibliography{00_arxiv-new/bibliography-old,0_arxiv/bibliography}

@article{gatto2026medical,
  title={Medical Triage as Pairwise Ranking: A Benchmark for Urgency in Patient Portal Messages},
  author={Gatto, Joseph and Seegmiller, Parker and Burdick, Timothy and Resnik, Philip and Rahat, Roshnik and DeLozier, Sarah and Preum, Sarah M},
  journal={arXiv preprint arXiv:2601.13178},
  year={2026}
}

@inproceedings{feffer2023moral,
  author    = {Feffer, Michael and Heidari, Hoda and Lipton, Zachary C.},
  title     = {Moral Machine or Tyranny of the Majority?},
  booktitle = {Proceedings of the 37th AAAI Conference on Artificial Intelligence (AAAI)},
  pages     = {5974--5982},
  year      = {2023}
}

@article{baharav2026end,
  title={The End Justifies the Mean: A Linear Ranking Rule for Proportional Sequential Decisions},
  author={Baharav, Carmel and Boehmer, Niclas and Flanigan, Bailey and Wittmann, Maximilian T},
  journal={arXiv preprint arXiv:2605.12717},
  year={2026}
}

@incollection{lang_xia_combinatorial_2016,
  author    = {Lang, J{\'e}r{\^o}me and Xia, Lirong},
  title     = {Voting in Combinatorial Domains},
  booktitle = {Handbook of Computational Social Choice},
  editor    = {Brandt, Felix and Conitzer, Vincent and Endriss, Ulle and Lang, J{\'e}r{\^o}me and Procaccia, Ariel D.},
  chapter   = {9},
  pages     = {197--222},
  publisher = {Cambridge University Press},
  year      = {2016},
  doi       = {10.1017/CBO9781107446984.010}
}

@inproceedings{barrot_lang_conditional_2016,
  author    = {Barrot, Nathana{\"e}l and Lang, J{\'e}r{\^o}me},
  title     = {Conditional and Sequential Approval Voting on Combinatorial Domains},
  booktitle = {Proceedings of the Twenty-Fifth International Joint Conference on Artificial Intelligence},
  series    = {IJCAI 2016},
  pages     = {88--94},
  year      = {2016}
}

@article{ratliff_selecting_2006,
  author  = {Ratliff, Thomas C.},
  title   = {Selecting Committees},
  journal = {Public Choice},
  volume  = {126},
  number  = {3--4},
  pages   = {343--355},
  year    = {2006},
  doi     = {10.1007/s11127-006-1747-5}
}

@article{ratliff_saari_diverse_2014,
  author  = {Ratliff, Thomas C. and Saari, Donald G.},
  title   = {Complexities of Electing Diverse Committees},
  journal = {Social Choice and Welfare},
  volume  = {43},
  number  = {1},
  pages   = {55--71},
  year    = {2014},
  doi     = {10.1007/s00355-013-0773-8}
}

@inproceedings{uckelman_alice_2010,
  author    = {Uckelman, Joel},
  title     = {Alice and Bob Will Fight: The Problem of Electing a Committee in the Presence of Candidate Interdependence},
  booktitle = {ECAI 2010: 19th European Conference on Artificial Intelligence},
  editor    = {Coelho, Helder and Studer, Rudi and Wooldridge, Michael},
  series    = {Frontiers in Artificial Intelligence and Applications},
  volume    = {215},
  pages     = {1023--1024},
  publisher = {IOS Press},
  address   = {Amsterdam},
  year      = {2010},
  doi       = {10.3233/978-1-60750-606-5-1023}
}

@misc{ge_linear_2026,
  author        = {Ge, Luise and Halpern, Daniel and Kehne, Gregory and Vorobeychik, Yevgeniy},
  title         = {Linear Social Choice with Few Queries: A Moment-Based Approach},
  year          = {2026},
  eprint        = {2603.19510},
  archivePrefix = {arXiv},
  primaryClass  = {cs.GT}
}

@article{cappelen2007pluralism,
  author  = {Cappelen, Alexander W. and Hole, Astri Drange and S{\o}rensen, Erik {\O}. and Tungodden, Bertil},
  title   = {The Pluralism of Fairness Ideals: An Experimental Approach},
  journal = {American Economic Review},
  volume  = {97}, number = {3}, pages = {818--827}, year = {2007}
}

@incollection{leventhal1980what,
  author    = {Leventhal, Gerald S.},
  title     = {What Should Be Done with Equity Theory?},
  booktitle = {Social Exchange: Advances in Theory and Research},
  editor    = {Gergen, Kenneth J. and Greenberg, Martin S. and Willis, Richard H.},
  pages     = {27--55}, publisher = {Plenum Press}, address = {New York}, year = {1980}
}

@book{ben1985discrete,
  title={Discrete choice analysis: theory and application to travel demand},
  author={Ben-Akiva, Moshe E and Lerman, Steven R},
  volume={9},
  year={1985},
  publisher={MIT press}
}

@article{graham2011mapping,
  author  = {Graham, Jesse and Nosek, Brian A. and Haidt, Jonathan and Iyer, Ravi and Koleva, Spassena and Ditto, Peter H.},
  title   = {Mapping the Moral Domain},
  journal = {Journal of Personality and Social Psychology},
  year    = {2011},
  volume  = {101},
  pages   = {366--385},
  doi     = {10.1037/a0021847}
}

@article{mosteller1951remarks,
  title={Remarks on the method of paired comparisons: I. The least squares solution assuming equal standard deviations and equal correlations},
  author={Mosteller, Frederick},
  journal={Psychometrika},
  volume={16},
  number={1},
  pages={3--9},
  year={1951},
  publisher={Springer-Verlag}
}

@article{tetlock1987accountability,
  author  = {Tetlock, Philip E. and Kim, Jae Il},
  title   = {Accountability and Judgment Processes in a Personality Prediction Task},
  journal = {Journal of Personality and Social Psychology},
  year    = {1987},
  volume  = {52},
  number  = {4},
  pages   = {700--709},
  doi     = {10.1037/0022-3514.52.4.700}
}

@article{haidt2001emotional,
  author  = {Haidt, Jonathan},
  title   = {The Emotional Dog and Its Rational Tail: A Social Intuitionist Approach to Moral Judgment},
  journal = {Psychological Review},
  year    = {2001},
  volume  = {108},
  number  = {4},
  pages   = {814--834},
  doi     = {10.1037/0033-295X.108.4.814}
}

@techreport{haidt2000dumbfounding,
  author      = {Haidt, Jonathan and Bj{\"o}rklund, Fredrik and Murphy, Scott},
  title       = {Moral Dumbfounding: When Intuition Finds No Reason},
  institution = {University of Virginia},
  year        = {2000},
  note        = {Unpublished manuscript / working paper (dated August 10, 2000)}
}

@article{gawronski2017consequences, title = {Consequences, Norms, and Generalized Inaction in Moral Dilemmas: The {CNI} Model of Moral Decision-Making}, author = {Gawronski, Bertram and Armstrong, Joel and Conway, Paul and Friesdorf, Rebecca and H{\"u}tter, Mandy}, journal = {Journal of Personality and Social Psychology}, volume = {113}, number = {3}, pages = {343--376}, year = {2017}, doi = {10.1037/pspa0000086} }

@article{tetlock2003thinking,
  title={Thinking the unthinkable: Sacred values and taboo cognitions},
  author={Tetlock, Philip E},
  journal={Trends in cognitive sciences},
  volume={7},
  number={7},
  pages={320--324},
  year={2003},
  publisher={Elsevier}
}

@article{guzman2022moral,
  title={A moral trade-off system produces intuitive judgments that are rational and coherent and strike a balance between conflicting moral values},
  author={Guzm{\'a}n, Ricardo Andr{\'e}s and Barbato, Mar{\'\i}a Teresa and Sznycer, Daniel and Cosmides, Leda},
  journal={Proceedings of the National Academy of Sciences},
  volume={119},
  number={42},
  pages={e2214005119},
  year={2022},
  publisher={National Academy of Sciences}
}

@book{keeneyraiffa1976,
  author    = {Keeney, Ralph L. and Raiffa, Howard},
  title     = {Decisions with Multiple Objectives: Preferences and Value Tradeoffs},
  publisher = {John Wiley \& Sons},
  address   = {New York},
  year      = {1976}
}

@article{baronspranca1997,
  author  = {Baron, Jonathan and Spranca, Mark},
  title   = {Protected Values},
  journal = {Organizational Behavior and Human Decision Processes},
  year    = {1997},
  volume  = {70},
  number  = {1},
  pages   = {1--16},
  doi     = {10.1006/obhd.1997.2690}
}

@book{pettycacioppo1986,
  author    = {Petty, Richard E. and Cacioppo, John T.},
  title     = {Communication and Persuasion: Central and Peripheral Routes to Attitude Change},
  publisher = {Springer-Verlag},
  address   = {New York},
  year      = {1986},
  doi       = {10.1007/978-1-4612-4964-1}
}

@article{fishkinluskin2005,
  author  = {Fishkin, James S. and Luskin, Robert C.},
  title   = {Experimenting with a Democratic Ideal: Deliberative Polling and Public Opinion},
  journal = {Acta Politica},
  year    = {2005},
  volume  = {40},
  number  = {3},
  pages   = {284--298}
}

@article{lancsar2008conducting,
  title={Conducting discrete choice experiments to inform healthcare decision making: a user’s guide},
  author={Lancsar, Emily and Louviere, Jordan},
  journal={Pharmacoeconomics},
  volume={26},
  number={8},
  pages={661--677},
  year={2008},
  publisher={Springer}
}

@article{green1978conjoint,
  title={Conjoint Analysis in Consumer Research: Issues and Outlook},
  author={Green, Paul E. and Srinivasan, V.},
  journal={Journal of Consumer Research},
  volume={5},
  number={2},
  pages={103--123},
  year={1978}
}

@article{ge2024axioms,
  title={Axioms for AI Alignment from Human Feedback},
  author={Ge, Luise and Halpern, Daniel and Micha, Evi and Procaccia, Ariel D. and Shapira, Itai and Vorobeychik, Yevgeniy and Wu, Junlin},
  journal={arXiv preprint arXiv:2405.14758},
  year={2024}
}

@article{hainmueller2014conjoint,
  title={Causal Inference in Conjoint Analysis: Understanding Multidimensional Choices via Stated Preference Experiments},
  author={Hainmueller, Jens and Hopkins, Daniel J. and Yamamoto, Teppei},
  journal={Political Analysis},
  volume={22},
  number={1},
  pages={1--30},
  year={2014}
}

@article{RaoKupper1967,
  author  = {Rao, P. V. and Kupper, L. L.},
  title   = {Ties in Paired-Comparison Experiments: A Generalization of the Bradley-Terry Model},
  journal = {Journal of the American Statistical Association},
  volume  = {62},
  number  = {317},
  pages   = {194--204},
  year    = {1967},
  doi     = {10.1080/01621459.1967.10482901}
}

@article{davidson,
 ISSN = {01621459, 1537274X},
 URL = {http://www.jstor.org/stable/2283595},
 author = {Roger R. Davidson},
 journal = {Journal of the American Statistical Association},
 number = {329},
 pages = {317--328},
 publisher = {[American Statistical Association, Taylor & Francis, Ltd.]},
 title = {On Extending the Bradley-Terry Model to Accommodate Ties in Paired Comparison Experiments},
 urldate = {2026-02-23},
 volume = {65},
 year = {1970}
}

@inproceedings{noothigattu2018voting,
  title={A voting-based system for ethical decision making},
  author={Noothigattu, Ritesh and Gaikwad, Snehalkumar and Awad, Edmond and Dsouza, Sohan and Rahwan, Iyad and Ravikumar, Pradeep and Procaccia, Ariel},
  booktitle={Proceedings of the AAAI Conference on Artificial Intelligence},
  volume={32},
  number={1},
  year={2018}
}

@article{awad2018moral,
  title={The moral machine experiment},
  author={Awad, Edmond and Dsouza, Sohan and Kim, Richard and Schulz, Jonathan and Henrich, Joseph and Shariff, Azim and Bonnefon, Jean-Fran{\c{c}}ois and Rahwan, Iyad},
  journal={Nature},
  volume={563},
  number={7729},
  pages={59--64},
  year={2018},
  publisher={Nature Publishing Group UK London}
}

@inproceedings{shirali2024participatory,
  title={Participatory Objective Design via Preference Elicitation},
  author={Shirali, Ali and Finocchiaro, Jessie and Abebe, Rediet},
  booktitle={Proceedings of the 2024 ACM Conference on Fairness, Accountability, and Transparency},
  pages={1637--1662},
  year={2024}
}

@article{freedman2020adapting,
  title={Adapting a kidney exchange algorithm to align with human values},
  author={Freedman, Rachel and Borg, Jana Schaich and Sinnott-Armstrong, Walter and Dickerson, John P and Conitzer, Vincent},
  journal={Artificial Intelligence},
  volume={283},
  pages={103261},
  year={2020},
  publisher={Elsevier}
}

@article{tversky1972elimination,
  title={Elimination by aspects: A theory of choice.},
  author={Tversky, Amos},
  journal={Psychological review},
  volume={79},
  number={4},
  pages={281},
  year={1972},
  publisher={American Psychological Association}
}

@article{tetlock1986value,
  title={A value pluralism model of ideological reasoning},
  author={Tetlock, Philip E},
  journal={Journal of Personality and Social Psychology},
  volume={50},
  number={4},
  pages={819},
  year={1986},
  publisher={American Psychological Association}
}

@book{payne1993adaptive,
  title={The Adaptive Decision Maker},
  author={Payne, John W and Bettman, James R and Johnson, Eric J},
  year={1993},
  publisher={Cambridge University Press}
}

@article{tversky1990causes,
  title={The causes of preference reversal},
  author={Tversky, Amos and Slovic, Paul and Kahneman, Daniel},
  journal={The American Economic Review},
  volume={80},
  number={1},
  pages={204--217},
  year={1990},
  publisher={JSTOR}
}

@article{slovic1995construction,
  title={The construction of preference},
  author={Slovic, Paul},
  journal={American Psychologist},
  volume={50},
  number={5},
  pages={364},
  year={1995},
  publisher={American Psychological Association}
}

@article{deutsch1975equity,
  author  = {Deutsch, Morton},
  title   = {Equity, Equality, and Need: What Determines Which Value Will Be Used as the Basis of Distributive Justice?},
  journal = {Journal of Social Issues},
  year    = {1975},
  volume  = {31},
  number  = {3},
  pages   = {137--149},
  doi     = {10.1111/j.1540-4560.1975.tb01000.x}
}

@incollection{lind2001fairness,
  author    = {Lind, E. Allan},
  title     = {Fairness Heuristic Theory: Justice Judgments as Pivotal Cognitions in Organizational Relations},
  booktitle = {Advances in Organizational Justice},
  editor    = {Greenberg, Jerald and Cropanzano, Russell},
  pages     = {56--88},
  publisher = {Stanford University Press},
  year      = {2001}
}

@article{jiang2024survey,
  title={A survey on human preference learning for large language models},
  author={Jiang, Ruili and Chen, Kehai and Bai, Xuefeng and He, Zhixuan and Li, Juntao and Yang, Muyun and Zhao, Tiejun and Nie, Liqiang and Zhang, Min},
  journal={arXiv preprint arXiv:2406.11191},
  year={2024}
}

@inproceedings{huang2024collective,
  title={Collective constitutional ai: Aligning a language model with public input},
  author={Huang, Saffron and Siddarth, Divya and Lovitt, Liane and Liao, Thomas I and Durmus, Esin and Tamkin, Alex and Ganguli, Deep},
  booktitle={Proceedings of the 2024 ACM Conference on Fairness, Accountability, and Transparency},
  pages={1395--1417},
  year={2024}
}

@article{boerstler2024stability, title = {On The Stability of Moral Preferences: A Problem with Computational Elicitation Methods}, author = {Boerstler, Kyle and Keswani, Vijay and Chan, Lok and Schaich Borg, Jana and Conitzer, Vincent and Heidari, Hoda and Sinnott-Armstrong, Walter}, journal = {Proceedings of the AAAI/ACM Conference on AI, Ethics, and Society}, volume = {7}, number = {1}, pages = {156--167}, year = {2024}, doi = {10.1609/aies.v7i1.31626} }

@article{sorensen2024roadmap,
  title={A roadmap to pluralistic alignment},
  author={Sorensen, Taylor and Moore, Jared and Fisher, Jillian and Gordon, Mitchell and Mireshghallah, Niloofar and Rytting, Christopher Michael and Ye, Andre and Jiang, Liwei and Lu, Ximing and Dziri, Nouha and others},
  journal={arXiv preprint arXiv:2402.05070},
  year={2024}
}

@article{konya2023democratic,
  title={Democratic policy development using collective dialogues and AI},
  author={Konya, Andrew and Schirch, Lisa and Irwin, Colin and Ovadya, Aviv},
  journal={arXiv preprint arXiv:2311.02242},
  year={2023}
}

@article{small2021polis,
  title={Polis: Scaling deliberation by mapping high dimensional opinion spaces},
  author={Small, Christopher and Bjorkegren, Michael and Erkkil{\"a}, Timo and Shaw, Lynette and Megill, Colin},
  journal={Recerca: revista de pensament i an{\`a}lisi},
  volume={26},
  number={2},
  year={2021},
  publisher={Universitat Jaume I Servei de Comunicacio i Publicacions}
}

@article{tessler2024ai,
  title={AI can help humans find common ground in democratic deliberation},
  author={Tessler, Michael Henry and Bakker, Michiel A and Jarrett, Daniel and Sheahan, Hannah and Chadwick, Martin J and Koster, Raphael and Evans, Georgina and Campbell-Gillingham, Lucy and Collins, Tantum and Parkes, David C and others},
  journal={Science},
  volume={386},
  number={6719},
  pages={eadq2852},
  year={2024},
  publisher={American Association for the Advancement of Science}
}

@article{fish2026generative,
  title={Generative social choice},
  author={Fish, Sara and G{\"o}lz, Paul and Parkes, David and Procaccia, Ariel and Rusak, Gili and Shapira, Itai and Wuthrich, Manuel},
  journal={Journal of the ACM},
  volume={73},
  number={2},
  pages={1--52},
  year={2026},
  publisher={ACM New York, NY}
}

@article{rosas2019decision,
  title={Decision conflict drives reaction times and utilitarian responses in sacrificial dilemmas},
  author={Rosas, Alejandro and Berm{\'u}dez, Juan Pablo and Aguilar-Pardo, David},
  journal={Judgment and Decision Making},
  volume={14},
  number={5},
  pages={555--564},
  year={2019},
  publisher={Cambridge University Press}
}

@article{metropolis1953equation,
  title = {Equation of State Calculations by Fast Computing Machines},
  author = {Metropolis, Nicholas and Rosenbluth, Arianna W. and Rosenbluth, Marshall N. and Teller, Augusta H. and Teller, Edward},
  journal = {The Journal of Chemical Physics},
  volume = {21},
  number = {6},
  pages = {1087--1092},
  year = {1953},
  doi = {10.1063/1.1699114}
}

@article{hastings1970monte,
  title = {Monte Carlo Sampling Methods Using Markov Chains and Their Applications},
  author = {Hastings, W. K.},
  journal = {Biometrika},
  volume = {57},
  number = {1},
  pages = {97--109},
  year = {1970},
  doi = {10.1093/biomet/57.1.97}
}

@article{gelfand1990sampling,
  title = {Sampling-Based Approaches to Calculating Marginal Densities},
  author = {Gelfand, Alan E. and Smith, Adrian F. M.},
  journal = {Journal of the American Statistical Association},
  volume = {85},
  number = {410},
  pages = {398--409},
  year = {1990},
  doi = {10.1080/01621459.1990.10476213}
}

@incollection{kim2015guide,
  title = {A Guide to Sample Average Approximation},
  author = {Kim, Sujin and Pasupathy, Raghu and Henderson, Shane G.},
  editor = {Fu, Michael C.},
  booktitle = {Handbook of Simulation Optimization},
  series = {International Series in Operations Research \& Management Science},
  volume = {216},
  pages = {207--243},
  publisher = {Springer},
  address = {New York, NY},
  year = {2015},
  doi = {10.1007/978-1-4939-1384-8_8}
}

@article{tierney1994markov,
  title = {Markov Chains for Exploring Posterior Distributions},
  author = {Tierney, Luke},
  journal = {The Annals of Statistics},
  volume = {22},
  number = {4},
  pages = {1701--1762},
  year = {1994},
  doi = {10.1214/aos/1176325750}
}

@article{chib1995understanding,
  title = {Understanding the Metropolis-Hastings Algorithm},
  author = {Chib, Siddhartha and Greenberg, Edward},
  journal = {The American Statistician},
  volume = {49},
  number = {4},
  pages = {327--335},
  year = {1995},
  doi = {10.1080/00031305.1995.10476177}
}

@article{smith1984efficient,
  title = {Efficient Monte Carlo Procedures for Generating Points Uniformly Distributed over Bounded Regions},
  author = {Smith, Robert L.},
  journal = {Operations Research},
  volume = {32},
  number = {6},
  pages = {1296--1308},
  year = {1984},
  doi = {10.1287/opre.32.6.1296}
}

@inproceedings{wilson2018maximizing,
  title = {Maximizing Acquisition Functions for Bayesian Optimization},
  author = {Wilson, James T. and Hutter, Frank and Deisenroth, Marc Peter},
  booktitle = {Advances in Neural Information Processing Systems 31},
  pages = {9884--9895},
  year = {2018},
  publisher = {Curran Associates, Inc.},
  url = {https://proceedings.neurips.cc/paper/2018/hash/498f2c21688f6451d9f5fd09d53edda7-Abstract.html}
}

@article{ge2024learning,
  title={Learning linear utility functions from pairwise comparison queries},
  author={Ge, Luise and Juba, Brendan and Vorobeychik, Yevgeniy},
  journal={arXiv preprint arXiv:2405.02612},
  year={2024}
}

@inproceedings{kane2017active,
  author    = {Kane, Daniel M. and Lovett, Shachar and Moran, Shay and Zhang, Jiapeng},
  title     = {Active Classification with Comparison Queries},
  booktitle = {2017 IEEE 58th Annual Symposium on Foundations of Computer Science (FOCS)},
  pages     = {355--366},
  year      = {2017},
  doi       = {10.1109/FOCS.2017.40}
}

@article{keswani2025moralchange,
  title   = {Moral Change or Noise? On Problems of Aligning AI With Temporally Unstable Human Feedback},
  author  = {Keswani, Vijay and Cousins, Cyrus and Nguyen, Breanna and Conitzer, Vincent and Heidari, Hoda and Schaich Borg, Jana and Sinnott-Armstrong, Walter},
  journal = {arXiv preprint arXiv:2511.10032},
  year    = {2025},
  doi     = {10.48550/arXiv.2511.10032},
  note    = {To appear in the AAAI 2026 Alignment Track},
  url     = {https://arxiv.org/abs/2511.10032}
}

@article{cousins2025cognitivelyfaithful,
  title   = {Towards Cognitively-Faithful Decision-Making Models to Improve AI Alignment},
  author  = {Cousins, Cyrus and Keswani, Vijay and Conitzer, Vincent and Heidari, Hoda and Schaich Borg, Jana and Sinnott-Armstrong, Walter},
  journal = {arXiv preprint arXiv:2509.04445},
  year    = {2025},
  url     = {https://arxiv.org/abs/2509.04445},
  note    = {OpenReview: https://openreview.net/forum?id=ziP9zetlLp}
}

@article{liscio2025value, title = {Value Preferences Estimation and Disambiguation in Hybrid Participatory Systems}, author = {Liscio, Enrico and Siebert, Luciano C. and Jonker, Catholijn M. and Murukannaiah, Pradeep K.}, journal = {Journal of Artificial Intelligence Research}, volume = {82}, pages = {819--850}, year = {2025}, doi = {10.1613/jair.1.14958} }

@inproceedings{SiebertEtAl2022HHAI,
  title     = {Estimating Value Preferences in a Hybrid Participatory System},
  author    = {Siebert, Luciano C. and Liscio, Enrico and Murukannaiah, Pradeep K. and Kaptein, Lionel and Spruit, Shannon and van den Hoven, Jeroen and Jonker, Catholijn M.},
  booktitle = {HHAI 2022: Augmenting Human Intellect},
  series    = {Frontiers in Artificial Intelligence and Applications},
  volume    = {354},
  pages     = {114--127},
  publisher = {IOS Press},
  year      = {2022},
  doi       = {10.3233/FAIA220193},
  url       = {https://pure.tudelft.nl/ws/files/139879518/FAIA_354_FAIA220193.pdf}
}

@inproceedings{mohsin2022learning,
  title={Learning Individual and Collective Priorities over Moral Dilemmas with the Life Jacket Dataset},
  author={Mohsin, Farhad and Kang, Inwon and Chen, Pin-Yu and Rossi, Francesca and Xia, Lirong},
  booktitle={13th Multidisciplinary Workshop on Adv. in Preference Handling, Vienna, Austria},
  year={2022}
}

@inproceedings{keswani2025can,
  title={Can ai model the complexities of human moral decision-making? a qualitative study of kidney allocation decisions},
  author={Keswani, Vijay and Conitzer, Vincent and Sinnott-Armstrong, Walter and Nguyen, Breanna K and Heidari, Hoda and Schaich Borg, Jana},
  booktitle={Proceedings of the 2025 CHI Conference on Human Factors in Computing Systems},
  pages={1--17},
  year={2025}
}

@inproceedings{johnston_deploying_2023,
	title = {Deploying a {Robust} {Active} {Preference} {Elicitation} {Algorithm} on {MTurk}: {Experiment} {Design}, {Interface}, and {Evaluation} for {COVID}-19 {Patient} {Prioritization}},
	shorttitle = {Deploying a {Robust} {Active} {Preference} {Elicitation} {Algorithm} on {MTurk}},
	url = {http://arxiv.org/abs/2306.04061},
	doi = {10.1145/3617694.3623254},
	urldate = {2025-12-08},
	booktitle = {Equity and {Access} in {Algorithms}, {Mechanisms}, and {Optimization}},
	author = {Johnston, Caroline M. and Vossler, Patrick and Blessenohl, Simon and Vayanos, Phebe},
	month = oct,
	year = {2023},
	note = {arXiv:2306.04061 [cs]},
	pages = {1--10},
}

@misc{keswani_pros_2024,
	title = {On the {Pros} and {Cons} of {Active} {Learning} for {Moral} {Preference} {Elicitation}},
	url = {https://arxiv.org/abs/2407.18889v1},
	language = {en},
	urldate = {2025-12-08},
	journal = {arXiv.org},
	author = {Keswani, Vijay and Conitzer, Vincent and Heidari, Hoda and Borg, Jana Schaich and Sinnott-Armstrong, Walter},
	month = jul,
	year = {2024},
}

@article{colquitt2001dimensionality, title = {On the Dimensionality of Organizational Justice: A Construct Validation of a Measure}, author = {Colquitt, Jason A.}, journal = {Journal of Applied Psychology}, volume = {86}, number = {3}, pages = {386--400}, year = {2001}, doi = {10.1037/0021-9010.86.3.386} }

@article{fisman2007individual,
  title={Individual preferences for giving},
  author={Fisman, Raymond and Kariv, Shachar and Markovits, Daniel},
  journal={American Economic Review},
  volume={97},
  number={5},
  pages={1858--1876},
  year={2007},
  publisher={American Economic Association}
}

@article{lee2019webuildai, title = {{WeBuildAI}: Participatory Framework for Algorithmic Governance}, author = {Lee, Min Kyung and Kusbit, Daniel and Kahng, Anson and Kim, Ji Tae and Yuan, Xinran and Chan, Allissa and See, Daniel and Noothigattu, Ritesh and Lee, Siheon and Psomas, Alexandros and Procaccia, Ariel D.}, journal = {Proceedings of the ACM on Human-Computer Interaction}, volume = {3}, number = {CSCW}, articleno = {181}, pages = {1--35}, year = {2019}, publisher = {Association for Computing Machinery}, doi = {10.1145/3359283} }

@article{TverskyKahneman1981Framing,
  author  = {Tversky, Amos and Kahneman, Daniel},
  title   = {The Framing of Decisions and the Psychology of Choice},
  journal = {Science},
  year    = {1981},
  volume  = {211},
  number  = {4481},
  pages   = {453--458},
  doi     = {10.1126/science.7455683}
}

@article{CacioppoBerntson1994EvaluativeSpace,
  author  = {Cacioppo, John T. and Berntson, Gary G.},
  title   = {Relationship Between Attitudes and Evaluative Space: A Critical Review, with Emphasis on the Separability of Positive and Negative Substrates},
  journal = {Psychological Bulletin},
  year    = {1994},
  volume  = {115},
  number  = {3},
  pages   = {401--423},
  doi     = {10.1037/0033-2909.115.3.401}
}

@article{VanHarreveldVanDerPligtDeLiver2009MAID,
  author  = {van Harreveld, Frenk and van der Pligt, Joop and de Liver, Yael N.},
  title   = {The Agony of Ambivalence and Ways to Resolve It: Introducing the MAID Model},
  journal = {Personality and Social Psychology Review},
  year    = {2009},
  volume  = {13},
  number  = {1},
  pages   = {45--61},
  doi     = {10.1177/1088868308324518}
}

@article{SchneiderSchwarz2017MixedFeelings,
  author  = {Schneider, Iris K. and Schwarz, Norbert},
  title   = {Mixed Feelings: The Case of Ambivalence},
  journal = {Current Opinion in Behavioral Sciences},
  year    = {2017},
  volume  = {15},
  pages   = {39--45},
  doi     = {10.1016/j.cobeha.2017.05.012}
}

@article{Anderson2003DecisionAvoidance,
  author  = {Anderson, Christopher J.},
  title   = {The Psychology of Doing Nothing: Forms of Decision Avoidance Result from Reason and Emotion},
  journal = {Psychological Bulletin},
  year    = {2003},
  volume  = {129},
  number  = {1},
  pages   = {139--167},
  doi     = {10.1037/0033-2909.129.1.139}
}

@article{Dhar1997NoChoiceOption,
  author  = {Dhar, Ravi},
  title   = {Consumer Preference for a No-Choice Option},
  journal = {Journal of Consumer Research},
  year    = {1997},
  volume  = {24},
  number  = {2},
  pages   = {215--231}
}

@article{TverskyShafir1992ChoiceUnderConflict,
  author  = {Tversky, Amos and Shafir, Eldar},
  title   = {Choice under Conflict: The Dynamics of Deferred Decision},
  journal = {Psychological Science},
  year    = {1992},
  volume  = {3},
  number  = {6},
  pages   = {358--361},
  doi     = {10.1111/j.1467-9280.1992.tb00047.x}
}

@misc{liu2024reward, title = {Reward Learning From Preference With Ties}, author = {Liu, Jinsong and Ge, Dongdong and Zhu, Ruihao}, year = {2024}, eprint = {2410.05328}, archivePrefix = {arXiv}, primaryClass = {cs.LG}, url = {https://arxiv.org/abs/2410.05328} }

@article{conitzer2024socialchoice,
  title={Social choice should guide AI alignment in dealing with diverse human feedback},
  author={Conitzer, Vincent and others},
  journal={arXiv preprint arXiv:2404.10271},
  year={2024}
}

@article{houlsby2011bayesian,
  title={Bayesian active learning for classification and preference learning},
  author={Houlsby, Neil and Husz{\'a}r, Ferenc and Ghahramani, Zoubin and Lengyel, M{\'a}t{\'e}},
  journal={arXiv preprint arXiv:1112.5745},
  year={2011}
}

@article{dieteren2022should,
  title={How should ICU beds be allocated during a crisis? Evidence from the COVID-19 pandemic},
  author={Dieteren, Charlotte M and Van Hulsen, Merel AJ and Rohde, Kirsten IM and Van Exel, Job},
  journal={Plos one},
  volume={17},
  number={8},
  pages={e0270996},
  year={2022},
  publisher={Public Library of Science San Francisco, CA USA}
}

@inproceedings{lee2017human,
  title={A human-centered approach to algorithmic services: Considerations for fair and motivating smart community service management that allocates donations to non-profit organizations},
  author={Lee, Min Kyung and Kim, Ji Tae and Lizarondo, Leah},
  booktitle={Proceedings of the 2017 CHI conference on human factors in computing systems},
  pages={3365--3376},
  year={2017}
}

@inproceedings{mcelfresh2021indecision,
  title={Indecision modeling},
  author={McElfresh, Duncan C and Chan, Lok and Doyle, Kenzie and Sinnott-Armstrong, Walter and Conitzer, Vincent and Borg, Jana Schaich and Dickerson, John P},
  booktitle={Proceedings of the AAAI Conference on Artificial Intelligence},
  volume={35},
  number={7},
  pages={5975--5983},
  year={2021}
}

@book{fiske1984social,
  author    = {Fiske, Susan T. and Taylor, Shelley E.},
  title     = {Social Cognition},
  year      = {1984},
  publisher = {Addison-Wesley},
  address   = {Reading, MA}
}

@incollection{greene2008secret,
  author    = {Greene, Joshua},
  title     = {The Secret Joke of Kant's Soul},
  booktitle = {Moral Psychology},
  editor    = {Sinnott-Armstrong, Walter},
  volume    = {3},
  pages     = {35--80},
  year      = {2008},
  publisher = {MIT Press}
}

@article{cushman2012finding,
  author  = {Cushman, Fiery and Greene, Joshua D.},
  title   = {Finding Faults: How Moral Dilemmas Illuminate Cognitive Structure},
  journal = {Social Neuroscience},
  volume  = {7},
  number  = {3},
  pages   = {269--279},
  year    = {2012},
  doi     = {10.1080/17470919.2011.614000}
}

@inproceedings{kim2018computational,
  title={A computational model of commonsense moral decision making},
  author={Kim, Richard and Kleiman-Weiner, Max and Abeliuk, Andr{\'e}s and Awad, Edmond and Dsouza, Sohan and Tenenbaum, Joshua B and Rahwan, Iyad},
  booktitle={Proceedings of the 2018 AAAI/ACM Conference on AI, Ethics, and Society},
  pages={197--203},
  year={2018}
}

@article{tversky1974judgment,
  title={Judgment under uncertainty: Heuristics and biases},
  author={Tversky, Amos and Kahneman, Daniel},
  journal={Science},
  volume={185},
  number={4157},
  pages={1124--1131},
  year={1974},
  publisher={American Association for the Advancement of Science}
}

@book{kahneman2011thinking,
  title={Thinking, fast and slow},
  author={Kahneman, Daniel},
  year={2011},
  publisher={Macmillan}
}

@article{graham2013moral,
  title   = {Moral Foundations Theory: The Pragmatic Validity of Moral Pluralism},
  author  = {Graham, Jesse and Waytz, Adam and Meindl, Peter and Iyer, Ravi and Koleva, Spassena and Haidt, Jonathan},
  journal = {Advances in Experimental Social Psychology},
  volume  = {47},
  pages   = {55--130},
  year    = {2013},
  doi     = {10.1016/B978-0-12-407236-7.00002-4}
}

@book{lind1988social,
  title     = {The Social Psychology of Procedural Justice},
  author    = {Lind, E. Allan and Tyler, Tom R.},
  year      = {1988},
  publisher = {Plenum Press},
  address   = {New York},
  doi       = {10.1007/978-1-4899-2115-4}
}

@article{levi2009conceptualizing,
  title   = {Conceptualizing Legitimacy, Measuring Legitimating Beliefs},
  author  = {Levi, Margaret and Sacks, Audrey and Tyler, Tom R.},
  journal = {American Behavioral Scientist},
  volume  = {53},
  number  = {3},
  pages   = {354--375},
  year    = {2009},
  doi     = {10.1177/0002764209338797}
}

@article{bradley1952rank,
  title={Rank analysis of incomplete block designs: I. The method of paired comparisons},
  author={Bradley, Ralph Allan and Terry, Milton E.},
  journal={Biometrika},
  volume={39},
  number={3/4},
  pages={324--345},
  year={1952}
}

@article{thurstone1927law,
  title={A law of comparative judgment},
  author={Thurstone, L. L.},
  journal={Psychological Review},
  volume={34},
  number={4},
  pages={273--286},
  year={1927}
}

\newpage

\appendix
\crefalias{section}{appendix}
\crefalias{subsection}{appendix}
\crefalias{subsubsection}{appendix}
\section{Supplemental Materials from \Cref{sec:model}}

\subsection{Foundations in Behavioral and Decision Sciences}\label{sec:moralpsych} \label{app:behavioral-science}

\subsubsection{Preferences over rules}
One key difference between our model and S-RUMs is that we define preferences at the level of \textit{rules}, from which preferences over outcomes at individual inputs are constructed. This approach lets the model encompass the possibility that the individual evaluates not only realized outcomes but also the rules and procedures that generate them. Research on procedural justice emphasizes that individuals care about the fairness of processes, and that process judgments shape attitudes such as trust and willingness to comply \cite{lind1988social,lind2001fairness,levi2009conceptualizing}. Distributive justice work likewise highlights that judgments about what is fair invoke multiple competing values (e.g., equity, equality, need) that reflect how a rule behaves \textit{across outcomes}, making rules the natural locus for studying value conflict \cite{deutsch1975equity}.

\subsubsection{Internal pluralism}
The second core feature of our model is internal pluralism, as captured in the multiple priorities defining $M$. At a high level, our priority model is closest in spirit to Tetlock's influential \textit{value pluralism model of ideological reasoning}, which was originally proposed to explain individual differences in political reasoning \cite{tetlock1986value}. Tetlock’s proposal was articulated textually and operationalized as a measurement tool, rather than specified as a formal mathematical model of judgment, as we do. As described in \Cref{rem:behavioral-sciences}, our priority model captures many of the intuitions described by Tetlock, including decisions selectively activating certain values and/or bringing values into conflict. Tetlock also alludes to the possibility that some values are more important to an individual than others, a concept modeled by our $\omega$ vector. 

Given that our model so closely imitates Tetlock's setup, it is natural that our model must also engage closely with Tetlock's two core elements of moral reasoning: \textit{differentiation} and \textit{integration.} \textit{Differentiation} is the extent to which a person recognizes multiple relevant considerations in a decision, captured by the multiplicity of priorities in $M$. \textit{Integration} refers to how people actually adjudicate trade-offs \textit{between} values when faced with dilemmas. Our response model assumes people can linearly make trade-offs between priorities when the trade-offs are sufficiently weak (i.e., when the decisiveness is high), but also allows certain trade-offs to be irreducible, when both sides are strongly-held or there is no evidence in either direction.

There is considerable support in the literature for these modeling decisions, as well as robust challenges. We discuss this competing evidence in the next two subsections, and where relevant we discuss how countervailing theories can be captured via alternative instantiations of our model.

\subsubsection{Differentiation} 
Another grounding example of value differentiation is in Moral Foundations Theory, which conjectures that people are influenced by multiple moral ``foundations'' (roughly, concerns -- like harm, fairness, loyalty, respect, etc.). There is empirical evidence applying this theory, finding that moral judgments cluster into multiple dissociable dimensions rather than collapsing to a single factor
\cite{graham2011mapping}. Likewise, in the literature exploring how people respond to moral dilemmas (e.g., the trolley problem), experiments show that variation in responses is explained by participants drawing upon multiple priorities, including sensitivity to both consequences and moral norms \cite{gawronski2017consequences}.

On the other hand, even if many values can matter to an individual in principle, the individual may not be able to \textit{retrieve} all of them when making any given judgment, especially in settings where there is low accountability or low deliberative demand. Tetlock (building on prior work) explicitly frames people as ``cognitive misers,'' becoming more complex only when the decision context induces deeper tradeoff reasoning, e.g., due to accountability \cite{tetlock1986value, fiske1984social, tetlock1987accountability}.
Complementarily, intuitionist and dual-process traditions emphasize that judgments are frequently driven by quick intuitions, with reasons supplied post hoc; this predicts limited explicit differentiation unless prompted or socially required \cite{haidt2001emotional,haidt2000dumbfounding, kahneman2011thinking}.

Our model $M$ already captures selective value retrieval to some degree: if a priority is not active on a given query, it does not contribute to the comparison scores $s^+(q)$ and $s^-(q)$, and is effectively ignored in the latent state $L$ and response model $R$. Our model can be extended naturally to capture richer limitations on value retrieval, too. For example, to capture the recognition of only the frames most important to a decision, one could slightly modify our instantiation of $M$ and $L$ and define $s^+(q)$ and $s^-(q)$ to account only for active priorities $j$ \textit{with $\omega_j$ above a certain threshold}, or the $k$ priorities with the largest values of $\omega_j$.

\subsubsection{Integration}
Our model of integration appears primarily in two places, our latent states, and our query response model. Secondarily, our notion of regret --- formulated primarily for convenience of evaluating rules rather than as a behavioral assumption --- does also encode a form of integration, taking the ``best rule'' to be the rule with the highest value of the $\omega$-weighted sum of priority utilities.

\textbf{Conflict and indifference when facing dilemmas.} Our model's permission of $\bowtie$ (conflict) versus $\sim$ (indifference) in both latent states and responses is closely aligned with the work of \citet{CacioppoBerntson1994EvaluativeSpace} on \textit{evaluative space}, as we discuss in \Cref{rem:behavioral-sciences}. In short, they distinguish conflict and indifference in a conceptually similar way --- the distinction resulting from decoupling positive and negative evidence.

Our explicit modeling of conflict --- rather than just indifference, which would indicate no strong reaction --- is also consistent with other research showing that conflict is often experienced as ambivalence and yields discomfort, deferral, and decision avoidance \cite{VanHarreveldVanDerPligtDeLiver2009MAID,SchneiderSchwarz2017MixedFeelings,Anderson2003DecisionAvoidance}. Classic experiments show that conflict is behaviorally relevant, increasing the tendency to defer choice, to choose a default/no-choice option, or to search for additional alternatives or information (i.e., to expand the option set rather than force a premature commitment) \cite{TverskyShafir1992ChoiceUnderConflict,Dhar1997NoChoiceOption}.

\textbf{Compensatory value integration.}
Although our model permits indecision when $s^+(q)$ and $s^-(q)$ are both very high or both very low, over the remaining regimes of these scores, trade-offs \textit{can} be adjudicated by compensatory aggregation: in such cases, the winning alternative is simply the one with the higher directional evidence score. This captures the idea that individuals can \textit{to some degree} integrate considerations in a compensatory way. In the special case of zero thresholds and perfect separability, this reduces exactly to compensatory aggregation --- simply deciding based on the weighted utility-gap comparison. 

The assumption that people can compensatorily aggregate is standard in normative and empirical decision theory: multiattribute utility theory explicitly represents preferences over alternatives with multiple conflicting objectives via additive (or additively separable) value functions with attribute weights \cite{keeneyraiffa1976}. In moral psychology, Guzm{\'a}n et al.'s ``moral trade-off system'' similarly models judgments as selecting the most ``right'' option among feasible alternatives, where rightness is determined by a weighted function over moral values; they argue this framework captures coherent compromise in dilemma-like settings \cite{guzman2022moral}. 

\textbf{Non-compensatory value integration.}
At the same time, influential alternative theories emphasize that moral integration is often \textit{non-compensatory}: rather than continuously trading off all considerations via weights, agents may treat some constraints as deontological prohibitions, ``protected values,'' or hard side-constraints that behave like vetoes or lexicographic priorities \cite{baronspranca1997, greene2008secret}.
Tetlock's work on \textit{sacred values} and \textit{taboo trade-offs} also presents a version of this idea by arguing that, in some domains, even reasoning about certain trade-offs is socially and psychologically prohibited, because certain values are inviolable \cite{tetlock2003thinking}. Additionally, \citet{cushman2012finding} make the case that the moral domain is fundamentally different than other motivational domains due to the non-negotiability over certain values, like not harming others -- leading to compromise intractability. 

While we allow non-compensatory value integration through conflict, our model does not represent taboo/non-negotiable regions as categorically different objects, instead treating all priorities as commensurable objects of identical structure. While high weights are insufficient to guarantee constraint behavior, one can approximate lexicographic reasoning by setting the priority weights to be $\omega = (1-\sum_{j \in [m-1]}\epsilon^j,\epsilon,\epsilon^2,...,\epsilon^{m-1})$ (when the relevant nonzero utility gaps between rules are bounded away from zero, and with \(\epsilon\) chosen sufficiently small relative to that gap), decreasing in the lexicographic order one wants to achieve. Of course, one can always instead capture lexicographic reasoning or taboo trade-offs in the response model $R$, the former of which we do in \Cref{def:lex}; however, this is a non-neutral choice, encoding the assumption that  this non-negotiability is just behavioral, and not part of one's fundamental commitments as modeled by $M$. 

\subsubsection{Separating underlying priorities and contextual response behavior}

The preceding discussion examines how agents construct beliefs from priorities. An additional challenge, documented across survey methodology and behavioral decision research, is that the judgments we elicit are often the product of interactions between such internal reasoning \textit{and the elicitation environment}. By defining $M$ and $R$ separately, our model explicitly decouples these two elements, making clear the distinction between the individual's underlying commitments, as represented by $M$, and the context- or procedure-dependent process through which those commitments are expressed in observed responses, as represented by $R$. This also decouples the goals of learning a behavioral emulator versus learning the individual's underlying beliefs, \textit{despite} behavioral distortions.

There are some documented sources of contextual variation that could be particularly interesting to capture in extensions of our model using the distinction between $M$ and $R$. First, how agents integrate values can vary across elicitation settings: work on constructed preferences suggests that effective trade-off weights may not be invariant across elicitation formats \cite{slovic1995construction,tversky1990causes}. In our terms, such variation could reflect changes in the commitments represented by $M$, changes in the response process $R$, or both. Second, differentiation --- i.e., \textit{which} considerations enter into a judgment --- can also be context-sensitive: framing and task demands can shift which attributes are attended to or treated as relevant \cite{TverskyKahneman1981Framing,payne1993adaptive,tversky1972elimination,tversky1974judgment}. Finally, underlying values and priorities can evolve through learning, persuasion, or deliberation \cite{pettycacioppo1986,fishkinluskin2005}.

\subsection{Proof of \Cref{thm:score-rum-equivalence}} \label{app:score-rum-equivalence}

We first derive a simplification that will be used in both directions. 
\begin{lemma} \label{lem:lin-agg-gap}
    Fix a
perfectly separable pluralistic model $M \in \Msep$ with constant utility gap $\Delta^F_j(q) = \delta_j(q) \ \forall F \in \rules$ for any priority $j$ and query $q$. Then (dropping $M$ from the notation for clarity),
    \[s^+(q)-s^-(q)=
    \sum_{j\in[m]}\omega_j\delta_j(q).\]
\end{lemma}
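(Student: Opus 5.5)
The argument unfolds the definitions of $s^+$ and $s^-$, uses unanimity of $\prules$ to collapse each rule aggregation to a constant, and then applies the identity $[a]_+-[-a]_+=a$ term by term.

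\emph{Step 1: collapse the rule aggregation.} Fix a query $q=(y,y';x)$ and a priority $j$. Perfect separability gives $\Delta_j^F(y,y';x)=\delta_j(q)$ for every $F\in\rules$. Hence the vector $(\Delta_j^F(y,y';x))_{F\in\rules}$ is constant, and unanimity of $\prules$ yields
\[
\prules\big((\Delta_j^F(y,y';x))_{F\in\rules}\big)=\delta_j(q).
\]

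\emph{Step 2: handle the reversed query.} For the reversed comparison I use the antisymmetry of the projection gap, which is immediate from the definition \eqref{eq:local-evidence}:
\[
\Delta_j^F(y',y;x)=u_j(F_{x\to y'})-u_j(F_{x\to y})=-\Delta_j^F(y,y';x)=-\delta_j(q)
\]
for all $F$. Applying unanimity again gives
\[
\prules\big((\Delta_j^F(y',y;x))_{F\in\rules}\big)=-\delta_j(q).
\]

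\emph{Step 3: sum over priorities.} Substituting Steps 1 and 2 into the definitions of the directional evidence scores gives
\[
s^+(q)=\sum_{j}\omega_j[\delta_j(q)]_+,\qquad s^-(q)=\sum_j\omega_j[-\delta_j(q)]_+.
\]
Subtracting termwise and using $[a]_+-[-a]_+=a$ for every real $a$ (check the cases $a\ge 0$ and $a<0$) yields
\[
s^+(q)-s^-(q)=\sum_j\omega_j\delta_j(q).
\]

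There is no real obstacle here. The only point needing care is Step 2: perfect separability is stated for the ordered query $(y,y';x)$, so the reversed projection gap must be obtained from antisymmetry rather than assumed to be separately constant. Antisymmetry then also shows that $\delta_j$ for the reversed query is $-\delta_j(q)$, so the two definitions are consistent.
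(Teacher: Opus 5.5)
Your proposal is correct and follows essentially the same route as the paper's proof. It collapses each rule aggregation to $\pm\delta_j(q)$ via unanimity, obtaining the reversed gap from antisymmetry of the projection gap, and then applies $[a]_+-[-a]_+=a$ termwise.
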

\begin{proof}
By perfect separability, it holds that
\[
    \Delta_j^F(y,y';x)
    =
    u_j(F_{x\to y})-u_j(F_{x\to y'})
    =
    \delta_j(q)
    \qquad \forall F\in\mathcal F.
\]
Likewise,
\[
    \Delta_j^F(y',y;x)
    =
    u_j(F_{x\to y'})-u_j(F_{x\to y})
    =
    -\delta_j(q)
    \qquad \forall F\in\mathcal F.
\]
Therefore, by unanimity of $\phi^{\mathrm{rules}}$, defining shorthand $s_j^+,s_j^-$,
\[
    s_j^+(q)
    =
    \phi^{\mathrm{rules}}
    \left(
        \{\Delta_j^F(y,y';x)\}_{F\in\mathcal F}
    \right)
    =
    \delta_j(q),
\]
and similarly,
\[
    s_j^-(q)
    =
    \phi^{\mathrm{rules}}
    \left(
        \{\Delta_j^F(y',y;x)\}_{F\in\mathcal F}
    \right)
    =
    -\delta_j(q).
\]
Then, aggregating over priorities,
\[
    s^+(q)
    =
    \sum_{j\in[m]}\omega_j
    \bigl(s_j^+(q)\bigr)_+
    =
    \sum_{j\in[m]}\omega_j
    \bigl(\delta_j(q)\bigr)_+,
\]
while
\[
    s^-(q)
    =
    \sum_{j\in[m]}\omega_j
    \bigl(s_j^-(q)\bigr)_+
    =
    \sum_{j\in[m]}\omega_j
    \bigl(-\delta_j(q)\bigr)_+.
\]
Hence,
\label{eq:simplified-scores}
\begin{align*}
    s^+(q)-s^-(q)=
    \sum_{j\in[m]}\omega_j
    \left[
        \bigl(\delta_j(q)\bigr)_+
        -
        \bigl(-\delta_j(q)\bigr)_+
    \right]  =
    \sum_{j\in[m]}\omega_j\delta_j(q),
\end{align*}
where the final equality uses $(a)_+-(-a)_+=a$.
\end{proof}

\begin{proof}[Proof of \Cref{thm:score-rum-equivalence}]
\textbf{Forward direction:} Fix any local score function
$V:X\times Y\to\mathbb R$. Construct a priority model $\cM_V$
with a single priority, weight $1$, and utility function
\[
    u_V(F):=\sum_{x\in X} V(x,F(x)).
\]
For any query $q=(y,y';x)$ and any background rule $F\in\mathcal F$, the
projected rules $F_{x\to y}$ and $F_{x\to y'}$ agree at every input
$x'\neq x$. Therefore,
\[
\begin{aligned}
    \Delta_V^F(y,y';x)=
    u_V(F_{x\to y})-u_V(F_{x\to y'})  =
    V(x,y)-V(x,y').
\end{aligned}
\]
This expression is independent of $F$, so $\cM_V$ is perfectly separable with constant gap
\[
    \delta_V(q)=V(x,y)-V(x,y').
\]
Since $\cM_V$ has one priority with weight $1$, \Cref{lem:lin-agg-gap}
implies
\[
    s^+(q)-s^-(q)
    =
    \delta_V(q)
    =
    V(x,y)-V(x,y').
\]
Note that when $\tau_r = \tau_\kappa = 0$, it is guaranteed that $r(q) \geq \tau_r$. By the above, $\kappa(q) = s^+(q) - s^-(q) = V(x,y) - V(x,y')$. Then, by \Cref{lem:zero-thresh}, it holds that
\[
\begin{aligned}
    \Pr[R_{h_\beta;\,0,0}(q;M_V)=y\succ y']
    &=
    h_\beta\!\left(
        s^+(q)-s^-(q)
    \right)  \\
    &=
     h_\beta\!\left(V(x,y)-V(x,y')\right)  \\
    &=
    \Pr[S_{h_\beta}(q;V)=y\succ y'].
\end{aligned}
\]
The complementary response probability is therefore also identical.
The learning target correspondence holds by the fact that the following (letting $\rules' \subseteq \rules$ be any subset of rules):
\begin{align*}
    \sum_{x \in \cX} V(x,F(x)) = u_V(F) = \sum_{j \in [m]} \omega_j u_j \quad \forall F \in \rules \implies \arg\max_{F \in \rules'}  \sum_{x \in \cX} V(x,F(x)) = \arg\max_{F \in \rules'} U_M(F).
\end{align*}
 
\textbf{Reverse direction:} A perfectly separable priority model $M$, and an arbitrary background rule $F^0 \in \rules$. For each priority $j$ and each
input $x$, define
\[
    V_j(x,y):=u_j(F^0_{x\to y}).
\]
Then, define the local score function as
\[
    V_M(x,y):=\sum_{j\in[m]}\omega_j V_j(x,y).
\]
We will show that this Score-RUM induces the same score gap as $M$ on
every query. Fix any query $q=(y,y';x)$. Since priority $j$ is perfectly separable at $q$,
there exists a constant $\delta_j(q)$ such that
\[
    u_j(F_{x\to y})-u_j(F_{x\to y'})
    =
    \delta_j(q)
    \qquad \forall F\in\mathcal F.
\]
Applying this equality to the particular background rule $F^0$ gives
\[
\begin{aligned}
    V_j(x,y)-V_j(x,y') 
    &=
    u_j(F^0_{x\to y})-u_j(F^0_{x\to y'}) = \delta_j(q).
\end{aligned}
\]
Therefore,
\[
\begin{aligned}
    V_M(x,y)-V_M(x,y') = 
    \sum_{j\in[m]}\omega_j
    \left(V_j(x,y)-V_j(x,y')\right) =
    \sum_{j\in[m]}\omega_j\delta_j(q) =  s^+(q)-s^-(q).
\end{aligned}
\]
where the last step is by \Cref{lem:lin-agg-gap}.
Hence,
\[
    V_M(x,y)-V_M(x,y')
    =
    s^+(q)-s^-(q).
\]
This equality implies behavioral equivalence exactly as it did in the forward direction. 

\textit{Learning target correspondence.} We will show the claim that for every $j$, there exists some $c_j \in \mathbb{R}$ such that the following is true.
\begin{align}
    \sum_{x \in \cX} V_{M} (x,F(x)) = \sum_{j\in[m]}  \sum_{x \in \cX} \omega_j u_j(F^0_{x\to F(x)})  = \sum_{j \in [m]} c_j + \omega_ju_j(F) \qquad \forall F \in \rules. 
\end{align}
Which will implied the desired claim because the constant shift of $\sum_j c_j$ doesn't affect the optimization.

We will construct the $c_j$ so that for all $j$ and for any $F \in \rules$,
\[ \sum_{x \in \cX} u_j(F^0_{x \to F(x)}) = u_j(F) + c_j/\omega_j.\]
In words, we are saying that the utility of $F$ can be constructed by starting from rule $F^0$, and then adding up all the utility gaps for each individual output replacement at the $x$'s where $F$ and $F^0$ differ, plus some constant shift. Naturally, this will be a telescoping sum argument.

Fix any $F \in \rules$, and let $\{x_1,\dots,x_\ell\} \subseteq \cX$ be the subset of inputs on which $F^0$ and $F$ differ, enumerated arbitrarily (if this set is empty, we are done). Construct a sequence of rules $F^0,F^1,\dots,F^\ell$, where \(F^\ell=F\), and \(F^t\) agrees with \(F\) on
\(\{x_1,\ldots,x_t\}\) and $F^0$ on $\{x_{t+1}, \dots, x_\ell\}$. 
Thus \(F^t\) is obtained from \(F^{t-1}\) by
changing only the value at input \(x_t\) from \(F^0(x_t)\) to \(F(x_t)\).

By telescoping,
\[
    u_j(F)-u_j(F^0)
    =
    \sum_{t=1}^\ell
    \left(
        u_j(F^t)-u_j(F^{t-1})
    \right).
\]
For each \(t\), perfect separability implies that the effect of changing the
value at \(x_t\) from \(F^0(x_t)\) to \(F(x_t)\) is independent of the
background rule. Hence, we can equivalently write this utility gap as the one obtained by modifying $F^0$ at that individual input:
\[
    u_j(F^t)-u_j(F^{t-1})
    =
    u_j(F^0_{x_t\to F(x_t)})-u_j(F^0).
\]
Therefore,
\[
\begin{aligned}
    u_j(F)-u_j(F^0)
    =
    \sum_{t=1}^\ell
    \left(
        u_j(F^0_{x_t\to F(x_t)})-u_j(F^0)
    \right) =
    \sum_{x\in\mathcal X}u_j(F^0_{x\to F(x)})
    -
    |\cX| u_j(F^0).
\end{aligned}
\]
Rearranging gives
\[
    \sum_{x\in\mathcal X}u_j(F^0_{x\to F(x)})
    =
    u_j(F)+(|\cX|-1)u_j(F^0).
\]
Thus, we set $c_j = (|\cX|-1)u_j(F^0)\omega_j$, which is a constant in $F$ as desired, and we conclude the desired equality.
\[
    \sum_{x\in\mathcal X} V_j(x,F(x))
    =
    u_j(F)+c_j/\omega_j.
\]
Putting it all together, we get that
\[
\begin{aligned}
    \sum_{x\in\mathcal X} V_{M}(x,F(x)) = \sum_{j\in[m]}\omega_j u_j(F)
    +
    \sum_{j\in[m]}\omega_j c_j,
\end{aligned}
\]
where the final term is constant in $F$, and thus does not affect the optimization. We conclude that for any $\rules' \subseteq \rules$,
\[
    \arg\max_{F \in \mathcal{F}'} \sum_{x\in\mathcal X} V_{M}(x,F(x)) =  \arg\max_{F \in \mathcal{F}'} \sum_{j\in[m]}\omega_j u_j(F). \qedhere
\]
\end{proof}

\newpage
\section{Supplemental Materials from \Cref{sec:insep}}
\subsection{Proof of \Cref{prop:egal-insep}} \label{app:egal-insep}
\begin{proof}
Fix any two \(a,b \in N\) where \(a \neq b\), and fix an arbitrary query
\(q = (a,b;(a,b;k))\), where \(x^*= (a,b;k)\). To prove inseparability,
we must simply construct two background rules \(F^a\), \(F^b\) such that
\[
F^a_{x^* \to a} \succ_{\mathrm{egal}} F^a_{x^* \to b}
\qquad \text{and} \qquad
F^b_{x^* \to b} \succ_{\mathrm{egal}} F^b_{x^* \to a},
\]
i.e., who we allocate to at $x^*$ according to the egalitarian priority depends on the background rule.
We will prove that we can construct such an \(F^a\); by symmetry, the same
proof demonstrates that we can construct such an \(F^b\).

Let \(F^a\) be any rule that never allocates to \(a\) at any other
\(x \in \cX \setminus \{x^*\}\), but does allocate to all
\(c \in N \setminus \{a\}\) at least once. This is possible because
\(|N| \geq 3\): in all other queries, either \(a\) does not appear, or
\(a\) does appear but \(F^a\) can allocate to another recipient.

By the assumption that \(\cD\) has full support, i.e., every \(x\) occurs
with positive probability, it must be that under either projection
\(F^a_{x^* \to a}\) or \(F^a_{x^* \to b}\), all other recipients receive
nonzero benefit:
\[
\mathbb{E}_{x \sim \cD}
\left[
v_c\left(x,F^a_{x^* \to j}(x)\right)
\right] > 0
\qquad
\text{for all } c \in N \setminus \{a\}, \ j \in \{a,b\}.
\]
However, because \(a\) never receives a good at any \(x \neq x^*\), whether
\(a\)'s benefit is \(0\) is entirely dictated by \(F^a\)'s behavior at
\(x^*\):
\[
\mathbb{E}_{x \sim \cD}
\left[
v_a\left(x,F^a_{x^* \to b}(x)\right)
\right] = 0
\qquad \text{and} \qquad
\mathbb{E}_{x \sim \cD}
\left[
v_a\left(x,F^a_{x^* \to a}(x)\right)
\right] > 0.
\]
This means that \(F^a\) makes \(a\) the most shortchanged recipient at the
query, and thus whether they receive the good at \(x^*\) is binding for the
egalitarian priority:
\[
u_{\mathrm{egal}}\left(F^a_{x^* \to b}\right) = 0
\qquad \text{and} \qquad
u_{\mathrm{egal}}\left(F^a_{x^* \to a}\right) > 0.
\]
This implies the following, as needed:
\[
F^a_{x^* \to a} \succ_{\mathrm{egal}} F^a_{x^* \to b}.
\qedhere
\]
\end{proof}

\subsection{Perfect Inseparability of Proportionality and Equal Treatment} \label{app:prop-eq}
\begin{proposition}\label{prop:prop} For certain $\cX,\cY$, \textbf{Proportionality} can be perfectly 
    inseparable on all local pairwise comparison queries $q \in \cQ^{\text{pc}}$.
\end{proposition}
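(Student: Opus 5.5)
We will prove this by exhibiting a concrete instance, mirroring the construction in the proof of \Cref{prop:egal}. We take the same two-recipient, two-input allocation task: $N=\{a,b\}$, $\cX=\{x_1,x_2\}$ with $x_\ell=(a,b;k_\ell)$, $\cY(x)=\{a,b\}$, and $\cD$ uniform on $\{x_1,x_2\}$. The rule space is $\rules=\{F^{aa},F^{ab},F^{ba},F^{bb}\}$. For the proportionality priority (\Cref{ex:prop}) we take two singleton protected groups $G_a=\{a\}$ and $G_b=\{b\}$ with balanced targets $\alpha_a=\alpha_b=1/2$. These are exactly the parameters of \Cref{ex:inseparable-local-trace} with $\epsilon=0$; that example shows any $\epsilon>0$ breaks the symmetry, so the balanced choice is the one to use.

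Step one computes the utilities. Write $p_a(F)$ for the probability that $F$ allocates to $a$. Then $p_a(F^{aa})=1$, $p_a(F^{ab})=p_a(F^{ba})=1/2$, $p_a(F^{bb})=0$, and $p_b=1-p_a$. Hence
\[
u_{\mathrm{prop}}(F^{ab})=u_{\mathrm{prop}}(F^{ba})=0,\qquad u_{\mathrm{prop}}(F^{aa})=u_{\mathrm{prop}}(F^{bb})=-\tfrac12 .
\]
After the positive affine normalization the paper permits, these become $1$ and $0$, identical to the egalitarian utilities in \Cref{prop:egal}.

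Step two verifies \Cref{def:insep} at each query. Take $q_1=(a,b;x_1)$ and use the partition $\rules_a=\{F^{bb},F^{ab}\}$, $\rules_b=\{F^{aa},F^{ba}\}$. For a background rule with $F(x_2)=b$, the projections are $F^{ab}$ versus $F^{bb}$, so the gap is $+\delta$. For a background rule with $F(x_2)=a$, the projections are $F^{aa}$ versus $F^{ba}$, so the gap is $-\delta$. Here $\delta=1/2$ before normalization and $\delta=1$ after. Each block contains two rules, so $|\rules_a|=|\rules_b|$. The query $q_2=(a,b;x_2)$ is handled symmetrically with $\rules_a=\{F^{bb},F^{ba}\}$ and $\rules_b=\{F^{aa},F^{ab}\}$.

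Step three covers the remaining queries. The reversed queries $(b,a;x_\ell)$ negate every gap, so the same partitions work with the roles of the two blocks swapped. Degenerate queries with $y=y'$ have all gaps equal to zero and so cannot be perfectly inseparable. We therefore interpret ``all queries'' as all queries with $y\neq y'$, which is the same convention \Cref{prop:egal} uses implicitly; we will state this explicitly in the proof.

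The only real obstacle is choosing the targets: equal and opposite gaps require the deviations $|p_a-\alpha_a|$ to be symmetric. That is why $\alpha_a=\alpha_b=1/2$ is needed, and once it is fixed the remaining checks are routine arithmetic.
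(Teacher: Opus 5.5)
Your proposal is correct and matches the paper's proof: the same two-recipient, two-input instance with singleton groups $\{a\}$ and $\{b\}$, targets $\alpha_a=\alpha_b=1/2$, resulting utilities $0$ and $-1/2$, and $\delta=1/2$ with the partitions from \Cref{prop:egal}. Your explicit treatment of the reversed queries $(b,a;x_\ell)$ and your observation that queries with $y=y'$ must be excluded are slightly more careful than the paper. The paper checks only $(a,b;x_1)$ and $(a,b;x_2)$ and leaves that convention implicit.
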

\begin{proof}
    Consider the example from the proof of \Cref{prop:egal}, where there are two groups: group 1 is $\{a\}$, group 2 is $\{b\}$ and $\alpha_1 = \alpha_2 = 1/2$. Then,
    \[u_\text{prop}(F^{ab}) = u_\text{prop}(F^{ba}) = 0, \quad u_\text{prop}(F^{aa}) = u_\text{prop}(F^{bb}) = -1/2.\]
    These utilities have exactly the same structure as the utilities in \Cref{prop:egal}, creating exactly the same implementation of perfect inseparability (\Cref{def:insep}) with $\delta = 1/2$.
\end{proof}

\begin{proposition} \label{prop:eq}
    For certain $\cX,\cY$, \textbf{Equal Treatment} can be perfectly 
    inseparable on all local pairwise comparison queries $q \in \cQ^{\text{pc}}$.
\end{proposition}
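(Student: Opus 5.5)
We will mirror the proof of \Cref{prop:egal} (and \Cref{prop:prop}): build a minimal two-input instance in which the equal treatment utility has exactly the ``diagonal'' structure that made egalitarianism perfectly inseparable, and then read off the equal-size partitions required by \Cref{def:insep}.

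\textbf{Construction.} Take two inputs $x_1,x_2$ with $\cY(x_1)=\{a,b\}$ and $\cY(x_2)=\{a',b\}$. Input $x_2$ is obtained from $x_1$ by replacing recipient $a$ with an otherwise comparable counterpart $a'$ who differs from $a$ only in protected-group membership. Let the single counterpart constraint be $\mathcal C=\{(x_1,x_2,\eta)\}$, where $\eta(a)=a'$ and $\eta(b)=b$. Abusing notation as in \Cref{prop:egal}, write $\rules=\{F^{aa},F^{ab},F^{ba},F^{bb}\}$, where the first superscript is the output at $x_1$ and the second is the output at $x_2$; here ``$a$'' at $x_2$ means $a'$.

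By \Cref{def:equal-treatment-priority},
\[
u_{\mathrm{eq}}(F)=\mathbf 1\{F(x_2)=\eta(F(x_1))\},
\]
so
\[
u_{\mathrm{eq}}(F^{aa})=u_{\mathrm{eq}}(F^{bb})=1, \qquad u_{\mathrm{eq}}(F^{ab})=u_{\mathrm{eq}}(F^{ba})=0.
\]

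\textbf{Query at $x_1$.} Consider $q_1=(a,b;x_1)$. For any background rule $F$, the projection gap depends only on $F(x_2)$:
\begin{itemize}
\item if $F(x_2)=a'$, then $\Delta^F_{\mathrm{eq}}(q_1)=u(F^{aa})-u(F^{ba})=1$;
\item if $F(x_2)=b$, then $\Delta^F_{\mathrm{eq}}(q_1)=u(F^{ab})-u(F^{bb})=-1$.
\end{itemize}
So we take $\delta=1$, with $\rules_a=\{F^{aa},F^{ba}\}$ and $\rules_b=\{F^{ab},F^{bb}\}$. Each part has size two, so these form the equal-size partition required by \Cref{def:insep}.

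\textbf{Query at $x_2$.} Symmetrically, for $q_2=(a',b;x_2)$ the gap depends only on $F(x_1)$. We take $\rules_{a'}=\{F^{aa},F^{ab}\}$ and $\rules_b=\{F^{ba},F^{bb}\}$, again with $\delta=1$.

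\textbf{Remaining queries.} The reversed-order queries $(b,a;x_1)$ and $(b,a';x_2)$ are handled by the same partitions with the labels swapped. This covers all nondegenerate queries in $\cQ^{\mathrm{pc}}$; the degenerate queries with $y=y'$ are treated exactly as in \Cref{prop:egal} and \Cref{prop:prop}.

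The main point needing care is that the instance is a legitimate equal-treatment setup. Specifically, $x_1$ and $x_2$ must differ only in the protected identity of comparable recipients, and $\eta$ must be a bijection $\cY(x_1)\to\cY(x_2)$. Both conditions hold by construction. Everything else is the same bookkeeping as in \Cref{prop:egal}.
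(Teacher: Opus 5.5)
Your proof is correct and is essentially the paper's own argument. It uses the same two-input instance with counterparts $a,a'$ and a shared recipient $b$, the same utilities ($u_{\mathrm{eq}}=1$ on the two ``consistent'' rules and $0$ on the other two), and the same equal-size partitions with $\delta=1$ at each query. Your explicit $\mathcal C=\{(x_1,x_2,\eta)\}$ is just the unpacked form of the paper's indicator formula, and your treatment of reversed-order queries is a harmless addition.

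One small correction: \Cref{prop:egal} and \Cref{prop:prop} do not actually treat queries with $y=y'$, and they cannot, because every projection gap there is $0$ while \Cref{def:insep} requires $\delta>0$. Like the paper, you are implicitly restricting to queries with $y\neq y'$.
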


\begin{proof}
Fix three recipients \(N=\{a,a',b\}\), where \(a\in G_1\), \(a'\in G_2\),
and \(a,a'\) are otherwise identical. Recipient \(b\) is not the protected
counterpart of either \(a\) or \(a'\). Consider an allocation task with two
inputs,
\[
\cX=\{x^{ab},x^{a'b}\},
\]
where
\[
\cY(x^{ab})=\{a,b\}
\qquad\text{and}\qquad
\cY(x^{a'b})=\{a',b\}.
\]
The inputs \(x^{ab}\) and \(x^{a'b}\) are counterparts: the former contains
recipient \(a\), while the latter replaces \(a\) by its counterpart \(a'\),
leaving \(b\) unchanged.

Given $G_1,G_2$, the equal-treatment priority is then defined by
\[
u_{\mathrm{eq}}(F)
=
\mathbf 1
\left\{
\mathbf 1\{F(x^{ab})=b\}
=
\mathbf 1\{F(x^{a'b})=b\}
\right\}.
\]
Thus the priority is satisfied exactly when the rule either chooses \(a\)
at \(x^{ab}\) and \(a'\) at \(x^{a'b}\), or chooses \(b\) at both inputs.

There are four deterministic rules, where the superscript records the outputs at \(x^{ab}\) and \(x^{a'b}\),
respectively:
\[
\rules=\{F^{aa'},F^{ab},F^{ba'},F^{bb}\},
\]
Hence,
\[
u_{\mathrm{eq}}(F^{aa'})
=
u_{\mathrm{eq}}(F^{bb})
=
1,
\qquad
u_{\mathrm{eq}}(F^{ab})
=
u_{\mathrm{eq}}(F^{ba'})
=
0.
\]

We first consider the query
\[
q_1=(a,b;x^{ab}).
\]
For any background rule \(F\), the value of the projection at \(x^{ab}\)
depends only on the rule's output at the counterpart input \(x^{a'b}\). If
\(F(x^{a'b})=a'\), then choosing \(a\) at \(x^{ab}\) satisfies equal
treatment, while choosing \(b\) violates it. Thus
\[
u_{\mathrm{eq}}(F_{x^{ab}\to a})
-
u_{\mathrm{eq}}(F_{x^{ab}\to b})
=
1.
\]
If \(F(x^{a'b})=b\), then choosing \(b\) at \(x^{ab}\) satisfies equal
treatment, while choosing \(a\) violates it. Thus
\[
u_{\mathrm{eq}}(F_{x^{ab}\to a})
-
u_{\mathrm{eq}}(F_{x^{ab}\to b})
=
-1.
\]
Therefore, applying \Cref{def:insep}, this priority is perfectly inseparable at \(q_1\) with \(\delta=1\) and the
equal-size partition
\[
\rules_a=\{F^{aa'},F^{ba'}\},
\qquad
\rules_b=\{F^{ab},F^{bb}\}.
\]

The argument for the query
\[
q_2=(a',b;x^{a'b})
\]
is symmetric. If \(F(x^{ab})=a\), then choosing \(a'\) at \(x^{a'b}\)
satisfies equal treatment, while choosing \(b\) violates it. If
\(F(x^{ab})=b\), then choosing \(b\) satisfies equal treatment, while
choosing \(a'\) violates it. Hence \(q_2\) has \(\delta=1\) and the
equal-size partition
\[
\rules_{a'}=\{F^{aa'},F^{ab}\},
\qquad
\rules_b=\{F^{ba'},F^{bb}\}.
\]
Thus equal treatment is perfectly inseparable at every local pairwise
comparison query in this instance.
\end{proof}

\subsection{Proof of \Cref{thm:scale-free-ind}} \label{app:scale-free-ind}
We begin by stating a useful lemma:
\begin{lemma}
\label{lem:insep-silent}
Suppose priority $j$ is perfectly inseparable at $q = (y, y';\, x)$.  Let
$\phi^{\text{rules}}$ be permutation-invariant; then $j$ contributes  equal evidence in both directions:
\[
    \prules(\{\Delta_j^F(y,y';x)\}_{F \in \mathcal{F}}) = \prules(\{\Delta_j^F(y',y;x)\}_{F \in \mathcal{F}}).
\]
\end{lemma}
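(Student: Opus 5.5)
The plan is to show that the two vectors fed to $\prules$ are permutations of one another, and then invoke permutation invariance. The argument is essentially the one in the proof sketch of \Cref{thm:scale-free-ind}, made explicit with a bijection.

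\textbf{Step 1: reverse-query gaps are negations.} By the definition of the projection gap in \eqref{eq:local-evidence}, swapping $y$ and $y'$ flips the sign, so for every background rule $F\in\rules$,
\[
\Delta^F_j(y',y;x) = u_j(F_{x\to y'})-u_j(F_{x\to y}) = -\Delta^F_j(y,y';x).
\]

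\textbf{Step 2: use the perfect-inseparability partition.} Take the $\delta>0$ and equal-size partition $\rules_y,\rules_{y'}$ from \Cref{def:insep}. Then $\Delta^F_j(y,y';x)=\delta$ on $\rules_y$ and $-\delta$ on $\rules_{y'}$. By Step 1, $\Delta^F_j(y',y;x)=-\delta$ on $\rules_y$ and $\delta$ on $\rules_{y'}$.

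\textbf{Step 3: build the bijection.} Because $|\rules_y|=|\rules_{y'}|$, I fix a bijection $\pi:\rules_y\to\rules_{y'}$. Define $\sigma:\rules\to\rules$ by $\sigma(F)=\pi(F)$ for $F\in\rules_y$ and $\sigma(F)=\pi^{-1}(F)$ for $F\in\rules_{y'}$; this is a bijection of $\rules$ that swaps the two blocks. Writing $z^F=\Delta^F_j(y,y';x)$, I then check $z^{\sigma(F)}=\Delta^F_j(y',y;x)$ for every $F$:
\begin{itemize}
\item for $F\in\rules_y$, $\sigma(F)\in\rules_{y'}$, so $z^{\sigma(F)}=-\delta=\Delta^F_j(y',y;x)$;
\item for $F\in\rules_{y'}$, $\sigma(F)\in\rules_y$, so $z^{\sigma(F)}=\delta=\Delta^F_j(y',y;x)$.
\end{itemize}

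\textbf{Step 4: conclude.} Permutation invariance of $\prules$ gives
\[
\prules\big((z^F)_F\big)=\prules\big((z^{\sigma(F)})_F\big)=\prules\big((\Delta^F_j(y',y;x))_F\big),
\]
which is the claim.

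The only step needing any care is the construction of $\sigma$, which must be a genuine bijection on all of $\rules$. This is exactly where the equal-size condition in \Cref{def:insep} is used.
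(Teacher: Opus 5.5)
Your proposal is correct and follows essentially the paper's proof. The paper observes that the two evidence multisets $\{\delta,\dots,\delta,-\delta,\dots,-\delta\}$ coincide because $|\rules_y|=|\rules_{y'}|$ and then invokes permutation invariance; your block-swapping involution $\sigma$ makes that step explicit and matches the paper's indexed-vector definition of permutation invariance directly.
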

\begin{proof}
By perfect inseparability, the evidence multisets in the two directions are
\begin{align*}
    \left\{\Delta^{F}_{j}(y, y';\, x)\right\}_{F \in \mathcal{F}}
    &\;=\;
    \{\underbrace{\delta, \ldots, \delta}_{|\mathcal{F}_y|},\;
      \underbrace{-\delta, \ldots, -\delta}_{|\mathcal{F}_{y'}|}\}, \qquad \left\{\Delta^{F}_{j}(y', y;\, x)\right\}_{F \in \mathcal{F}}
    \;=\;
    \{\underbrace{-\delta, \ldots, -\delta}_{|\mathcal{F}_y|},\;
      \underbrace{\delta, \ldots, \delta}_{|\mathcal{F}_{y'}|}\}.
\end{align*}
Since $|\mathcal{F}_y| = |\mathcal{F}_{y'}|$, these two multisets are identical, so the permutation-invariance of $\prules$ implies the claim.
\end{proof}

\textbf{General priority aggregators.} We will prove the claim for more generic priority aggregators here, so now we define them formally. 
A \textit{priority aggregator} is a map
\[
\ppriors:\ \mathbb{R}^{m}\times\Delta^{m-1}\ \longrightarrow\ \mathbb{R},
\]
which takes in a vector of $m$ per-priority  evidence values (the output of $\prules$ on every priority) and a weight
vector and outputs a single nonnegative directional score. Formally, given a rule aggregator
$\phi^{\mathrm{rules}}$, a model $M=(u,\omega)$, and a query $q=(y,y';x)$, define
the per-priority rule-aggregated evidence in each direction as
\[
s^{+}_j(q)\ :=\ \phi^{\mathrm{rules}}\!\big((\Delta^{F}_j(y,y';x))_{F\in\rules}\big),
\qquad
s^{-}_j(q)\ :=\ \phi^{\mathrm{rules}}\!\big((\Delta^{F}_j(y',y;x))_{F\in\rules}\big),
\]
and collect $\mathbf{s}^{\pm}(q):=(s^{\pm}_j(q))_{j\in[m]}$. The two directional
evidence scores are obtained by applying $\ppriors$ in each direction:
\[
s^{+}_M(q)\ :=\ \ppriors\!\big(\mathbf{s}^{+}(q),\,\omega\big),
\qquad
s^{-}_M(q)\ :=\ \ppriors\!\big(\mathbf{s}^{-}(q),\,\omega\big).
\]
Note that this construction is a generalization of the linear aggregator, in which $\ppriors$ is linear in $\omega$.

Now we define the more general class of priority aggregators for which the theorem  will hold:
\begin{definition}[Scale-preservation] \label{def:scale-pres}
A priority aggregator $\ppriors$ satisfies \textit{scale-preservation} iff, for every
priority $j^*$ and every weight vector $\omega\in\Delta^{m-1}$, there exist a constant
$c(\omega,j^*)>0$ and a function $A_{\omega,j^*}:\mathbb R\to\mathbb R$ such that, for every
evidence vector $\mathbf{z}\in\mathbb R^m$,
\[
\ppriors(z,\omega)
=
c(\omega,j^*)\,
\ppriors(\mathbf{z}_{-j^*},\omega_{-j^*})
+
A_{\omega,j^*}(z_{j^*}).
\]
In words, the effect of a priority can be separated from the aggregate contribution
of the remaining priorities: adding priority $j^*$ may add its own evidence term and may even
rescale the contribution of all other priorities, but it cannot change how the remaining priorities
trade off against one another. Note that by linearity, the linear priority aggregator satisfies this criterion.
\end{definition}

Now, we prove the theorem for scale-preserving priority aggregators.

\begin{proof}[Proof of \Cref{thm:scale-free-ind}]
We first prove the claim for the removal of a single perfectly inseparable
priority \(j^*\in J_{\mathrm{insep}}\). The result for the full set
\(J_{\mathrm{insep}}\) follows by iterating the same argument.

Fix any local pairwise query \(q=(y,y';x)\in \cQ^{pc}\). For each priority
\(j\), define shorthand for the output of $\prules$:
\[
s^+_j(q)
:=
\prules\!\left(
\{\Delta^F_j(y,y';x)\}_{F\in\mathcal F}
\right),
\qquad
s^-_j(q)
:=
\prules\!\left(
\{\Delta^F_j(y',y;x)\}_{F\in\mathcal F}
\right).
\]
Summarize these values in
\[
\mathbf s^+(q):=(s^+_j(q))_{j\in J},
\qquad
\mathbf s^-(q):=(s^-_j(q))_{j\in J}.
\]
Then the two directional evidence scores induced by \(M_{\mathrm{insep}}\) are
\[
s_{M_{\mathrm{insep}}}(y,y';x)
=
\ppriors(\mathbf s^+(q),\omega),
\qquad
s_{M_{\mathrm{insep}}}(y',y;x)
=
\ppriors(\mathbf s^-(q),\omega).
\]

By scale preservation of \(\ppriors\), there exist a constant
\(c(\omega,j^*)>0\) and a function \(A_{\omega,j^*}:\mathbb R\to\mathbb R\)
such that, writing \(c=c(\omega,j^*)\) and \(A=A_{\omega,j^*}\),
\[
\ppriors(\mathbf s^+(q),\omega)
=
c\,
\ppriors(\mathbf s^+_{-j^*}(q),\omega_{-j^*})
+
A(s^+_{j^*}(q)),
\]
and
\[
\ppriors(\mathbf s^-(q),\omega)
=
c\,
\ppriors(\mathbf s^-_{-j^*}(q),\omega_{-j^*})
+
A(s^-_{j^*}(q)).
\]
By \Cref{lem:insep-silent}, since \(j^*\) is perfectly inseparable at \(q\),
\[
s^+_{j^*}(q)=s^-_{j^*}(q).
\]
Thus the two \(A(\cdot)\) terms cancel from the score gap, giving
\begin{align*}
&s_{M_{\mathrm{insep}}}(y,y';x)
-
s_{M_{\mathrm{insep}}}(y',y;x)
=
c\left(
\ppriors(\mathbf s^+_{-j^*}(q),\omega_{-j^*})
-
\ppriors(\mathbf s^-_{-j^*}(q),\omega_{-j^*})
\right).
\end{align*}
The expression in parentheses is exactly the score gap induced by the model
\(M^{-j^*}\). Hence, for every local pairwise query \(q\),
\[
s_{M_{\mathrm{insep}}}(y,y';x)
-
s_{M_{\mathrm{insep}}}(y',y;x)
=
c\left(
s_{M^{-j^*}}(y,y';x)
-
s_{M^{-j^*}}(y',y;x)
\right).
\]

Now fix any inverse-temperature parameter \(\beta>0\) for
\(M_{\mathrm{insep}}\), and define
\[
\beta' := \beta \, c.
\]
Since \(c >0\), we have \(\beta'>0\). Therefore, for any $h$ and every local
pairwise query \(q=(y,y';x)\),
\begin{align*}
R^\circ_{h_\beta;0,0}(q;M_{\mathrm{insep}}) (y\succ y')
&=
h_\beta\!\left(
s_{M_{\mathrm{insep}}}(y,y';x)
-
s_{M_{\mathrm{insep}}}(y',y;x)
\right)
\\
&=
h\!\left(
\beta c
\left[
s_{M^{-j^*}}(y,y';x)
-
s_{M^{-j^*}}(y',y;x)
\right]
\right)
\\
&=
h_{\beta'}\!\left(
s_{M^{-j^*}}(y,y';x)
-
s_{M^{-j^*}}(y',y;x)
\right)
\\
&=
R^\circ_{h_{\beta'};0,0}(q;M^{-j^*})(y\succ y').
\end{align*}
Under the zero-threshold response model, the remaining probability mass is
assigned to the complementary response \(y'\succ y\), so the full response
distributions are identical at every local pairwise query. Hence
\(M_{\mathrm{insep}}\) and \(M^{-j^*}\) are scale-free indistinguishable.

Finally, repeat the argument for each priority in \(J_{\mathrm{insep}}\).
This yields a positive constant \(C>0\) such that, for every local pairwise
query \(q=(y,y';x)\),
\[
s_{M_{\mathrm{insep}}}(y,y';x)
-
s_{M_{\mathrm{insep}}}(y',y;x)
=
C\left(
s_{M}(y,y';x)
-
s_{M}(y',y;x)
\right),
\]
where \(M=M^{-J_{\mathrm{insep}}}\). Taking \(\beta''=\beta C\), we obtain
\[
R^\circ_{h_\beta;0,0}(q; M_{\mathrm{insep}})
\equiv
R^\circ_{h_{\beta''};0,0}(q;M)
\qquad
\forall q\in\cQ^{pc}.
\]
Thus \(M_{\mathrm{insep}}\) and \(M\) are scale-free indistinguishable.
\end{proof}

\subsection{Formalization of \Cref{ex:bad-rec}} \label{app:bad-rec}
Consider again
the two-input allocation task from \Cref{prop:egal}: there are two recipients
\(N=\{a,b\}\), two inputs \(x_1=(a,b;k_1)\) and \(x_2=(a,b;k_2)\), and
\(\cY(x_1)=\cY(x_2)=\{a,b\}\). $\cD$ is such that both inputs occur with equal probability, and suppose both recipients have the same benefit for either good, and no benefit if they aren't given a good:
\[v_i(x,j) = \mathbf{1}\{j = i\} \qquad \text{for all } \ i,j \in \{a,b\}.\] As usual, write $\rules=\{F^{aa},F^{ab},F^{ba},F^{bb}\},$ where \(F^{ij}(x_1)=i\) and \(F^{ij}(x_2)=j\). 

Now, we let \(M_{\mathrm{insep}}\) have two priorities. The first is
\textit{Egalitarianism} (\Cref{def:egal}), which we know is perfectly inseparable in this example. The second is the
\textit{Family} priority (\Cref{ex:trolley-priorities}), which rewards giving the good to family members. Letting \(a\) be a family member and $b$ be a non-family member, the utility function for the family priority becomes 
\[
u_{\mathrm{family}}(F)
=
\mathbb E_{x\sim\cD}
\left[
\mathbf 1\{F(x)=a\}
\right].
\]
Note that the family priority is perfectly separable.\footnote{For any query \(q=(y,y';x)\), $\Delta^F_{\mathrm{family}}(q)=
\cD(x)\left(\mathbf 1\{y=a\}-\mathbf 1\{y'=a\}\right)$
is independent of the background rule \(F\).} Then, by \Cref{cor:erase}, the reduced model $M_{\mathrm{family}}=(M_{\mathrm{insep}})^{-\mathrm{egal}}$, which consists only of the family priority with weight \(1\), is a perfect
separable rationalization of \(M_{\mathrm{insep}}\). Thus, the set of perfectly separable rationalizations is non-empty, and on an exhaustive
transcript generated by \(R^\circ_{h_\beta;0,0}(\cdot;M_{\mathrm{insep}})\), any separable-consistent
decoder returns some model $\widetilde M\in \mathrm{PSR}(M_{\mathrm{insep}}).$

Finally, suppose that the system outputs an aggregate-optimal rule
according to \(\widetilde M\), which must coincide with the aggregate-optimal rules for $M_{\mathrm{family}}$ (\Cref{lem:psr-same-optimal-rules}).\footnote{By \Cref{lem:psr-same-optimal-rules} (below), every
perfect separable rationalization of \(M_{\mathrm{insep}}\) induces the same
aggregate-optimal rules.}  Since \(M_{\mathrm{family}}\)
contains only the family priority,
\[
u_{\mathrm{family}}(F^{aa})=1,\qquad
u_{\mathrm{family}}(F^{ab})=u_{\mathrm{family}}(F^{ba})=\frac12,\qquad
u_{\mathrm{family}}(F^{bb})=0.
\]
Therefore the system uniquely selects $F^{aa}$, the rule that always allocates to the family member \(a\). 

When $\omega_{\mathrm{egal}}$ is dominant this rule is suboptimal, and it approaches the worst rule as $\omega_{\mathrm{egal}}$ becomes more dominant. Formally, let $\epsilon \in (0,1/2)$ and let $\omega_{\mathrm{egal}} = 1-\epsilon$ and $\omega_{\mathrm{family}} = \epsilon$; then,
\begin{align*}
    U_{M_{\mathrm{insep}}}(F^{ab})
=
U_{M_{\mathrm{insep}}}(F^{ba})
=
(1-\epsilon)+\epsilon/2
=
1 - \epsilon/2,\ \ \ \ \ \ U_{M_{\mathrm{insep}}}(F^{aa})
=
\epsilon, \ \ \ \ \ \ U_{M_{\mathrm{insep}}}(F^{bb}) = 0.
\end{align*}
Then, the regret is
\[U_{M_{\mathrm{insep}}}(F^{ab}) - U_{M_{\mathrm{insep}}}(F^{aa}) = 1-\epsilon/2 - \epsilon = 1 -3\epsilon/2.\]
As $\epsilon \to 0$, this gets arbitrarily close to the regret of the worst rule, which is
\[U_{M_{\mathrm{insep}}}(F^{ab}) - U_{M_{\mathrm{insep}}}(F^{bb}) = 1-\epsilon/2.\]

\subsection{\Cref{lem:psr-same-optimal-rules}}

\begin{lemma}
\label{lem:psr-same-optimal-rules}
Fix a link function \(h\) and let \(M,M'\in\cM_{\mathrm{sep}}\) be
scale-free indistinguishable w.r.t.~$h$. Then, for any subset of rules $\rules' \subseteq \rules$,
\(\mathcal \rules\subseteq\rules\),
\[
\arg\max_{F\in \rules'} U_M(F)
=
\arg\max_{F\in \rules'} U_{M'}(F).
\]
\end{lemma}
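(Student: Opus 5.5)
The plan is to show that two scale-free indistinguishable perfectly separable models have aggregate utilities that are positive affine transformations of each other. Argmax over any $\rules'\subseteq\rules$ is invariant under such transformations, so the lemma follows. The argument has two parts: first extract a proportionality between the models' local score gaps from indistinguishability, then lift it to whole rules by the telescoping argument already used in the reverse direction of \Cref{thm:score-rum-equivalence}.

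\textbf{Step 1 (local gaps are proportional).} For each query $q=(y,y';x)$ and priority $j$, write $\delta_j(q)$ and $\delta'_j(q)$ for the constant projection gaps of $M$ and $M'$. Define $g_M(q):=\sum_j\omega_j\delta_j(q)$ and $g_{M'}(q):=\sum_j\omega'_j\delta'_j(q)$. By \Cref{lem:lin-agg-gap}, $\kappa_M(q)=g_M(q)$ and $\kappa_{M'}(q)=g_{M'}(q)$. By \Cref{lem:zero-thresh}, $R^\circ_{h_\beta;0,0}(q;M)(y\succ y')=h(\beta g_M(q))$, and likewise for $M'$.

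Now fix any $\beta>0$ (say $\beta=1$). Scale-free indistinguishability gives some $\beta'>0$ with $h(\beta g_M(q))=h(\beta' g_{M'}(q))$ for every $q$. Since $h$ is strictly increasing, hence injective, this yields $g_M(q)=c\,g_{M'}(q)$ for all $q$, with the single constant $c:=\beta'/\beta>0$ independent of $q$.

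\textbf{Step 2 (lift to rules by telescoping).} Fix a reference rule $F^0$. For any $F$, enumerate the inputs $x_1,\dots,x_\ell$ where $F$ and $F^0$ differ, and form the chain $F^0,F^1,\dots,F^\ell=F$ as in the proof of \Cref{thm:score-rum-equivalence}. Each step $F^{t-1}\to F^t$ changes only the output at $x_t$, from $F^0(x_t)$ to $F(x_t)$. By perfect separability, its utility change under priority $j$ is exactly $\delta_j(q_t)$ with $q_t=(F(x_t),F^0(x_t);x_t)$, whatever the background. Summing over $j$ with weights and then over $t$ gives
\[
U_M(F)-U_M(F^0)=\sum_{t=1}^{\ell} g_M(q_t).
\]
The same identity holds for $M'$ with the same queries $q_t$, because the chain depends only on $F$ and $F^0$, not on the model. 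Combining with Step 1,
\[
U_M(F)=c\,U_{M'}(F)+\bigl(U_M(F^0)-c\,U_{M'}(F^0)\bigr)\qquad\forall F\in\rules,
\]
a positive affine relation with constants independent of $F$. Hence the two argmax sets coincide on every $\rules'$.

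\textbf{Main obstacle.} The delicate point is Step 1: we need one constant $c$ that works uniformly across all queries. This is exactly what the definition provides, since a single $\beta'$ must match $\beta$ simultaneously at every $q$. We also need injectivity of $h$, which follows from strict monotonicity; without it, only sign information would transfer. Degenerate queries with $y=y'$ give $g=0$ on both sides and cause no trouble. It is also worth noting explicitly that the telescoping queries $q_t$ lie in $\cQ^{pc}$, so Step 1 applies to them.
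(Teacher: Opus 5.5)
Your proposal is correct and follows essentially the paper's route: injectivity of $h$, together with a single $\beta'$ that works for all queries, makes the local gaps proportional with one constant $c>0$, and the additivity implied by perfect separability then lifts this proportionality to the rule level. The only difference is that you telescope $U_M$ directly and obtain an explicit positive affine relation $U_M=c\,U_{M'}+\text{const}$. The paper instead passes through the score functions $V_M,V_{M'}$ from \Cref{thm:score-rum-equivalence}, derives $V_{M'}(x,\cdot)=c\,V_M(x,\cdot)+a_x$, and invokes that theorem's learning-target correspondence, which is itself proved by the same telescoping argument.
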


\begin{proof}
Because \(M\) and \(M'\) are perfectly separable,
\Cref{thm:score-rum-equivalence} gives local score functions \(V_M\) and
\(V_{M'}\) such that, for every query \(q=(y,y';x)\),
\[
\kappa_M(q)=V_M(x,y)-V_M(x,y')
\]
and
\[
\kappa_{M'}(q)=V_{M'}(x,y)-V_{M'}(x,y').
\]
Moreover, \Cref{thm:score-rum-equivalence} shows that maximizing \(U_M\) over any subset of rules is equivalent to
maximizing
\[
\sum_{x\in X} V_M(x,F(x)),
\]
and analogously for \(M'\).

Since \(M\) and \(M'\) are scale-free indistinguishable, there exist
\(\beta,\beta'>0\) such that, for every query \(q\),
\[
h(\beta \kappa_M(q))=h(\beta'\kappa_{M'}(q)).
\]
Because \(h\) is strictly increasing, it is injective. Therefore,
\[
\beta \kappa_M(q)=\beta'\kappa_{M'}(q)
\qquad \forall q\in Q^{\mathrm{pc}}.
\]
Let \(c:=\beta/\beta'>0\). Then, for every \(x\in X\) and every
\(y,y'\in \cY(x)\),
\[
V_{M'}(x,y)-V_{M'}(x,y')
=
c\bigl(V_M(x,y)-V_M(x,y')\bigr).
\]
Fix \(x\in X\), and choose an arbitrary reference output \(y_0\in \cY(x)\).
The above equality implies that, for every \(y\in \cY(x)\),
\[
V_{M'}(x,y)-V_{M'}(x,y_0)
=
c\bigl(V_M(x,y)-V_M(x,y_0)\bigr).
\]
Equivalently,
\[
V_{M'}(x,y)-cV_M(x,y)
=
V_{M'}(x,y_0)-cV_M(x,y_0).
\]
Thus there exists a constant \(a_x = V_{M'}(x,y_0)-cV_M(x,y_0)\), depending on \(x\) but not on \(y\),
such that
\[
V_{M'}(x,y)=cV_M(x,y)+a_x
\qquad \forall y\in \cY(x).
\]
Therefore, for every rule \(F\in \mathcal F\),
\[
\sum_{x\in X} V_{M'}(x,F(x))
=
c\sum_{x\in X} V_M(x,F(x))
+
\sum_{x\in X} a_x.
\]
The additive term \(\sum_{x\in X} a_x\) is independent of \(F\), and
\(c>0\). Hence the two local-score objectives have the same maximizers
over every \(\rules'\subseteq \mathcal F\). Translating back through
the perfect-separability correspondence gives
\[
\arg\max_{F\in \rules'} U_M(F)
=
\arg\max_{F\in \rules'} U_{M'}(F).\qedhere
\]
\end{proof}

\subsection{Formalization of \Cref{ex:inseparable-local-trace}} \label{app:insep-misinterpret}
\textit{Setup.} Consider again
the two-input allocation task from \Cref{prop:egal}: there are two recipients
\(N=\{a,b\}\), two inputs \(x_1=(a,b;k_1)\) and \(x_2=(a,b;k_2)\), and
\(\cY(x_1)=\cY(x_2)=\{a,b\}\). $\cD$ is such that both inputs occur with equal probability, and suppose both recipients have the same benefit for either good, and no benefit if they aren't given a good:
\[v_i(x,j) = \mathbf{1}\{j = i\} \qquad \text{for all } \ i,j \in \{a,b\}.\] As usual, write $\rules=\{F^{aa},F^{ab},F^{ba},F^{bb}\},$ where \(F^{ij}(x_1)=i\) and \(F^{ij}(x_2)=j\).

\textit{Additional (mild) restriction on $\prules$.} Assume that \(\prules\) is \textit{balanced sign-responsive}: that is, for every evidence profile \(z\) that is exactly balanced between the two responses, \[ \{z_F:F\in\rules\}=\{-z_F:F\in\rules\}, \] any positive uniform shift at every index of $z$ makes the aggregate evidence positive: for every \(c>0\), \[ \prules(z+c\mathbf 1)>0. \] This condition is satisfied by the average and maximum aggregators, but not by lower-percentile aggregators such as the minimum.

\textit{Model.} Let \(M\) contain the single proportionality priority
\[
u_{\mathrm{prop}}(F)
=
-
\sum_{i\in\{a,b\}}
\left(
\Pr_{X\sim\cD}[F(X)=i]-\alpha_i
\right)^2,
\]
with
\[
\alpha_a=\frac12+\epsilon,
\qquad
\alpha_b=\frac12-\epsilon,
\qquad
0<\epsilon<\frac14.
\]
In words, this priority reflects the intuition that the individual wants the goods to be roughly evenly split, with a slight bias toward individual $a$ (or, conceptually,  individuals of type $a$).

\textit{True rule utilities.} Define the shorthand $p_a(F)=\Pr_{X\sim\cD}[F(X)=a],$ for the probability that $a$ receives a good under $F$. Noting that $p_b(F)=1-p_a(F)$, it follows that
\[
u_{\mathrm{prop}}(F) = -\bigg(\big(p_a(F) - (1/2+\epsilon)\big)^2 + \big(1-p_a(F) - (1/2-\epsilon)\big)^2\bigg) 
=
-2\left(p_a(F)-\left(1/2+\epsilon\right)\right)^2.
\]
Therefore,
\[
u_{\mathrm{prop}}(F^{aa})
=
-2\left(1/2-\epsilon\right)^2
=
-1/2+2\epsilon-2\epsilon^2,
\]
\[
u_{\mathrm{prop}}(F^{bb})
=
-2\left(1/2+\epsilon\right)^2
=
-1/2-2\epsilon-2\epsilon^2,
\]
and
\[
u_{\mathrm{prop}}(F^{ab})
=
u_{\mathrm{prop}}(F^{ba})
=
-2\epsilon^2.
\]

Thus the proportionality priority is maximized by the balanced rules
\(F^{ab}\) and \(F^{ba}\).

\textit{Observed Query Behavior.}
Consider the query \(q_1=(a,b;x_1)\). For each background rule
\(F\in\rules\), define
\[
\Delta^F_{\mathrm{prop}}(q_1)
:=
u_{\mathrm{prop}}(F_{x_1\to a})
-
u_{\mathrm{prop}}(F_{x_1\to b}).
\]
If \(F(x_2)=b\), then \(F_{x_1\to a}=F^{ab}\) and
\(F_{x_1\to b}=F^{bb}\), so
\[
\Delta^F_{\mathrm{prop}}(q_1)
=
u_{\mathrm{prop}}(F^{ab})-u_{\mathrm{prop}}(F^{bb})
=
-2\epsilon^2
-
\left(-\frac12-2\epsilon-2\epsilon^2\right)
=
\frac12+2\epsilon.
\]
If \(F(x_2)=a\), then \(F_{x_1\to a}=F^{aa}\) and
\(F_{x_1\to b}=F^{ba}\), so
\[
\Delta^F_{\mathrm{prop}}(q_1)
=
u_{\mathrm{prop}}(F^{aa})-u_{\mathrm{prop}}(F^{ba})
=
\left(-\frac12+2\epsilon-2\epsilon^2\right)
-
\left(-2\epsilon^2\right)
=
-\frac12+2\epsilon.
\]
Thus the rule-indexed evidence profile at \(q_1\) is, up to permutation,
\[
z^+
=
\left(
\frac12+2\epsilon,\frac12+2\epsilon,
-\frac12+2\epsilon,-\frac12+2\epsilon
\right)
=
z^0+2\epsilon\mathbf 1,
\]
where
\[
z^0=
\left(
\frac12,\frac12,-\frac12,-\frac12
\right)
\]
is balanced around zero. Then, by balanced sign-responsiveness of \(\prules\), $\prules(z^+)>0.$ Since the model has only the proportionality priority with weight \(1\),
the linear priority aggregator gives
\[
s_M^+(q_1)=[\prules(z^+)]_+,
\qquad
s_M^-(q_1)=[-\prules(z^+)]_+.
\]
Therefore,
\[
\kappa_M(q_1)
=
s_M^+(q_1)-s_M^-(q_1)
=
\prules(z^+)
>
0.
\]
The same calculation applies to \(q_2=(a,b;x_2)\). Hence, letting
\[
\gamma:=\prules(z^+)>0,
\]
we have
\[
\kappa_M(a,b;x_i)=\gamma
\qquad
\text{for } i\in\{1,2\}.
\]

Thus, under the zero-threshold response model,
\[
R^\circ_{h_\beta;0,0}\bigl((a,b;x_i);M\bigr)(a\succ b)
=
h(\beta\gamma)
>
h(-\beta\gamma)
=
R^\circ_{h_\beta;0,0}\bigl((a,b;x_i);M\bigr)(b\succ a),
\]
where the strict inequality follows because \(h\) is strictly increasing.
Hence the response model induced by \(M\) favors \(a\) over \(b\) at both
inputs.

\textit{Construction of the perfectly separable rationalization.}
Now define the perfectly separable priority
\[
u_a(F)=\Pr_{X\sim\cD}[F(X)=a].
\]
For every background rule \(F\) and every input \(x_i\),
\[
u_a(F_{x_i\to a})-u_a(F_{x_i\to b})
=
\frac12,
\]
which is independent of \(F\). Thus \(u_a\) is perfectly separable. Let
\(\widetilde M\) be the model consisting of this single priority. By unanimity
of \(\prules\),
\[
\kappa_{\widetilde M}(a,b;x_i)=\frac12
\qquad
\text{and}
\qquad
\kappa_{\widetilde M}(b,a;x_i)=-\frac12
\]
for each \(i\in\{1,2\}\).

We now show that \(M\) and \(\widetilde M\) are scale-free indistinguishable.
Fix any \(\beta>0\), and set
\[
\widetilde\beta:=2\beta\gamma>0.
\]
Then, for each \(i\in\{1,2\}\),
\[
\beta\kappa_M(a,b;x_i)
=
\beta\gamma
=
\widetilde\beta\cdot \frac12
=
\widetilde\beta\kappa_{\widetilde M}(a,b;x_i).
\]
Thus the scaled directional gaps agree on every local pairwise query, and
therefore
\[
R^\circ_{h_\beta;0,0}(\cdot;M)
=
R^\circ_{h_{\widetilde\beta};0,0}(\cdot;\widetilde M).
\]
Hence
\[
\widetilde M\in \mathrm{PSR}_h(M).
\]

\textit{Chosen rule under $\widetilde M$.} Under \(\widetilde M\),
\[
u_a(F^{aa})=1,
\qquad
u_a(F^{ab})=u_a(F^{ba})=\frac12,
\qquad
u_a(F^{bb})=0.
\]
Therefore,
\[
\arg\max_{F\in\rules} U_{\widetilde M}(F)=\{F^{aa}\}.
\]
By \Cref{lem:psr-same-optimal-rules}, under the linear priority aggregator
all elements of \(\mathrm{PSR}_h(M)\) induce the same aggregate-optimal rules.
Hence every perfect separable rationalization of \(M\) selects \(F^{aa}\).

\textit{Regret.} The selected rule $F^{aa}$ is highly suboptimal. Its regret is
\[
U_M(F^{ab}) - U_M(F^{aa})
= -2\epsilon^2 - \left(-\frac12+2\epsilon-2\epsilon^2\right)
= \frac12 - 2\epsilon .
\]
As $\epsilon \to 0$, this becomes arbitrarily close to the regret of the worst possible rule $F^{bb}$:
\[
U_M(F^{ab}) - U_M(F^{bb})
= -2\epsilon^2 - \left(-\frac12-2\epsilon-2\epsilon^2\right)
= \frac12 + 2\epsilon .
\]

\newpage
\section{Supplemental Materials for \Cref{sec:conflict}} \label{app:conflict}

\subsection{Interview Coding Methods and Results}
\label{app:criteria}
\Cref{fig:kidney} summarizes the frequency of various signals of indecision detected in each of
$20$ interview transcripts from \citet{keswani2025can}, in which each participant made exactly $3$
pairwise kidney-allocation comparisons ($60$ comparisons total). Below, we describe our coding methods.
\begin{figure}[h!]
    \centering
    \includegraphics[width=0.8\linewidth]{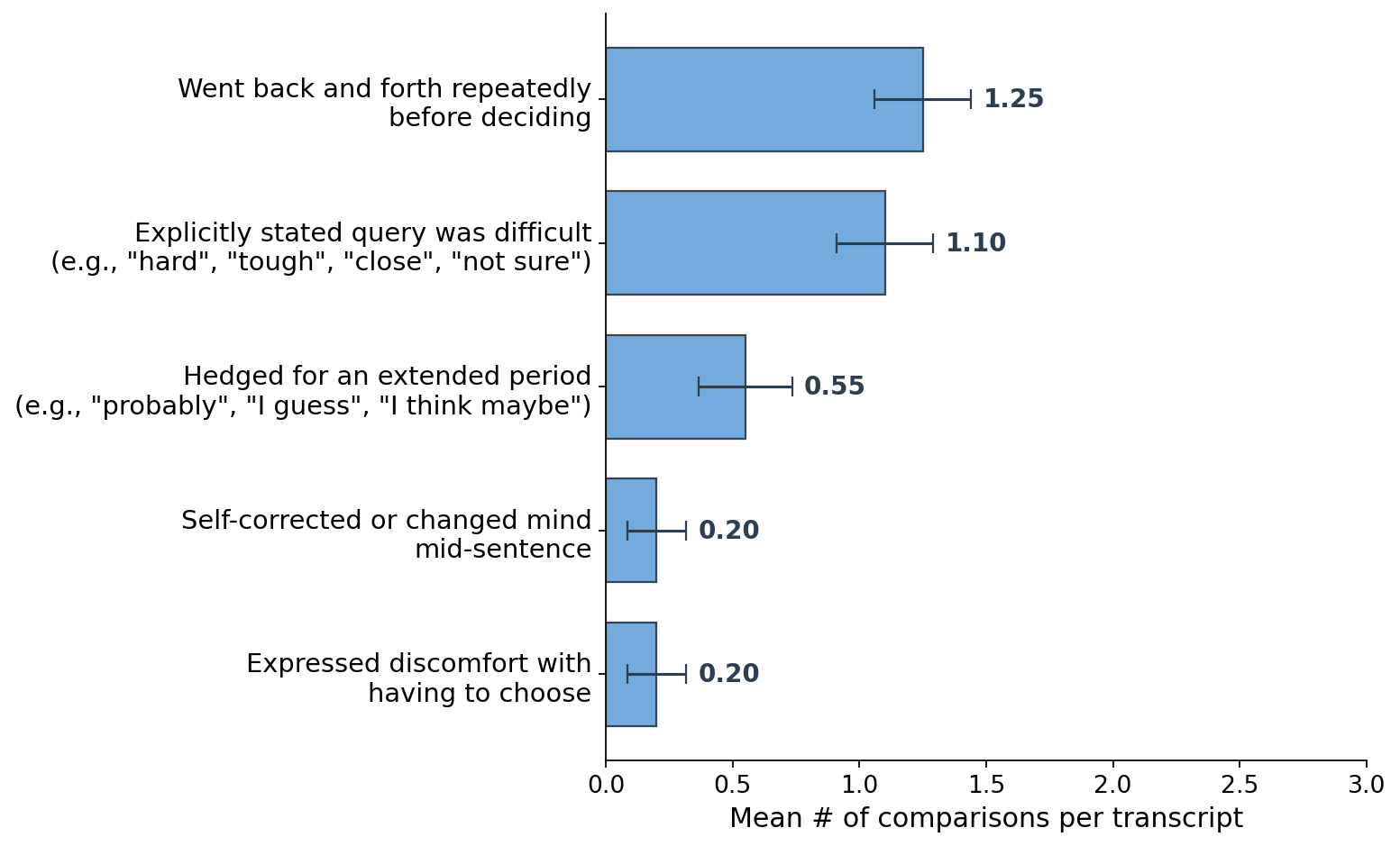}
    \caption{Frequency of language markers of  indecision across $20$ interview transcripts, each
    containing $3$ pairwise kidney-allocation comparisons (60  total comparisons). We count for each transcript how many of its
    $3$ comparisons exhibit each of five conflict criteria, and report the mean per transcripts ($\pm 1$ SE).}
    \label{fig:kidney}
\end{figure}

We separately hand and LLM\footnote{LLM coding was done by Claude-opus-4-6 using a structured-output schema
that enforces valid JSON.} coded each comparison according to whether they fit any element of a multi-label list of the five indecision criteria described in \Cref{fig:kidney}. The codebook is in Table \ref{tab:codebook}.
\begin{table}[h!]
\centering
\caption{Indecision codebook, as given verbatim to the coding
model.}
\label{tab:codebook}
\begin{tabular}{@{}p{0.22\linewidth}p{0.7\linewidth}@{}}
\toprule
Field / key & Definition (as given to the model) \\
\midrule
\texttt{explicit} & Explicit difficulty statements --- e.g.\ ``this is hard'',
``tough one'', ``I'm torn'', ``I'm not sure'', ``it's close''. \\[0.4em]
\texttt{self\_correction} & Self-correction, or changing their mind
mid-sentence. \\[0.4em]
\texttt{back\_and\_forth} & Extended back-and-forth weighing both options before
deciding. \\[0.4em]
\texttt{hedging} & Long hedging before stating a choice --- e.g.\ ``I think
maybe'', ``probably'', ``I guess''. \\[0.4em]
\texttt{discomfort} & Expressing discomfort with having to choose at all. \\
\bottomrule
\end{tabular}
\end{table}

\subsection{Proof of Proposition \ref{prop:linear-rule-is-aggregate-optimal}} \label{app:linear-rule-is-aggregate-optimal}

\begin{proof}
Fix any $\psi$ and any $\omega^* \in \Delta^{m-1}$. By definition of $F_{\omega^*}$, at every $x$ it chooses the maximizer of $\langle \omega^*,\psi(x,F_{\omega^*}(x)) \rangle$; thus, for every $x\in\mathcal X$ and every $F \in \rules$,
\[
\langle \omega^*,\psi(x,F_{\omega^*}(x))\rangle
\ge
\langle \omega^*,\psi(x,F(x))\rangle .
\]
Fix any $F \in \rules$. Then, summing over $x$ and dividing by $|\mathcal X|$ gives
\[
\frac{1}{|\mathcal X|}\sum_{x\in\mathcal X}
\langle \omega^*,\psi(x,F_{\omega^*}(x))\rangle
\ge
\frac{1}{|\mathcal X|}\sum_{x\in\mathcal X}
\langle \omega^*,\psi(x,F(x))\rangle .
\]
By Definition 4.1, the left-hand side is exactly $U_{M_{\omega^*}}(F_{\omega^*})$ and the right-hand side is $U_{M_{\omega^*}}(F)$. Since $F$ was arbitrary,
\[
F_{\omega^*}\in \arg\max_{F\in\mathcal F} U_{M_{\omega^*}}(F). \qedhere
\]
\end{proof}

\subsection{Proof of Theorem \ref{thm:linear-srum-relationship}} \label{app:linear-srum-relationship}
\begin{proof}
Fix $\psi, h, \beta > 0$, and a query $q=(y,y';x)$. Fix any linear S-RUM defined by $\theta$. Then, the probability of reporting $y\succ y'$ is
\[
S_{h_\beta}(q;V_\theta)(y\succ y')
=
h\!\left(\beta\left(V_\theta(x,y)-V_\theta(x,y')\right)\right)
=
h\!\left(\beta\langle \theta,\psi(x,y)-\psi(x,y')\rangle\right).
\]

Now consider the linear priority model $M_\omega$ with $\omega=\theta/\|\theta\|_1$. By \Cref{def:linear-perfectly-separable-model}, for each priority $j$ and any background rule $F$,
\[
\Delta^F_j(q)
=
u_j(F_{x\to y})-u_j(F_{x\to y'})
=
\frac{1}{|\cX|}
\left(\psi_j(x,y)-\psi_j(x,y')\right).
\]
This quantity is independent of the background rule $F$, so any unanimous rule aggregator returns this same value. Hence the directional score gap is
\[
\kappa_{M_\omega}(q)
=
s^+_{M_\omega}(q)-s^-_{M_\omega}(q)
=
\sum_{j=1}^d \omega_j \Delta^F_j(q)
=
\frac{1}{|\cX|}
\left\langle \omega,\psi(x,y)-\psi(x,y')\right\rangle .
\]
Substituting $\omega=\theta/\|\theta\|_1$ gives
\[
\kappa_{M_\omega}(q)
=
\frac{1}{|\cX|\|\theta\|_1}
\left\langle \theta,\psi(x,y)-\psi(x,y')\right\rangle .
\]
Therefore, if $\beta'=\beta|\cX|\|\theta\|_1$, then
\[
h_{\beta'}(\kappa_{M_\omega}(q))
=
h\!\left(\beta'\kappa_{M_\omega}(q)\right)
=
h\!\left(\beta\langle \theta,\psi(x,y)-\psi(x,y')\rangle\right),
\]
which equals the S-RUM response probability for $y\succ y'$. The probability of $y'\succ y$ also matches, since in the zero-threshold case both models assign it the complementary probability.

It remains to show that the induced aggregate-optimal rules agree. For any rule $F$,
\[
U_{M_\omega}(F)
=
\sum_{j=1}^d \omega_j u_j(F)
=
\frac{1}{|\cX|}\sum_{x\in\cX}
\left\langle \omega,\psi(x,F(x))\right\rangle .
\]
Using $\omega=\theta/\|\theta\|_1$, this becomes
\[
U_{M_\omega}(F)
=
\frac{1}{|\cX|\|\theta\|_1}
\sum_{x\in\cX}
\left\langle \theta,\psi(x,F(x))\right\rangle
=
\frac{1}{|\cX|\|\theta\|_1}
\sum_{x\in\cX}
V_\theta(x,F(x)).
\]
Thus $U_{M_\omega}(F)$ is a positive scalar multiple of the linear S-RUM objective for every $F$. Therefore, the two objectives have the same maximizers over any $\rules'\subseteq\rules$.
\end{proof}

\subsection{Proof that weight recovery implies rule recovery} \label{app:weight-approx}

\begin{lemma}
\label{lem:weight-recovery-rule-recovery}
Let $\mathcal F' \subseteq\mathcal F$ be any rule class. Suppose
$\|\widehat\omega-\omega^*\|_1\le \varepsilon$, and let
\[
\widehat F\in \arg\max_{F\in\mathcal F'} U_{\widehat\omega}(F),
\qquad
F^*\in \arg\max_{F\in\mathcal F'} U_{\omega^*}(F).
\]
Then
\[
U_{\omega^*}(\widehat F)\ge U_{\omega^*}(F^*)-2\varepsilon.
\]
\end{lemma}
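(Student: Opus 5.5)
The plan is the standard three-term sandwich argument, driven by a uniform bound on how much $U_\omega(F)$ can change when $\omega$ is perturbed. Here $U_\omega(F)=\sum_j \omega_j u_j(F)$ for a fixed utility profile $u$, which in the linear setting of \Cref{def:linear-perfectly-separable-model} is $\frac{1}{|\cX|}\sum_x\langle\omega,\psi(x,F(x))\rangle$.

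First, I would establish that for every $F\in\rules'$ the difference $|U_{\widehat\omega}(F)-U_{\omega^*}(F)|$ is small. The difference equals $\sum_j(\widehat\omega_j-\omega^*_j)u_j(F)$, and Hölder's inequality bounds it by $\|\widehat\omega-\omega^*\|_1\cdot\max_j|u_j(F)|$. In the linear model each $u_j(F)$ is an average of features in $[0,1]$, so $|u_j(F)|\le 1$ and the deviation is at most $\varepsilon$.

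For general priority models, utilities are only normalized up to differences, $|u_j(F)-u_j(F')|\le1$. This is the one delicate point: the bound must not depend on the absolute level of the $u_j$. The fix is that adding a constant $c_j$ to each $u_j$ shifts $U_\omega(F)$ by $\sum_j\omega_j c_j$, which is independent of $F$ but depends on $\omega$. What matters is that the comparison below involves differences between two rules under the same weight vector, so the shift cancels. Concretely, I would work with $D_\omega(F,F'):=U_\omega(F)-U_\omega(F')=\sum_j\omega_j(u_j(F)-u_j(F'))$. Hölder then gives $|D_{\widehat\omega}(F,F')-D_{\omega^*}(F,F')|\le\varepsilon\max_j|u_j(F)-u_j(F')|\le\varepsilon$.

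Second, I would apply this with $F=F^*$ and $F'=\widehat F$:
\[
U_{\omega^*}(F^*)-U_{\omega^*}(\widehat F)=D_{\omega^*}(F^*,\widehat F)\le D_{\widehat\omega}(F^*,\widehat F)+\varepsilon\le\varepsilon,
\]
because optimality of $\widehat F$ under $\widehat\omega$ gives $D_{\widehat\omega}(F^*,\widehat F)\le0$. This yields a bound of $\varepsilon$, which is stronger than the claimed $2\varepsilon$.

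If one instead runs the naive sandwich with level bounds $|U_{\widehat\omega}(F)-U_{\omega^*}(F)|\le\varepsilon$ applied separately to $F^*$ and to $\widehat F$, the two error terms add and give exactly $2\varepsilon$. I would present that version, valid in the linear setting where $|u_j|\le1$, and note that the difference-based argument covers general normalized utilities. The only obstacle is this normalization bookkeeping; everything else is routine.
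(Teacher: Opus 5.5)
Your argument is correct, and your main, difference-based route differs slightly from the paper's. The paper first shifts each $u_j$ by a constant so that $u_j(F)\in[0,1]$ for all $F$. This is legitimate because, for each fixed $\omega$, the shift adds an $F$-independent constant to $U_\omega$, so neither argmax sets nor $\omega^*$-regret change. The paper then uses H\"older to get the uniform level bound $|U_{\widehat\omega}(F)-U_{\omega^*}(F)|\le\varepsilon$ and pays it twice in the three-term sandwich, giving $2\varepsilon$. You instead apply H\"older to the single difference $U_\omega(F^*)-U_\omega(\widehat F)$. This uses the normalization $|u_j(F)-u_j(F')|\le 1$ directly, needs no shift, and incurs only one error term, so you get regret at most $\varepsilon$, which is strictly sharper than the stated bound. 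The paper's route gives a uniform approximation of the objective values themselves, which is somewhat more reusable. Yours gains the factor of two and removes the without-loss-of-generality step.

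One correction to your framing: the naive sandwich is not limited to the linear setting. The shift you describe already gives $|u_j|\le 1$ for any normalized model. Its dependence on $\omega$ is harmless, because the final inequality compares only $U_{\omega^*}$ values and $\widehat F$ remains a $U_{\widehat\omega}$-maximizer after shifting. This is exactly how the paper handles general models.
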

\begin{proof}
Note that we may shift each $u_j$ by a constant without changing any utility differences or aggregate-optimal rules. In particular, because $|u_j(F) - u_j(F')|\leq 1$ for all $F \in \rules$, we can shift these utilities so that $|u_j(F)|\in[0,1]$ for all $j,F$, and hence $|u_j(F)|\le 1$. For every rule \(F\),
\[
|U_{\widehat\omega}(F)-U_{\omega^\ast}(F)|
=
\left|
\sum_{j=1}^d
(\widehat\omega_j-\omega^\ast_j)u_j(F)
\right|
\le
\sum_{j=1}^d
|\widehat\omega_j-\omega^\ast_j|\,|u_j(F)|
\le
\|\widehat\omega-\omega^\ast\|_1
\le
\varepsilon.
\]
Since \(\widehat F\) maximizes \(U_{\widehat\omega}\) over \(\mathcal F'\),
\[
U_{\widehat\omega}(\widehat F)
\ge
U_{\widehat\omega}(F^\ast).
\]
Therefore,
\[
U_{\omega^\ast}(\widehat F)
\ge
U_{\widehat\omega}(\widehat F)-\varepsilon
\ge
U_{\widehat\omega}(F^\ast)-\varepsilon
\ge
U_{\omega^\ast}(F^\ast)-2\varepsilon.
\]
Thus \(\widehat F\) is \(2\varepsilon\)-optimal.
\end{proof}

\subsection{Regret Computation Methods}
\label{app:regret-computation}

\subsubsection{Empirical estimate of $\operatorname{Avg-Regret}$}
The input distribution is fixed once and shared across all methods and oracles, so
that differences in regret reflect the methods rather than the sampled inputs. It
consists of $M=300$ inputs $x$, each with $|\cY(x)|=5$ candidate outputs whose
feature vectors $\psi(x,y)\in\R^{5}$ are drawn i.i.d.\ from $\mathrm{Uniform}[0,1]$;
the same sample is reused for every $(\omega^*,\widehat\omega)$ pair.

For an input $x$, the learned rule selects
$F_{\widehat\omega}(x)=\arg\max_{y\in\cY(x)}\langle\widehat\omega,\psi(x,y)\rangle$,
and we define
\[
  \Reg_x=\max_{y\in\cY(x)}\langle\omega^*,\psi(x,y)\rangle
         -\langle\omega^*,\psi\!\left(x,F_{\widehat\omega}(x)\right)\rangle,
  \qquad
  \Range_x=\max_{y\in\cY(x)}\langle\omega^*,\psi(x,y)\rangle
          -\min_{y\in\cY(x)}\langle\omega^*,\psi(x,y)\rangle .
\]
Note $\Range_x$ depends on $\omega^*$ alone. We estimate the average regret as the
\emph{ratio of two sample means} over the shared inputs $\{x_1,\dots,x_M\}$,
\[
  \widehat{\operatorname{Avg\text{-}Regret}}(\widehat\omega;\omega^*)
  =\frac{\tfrac1M\sum_{i=1}^M \Reg_{x_i}}{\tfrac1M\sum_{i=1}^M \Range_{x_i}}
  =\frac{\sum_{i=1}^M \Reg_{x_i}}{\sum_{i=1}^M \Range_{x_i}} ,
\]
i.e.\ both $\Reg_x$ and $\Range_x$ are evaluated on the same $300$ Monte-Carlo inputs
(no closed form for the range), and we pool numerator and denominator before dividing. Bars in Figure~\ref{fig:bald_6} report the mean $\pm 1$ SE of this quantity over the oracle
ground truths.

\subsubsection{Computing the $\operatorname{WC-Regret}$}
\label{app:rwc}

We derive the closed form used to compute the worst-case regret
$\operatorname{WC-Regret}_{M_{\omega^*}}(F_{\widehat{\omega}})$. Recall that
\[
\operatorname{WC-Regret}_{M_{\omega^*}}(F_{\widehat{\omega}})
=
\frac{\sup_x \Reg_x(\widehat{\omega};\omega^*)}{\sup_x \Range_x(\omega^*)},
\]
where both suprema are taken over the worst-case input domain
$x\in([0,1]^d)^5$.

\paragraph{Computing the numerator.}
For fixed $\omega^*$ and $\widehat{\omega}$, the worst-case regret can be computed by the linear program
\[
\max_{\delta\in[-1,1]^d}
\left\{
\langle \omega^*,\delta\rangle:
\langle \widehat{\omega},\delta\rangle\le 0
\right\}.
\]
The vector $\delta$ represents the feature difference $z-z'$ between the true-optimal output $z$
and the output $z'$ selected by the learned rule. The constraint
$\langle \widehat{\omega},\delta\rangle\le 0$ says that the learned rule weakly prefers $z'$ to
$z$, while the objective $\langle \omega^*,\delta\rangle$ is the regret under the true weights.

\begin{lemma}[Worst-case regret as a linear program]
\label{lem:wc-regret-pairwise}
Let $\mathcal Z=[0,1]^d$. Suppose each input $x$ has a feasible output set
$\cY(x)\subseteq \mathcal Z$ consisting of $k$ distinct feature vectors. For
$\omega\in\Delta^{d-1}$, let
$F_\omega(x)\in\arg\max_{z\in \cY(x)}\langle \omega,z\rangle$, with arbitrary
fixed tie-breaking. Suppose the worst-case domain ranges over all possible
collections of $k$ distinct feature vectors in $[0,1]^d$. Then, for any fixed
$\widehat{\omega},\omega^*$,
\[
\sup_x \Reg_x(\widehat\omega;\omega^*)
=
\sup_{z,z'\in[0,1]^d}
\left\{
\langle \omega^*,z-z'\rangle:
\langle \widehat\omega,z'\rangle
\ge
\langle \widehat\omega,z\rangle
\right\}
=
\max_{\delta\in[-1,1]^d}
\left\{
\langle \omega^*,\delta\rangle:
\langle \widehat\omega,\delta\rangle\le 0
\right\}.
\]
\end{lemma}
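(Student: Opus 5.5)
The plan is to prove the two equalities separately. The first reduces the supremum over whole inputs to a supremum over pairs $(z,z')$. The second reparametrizes that pair problem in terms of the difference $\delta = z - z'$.

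\textbf{First equality.} Fix an input $x$ with outputs $\cY(x)=\{z_1,\dots,z_k\}$. Let $z\in\arg\max_{z_i}\langle\omega^*,z_i\rangle$, and let $z'=F_{\widehat\omega}(x)$. By definition of $F_{\widehat\omega}$, we have $\langle\widehat\omega,z'\rangle\ge\langle\widehat\omega,z\rangle$. Moreover $\Reg_x=\langle\omega^*,z-z'\rangle$, so the pair $(z,z')$ is feasible for the middle supremum with the same objective value. This gives ``$\le$''.

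For ``$\ge$'', take any feasible pair $(z,z')$. I will build an input $x$ whose feasible set contains $z$ and $z'$ and on which the learned rule picks $z'$.
\begin{itemize}
\item If $z=z'$, the objective is $0$. Since $\Reg_x\ge 0$ for every $x$, this case is trivially dominated.
\item Otherwise, pad the set with $k-2$ further distinct points of $[0,1]^d$ that are strictly worse than $z'$ under $\widehat\omega$ and no better than $z$ under $\omega^*$. Under both requirements the regret on $x$ is exactly $\langle\omega^*,z-z'\rangle$.
\end{itemize}

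\textbf{Main obstacle: ties and boundary cases.} Two issues need care here.
\begin{enumerate}
\item When $\langle\widehat\omega,z'\rangle=\langle\widehat\omega,z\rangle$, the fixed tie-breaking might select $z$ instead of $z'$. I would handle this with a limiting argument: perturb $z'$ slightly in a direction that strictly increases $\langle\widehat\omega,\cdot\rangle$ while staying in $[0,1]^d$. The objective then moves continuously, so the supremum is unchanged. If $z'$ cannot move (for instance $z'=\mathbf 1$ on the support of $\widehat\omega$), I would instead perturb $z$ downward.
\item Padding points with small $\widehat\omega$-value may not exist if $z'$ already minimizes $\langle\widehat\omega,\cdot\rangle$, e.g.\ $z'=0$ on the support. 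Again I would first perturb $(z,z')$ slightly inward, accepting an arbitrarily small loss in the objective.
\end{enumerate}
Because the claim is about suprema, these $\epsilon$-approximations suffice. Padding points must also be checked not to exceed $z$ under $\omega^*$, so that $z$ remains the true optimum. Taking them near the origin works after the same inward perturbation.

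\textbf{Second equality.} Substitute $\delta=z-z'$. Any feasible pair gives $\delta\in[-1,1]^d$ with $\langle\widehat\omega,\delta\rangle\le 0$. Conversely, any such $\delta$ is realized by the pair
\[
z_j=\max(\delta_j,0),\qquad z'_j=\max(-\delta_j,0),
\]
both of which lie in $[0,1]^d$. The objective depends only on $\delta$, so the two feasible sets have the same image and the suprema agree. The supremum is attained as a maximum because the feasible set is a compact polytope and the objective is linear, which justifies writing $\max$.
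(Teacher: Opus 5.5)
Your proposal is correct and follows essentially the same route as the paper. The shared steps are the witness pair $(F_{\omega^*}(x),F_{\widehat\omega}(x))$ for ``$\le$'', an $\epsilon$-perturbation plus padding points near the origin for ``$\ge$'', and the decomposition $z_j=\max(\delta_j,0)$, $z'_j=\max(-\delta_j,0)$ for the second equality. The paper replaces your case analysis with a single perturbation $\tilde z=(1-\rho)z$, $\tilde z'=(1-\rho)z'+\rho\mathbf 1$, which simultaneously makes the $\widehat\omega$-preference strict and forces $\langle\widehat\omega,\tilde z'\rangle>0$ and $\langle\omega^*,\tilde z\rangle>0$, so the padding points exist. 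By also restricting to pairs whose objective is near $B>0$, the paper avoids your slightly inaccurate claim that the regret equals $\langle\omega^*,z-z'\rangle$ when that quantity is negative. That claim is harmless, since $\Reg_x\ge 0$ already dominates such pairs.
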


\begin{proof}
We first prove the equality between the worst-case regret and the pairwise optimization. Fix any
input $x$, and let $z=F_{\omega^*}(x)$ and $z'=F_{\widehat\omega}(x)$. Since
$F_{\widehat\omega}(x)$ selects $z'$ from $\cY(x)$, we have
$\langle \widehat\omega,z'\rangle\ge \langle \widehat\omega,z\rangle$. Moreover,
$\Reg_x(\widehat\omega;\omega^*)=\langle \omega^*,z\rangle-\langle \omega^*,z'\rangle
=\langle \omega^*,z-z'\rangle$. Thus every regret value achieved by some input $x$ is feasible
for the pairwise optimization, so
\[
\sup_x \Reg_x(\widehat\omega;\omega^*)
\le
\sup_{z,z'\in[0,1]^d}
\left\{
\langle \omega^*,z-z'\rangle:
\langle \widehat\omega,z'\rangle
\ge
\langle \widehat\omega,z\rangle
\right\}.
\]

For the reverse inequality, define
\[
B :=
\sup_{z,z'\in[0,1]^d}
\left\{
\langle \omega^*,z-z'\rangle:
\langle \widehat\omega,z'\rangle
\ge
\langle \widehat\omega,z\rangle
\right\}.
\]
We show that $\sup_x \Reg_x(\widehat\omega;\omega^*)\ge B$. If $B=0$, this is immediate because
regret is nonnegative. Now suppose $B>0$. Fix $\varepsilon>0$. By the definition of supremum,
there exist $z,z'\in[0,1]^d$ such that
$\langle \widehat\omega,z'\rangle\ge \langle \widehat\omega,z\rangle$ and
$\langle \omega^*,z-z'\rangle>B-\varepsilon/3$. Taking $\varepsilon$ small enough, this implies
$\langle \omega^*,z\rangle>\langle \omega^*,z'\rangle$.

The learned rule may be indifferent between $z$ and $z'$, so we perturb the pair to make the
learned rule's preference strict. For $\rho\in(0,1)$, define
$\tilde z=(1-\rho)z$ and $\tilde z'=(1-\rho)z'+\rho\mathbf 1$. Then
$\tilde z,\tilde z'\in[0,1]^d$. Since $\widehat\omega\in\Delta^{d-1}$,
\[
\langle \widehat\omega,\tilde z'\rangle
-
\langle \widehat\omega,\tilde z\rangle
=
(1-\rho)\langle \widehat\omega,z'-z\rangle+\rho
>0.
\]
Thus the learned rule strictly prefers $\tilde z'$ to $\tilde z$. The true-regret objective changes
to
$\langle \omega^*,\tilde z-\tilde z'\rangle
=(1-\rho)\langle \omega^*,z-z'\rangle-\rho$, which converges to
$\langle \omega^*,z-z'\rangle$ as $\rho\downarrow 0$. Hence we can choose $\rho>0$ small enough
that $\langle \omega^*,\tilde z-\tilde z'\rangle>B-2\varepsilon/3$. In particular,
$\langle \omega^*,\tilde z\rangle>\langle \omega^*,\tilde z'\rangle$.

Now choose $k-2$ additional distinct outputs $r_1,\ldots,r_{k-2}\in[0,1]^d$, distinct from
$\tilde z$ and $\tilde z'$, with sufficiently small coordinates so that, for every $\ell$,
$\langle \omega^*,r_\ell\rangle<\langle \omega^*,\tilde z\rangle$ and
$\langle \widehat\omega,r_\ell\rangle<\langle \widehat\omega,\tilde z'\rangle$. Such outputs
exist because $[0,1]^d$ contains infinitely many distinct points arbitrarily close to the origin,
while $\langle \omega^*,\tilde z\rangle>0$ and
$\langle \widehat\omega,\tilde z'\rangle>0$.

By the worst-case-domain assumption, there is an input $x$ whose feasible output set is
$\cY(x)=\{\tilde z,\tilde z',r_1,\ldots,r_{k-2}\}$. By construction,
$F_{\omega^*}(x)=\tilde z$ and $F_{\widehat\omega}(x)=\tilde z'$. Therefore,
$\Reg_x(\widehat\omega;\omega^*)=\langle \omega^*,\tilde z-\tilde z'\rangle
>B-2\varepsilon/3$. Since $\varepsilon>0$ was arbitrary, we have
$\sup_x \Reg_x(\widehat\omega;\omega^*)\ge B$. Combining this with the forward inequality proves
the first equality.

It remains to convert the pairwise optimization to the linear program in $\delta$. Let
$\delta=z-z'$. Then $z,z'\in[0,1]^d$ implies $\delta\in[-1,1]^d$, and the learned-rule constraint
becomes $\langle \widehat\omega,z'\rangle\ge \langle \widehat\omega,z\rangle$, equivalently
$\langle \widehat\omega,\delta\rangle\le 0$. The objective becomes
$\langle \omega^*,z-z'\rangle=\langle \omega^*,\delta\rangle$. Thus every feasible pair
$(z,z')$ induces a feasible $\delta$ with the same objective value.

Conversely, every feasible $\delta\in[-1,1]^d$ satisfying
$\langle \widehat\omega,\delta\rangle\le 0$ can be written as $\delta=z-z'$ for some
$z,z'\in[0,1]^d$: set $z_j=\max\{\delta_j,0\}$ and $z'_j=\max\{-\delta_j,0\}$ for each
coordinate $j$. Then $z,z'\in[0,1]^d$, $z-z'=\delta$, and
$\langle \widehat\omega,z'\rangle\ge \langle \widehat\omega,z\rangle$. Hence this pair is
feasible for the pairwise problem and achieves the same objective value as $\delta$. Therefore,
\[
\sup_{z,z'\in[0,1]^d}
\left\{
\langle \omega^*,z-z'\rangle:
\langle \widehat\omega,z'\rangle
\ge
\langle \widehat\omega,z\rangle
\right\}
=
\max_{\delta\in[-1,1]^d}
\left\{
\langle \omega^*,\delta\rangle:
\langle \widehat\omega,\delta\rangle\le 0
\right\}.
\qedhere\]
\end{proof}

We now show that the worst-case utility range is $1$.

\begin{lemma}
Fix any $\omega^*\in\Delta^{d-1}$. Then $\sup_x \Range_x(\omega^*)=1$.
\end{lemma}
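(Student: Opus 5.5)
The plan is to prove the two inequalities $\sup_x \Range_x(\omega^*)\le 1$ and $\sup_x \Range_x(\omega^*)\ge 1$ separately. The worst-case domain is the same as in \Cref{lem:wc-regret-pairwise}: every input $x$ is a collection of $k$ distinct feature vectors in $[0,1]^d$, and $\omega^*\in\Delta^{d-1}$ (nonnegative entries summing to one).

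\textbf{Upper bound.} Fix any input $x$, and let $z,z'\in\cY(x)$ be outputs attaining the maximum and the minimum of $\langle\omega^*,\cdot\rangle$ over $\cY(x)$. Then $\Range_x(\omega^*)=\langle\omega^*,z-z'\rangle$ with $z-z'\in[-1,1]^d$. Because $\omega^*$ is nonnegative,
\[
\langle\omega^*,z-z'\rangle\le\sum_j\omega^*_j=1 .
\]
Taking the supremum over $x$ gives the upper bound.

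\textbf{Lower bound.} I would construct an input that attains the value $1$ exactly. Take $z=\mathbf 1$ and $z'=\mathbf 0$, and add $k-2$ further distinct points $r_\ell$, for example $r_\ell=t_\ell\mathbf 1$ with distinct $t_\ell\in(0,1)$. Each such point satisfies $\langle\omega^*,r_\ell\rangle=t_\ell\in(0,1)$. So the maximum of $\langle\omega^*,\cdot\rangle$ over $\cY(x)$ is $1$ (at $\mathbf 1$) and the minimum is $0$ (at $\mathbf 0$), giving $\Range_x(\omega^*)=1$. By the worst-case-domain assumption, this collection of $k$ distinct vectors is a valid input.

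\textbf{Main obstacle.} The only point needing care is confirming that the worst-case domain really contains such an input. This means checking that $k\ge 2$ and that the filler points are distinct from each other and from $\mathbf 0,\mathbf 1$. Both are immediate in the simulation setting, where $k=5$.
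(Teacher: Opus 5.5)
Your proposal is correct and follows essentially the same route as the paper: bound the range by $\|\omega^*\|_1=1$ using nonnegativity of $\omega^*$ and coordinates in $[0,1]$, then attain the bound with an input whose outputs include $\mathbf 1$ and $\mathbf 0$. Your filler points $t_\ell\mathbf 1$ simply make the paper's lower-bound input explicit.
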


\begin{proof}
The normalizing range is the largest utility spread attainable on a single input
$x\in([0,1]^d)^5$:
\[
\sup_x \Range_x(\omega^*)
=
\sup_x
\left(
\max_{y\in\cY(x)}
\langle \omega^*,\psi(x,y)\rangle
-
\min_{y\in\cY(x)}
\langle \omega^*,\psi(x,y)\rangle
\right).
\]
Equivalently, this is
\[
\sup_x \Range_x(\omega^*)
=
\sup_x
\max_{y,y'\in\cY(x)}
\left\langle
\omega^*,
\psi(x,y)-\psi(x,y')
\right\rangle.
\]

We first upper bound this quantity. Fix any input $x$ and any pair
$y,y'\in\cY(x)$. Since $x\in([0,1]^d)^5$, both feature vectors $\psi(x,y)$ and
$\psi(x,y')$ lie in $[0,1]^d$. Therefore
\[
\left\langle
\omega^*,
\psi(x,y)-\psi(x,y')
\right\rangle
\le
\max_{z,z'\in[0,1]^d}
\langle \omega^*,z-z'\rangle.
\]
Since this holds for every input $x$ and every pair $y,y'\in\cY(x)$,
$\sup_x \Range_x(\omega^*)\le \max_{z,z'\in[0,1]^d}\langle \omega^*,z-z'\rangle$.

Because $\omega^*\in\Delta^{d-1}$, all coordinates of $\omega^*$ are nonnegative. Thus, for any
$z,z'\in[0,1]^d$, $\langle \omega^*,z-z'\rangle\le
\langle \omega^*,\mathbf 1-\mathbf 0\rangle=\|\omega^*\|_1$. This upper bound is attained by
$z=\mathbf 1$ and $z'=\mathbf 0$, so
$\max_{z,z'\in[0,1]^d}\langle \omega^*,z-z'\rangle=\|\omega^*\|_1$.

Finally, this upper bound is attainable in the worst-case input domain. Since the supremum ranges
over all $x\in([0,1]^d)^5$, consider an input whose feasible output set contains one candidate with
feature vector $\mathbf 1$ and another with feature vector $\mathbf 0$. For this input,
$\Range_x(\omega^*)\ge \langle \omega^*,\mathbf 1-\mathbf 0\rangle=\|\omega^*\|_1$. Combining
the upper and lower bounds gives $\sup_x \Range_x(\omega^*)=\|\omega^*\|_1$. Since
$\omega^*\in\Delta^{d-1}$, $\|\omega^*\|_1=1$. Therefore the denominator of
$\operatorname{WC-Regret}_{M_{\omega^*}}(F_{\widehat{\omega}})$ is $1$.
\end{proof}

Combining the two lemmas gives the linear program we solve:
\[
\operatorname{WC-Regret}_{M_{\omega^*}}(F_{\widehat{\omega}})
=
\max_{\delta\in[-1,1]^d}
\left\{
\langle \omega^*,\delta\rangle:
\langle \widehat{\omega},\delta\rangle\le 0
\right\}.
\]

\subsection{Active Learning Algorithm Details}
\label{app:algo-details}

This appendix gives the algorithmic details for the active learning procedures used in \Cref{sec:speed}. Throughout, $R^*$ denotes the true response model that generates the individual's observed response, and $R$ denotes the learner's assumed response model used for posterior updates. The learner's unknown parameter vector is denoted by $\vartheta$. In the simplest methods, $\vartheta=\omega$; in the variants that learn thresholds or a flexible noise model, $\vartheta$ also contains those additional parameters. 

\paragraph{Queries and candidate pool.}
At each active learning round, the learner does not optimize over all possible pairwise queries. Instead, it draws a finite \emph{candidate pool} \[ \mathcal C_t=\{q_t^{(1)},\ldots,q_t^{(C)}\} \] of $C$ randomly sampled pairwise queries, scores each query in this pool, and asks the query with the highest score. In the simulations, a query is generated by independently sampling two candidate feature vectors from $[0,1]^d$ with $d=5$.

Let the feature difference be $\delta(q)=\psi(x,y)-\psi(x,y')$. As established 
for the linear perfectly separable model (after dropping the $1/|\cX|$ factor), the decisiveness and total evidence 
are respectively $\kappa_{M_\omega}(q)=\langle\omega,\delta(q)\rangle$ and 
$r_{M_\omega}(q)=\sum_{j=1}^d\omega_j\,|\delta_j(q)|$. In the linear simulations $r_M(q)\le 1$, so we restrict $\tau_r\in[0,1]$. 

\paragraph{BALD score.}
Queries are selected using Bayesian Active Learning by Disagreement (BALD), a standard acquisition
rule that scores a query by the expected information its response provides about the unknown
parameter \cite{houlsby2011bayesian}. At each round, the learner maintains
$N_{\mathrm{post}}$ posterior samples
$
\{\vartheta_{t-1}^{(i)}\}_{i=1}^{N_{\mathrm{post}}}
$
from the current posterior after transcript $\mathcal T_{t-1}$. To estimate the BALD score of a
candidate query, the learner uses a subsample of size $N_{\mathrm{BALD}}\le N_{\mathrm{post}}$ from this
posterior sample set. For a candidate query $q$, the learner evaluates the response distribution
$
R(\cdot\mid q,\vartheta_{t-1}^{(i)})
$
under each sampled posterior draw. The Monte Carlo BALD estimate is
\begin{equation}
\label{eq:bald}
\widehat{\mathrm{BALD}}_t(q)
=
\mathrm{Ent}\!\left(
\frac{1}{N_{\mathrm{BALD}}}\sum_{i=1}^{N_{\mathrm{BALD}}}
R(\cdot\mid q,\vartheta_{t-1}^{(i)})
\right)
-
\frac{1}{N_{\mathrm{BALD}}}\sum_{i=1}^{N_{\mathrm{BALD}}}
\mathrm{Ent}\!\left(R(\cdot\mid q,\vartheta_{t-1}^{(i)})\right),
\end{equation}
where $\mathrm{Ent}$ is Shannon entropy. The first term is the learner's marginal
uncertainty about the response to $q$, averaging over posterior uncertainty about $\vartheta$; the
second term subtracts the expected response noise that would remain if $\vartheta$ were known.
Thus, BALD rewards queries where different values of $\vartheta$ give highly different, \textit{confident} answers---these are exactly the questions that can help pin down $\vartheta$ the quickest. In the main experiments, we use
$
N_{\mathrm{post}}=200$ and $N_{\mathrm{BALD}}=50.
$
In the flexible-noise variants
(\textsc{Utilize-4}$^\dagger$, \textsc{Ignore}$^\dagger$,
\textsc{Correct}$^\dagger$), we use $N_{\mathrm{BALD}}=30$ for computational efficiency.

\paragraph{Posterior updating and MCMC} After each observed response, the learner must update its posterior over the unknown parameter
vector $\vartheta$. Because this posterior generally cannot be sampled from exactly, we approximate it using Markov-chain Monte Carlo (MCMC): a standard family of sampling methods that constructs a random walk over parameter values whose long-run distribution is the desired posterior. In each round, after observing a new response, the learner needs a fresh sample
from the updated posterior. We initialize the MCMC random walk
using information from the previous round's posterior, which we call a warm start. The first $B$ steps of the chain are discarded and called ``burn-in," because these early steps
can still depend strongly on the chain's initialization. We then keep the next $N_{\mathrm{post}}$ states of the chain and use them as the posterior sample set
for the next BALD step.

\begin{algorithm}[h]
\caption{BALD Active Learning Template}
\label{alg:bald-template}
\begin{tabbing}
\qquad \= \qquad \= \kill
\textbf{Input:} query budget \(T\), candidate-pool size \(C\), posterior-sample count
\(N_{\mathrm{post}}\), \\
BALD subsample size \(N_{\mathrm{BALD}}\), MCMC burn-in \(B\), prior \(\pi_0\), \\
query distribution \(\mathcal D_q\), learner response model \(R\), true response model \(R^*\). \\[0.25em]
Initialize transcript \(\mathcal T_0\gets\emptyset\). \\[0.25em]
Draw posterior samples \(\{\vartheta_0^{(i)}\}_{i=1}^{N_{\mathrm{post}}}\sim\pi_0\). \\[0.5em]
\textbf{for} \(t=1,\ldots,T\) \textbf{do} \\[0.25em]
\> (1) Draw candidate pool:
$\mathcal{C}_t=\{q_t^{(1)},\ldots,q_t^{(C)}\},
\quad
q_t^{(c)}\stackrel{\mathrm{i.i.d.}}{\sim}\mathcal D_q.
$
\\[-0.25em]
\> (2) Estimate \(\widehat{\mathrm{BALD}}_t(q_t^{(c)})\) for each \(q_t^{(c)}\in\mathcal C_t\)
using \eqref{eq:bald} with \(N_{\mathrm{BALD}}\) posterior samples. \\[-0.25em]
\> (3) Choose the next query:
$q_t\in\arg\max_{q\in\mathcal C_t}\widehat{\mathrm{BALD}}_t(q).$
\\[-0.25em]
\> (4) Observe response: $\triangleright_t\sim R^*(\cdot\mid q_t;M_{\omega^*}).$
\\[-0.25em]
\> (5) Update transcript according to whether the method wants to skip or retain that query. \\
\> \quad For most methods, \(\mathcal T_t\gets\mathcal T_{t-1}\cup\{(q_t,\triangleright_t)\}\). \\
\> \quad For \textsc{Ignore} methods, indecisive responses are not retained. \\[-0.25em]
\> (6) Sample from the learner posterior $\pi_t(\vartheta)\propto \pi_0(\vartheta)
\prod_{(q,\triangleright)\in\mathcal T_t} R(\triangleright\mid q,\vartheta)$
\\using MCMC with \(B\) burn-in iterations and \(N_{\mathrm{post}}\) retained samples, obtaining
$\{\vartheta_t^{(i)}\}_{i=1}^{N_{\mathrm{post}}}.$
\\Let
$\omega_t^{(i)} := \omega(\vartheta_t^{(i)})$
denote the priority-weight component of the \(i\)-th posterior draw. \\[0.25em]
\textbf{end for} \\[0.5em]
\textbf{Output:} $\widehat\omega_T=\frac{1}{N_{\mathrm{post}}}
\sum_{i=1}^{N_{\mathrm{post}}}\omega_T^{(i)}.$
\end{tabbing}
\end{algorithm}

\paragraph{Sampling methods used.}
All posterior updates in Line~(6) of Algorithm~\ref{alg:bald-template} are performed using MCMC. We use three standard MCMC methods. Metropolis--Hastings is a basic MCMC update: the sampler proposes a random change to the current parameter value and accepts or rejects that proposal based on how plausible it is under the posterior \cite{metropolis1953equation,hastings1970monte}. Metropolis-within-Gibbs is the version used when the parameter vector has several components \cite{gelfand1990sampling, tierney1994markov, chib1995understanding}. Instead of proposing a change to the entire vector at once, the sampler updates one component at a time, such as $\omega$, then $\tau_r$, then $\gamma$, holding the others fixed during each update. When a component cannot be sampled exactly, that component is updated using a Metropolis--Hastings step. Third, for parameters constrained to lie
on a simplex, such as the priority weights $\omega$, we use hit-and-run proposals, which are
designed for random-walk sampling in bounded convex sets and therefore avoid leaving the
simplex \cite{smith1984efficient}.

\paragraph{Method specifications.}
All methods instantiate Algorithm~\ref{alg:bald-template}. Unless otherwise stated, the query
distribution $\mathcal D_q$, candidate-pool size $C$, posterior sample size $N_{\mathrm{post}}$,
BALD subsample size $N_{\mathrm{BALD}}$, and MCMC burn-in $B$ are shared across methods and
given in \Cref{tab:bald-hyperparams}. The methods differ only in five parts of the algorithm:
the learned parameter vector and prior, the learner response model $R$, the true response model
$R^*$, the transcript retention rule, and the posterior sampler:

\begin{enumerate}
    \item \textbf{Learned parameter vector and prior.}
    The methods use one of three learned parameterizations.

    \begin{itemize}
        \item \textit{Known-noise, known-threshold methods}
        (\textsc{Utilize-4}, \textsc{Utilize-3}, \textsc{Ignore}, and \textsc{Correct}).
        The learner estimates only the priority weights:
        $\vartheta=\omega\in\Delta^{d-1}$, with
        $\omega\sim\mathrm{Dirichlet}(\mathbf{1})$.
        The thresholds $(\tau_r,\tau_\kappa)$, the link function $h$, and the logistic inverse
        temperature $\beta$ are known.

        \item \textit{Unknown-noise methods}
        (\textsc{Utilize-4}$^\dagger$, \textsc{Ignore}$^\dagger$, and
        \textsc{Correct}$^\dagger$).
        The learner estimates both the priority weights and a flexible noise model:
        \[
        \vartheta=(\omega,\eta),
        \qquad
        \eta=(\alpha,\mu,\sigma).
        \]
        Here $\alpha=(\alpha_1,\alpha_2,\alpha_3)\in\Delta^2$ are the weights of a
        three-component Gaussian mixture, $\mu=(\mu_1,\mu_2,\mu_3)$ are the component means, and
        $\sigma=(\sigma_1,\sigma_2,\sigma_3)$ are the component standard deviations. These
        parameters define the Gaussian-mixture CDF
        \[
        F_{\mathrm{GMM}}(t;\eta)
        =
        \sum_{k=1}^3 \alpha_k
        \Phi\!\left(\frac{t-\mu_k}{\sigma_k}\right),
        \]
        where $\Phi$ is the standard normal CDF.

        The flexible-noise learner uses $F_{\mathrm{GMM}}$ as the CDF of an additive noise term,
        not as a symmetric two-sided link. In the four-response model below, conflict is computed
        as the probability that a noisy latent margin falls inside the conflict band. This gives a
        CDF difference, which is nonnegative by monotonicity of $F_{\mathrm{GMM}}$, even when the
        Gaussian mixture is asymmetric.

        We use the same uniform Dirichlet prior on $\omega$ as above. For the mixture parameters,
        we use weakly informative priors: the component weights have a uniform Dirichlet prior,
        the component means have centered Gaussian priors, and the log standard deviations have
        Gaussian priors. Each $\sigma_k$ is constrained to lie in
        $[\sigma_{\min},\sigma_{\max}]$ to avoid numerically degenerate mixture components. The
        exact hyperparameter values are listed in \Cref{tab:bald-hyperparams}.

        \item \textit{Threshold-learning method}
        (\textsc{Utilize-4}$^\diamond$).
        This method is the version of \textsc{Utilize-4} that does not assume the response
        thresholds are known. The learner estimates
        $\vartheta=(\omega,\tau_r,\gamma)$,
        where $\omega$ is the priority-weight vector, $\tau_r$ is the total-evidence threshold,
        and $\gamma$ is the learner's estimate of the conflict threshold $\tau_\kappa$.

        We allow $\tau_r$ to vary continuously over $[0,1]$. For the conflict threshold, we use a
        finite grid of possible values,
        \[
        \Gamma=
        \left\{
        0,\frac{\gamma_{\max}}{K_\gamma-1},
        \frac{2\gamma_{\max}}{K_\gamma-1},
        \ldots,\gamma_{\max}
        \right\}.
        \]
        Here $K_\gamma$ is the number of grid points, and $\gamma_{\max}=0.95$ is the largest
        allowed value. Thus, instead of learning an arbitrary real-valued conflict threshold, the
        learner chooses among $K_\gamma$ evenly spaced candidate values between $0$ and $0.95$.

        The prior on $\omega$ is uniform over the simplex,
        $\omega\sim\mathrm{Dirichlet}(\mathbf{1})$, and the prior on $\tau_r$ is uniform over
        $[0,1]$, $\tau_r\sim\mathrm{Uniform}[0,1]$. For $\gamma$, we put a prior directly on the
        grid $\Gamma$. This prior assigns zero mass to the two grid endpoints and positive mass
        to interior grid values. Thus, it excludes the degenerate boundary cases $\gamma=0$ and
        $\gamma=\gamma_{\max}$ while placing a weak symmetric prior over the remaining interior
        grid values. See details in Appendix~\ref{app:learn-tau}.
    \end{itemize}

    \item \textbf{Learner's assumed response model $R$ used in Line~(2) and Line~(6).}
    The learner response model is the model used both to score candidate queries by BALD and to
    compute the likelihood in the posterior update.

    \begin{itemize}
        \item \textit{Four-response logistic learner}
        (\textsc{Utilize-4}).
        The learner uses the four-response model
        \[
        R(\cdot\mid q,\omega)
        =
        R^\circ_{h^{\mathrm{logit}}_\beta;\tau_r,\tau_\kappa}
        (\cdot\mid q;M_\omega),
        \]
        where $h^{\mathrm{logit}}_\beta(t)=(1+\exp(-\beta t))^{-1}$.

        \item \textit{Three-response generic-indecision learner}
        (\textsc{Utilize-3}).
        The learner uses the collapsed three-response model
        \[
        R(\cdot\mid q,\omega)
        =
        R^{\oslash}_{h^{\mathrm{logit}}_\beta;\tau_r,\tau_\kappa}
        (\cdot\mid q;M_\omega),
        \]
        over alphabet $\{\succ,\prec,\oslash\}$, where $\oslash$ denotes generic indecision.
        This model is formalized in \Cref{def:generic-indecision}.

        \item \textit{Four-response flexible-noise learner}
        (\textsc{Utilize-4}$^\dagger$).
        The learner uses a flexible latent-noise model for the four-response probabilities. For a
        query $q$, let $\delta=\kappa_{M_\omega}(q)$ be the signed aggregate evidence and let
        $r=r_{M_\omega}(q)$ be its total-evidence magnitude. The learner models the response as depending on a noisy latent margin $\delta+\varepsilon$, where $\varepsilon$ has CDF
        $F_{\mathrm{GMM}}(\cdot;\eta)$.

        If $r<\tau_r$, the query has insufficient total evidence, so
        $R(\sim\mid q,\omega,\eta)=1$. If $r\ge\tau_r$, the conflict band is
        $[-\tau_\kappa r,\tau_\kappa r]$, and the response probabilities are
        \[
        \begin{aligned}
        R(\succ\mid q,\omega,\eta)
        &=
        1-F_{\mathrm{GMM}}(\tau_\kappa r-\delta;\eta),\\
        R(\prec\mid q,\omega,\eta)
        &=
        F_{\mathrm{GMM}}(-\tau_\kappa r-\delta;\eta),\\
        R(\bowtie\mid q,\omega,\eta)
        &=
        F_{\mathrm{GMM}}(\tau_\kappa r-\delta;\eta)
        -
        F_{\mathrm{GMM}}(-\tau_\kappa r-\delta;\eta),\\
        R(\sim\mid q,\omega,\eta)
        &=0.
        \end{aligned}
        \]
        These are exactly the probabilities that the noisy margin $\delta+\varepsilon$ lies above
        the conflict band, below the conflict band, or inside the conflict band.

        This construction gives valid probabilities for any Gaussian-mixture parameters $\eta$,
        including asymmetric mixtures. Since $\tau_\kappa r\ge 0$, the upper endpoint
        $\tau_\kappa r-\delta$ is at least the lower endpoint $-\tau_\kappa r-\delta$. Because
        $F_{\mathrm{GMM}}$ is a CDF and hence monotone, the conflict probability
        $F_{\mathrm{GMM}}(\tau_\kappa r-\delta;\eta)
        -F_{\mathrm{GMM}}(-\tau_\kappa r-\delta;\eta)$ is nonnegative. The other two
        probabilities are a lower-tail probability and an upper-tail probability, and the three
        terms sum to one.

        \item \textit{Four-response threshold-learning learner}
        (\textsc{Utilize-4}$^\diamond$).
        The learner uses
        \[
        R(\cdot\mid q,\omega,\tau_r,\gamma)
        =
        R^\circ_{h^{\mathrm{logit}}_\beta;\tau_r,\gamma}
        (\cdot\mid q;M_\omega).
        \]
        Further details are given in \Cref{app:learn-tau}.

        \item \textit{Binary logistic learner}
        (\textsc{Ignore} and \textsc{Correct}).
        The learner assumes a binary Bradley--Terry response model:
        \[
        R(\succ\mid q,\omega)
        =
        h^{\mathrm{logit}}_\beta(\kappa_{M_\omega}(q)),
        \qquad
        R(\prec\mid q,\omega)
        =
        h^{\mathrm{logit}}_\beta(-\kappa_{M_\omega}(q)).
        \]
        BALD scores are computed under this binary model, and the posterior is updated only on
        retained binary labels.

        \item \textit{Binary flexible-noise learner}
        (\textsc{Ignore}$^\dagger$ and \textsc{Correct}$^\dagger$).
        The learner uses the corresponding binary latent-noise likelihood. For
        $\delta=\kappa_{M_\omega}(q)$, the noisy latent margin is again
        $\delta+\varepsilon$, with $\varepsilon$ distributed according to
        $F_{\mathrm{GMM}}(\cdot;\eta)$. The binary response probabilities are
        \[
        R(\succ\mid q,\omega,\eta)
        =
        \Pr(\delta+\varepsilon>0)
        =
        1-F_{\mathrm{GMM}}(-\delta;\eta),
        \qquad
        R(\prec\mid q,\omega,\eta)
        =
        \Pr(\delta+\varepsilon<0)
        =
        F_{\mathrm{GMM}}(-\delta;\eta).
        \]
        These probabilities are valid for any Gaussian-mixture parameters because they are a CDF
        tail probability and its complement. As in the four-response case, the learner does not
        require the Gaussian mixture to be symmetric.
    \end{itemize}

    \item \textbf{True response model $R^*$ used in Line~(4).}
    The true response model is the data-generating model used to sample the observed response
    $\triangleright_t$.

    \begin{itemize}
        \item \textit{Four-response logistic individual}
        (\textsc{Utilize-4}, \textsc{Utilize-4}$^\dagger$,
        \textsc{Utilize-4}$^\diamond$, \textsc{Ignore}, and \textsc{Ignore}$^\dagger$).
        Responses are generated from the four-response model at the true weights and true
        thresholds:
        \[
        R^*(\cdot\mid q)
        =
        R^\circ_{h^{\mathrm{logit}}_\beta;\tau_r^*,\tau_\kappa^*}
        (\cdot\mid q;M_{\omega^*}).
        \]
        For methods that treat thresholds as known, we set
        $(\tau_r,\tau_\kappa)=(\tau_r^*,\tau_\kappa^*)$.

        \item \textit{Collapsed three-response individual}
        (\textsc{Utilize-3}).
        Responses are first generated from the four-response logistic model at the true weights
        and true thresholds, and then collapsed to the alphabet $\{\succ,\prec,\oslash\}$ by
        mapping both indecisive responses to generic indecision. Equivalently,
        \[
        R^*(\cdot\mid q)
        =
        R^{\oslash}_{h^{\mathrm{logit}}_\beta;\tau_r^*,\tau_\kappa^*}
        (\cdot\mid q;M_{\omega^*}),
        \]
        as formalized in \Cref{def:generic-indecision}.

        \item \textit{Binary forced-choice individual}
        (\textsc{Correct} and \textsc{Correct}$^\dagger$).
        Responses are generated directly from the binary zero-threshold logistic model at the true
        weights:
        \[
        R^*(\succ\mid q)
        =
        h^{\mathrm{logit}}_\beta(\kappa_{M_{\omega^*}}(q)),
        \qquad
        R^*(\prec\mid q)
        =
        h^{\mathrm{logit}}_\beta(-\kappa_{M_{\omega^*}}(q)).
        \]
        These methods are best-case forced-choice benchmarks: unlike \textsc{Ignore}, the data
        are generated by the same binary response family that the learner assumes.
    \end{itemize}

    \item \textbf{Transcript retention rule in Line~(5).}
    The transcript retention rule specifies which observed responses are used in the posterior
    update.

    \begin{itemize}
        \item \textit{Retain all responses}
        (\textsc{Utilize-4}, \textsc{Utilize-3}, \textsc{Utilize-4}$^\dagger$,
        \textsc{Utilize-4}$^\diamond$, \textsc{Correct}, and \textsc{Correct}$^\dagger$).
        Every observed response is retained:
        $\mathcal T_t=\mathcal T_{t-1}\cup\{(q_t,\triangleright_t)\}$.
        For \textsc{Utilize-3}, this includes generic-indecision responses $\oslash$. For
        \textsc{Correct} and \textsc{Correct}$^\dagger$, every response is already binary.

        \item \textit{Discard indecisive responses}
        (\textsc{Ignore} and \textsc{Ignore}$^\dagger$).
        Decisive responses are retained:
        $\mathcal T_t=\mathcal T_{t-1}\cup\{(q_t,\triangleright_t)\}$ if
        $\triangleright_t\in\{\succ,\prec\}$. Indecisive responses are discarded:
        $\mathcal T_t=\mathcal T_{t-1}$ if $\triangleright_t\in\{\sim,\bowtie\}$.
        The query still counts against the query budget even when its response is discarded.
    \end{itemize}

    \item \textbf{Posterior sampler in Line~(6).}
    The sampler is chosen according to what parameters the learner must learn.

    \begin{itemize}
        \item \textit{Simplex-only samplers}
        (\textsc{Utilize-4}, \textsc{Utilize-3}, \textsc{Ignore}, and \textsc{Correct}).
        The posterior is over $\omega\in\Delta^{d-1}$. We use Metropolis--Hastings with
        hit-and-run proposals on the simplex.

        \item \textit{Flexible-noise samplers}
        (\textsc{Utilize-4}$^\dagger$, \textsc{Ignore}$^\dagger$, and
        \textsc{Correct}$^\dagger$).
        The posterior is over $(\omega,\eta)$, where $\eta=(\alpha,\mu,\sigma)$ parameterizes the
        Gaussian-mixture noise CDF $F_{\mathrm{GMM}}(\cdot;\eta)$. We update the different parts
        of $(\omega,\eta)$ separately. We use hit-and-run proposals for $\omega$ and $\alpha$,
        because both are vectors on the simplex. For the remaining mixture parameters, $\mu$ and
        $\log\sigma$, we use random-walk Metropolis updates: the sampler proposes a small random
        change to the current value and accepts the change if it is sufficiently plausible under
        the posterior.

        \item \textit{Threshold-learning sampler}
        (\textsc{Utilize-4}$^\diamond$).
        In this variant, the learner is uncertain not only about the priority weights $\omega$,
        but also about the two response thresholds. Thus, after each new response, the learner
        updates a posterior over $(\omega,\tau_r,\gamma)$, where $\tau_r$ is the total-evidence
        threshold and $\gamma$ is the learner's discretized version of the conflict threshold
        $\tau_\kappa$. Because multiple parameters are being learned, the algorithm uses
        Metropolis-within-Gibbs.

        The sampler updates these three quantities one at a time. First, it updates $\omega$, the
        priority-weight vector. Since $\omega$ must remain in the simplex, we use a hit-and-run
        proposal. Second, it updates $\tau_r$. Since $\tau_r$ is a single number in $[0,1]$, we
        propose a small random change to its current value and reflect proposals that fall outside
        $[0,1]$ back into the interval. Third, it updates $\gamma$. Since $\gamma$ is restricted to
        the finite grid $\Gamma$, we compute the posterior probability of each grid value
        conditional on the current $\omega$ and $\tau_r$, and then sample a new value of $\gamma$
        from this finite distribution.
    \end{itemize}
\end{enumerate}

\begin{table}[h]
\centering
\caption{Hyperparameters for the BALD experiments.}
\label{tab:bald-hyperparams}
\begin{tabular}{lll}
\toprule
Symbol & Meaning & Value \\
\midrule
$d$ & feature dimension & $5$ \\
$T$ & query budget & $100$ \\
$C$ & candidate-pool size & $50$ \\
$N_{\mathrm{post}}$ & retained posterior samples & $200$ \\
$N_{\mathrm{BALD}}$ & posterior samples per BALD score & $50$ \\
$B$ & MCMC burn-in samples & $200$ \\
$\alpha_{\omega^*}$ & true-weight concentration & $0.2$ \\
$\mathcal D_q$ & candidate-query distribution & $\psi(x,y),\psi(x,y') \stackrel{\mathrm{i.i.d.}}{\sim}\mathrm{Unif}([0,1]^d)$ \\
$\beta$ & logistic inverse temperature & $10$ \\
$(\tau_r^*,\tau_\kappa^*)$ & true threshold grid & $\{0,0.2,0.4,0.6,0.8\}^2$ \\
$K_{\mathrm{GMM}}$ & \# GMM noise components & $3$ \\
$[\sigma_{\min},\sigma_{\max}]$ & GMM scale bounds & $[0.05,5]$ \\
GMM proposal sizes & proposal sizes for learning $(\mu,\log\sigma)$ & $(0.3,0.3)$ \\
$N_{\mathrm{BALD}}^\dagger$ & BALD samples for GMM variants & $30$ \\
$K_\gamma$ & conflict-threshold grid size & $10$ \\
$\gamma_{\max}$ & conflict-threshold grid max & $0.95$ \\
$\mathrm{sd}_{\tau_r}$ & proposal size for learning $\tau_r$ & $0.1$ \\
\bottomrule
\end{tabular}
\end{table}

The proposal-size hyperparameters control the size of the random moves used by the MCMC sampler.
For example, when updating $\tau_r$, the sampler proposes a new value by adding a small random
perturbation to the current value:
$\tau_r'=\mathrm{reflect}_{[0,1]}(\tau_r+\epsilon)$, where
$\epsilon\sim\mathcal N(0,\mathrm{sd}_{\tau_r}^2)$. Here
$\mathrm{reflect}_{[0,1]}$ maps a real number back into $[0,1]$ by bouncing it off the endpoints:
for example, $1.05$ is mapped to $0.95$, and $-0.05$ is mapped to $0.05$. This reflection only
ensures that the proposed value is valid. The proposal is still accepted or rejected afterward
using the usual Metropolis acceptance probability. Thus, reflection handles feasibility, while
Metropolis rejection handles posterior plausibility. The value $\mathrm{sd}_{\tau_r}$ controls how
large the proposed moves in $\tau_r$ tend to be. Similarly, the GMM proposal sizes control the
size of the random moves proposed for the mixture means $\mu$ and log standard deviations
$\log\sigma$.

All experiments in Sections~\ref{sec:accuracy} and~\ref{sec:speed}, unless otherwise noted, use
$40$ independently sampled true oracles $\omega^\ast\sim\mathrm{Dirichlet}(0.2 \,\mathbf 1)$ in
$d=5$, logistic inverse temperature $\beta=10$, candidate-pool size $C=50$,
$N_{\mathrm{post}}=200$ retained posterior samples, $B=200$ warm-started burn-in samples, and
BALD acquisition. In Section \ref{sec:accuracy}, we learn the $\widehat \omega$'s using the \textsc{Correct} version of the BALD algorithm detailed in Appendix \ref{app:algo-details}. Because we are trying to test how bad our learning and regret are when we assume the typical Bradley-Terry response model, the learner assumes  \[
        R(\cdot\mid q,\omega)
        =
        R^\circ_{h^{\mathrm{logit}}_\beta;\tau_r,\tau_\kappa}
        (\cdot\mid q;M_\omega),
        \]
The true response models $R^*$ are detailed in the above section. To ensure indecision is present but not overwhelming, we learn all $\widehat \omega$'s under the regime $\tau_r=\tau_\kappa=0.25$. Since we are not testing speed of learning in this section, we run all learners to convergence: $|\widehat \omega_{t+1} - \widehat \omega_t| < \varepsilon=0.01$ for 5 consecutive queries. The same random seeds for the BALD acquisition steps are used across all four true response models $R^*$'s for every $\omega^*$ in the 40 runs, so the four methods are directly comparable.

\begin{remark}[Consistency of the finite-pool BALD approximation]
\label{rem:consistency}
The ideal BALD rule would compute the exact BALD score of every query
$q\in\mathcal Q^{pc}$ and ask a query with maximal score. Our implementation
approximates this rule in two ways: it estimates each BALD score using finitely many posterior
samples, and it only calculates the BALD score for a finite pool of
candidate queries.

At round $t$, the learner draws a candidate pool
\[
\mathcal C_t=\{q_t^{(1)},\ldots,q_t^{(C)}\},
\qquad
q_t^{(c)}\stackrel{\mathrm{i.i.d.}}{\sim}\mathcal D_q,
\]
where $\mathcal D_q$ is the distribution used to sample candidate pairwise
queries. The learner then estimates the BALD score of each query in
$\mathcal C_t$ by averaging over $N_{\mathrm{BALD}}$ posterior samples, and asks
the query with the largest estimated score.

First consider the posterior-sampling approximation. For a fixed finite pool
$\mathcal C_t$, the exact BALD scores involve expectations over the learner's
posterior. Our implementation replaces these expectations with empirical averages over
posterior samples. This is a standard sample-average approximation
to an acquisition objective \cite{houlsby2011bayesian,kim2015guide,wilson2018maximizing}. If these posterior samples are representative of the posterior, then these Monte
Carlo score estimates converge to the exact BALD scores as $N_{\mathrm{BALD}}\to\infty$.

Now for the finite-pool approximation. Fix $\varepsilon>0$, and let
$A_{\varepsilon,t}$ be the set of queries whose exact BALD score is within
$\varepsilon$ of the best possible score at round $t$:
\[
A_{\varepsilon,t}
=
\left\{
q\in\mathcal Q^{pc}:
\mathrm{BALD}_t(q)
\ge
\sup_{q'\in\mathcal Q^{pc}}\mathrm{BALD}_t(q')-\varepsilon
\right\}.
\]
Suppose the query-sampling distribution can actually sample such near-optimal
queries, meaning
$
\mathcal D_q(A_{\varepsilon,t})>0.
$
Then the probability that none of the $C$ independently sampled candidates lies
in $A_{\varepsilon,t}$ is
$
\left(1-\mathcal D_q(A_{\varepsilon,t})\right)^C,
$
and the probability that the candidate pool contains at least one
$\varepsilon$-near-optimal query is
\[
1-\left(1-\mathcal D_q(A_{\varepsilon,t})\right)^C,
\]
which converges to $1$ as $C\to\infty$.

Thus, increasing $N_{\mathrm{BALD}}$ improves the accuracy of the BALD score estimates, while increasing $C$ improves coverage of the query space. If the
candidate-query distribution can sample near-optimal queries, our approximation of BALD is consistent.
\end{remark}

\subsection{Learning $\tau_r$ and $\tau_\kappa$}
\label{app:learn-tau}

This section details the posterior update used by
\textsc{Utilize-4}$^\diamond$, the version of \textsc{Utilize-4} that does not
assume the response thresholds are known. Instead of learning only the priority
weights $\omega$, the learner jointly estimates
$
\vartheta=(\omega,\tau_r,\gamma).
$
Here $\tau_r$ is the total-evidence threshold, and $\gamma$ is the learner's
estimate of the conflict threshold $\tau_\kappa$.

For a query $q$, recall that
\[
r_{M_\omega}(q)=\sum_{j=1}^d \omega_j|\delta_j(q)|,
\qquad
\kappa_{M_\omega}(q)=\sum_{j=1}^d \omega_j\delta_j(q).
\]
Both quantities are linear functions of $\omega$. The two thresholds enter the
response model differently. The threshold $\tau_r$ is a single cutoff for total
evidence, so we learn it as a continuous parameter in $[0,1]$. By contrast,
$\tau_\kappa$ appears in comparisons of the form
$
|\kappa_{M_\omega}(q)| < \tau_\kappa r_{M_\omega}(q).
$
Because $\tau_\kappa$ is multiplied by $r_{M_\omega}(q)$, learning
$\tau_\kappa$ continuously together with $\omega$ would introduce bilinear
terms. To keep the posterior update simple and stable, we learn the conflict
threshold over a finite grid. Specifically, the learner chooses
$
\gamma\in\Gamma,
$
where
\[
\Gamma=
\left\{
0,\frac{\gamma_{\max}}{K_\gamma-1},
\frac{2\gamma_{\max}}{K_\gamma-1},
\ldots,\gamma_{\max}
\right\}.
\]
In the experiments, $\gamma_{\max}=0.95$ and $K_\gamma=10$. Thus, $\gamma$ is
chosen from ten evenly spaced candidate values between $0$ and $0.95$. Note that this grid does not actually perfectly overlap with the true values of $\tau_\kappa$.

At active-learning round $t$, the learner has observed a transcript
$
\mathcal T_t=\{(q_s,\triangleright_s)\}_{s=1}^t,
$
where $q_s$ is the query asked in round $s$ and $\triangleright_s$ is the
observed response. For any candidate parameter value
$(\omega,\tau_r,\gamma)$, the likelihood of this transcript is
\[
\mathcal L_t(\omega,\tau_r,\gamma)
=
\prod_{s=1}^t
R^\circ_{h_\beta;\tau_r,\gamma}(\triangleright_s\mid q_s;M_\omega).
\]
The prior factorizes across the three learned quantities:
$
\pi_0(\omega,\tau_r,\gamma)
=
\pi_0(\omega)\pi_0(\tau_r)\pi_0(\gamma).
$
We use a uniform prior over the simplex for $\omega$ and a uniform prior over
$[0,1]$ for $\tau_r$.
For $\gamma$, we put a prior directly on the grid $\Gamma$:
\[
\pi_0(\gamma)\propto [u(1-u)]^{1/2},
\qquad
u=\gamma/\gamma_{\max}.
\]
This prior assigns zero mass to the two grid endpoints and positive mass to interior grid values.
Thus, it excludes the degenerate boundary cases $\gamma=0$ and $\gamma=\gamma_{\max}$ while
placing a weak symmetric prior over the remaining interior grid values.

The posterior after transcript $\mathcal T_t$ is therefore
\[
\pi_t(\omega,\tau_r,\gamma)
\propto
\pi_0(\omega)\pi_0(\tau_r)\pi_0(\gamma)
\mathcal L_t(\omega,\tau_r,\gamma).
\]
Algorithm~\ref{alg:learn-tau} describes one MCMC sweep (a single increment in $t$) for sampling from this
posterior. A sweep means one pass through the three components of the parameter:
first update $\omega$, then update $\tau_r$, then update $\gamma$. In the BALD
template, Line~(6) runs many such sweeps: the first $B$ sweeps are discarded as
burn-in, and the next $N_{\mathrm{post}}$ states are retained as posterior
samples.

\begin{algorithm}[h!]
\caption{Threshold-Learning Posterior Update (one MCMC sweep)}
\label{alg:learn-tau}
\begin{tabbing}
\qquad \= \qquad \= \kill
\textbf{Input:} transcript $\mathcal T_t=\{(q_s,\triangleright_s)\}_{s=1}^t$; current state
$(\omega,\tau_r,\gamma)_t$; grid $\Gamma=\{\gamma_1,\ldots,\gamma_{K_\gamma}\}$; \\
\quad prior $\pi_0$; link $h_\beta$; proposal size sd$_{\tau_r}$ \\[0.5em]

\textbf{(1) Update $\omega$.} \\[0.25em]
\> Propose a new priority-weight vector $\omega'$ using a hit-and-run step on \\
\> $\Omega=\{\omega\in\mathbb R^d_{\ge0}:\|\omega\|_1=1\}$. \\[0.25em]
\> Accept $\omega'$ with probability \\
\> $\displaystyle
a_\omega
=
\min\!\left\{
1,\
\frac{
\pi_0(\omega')\mathcal L_t(\omega',\tau_r,\gamma)
}{
\pi_0(\omega)\mathcal L_t(\omega,\tau_r,\gamma)
}
\right\}.
$ \\[0.25em]
\> If the proposal is rejected, keep the current value of $\omega$. \\[0.75em]

\textbf{(2) Update $\tau_r$.} \\[0.25em]
\> Propose a small random change to $\tau_r$: \\
\> $\displaystyle
\tau_r'=\mathrm{reflect}_{[0,1]}(\tau_r+\epsilon),
\qquad
\epsilon\sim\mathcal N(0,\mathrm{sd}_{\tau_r}^2).
$ \\[0.25em]
\> The reflection keeps the proposal inside the interval $[0,1]$. \\[0.25em]
\> Accept $\tau_r'$ with probability \\
\> $\displaystyle
a_\tau
=
\min\!\left\{
1,\
\frac{
\pi_0(\tau_r')\mathcal L_t(\omega,\tau_r',\gamma)
}{
\pi_0(\tau_r)\mathcal L_t(\omega,\tau_r,\gamma)
}
\right\}.
$ \\[0.25em]
\> If the proposal is rejected, keep the current value of $\tau_r$. \\[0.75em]

\textbf{(3) Update $\gamma$.} \\[0.25em]
\> Since $\gamma$ lies on the finite grid $\Gamma$, compute one posterior weight per grid point: \\[0.25em]
\> $\displaystyle
a_k=\pi_0(\gamma_k)\mathcal L_t(\omega,\tau_r,\gamma_k),
\qquad
k=1,\ldots,K_\gamma.
$ \\[0.25em]
\> Normalize these weights: \\
\> $\displaystyle
p_k=\frac{a_k}{\sum_{\ell=1}^{K_\gamma}a_\ell}.
$ \\[0.25em]
\> Draw the new value of $\gamma$ from the grid distribution 
$\Pr(\gamma=\gamma_k\mid\omega,\tau_r,\mathcal T_t)=p_k.
$ \\[0.5em]

\textbf{Output:} updated state $(\omega,\tau_r,\gamma)_{t+1}$.
\end{tabbing}
\end{algorithm}

The three updates have simple roles. The update for $\omega$ moves around the
simplex of valid priority weights. The update for $\tau_r$ makes a small
continuous move in the interval $[0,1]$. The update for $\gamma$ checks all grid
values and resamples $\gamma$ according to their posterior probabilities. Because
$\gamma$ is resampled inside the same Markov chain as $\omega$ and $\tau_r$, the
posterior samples represent uncertainty about all three quantities jointly.

In the Bayesian active learning step, the learner uses these posterior samples
exactly as in Algorithm~\ref{alg:bald-template}, except that each posterior draw
now contains $(\omega,\tau_r,\gamma)$ rather than only $\omega$. For a candidate
query $q$, define
\[
\rho_{\omega,\tau_r,\gamma,q}(\triangleright)
=
R^\circ_{h_\beta;\tau_r,\gamma}(\triangleright\mid q;M_\omega).
\]
The BALD score is
\[
\mathrm{BALD}_t(q)
=
\mathrm{Ent}\!\left(
\mathbb E_{(\omega,\tau_r,\gamma)\sim\pi_t}
[\rho_{\omega,\tau_r,\gamma,q}]
\right)
-
\mathbb E_{(\omega,\tau_r,\gamma)\sim\pi_t}
\left[
\mathrm{Ent}(\rho_{\omega,\tau_r,\gamma,q})
\right].
\]
In implementation, these expectations are approximated by averaging over the
posterior samples retained from the MCMC chain, as described in
Algorithm~\ref{alg:bald-template}.

After $T$ rounds, the learner reports the posterior means
\[
\widehat\omega_T=\mathbb E_{\pi_T}[\omega],
\qquad
\widehat\tau_{r,T}=\mathbb E_{\pi_T}[\tau_r],
\qquad
\widehat\tau_{\kappa,T}=\mathbb E_{\pi_T}[\gamma].
\]

\subsection{Formal Indecision Response Models}
\label{app:formal-indecision}
In all these definitions, fix thresholds $\tau_r\in[0,1]$, $\tau_\kappa\in[0,1)$, and a link function
$h_\beta$.

\begin{definition}[\textbf{50/50 Response Model}]
\label{def:random-under-indecision}
The 50/50 query response model
\[
R^{\mathrm{50/50}}_{h_\beta;\tau_r,\tau_\kappa}
:
\mathcal M
\to
\left(\mathcal Q^{pc}\to\Delta(\mathcal W^{pc})\right)
\]
is defined by
\[
R^{\mathrm{50/50}}_{h_\beta;\tau_r,\tau_\kappa}(q;M)
=
\begin{cases}
R^\circ_{h_\beta;\tau_r,\tau_\kappa}(q;M),
&\text{if } L_M(q)\in\{y\succ^\ast y',\,y\prec^\ast y'\},\\[4pt]
\mu^{\mathrm{50/50}}(q),
&\text{if } L_M(q)\in\{y\sim^\ast y',\,y\bowtie^\ast y'\},
\end{cases}
\]
where $\mu^{\mathrm{50/50}}(q)\in\Delta(\mathcal W^{pc}(q))$ is the distribution
given by
\[
\mu^{\mathrm{50/50}}(q)(y\succ y')=\frac12,
\qquad
\mu^{\mathrm{50/50}}(q)(y\prec y')=\frac12,
\qquad
\mu^{\mathrm{50/50}}(q)(y\sim y')
=
\mu^{\mathrm{50/50}}(q)(y\bowtie y')
=
0.
\]
Thus, when the latent state is indecisive, the respondent chooses between the two decisive responses uniformly at random.
\end{definition}

\begin{definition}[\textbf{Lexicographic Response Model}]
\label{def:lex}

The lexicographic query response model
\[
R^{\mathrm{lex}}_{h_\beta;\tau_r,\tau_\kappa}
:
\mathcal M
\to
\left(\mathcal Q^{pc}\to\Delta(\mathcal W^{pc})\right)
\]
is defined by
\[
R^{\mathrm{lex}}_{h_\beta;\tau_r,\tau_\kappa}(q;M)
=
\begin{cases}
R^\circ_{h_\beta;\tau_r,\tau_\kappa}(q;M),
&\text{if } L_M(q)\in\{y\succ^\ast y',\,y\prec^\ast y'\},\\[4pt]
\mu^{\mathrm{lex}}(q),
&\text{if } L_M(q)\in\{y\sim^\ast y',\,y\bowtie^\ast y'\}.
\end{cases}
\]
The lexicographic distribution
$\mu^{\mathrm{lex}}(q)\in \Delta(\mathcal W^{pc}(q))$ is defined as follows. 

Fix a true model $M^*=(u,\omega^*)$. 
Let $\lambda^{\omega^*}=(\lambda_1,\ldots,\lambda_m)$ be any ordering of priorities such that
\[
\omega^*_{\lambda_1}\ge \omega^*_{\lambda_2}\ge \cdots \ge \omega^*_{\lambda_m},
\]
with ties broken arbitrarily but fixed throughout. 

For any priority $j$ and query $q$, let $s_j(q) = \phi^{\mathrm{rules}}\!\left(
(\Delta^F_j(q))_{F\in\mathcal F}
\right)$ be the evidence from priority $j$ on $q$. 

We say that priority $j$ is \emph{decisive on $q$} if $s_j(q)\neq 0$. The \emph{highest-ranked decisive priority} on $q$ is the first priority in the weight-induced ordering $\lambda^{\omega^*}$ that is decisive on $q$. Formally, define
\[
\ell_M^{\mathrm{lex}}(q)
:=
\min\left\{
\ell\in[m]:
s_{\lambda_\ell}(q)\neq 0
\right\}.
\]
When $\ell_M^{\mathrm{lex}}(q)$ is defined (i.e., the set is non-empty), let
$
j^*(q):=\lambda_{\ell_M^{\mathrm{lex}}(q)}
$
denote the highest-ranked decisive priority. Then, the response distribution follows the sign of the
aggregate directional evidence of the highest-ranked decisive priority:
\[
\mu^{\mathrm{lex}}(q)(y\succ y')
=
\begin{cases}
1 & \text{if } s_{j^*(q)}(q)>0,\\
0 & \text{if } s_{j^*(q)}(q)<0,
\end{cases}
\qquad
\mu^{\mathrm{lex}}(q)(y\prec y')
=
\begin{cases}
0 & \text{if } s_{j^*(q)}(q)>0,\\
1 & \text{if } s_{j^*(q)}(q)<0.
\end{cases}
\]
If $\ell_M^{\mathrm{lex}}(q)$ is undefined, meaning no priority is decisive on $q$,
then the lexicographic response distribution chooses randomly between the two decisive responses:
\[
\mu^{\mathrm{lex}}(q)(y\succ y')=\frac{1}{2},
\qquad
\mu^{\mathrm{lex}}(q)(y\prec y')=\frac{1}{2}.
\]
In all cases,
\[
\mu^{\mathrm{lex}}(q)(y\sim y')=
\mu^{\mathrm{lex}}(q)(y\bowtie y')=0.
\]

Thus, on latently decisive queries, the individual follows the baseline response model. On latently indecisive queries, the individual falls back to the decisive recommendation of the highest-weight priority that has nonzero aggregate directional evidence on the query; if no priority has nonzero evidence, the individual chooses uniformly at random between $y\succ y'$ and $y\prec y'$.
\end{definition}

\begin{definition}[\textbf{Self-Similarity Response Model}]
\label{def:biased}
Let the individual's true feature vector be $v\in[0,1]^d$ and fix a query $q=(y,y';x)$.
Then, the self-similarity query response model is
\[
R^{\mathrm{self}}_{h_\beta;\tau_r,\tau_\kappa,v}
:
\mathcal M
\to
\left(\mathcal Q^{pc}\to\Delta(\mathcal W^{pc})\right)
\]
is defined by
\[
R^{\mathrm{self}}_{h_\beta;\tau_r,\tau_\kappa,v}(q;M)
=
\begin{cases}
R^\circ_{h_\beta;\tau_r,\tau_\kappa}(q;M),
&\text{if } L_M(q)\in\{y\succ^\ast y',\,y\prec^\ast y'\},\\[4pt]
\mu^v(q),
&\text{if } L_M(q)\in\{y\sim^\ast y',\,y\bowtie^\ast y'\}.
\end{cases}
\]
Where $\mu^v$ is defined as follows. Let the profile-distance gap be
\[
b_v(q)
:=
\|\psi(x,y')-v\|_2-\|\psi(x,y)-v\|_2.
\]
In words, $y$ is closer to the respondent's profile than $y'$ when $b_v(q)>0$ and farther when $b_v(q)<0$.
Let $\mu^v(q)\in\Delta(\mathcal W^{pc}(q))$ be the distribution
\[
\mu^v(q)(y\succ y')
=
\begin{cases}
1, & \text{if } b_v(q)>0,\\
\frac12, & \text{if } b_v(q)=0,\\
0, & \text{if } b_v(q)<0,
\end{cases}
\qquad
\mu^v(q)(y\prec y')
=
\begin{cases}
0, & \text{if } b_v(q)>0,\\
\frac12, & \text{if } b_v(q)=0,\\
1, & \text{if } b_v(q)<0,
\end{cases}
\]
and
\[
\mu^v(q)(y\sim y')
=
\mu^v(q)(y\bowtie y')
=
0.
\]
In words, when the latent state is indecisive, the respondent deterministically chooses the alternative
whose feature representation is closest in Euclidean distance to their assigned
profile, breaking exact ties uniformly at random.
\end{definition}

\begin{definition}[\textbf{Utilize-3 Response Model}]
\label{def:generic-indecision}
Let the coarsened pairwise-comparison response alphabet be
\[
\mathcal W^{pc,\oslash}(q)=\{\succ,\prec,\oslash\},
\]
where $\oslash$ denotes generic indecision, without distinguishing between
indifference and conflict.
The \textsc{Utilize-3} query response model
\[
R^{\oslash}_{h_\beta;\tau_r,\tau_\kappa}
:
\mathcal M
\to
\left(\mathcal Q^{pc}\to\Delta(\mathcal W^{pc,\oslash})\right)
\]
is obtained from the
baseline model $R^\circ_{h_\beta;\tau_r,\tau_\kappa}$ by
coarsening the two indecisive responses into a single generic-indecision
response. The decisive probabilities are unchanged,
\[
R^{\oslash}_{h_\beta;\tau_r,\tau_\kappa}(q;M)(y\succ y')
=
R^\circ_{h_\beta;\tau_r,\tau_\kappa}(q;M)(y\succ y'),
\quad
R^{\oslash}_{h_\beta;\tau_r,\tau_\kappa}(q;M)(y\prec y')
=
R^\circ_{h_\beta;\tau_r,\tau_\kappa}(q;M)(y\prec y'),
\]
while the generic-indecision response pools the two indecisive states,
\[
R^{\oslash}_{h_\beta;\tau_r,\tau_\kappa}(q;M)(\oslash)
=
R^\circ_{h_\beta;\tau_r,\tau_\kappa}(q;M)(y\sim y')
+
R^\circ_{h_\beta;\tau_r,\tau_\kappa}(q;M)(y\bowtie y').
\]
\end{definition}

\subsection{Additional Experiments for Section \ref{sec:accuracy}} \label{app:moreacc}
\subsubsection{Exploring directional biases in $\widehat{\omega}$}
\Cref{fig:biasdecomp} shows the plots demonstrating the expected biases in learned $\widehat{\omega}$ under the three deviating response conditions from \Cref{sec:accuracy}. Each plotted quantity is first computed
within a single simulation run, using that run's true weights \(\omega^*\),
learned weights \(\widehat\omega\), and, in one case, the self
vector \(v\). We then report the mean and standard error over the 40 runs.

\begin{figure}[!]
    \centering
    \includegraphics[width=0.8\linewidth]{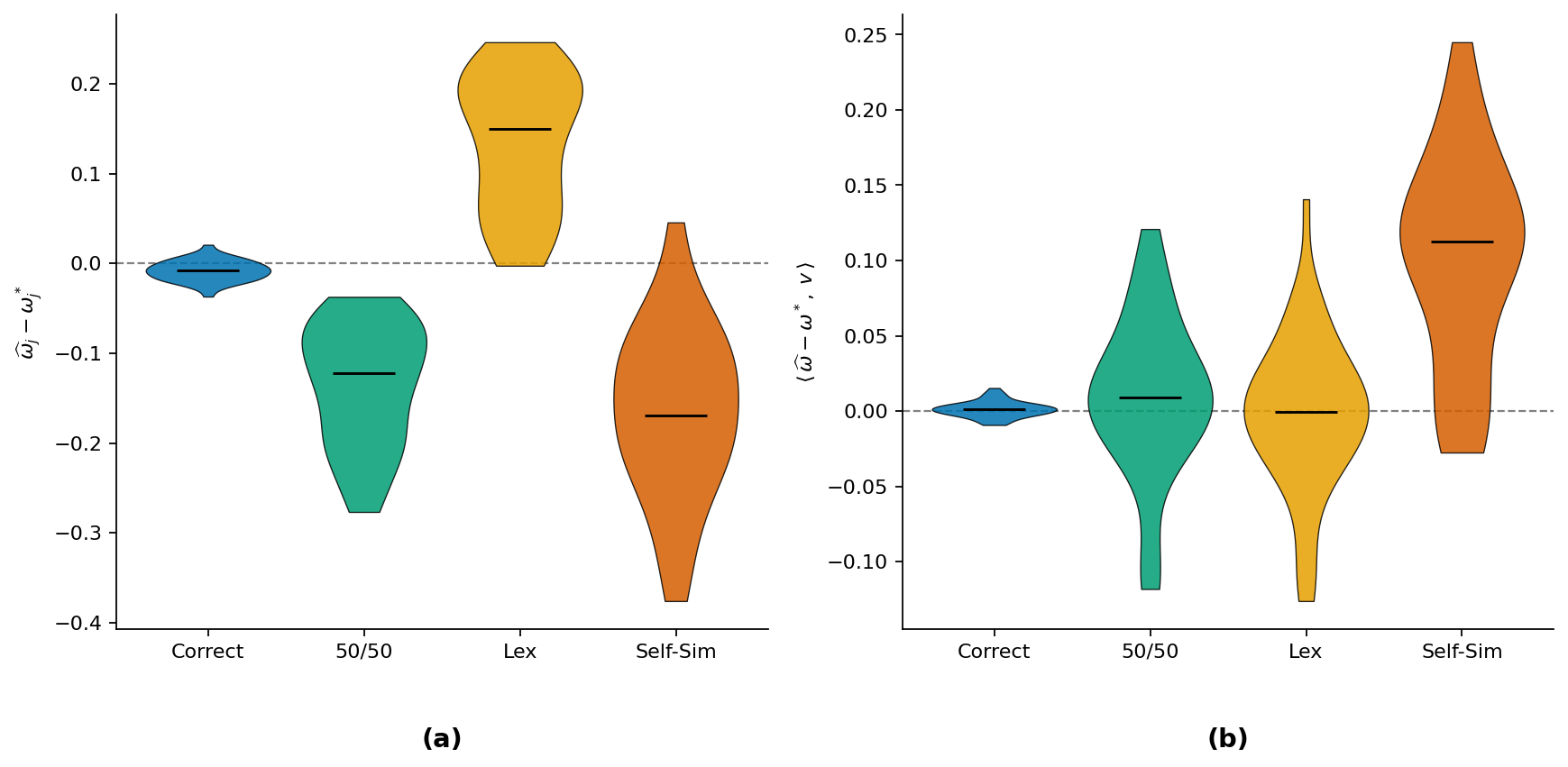}
    \caption{\textbf{Decomposing the $\ell_1$ weight-recovery error across response conditions.} Means and standard errors computed over 40 runs. \textbf{(a)} Bias toward own top feature, \textbf{(b)} Bias toward own self vector.}
\label{fig:biasdecomp}
\end{figure}

Panel (a) measures bias toward the true top priority: Let $j^*\in\arg\max_j \omega^*_j$.
For each run, we compute $\widehat\omega_{j^*}-\omega^*_{j^*}$; 
positive values mean that the learned model overweights the priority that was
already most important under \(\omega^*\). This is the distortion expected under
\textsc{Lexicographic} behavior.

Panel (b) measures bias toward the respondent's self vector. For each run, we $\langle \widehat\omega-\omega^*,v\rangle.$
Positive values mean that the learned model shifts weight toward features that are large in the respondent's own self vector. This is the distortion expected under \textsc{Self-Similarity} behavior.

\subsubsection{Cosine Similarity}
In \Cref{fig:cosine}, we show the cosine similarity of $\widehat{\omega}$ and $\omega^*$ to demonstrate the directional recovery conditioned on the four types of true response models.

\begin{figure}[!]
    \centering
    \includegraphics[width=0.4\linewidth]{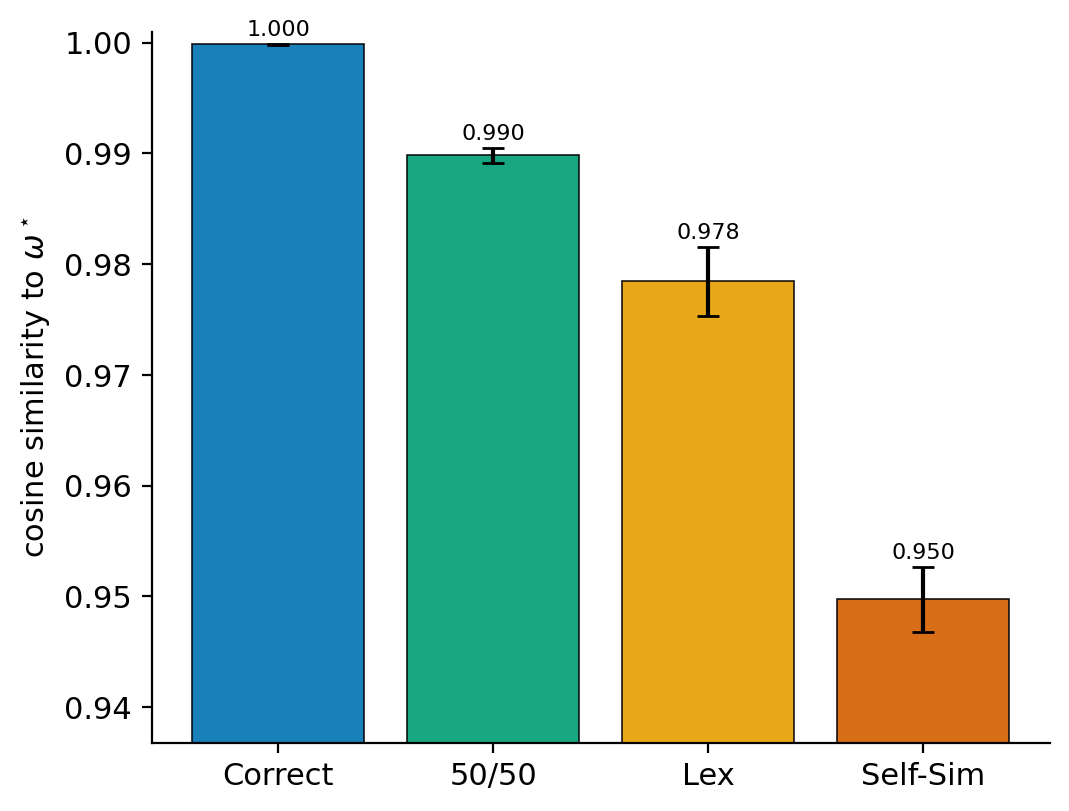}
    \caption{
    \textbf{Cosine similarity between the learned weights \(\widehat\omega\) and true
    weights \(\omega^*\).} Error bars show \(\pm 1\) standard error over 40 runs.
    }
    \label{fig:cosine}
\end{figure}

\subsection{Additional Experimental Results for Section \ref{sec:speed}}
\label{app:big-grid}
We now present the full $\tau$-grids behind the learning speed results of
\Cref{sec:speed}. The eight methods are \textsc{Utilize-4}, \textsc{Utilize-3},
\textsc{Utilize-4}$^\dagger$,
\textsc{Utilize-4}$^\diamond$, \textsc{Ignore}, \textsc{Ignore}$^\dagger$,
\textsc{Correct}, and \textsc{Correct}$^\dagger$.
\subsubsection{Unknown Noise Variants} 
We include unknown-noise variants of \textsc{Utilize} and \textsc{Ignore},
denoted with a $\dagger$. In these variants the individual's true response model
still uses logistic noise, but the learner is not given the logistic link or its
scale; it knows only that the noise distribution is a three-component mixture of
Gaussians,
$
\varepsilon \sim \sum_{k=1}^3 w_k\,\mathcal N(\mu_k,\sigma_k^2)$ where $\sum_{k=1}^3 w_k=1.$
This family is far more flexible than the one-parameter logistic family
$F_{\mathrm{logistic}}(t)=1/(1+\exp(-\beta t))$. It contains the probit link
exactly --- a single Gaussian component recovers $\Phi(t/\sigma)$ --- and, because
finite Gaussian mixtures approximate any continuous noise CDF arbitrarily well as
the number of components grows, it approximates the logistic link. It does not reproduce the logistic CDF exactly: a finite mixture
has Gaussian tails, lighter than the logistic's exponential tails. The point of
the $\dagger$ variants is exactly this misspecification --- the oracle's noise is
logistic while the learner fits a Gaussian mixture --- so that the experiment tests
whether the added flexibility lets the learner recover $\omega$ without being told
the true noise family. The unknown-noise learner thus jointly infers the
preference weights and a model of the response noise. We test these variants to demonstrate that the learning gains achieved by \textsc{Utilize-3} and \textsc{Utilize-4} remain \textit{even if we weaken the learner's assumptions about the noise structure}, which is potentially useful if people are not responding according to the specific noise assumptions assumed by Bradley-Terry.

\begin{figure}[!]
    \centering
    \includegraphics[width=\linewidth]{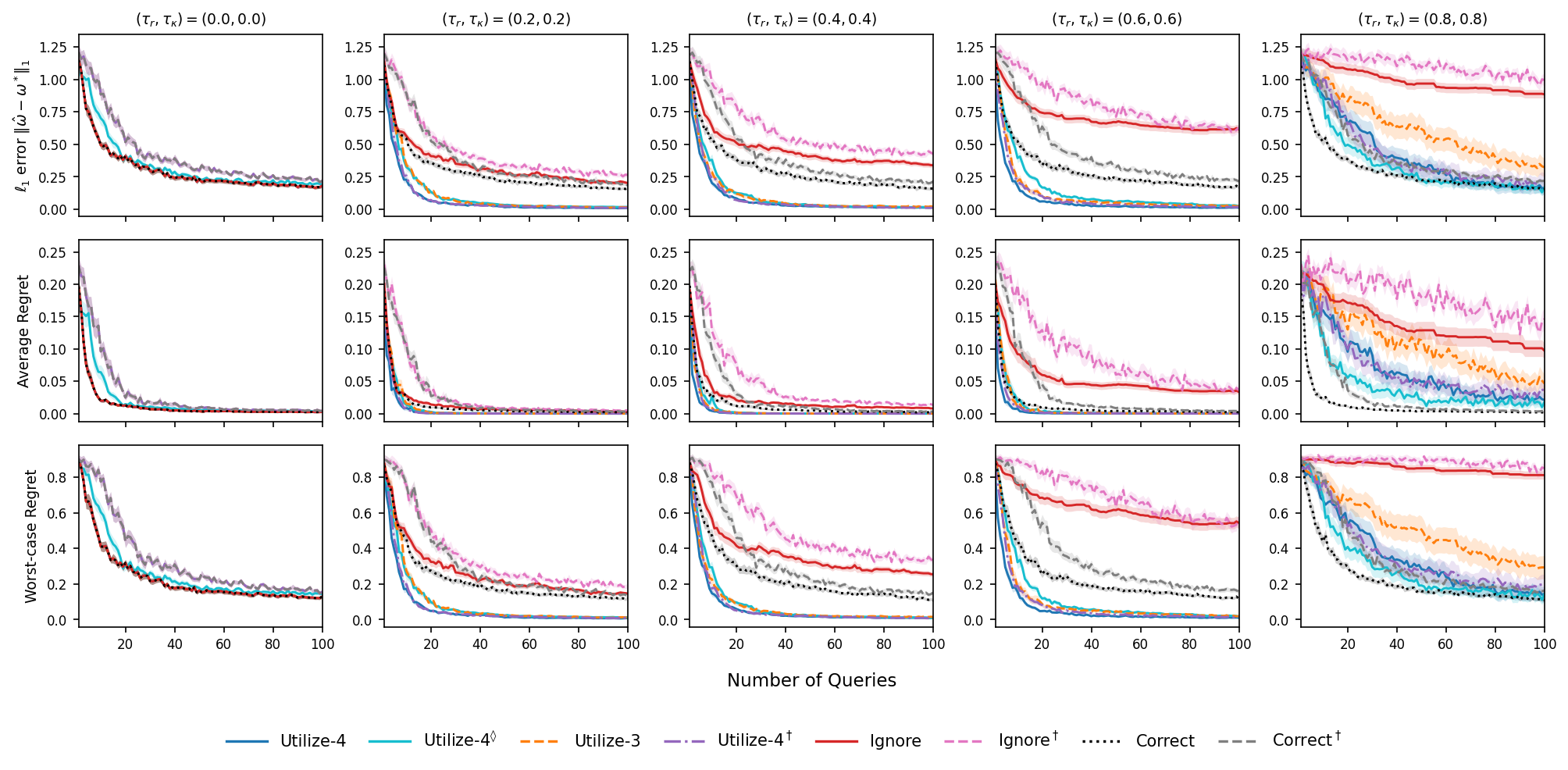}
    \caption{Performance of all methods across the diagonal threshold regimes versus number of queries.
    \textbf{Top:} $\ell_1$ error $\|\widehat\omega-\omega^\ast\|_1$. \textbf{Middle:} Average regret on the
    uniform-$[0,1]^5$ distribution with independently drawn features. \textbf{Bottom:}
    Worst-case single-decision regret.}
    \label{fig:diag-summary}
\end{figure}

\begin{figure}[!]
    \centering
    \includegraphics[width=\linewidth]{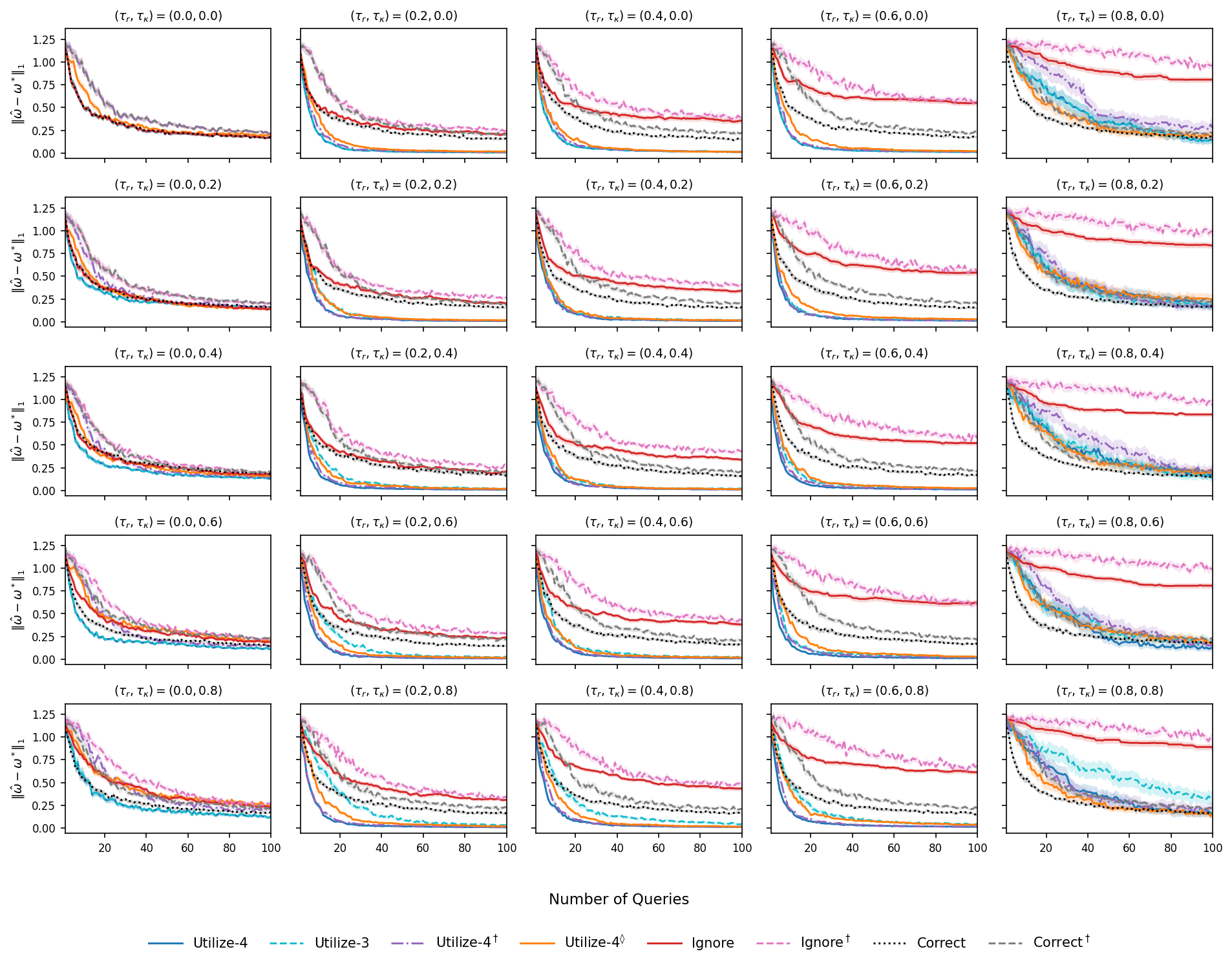}
    \caption{$\ell_1$ error $\|\widehat\omega-\omega^\ast\|_1$ versus number of
    queries, across the full $5\times5$ grid of thresholds.}
    \label{fig:grid-l1}
\end{figure}

\begin{figure}[!]
    \centering
    \includegraphics[width=\linewidth]{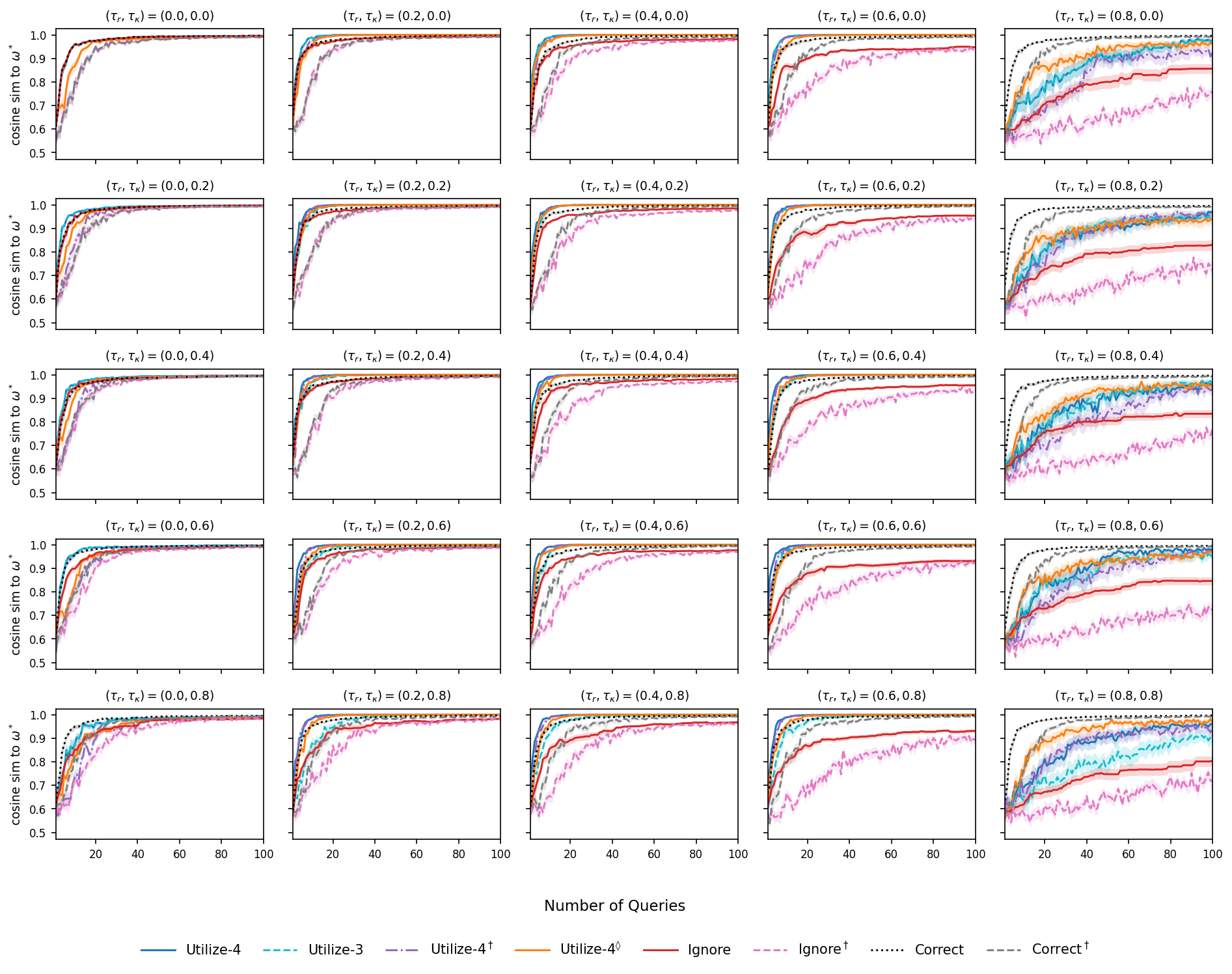}
    \caption{Cosine similarity $\cos(\widehat\omega,\omega^\ast)$ versus number of
    queries, across the full $5\times5$ threshold grid.}
    \label{fig:grid-cos}
\end{figure}

\begin{figure}[!]
    \centering
    \includegraphics[width=\linewidth]{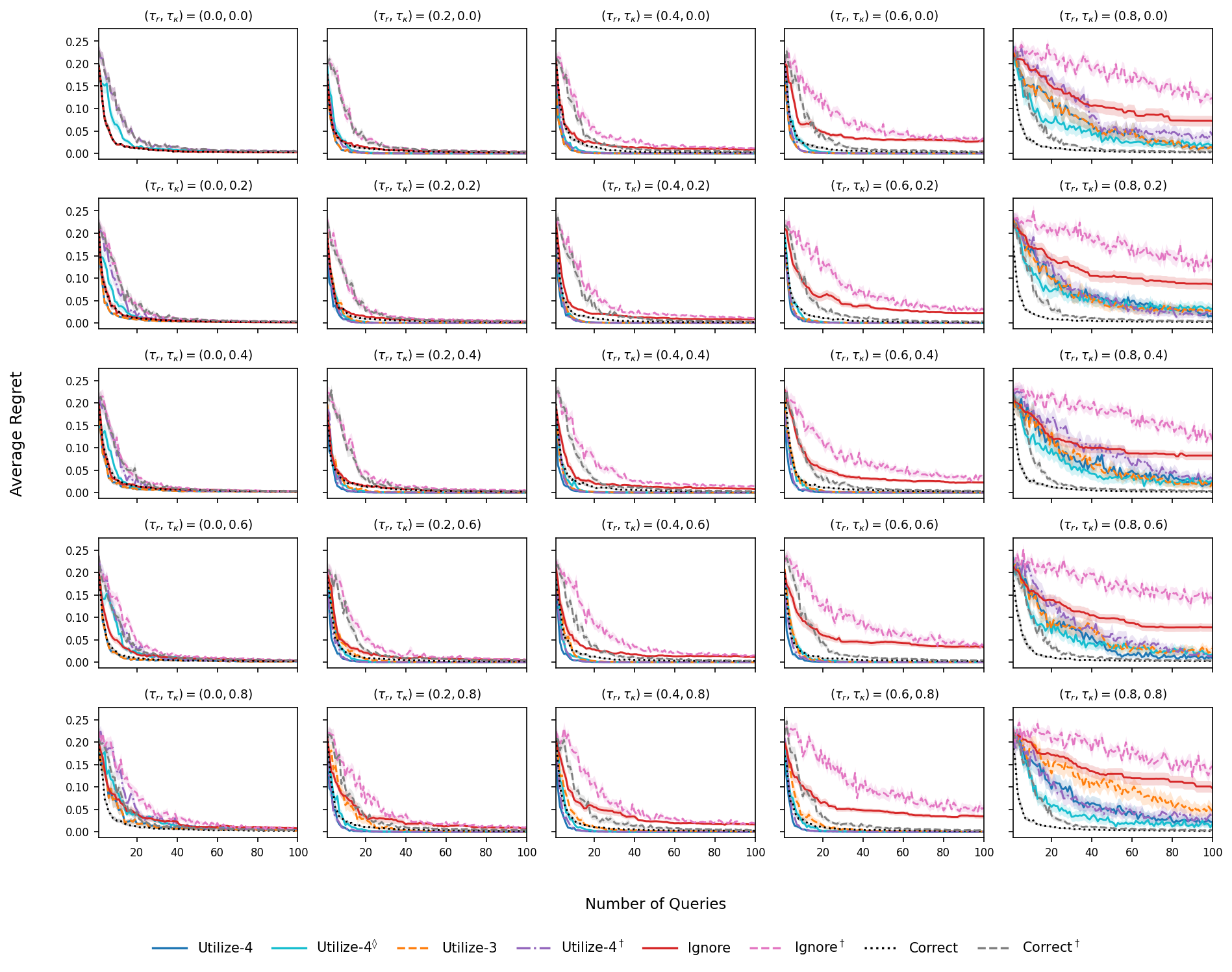}
    \caption{Average regret versus query number on the uniform-$[0,1]^5$ distribution
     where features are all independent, across the full $5\times5$ threshold grid.}
    \label{fig:grid-avgreg}
\end{figure}

\begin{figure}[!]
    \centering
    \includegraphics[width=\linewidth]{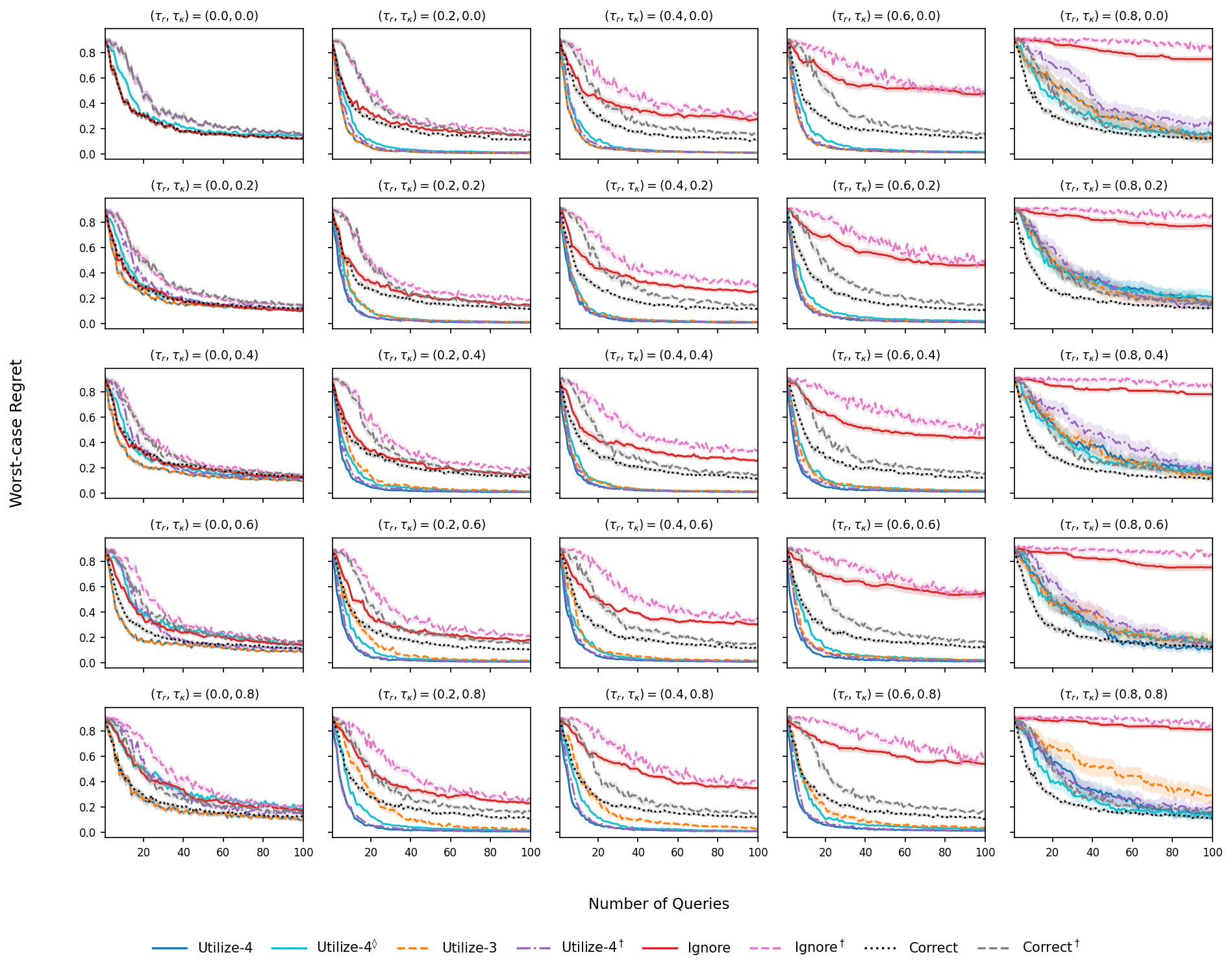}
    \caption{Worst-case single-decision regret versus query number, across the full
    $5\times5$ threshold grid.}
    \label{fig:grid-wcr}
\end{figure}

\begin{figure}[!]
    \centering
    \includegraphics[width=0.8\linewidth]{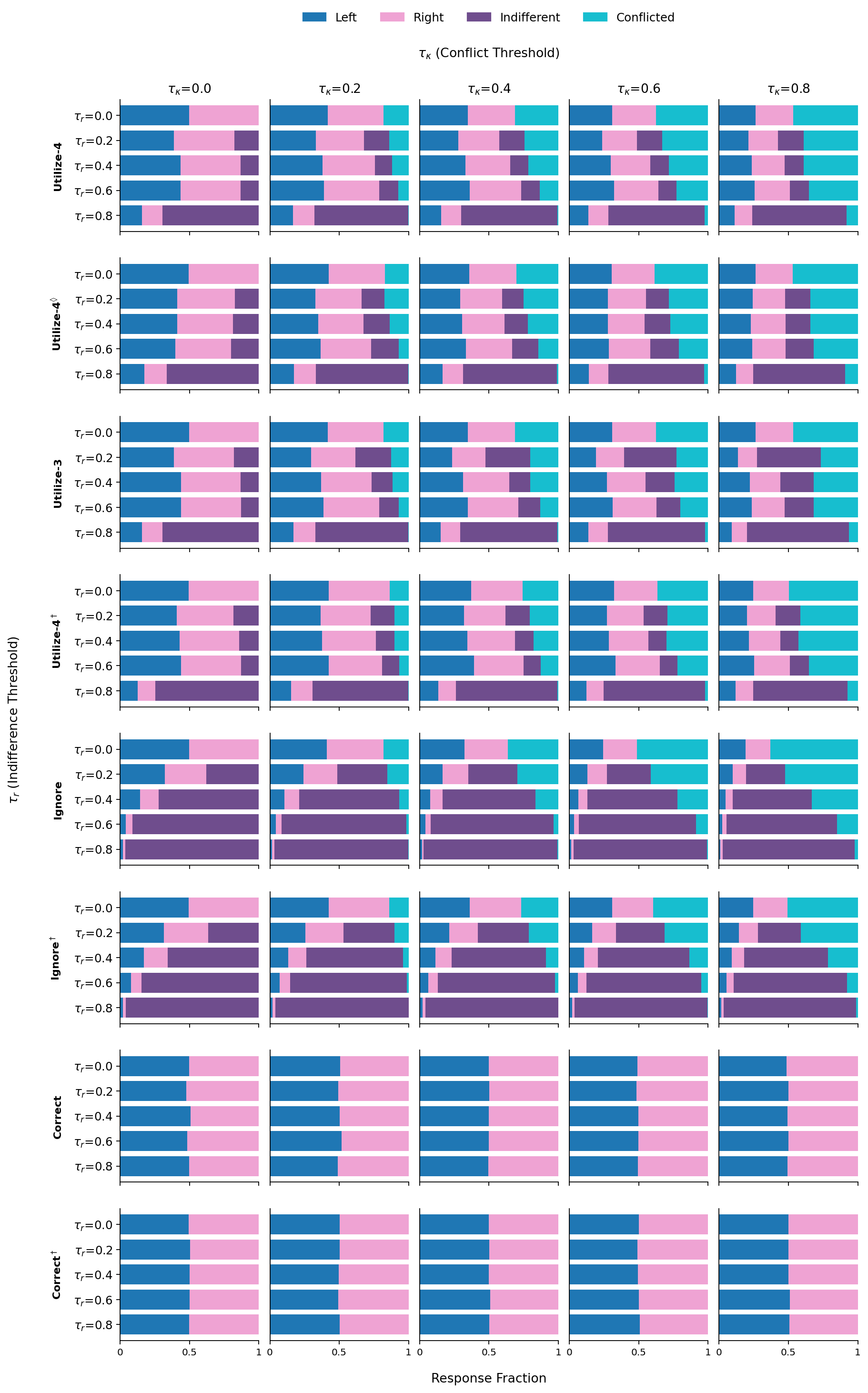}
    \caption{Fraction of each response type (Left $\succ$, Right $\prec$,
    Indifferent $\sim$, Conflict $\bowtie$) elicited by the active learner,
    across the full $5\times5$ threshold grid, for all eight methods. Within each
    method's block, rows index the indifference threshold $\tau_r$ and columns the
    incomparability threshold $\tau_\kappa$.}
    \label{fig:respdist-grid}
\end{figure}

\clearpage
\subsection{Rule-Level Preferences}
\label{app:rule-level-indecision}

The preceding results focus on the importance of measuring indecision in the technical task of learning the model $M$: forced comparisons can leave information on the table, or worse, distort what a learner
recovers about the individual's priorities if the individual deviates from the assumed model to resolve indecision. But even when the individual resolves indecision via the zero-threshold response model \(R^\circ_{h_\beta;0,0}\) (thus avoiding learning errors), forced comparisons still erase information that may be normatively important: truly decisive responses and forced responses from an indecisive latent state are recorded as identical, but their moral authority may not be, varying depending on whether the individual regarded the comparison as clear, morally weighty but conflicted, or too low-stakes to warrant a meaningful distinction.

When we learn a single rule that best imitates the individual's forced-choice behavior, we can also lose something else: information about where the individual is indifferent or conflicted between \textit{rules}. Making this claim requires us to define a global preference relation over rules that encompasses these forms of indecision, and then show that local pairwise comparison queries are informative about these rule-level relations. To do so, we define a global rule-level preference relation analogously to our latent states model at the query level.

\begin{definition}[Rule-Level Latent Relation]
\label{def:rule-level-latent-relation} 
Fix a priority model \(M=(u,\omega)\) and thresholds
\(\tau_r,\tau_\kappa\). For two rules \(F,F'\in\rules\), define the rule-level
directional evidence scores
\[
s^+_M(F,F')
=
\sum_{j\in[m]} \omega_j [u_j(F)-u_j(F')]_+,
\qquad
s^-_M(F,F')
=
\sum_{j\in[m]} \omega_j [u_j(F')-u_j(F)]_+.
\]
Let
\[
r_M(F,F')=s^+_M(F,F')+s^-_M(F,F'),
\qquad
\kappa_M(F,F')=s^+_M(F,F')-s^-_M(F,F').
\]
The rule-level latent relation between \(F\) and \(F'\), denoted
\(F \triangleright^*_M F'\), is defined by
\[
F \triangleright^*_M F'
=
\begin{cases}
F\succ^*_M F'
&
\text{if } r_M(F,F')-\tau_r\ge 0
\text{ and }
\kappa_M(F,F')-\tau_\kappa r_M(F,F')\ge 0,
\\[0.4em]
F\prec^*_M F'
&
\text{if } r_M(F,F')-\tau_r\ge 0
\text{ and }
-\kappa_M(F,F')-\tau_\kappa r_M(F,F')\ge 0,
\\[0.4em]
F\bowtie^*_M F'
&
\text{if } r_M(F,F')-\tau_r\ge 0
\text{ and }
|\kappa_M(F,F')|-\tau_\kappa r_M(F,F')<0,
\\[0.4em]
F\sim^*_M F'
&
\text{if } r_M(F,F')-\tau_r<0.
\end{cases}
\]
As in \Cref{def:threshold-response-model}, when \(\tau_\kappa=\kappa_M(F,F')=0\), the
decisive cases overlap; in that degenerate case, either decisive relation may
be assigned arbitrarily.
\end{definition}

Now, we show that in perfectly separable models, latent states at the query level correspond exactly to pairwise preference relations between the corresponding projections. This means that if we can learn the latent state thresholds $\tau_r,\tau_\kappa$ --- which are erased under forced choice --- we simultaneously learn the global preference relation described above, including regions of conflict and indifference between rules.\footnote{This equivalence also leads to another interpretation: that latent query states arise from an underlying rule-level preference relation, where the person picks an arbitrary background rule and compares the two projections, and then simply answers according to the relation between rules.} The proof follows directly from perfect separability. \footnote{When $M$ is not perfectly separable, one can still derive a relationship between rule preferences and query responses, one has to aggregate over background rules, which is exactly the role played by $\phi^{\mathrm{rules}}$.}

\begin{lemma}[Latent query states as rule-level relations]
\label{lem:query-rule-latent-equivalence}
Fix a perfectly separable model \(M\in \Msep\). For every query
\(q=(y,y';x)\), every \(F\in\rules\), and every latent state
\(\triangleright^*\in\{\succ^*,\prec^*,\sim^*,\bowtie^*\}\),
\[
L_M(y,y';x)=y\triangleright^* y'
\quad\Longleftrightarrow\quad
F_{x\to y}\triangleright^*_M F_{x\to y'} .
\]
\end{lemma}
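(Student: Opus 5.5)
The plan is to show that, under perfect separability, the query-level quantities $(s^+_M(q), s^-_M(q))$ coincide exactly with the rule-level quantities $(s^+_M(F_{x\to y},F_{x\to y'}), s^-_M(F_{x\to y},F_{x\to y'}))$ for every background rule $F$. Both the latent state $L_M(q)$ (\Cref{def:threshold-response-model}) and the rule-level relation (\Cref{def:rule-level-latent-relation}) are defined by the \emph{identical} case analysis in $r$ and $\kappa$ with the same thresholds $\tau_r,\tau_\kappa$. So once the scores agree, $r$ and $\kappa$ agree, and each of the four cases holds on one side iff it holds on the other. The equivalence then follows for every $\triangleright^*$.

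The key step is the score identification. Fix $q=(y,y';x)$ and $F\in\rules$.

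By perfect separability, for each $j$ there is a constant $\delta_j(q)$ with $\Delta^{F'}_j(y,y';x)=\delta_j(q)$ for all $F'\in\rules$. Hence $\Delta^{F'}_j(y',y;x)=-\delta_j(q)$ for all $F'$.

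By unanimity of $\prules$, the per-priority aggregated evidence is $\delta_j(q)$ in the forward direction and $-\delta_j(q)$ in the reverse direction. This is exactly the computation in the proof of \Cref{lem:lin-agg-gap}. It gives
\[
s^+_M(q)=\sum_j\omega_j[\delta_j(q)]_+,\qquad s^-_M(q)=\sum_j\omega_j[-\delta_j(q)]_+ .
\]

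On the rule side, taking the particular background rule $F$ gives $u_j(F_{x\to y})-u_j(F_{x\to y'})=\Delta^F_j(q)=\delta_j(q)$. Substituting into \Cref{def:rule-level-latent-relation} yields the same two sums.

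It is worth stressing that $s^+$ and $s^-$ are matched \emph{separately}, not merely their difference as in \Cref{lem:lin-agg-gap}. This is what makes $r$ agree as well as $\kappa$.

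The remaining step is bookkeeping for the degenerate tie $\tau_\kappa=\kappa=0$. In that case both definitions allow an arbitrary choice between the two decisive relations. I would handle it by stipulating that the arbitrary assignment is made consistently on both sides, for instance by the same convention. Alternatively, one can read the lemma as saying the sets of admissible states coincide, which holds because the defining inequalities coincide.

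I expect no real obstacle. The only subtle point is to state this tie-breaking convention explicitly so that the biconditional is literally true rather than true up to the arbitrary choice.
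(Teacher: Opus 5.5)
Your proposal is correct and follows the paper's proof step for step. Perfect separability and unanimity of $\prules$ give $s^+_M(q)=\sum_j\omega_j[\delta_j(q)]_+$ and $s^-_M(q)=\sum_j\omega_j[-\delta_j(q)]_+$; the rule-level scores for $F_{x\to y},F_{x\to y'}$ are these same two sums, so $r$ and $\kappa$ coincide and the shared thresholds yield the same state. Your explicit convention for the degenerate tie $\tau_\kappa=\kappa=0$ is a small point of rigor that the paper leaves implicit.
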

\begin{proof}
\textit{Query-level directional evidence scores.} Fix a query \(q=(y,y';x)\). Since \(M\)
is perfectly separable, for each priority \(j\in[m]\), there exists a constant
\(\delta_j(q)\) such that
\[
\Delta^G_j(q)
=
u_j(G_{x\to y})-u_j(G_{x\to y'})
=
\delta_j(q)
\qquad
\forall G\in\rules.
\]
Because \(\prules\) is unanimous,
\[
\prules\bigl((\Delta^G_j(q))_{G\in\rules}\bigr)
=
\delta_j(q).
\]
Therefore the query-level directional scores satisfy
\[
s^+_M(q)
=
\sum_{j\in[m]}
\omega_j
\left[
\prules\bigl((\Delta^G_j(q))_{G\in\rules}\bigr)
\right]_+
=
\sum_{j\in[m]}\omega_j[\delta_j(q)]_+,
\]
and
\[
s^-_M(q)
=
\sum_{j\in[m]}
\omega_j
\left[
-\prules\bigl((\Delta^G_j(q))_{G\in\rules}\bigr)
\right]_+
=
\sum_{j\in[m]}\omega_j[-\delta_j(q)]_+.
\]
\textit{Rule-Level Directional Evidence Scores.} Fix a background rule \(F\in\rules\); by perfect separability, $u_j(F_{x\to y})-u_j(F_{x\to y'})=\delta_j(q).$
Then, the rule-level directional scores between \(F_{x\to y}\) and
\(F_{x\to y'}\) are
\[
s^+_M(F_{x\to y},F_{x\to y'})
=
\sum_{j\in[m]}
\omega_j
[u_j(F_{x\to y})-u_j(F_{x\to y'})]_+
=
\sum_{j\in[m]}\omega_j[\delta_j(q)]_+,
\]
and
\[
s^-_M(F_{x\to y},F_{x\to y'})
=
\sum_{j\in[m]}
\omega_j
[u_j(F_{x\to y'})-u_j(F_{x\to y})]_+
=
\sum_{j\in[m]}\omega_j[-\delta_j(q)]_+.
\]
It follows that
\[
s^+_M(q)=s^+_M(F_{x\to y},F_{x\to y'}),
\qquad
s^-_M(q)=s^-_M(F_{x\to y},F_{x\to y'}).
\]
and thus the query-level and rule-level values of \(r_M\) and
\(\kappa_M\) are identical. Applying the same thresholds
\(\tau_r,\tau_\kappa\) therefore yields the same latent state.
\end{proof}

\appendix
\end{document}